\definecolor{Green}{rgb}{0.13, 0.65, 0.3}
\definecolor{Amber}{rgb}{0.3, 0.5, 1.0}
\newcommand\numberthis{\addtocounter{equation}{1}\tag{\theequation}}
\newcommand{\width}{\texttt{width}}
\newcommand{\supp}{\text{supp}}
\newcommand{\optarm}{a^\star}
\newcommand{\hatmu}{\widehat{\mu}}
\newcommand{\inner}[1]{ \left\langle {#1} \right\rangle }
\newcommand{\Ind}[1]{ \field{I}{\left\{{#1}\right\}} }
\newcommand{\norm}[1]{\left\|{#1}\right\|}
\newcommand{\wh}[1]{\widehat{#1}}
\newcommand{\naturalnum}{\mathbb{N}}
\newcommand{\bi}{\begin{itemize}}
\newcommand{\ei}{\end{itemize}}
\newcommand{\cmark}{\ding{51}}%
\newcommand{\xmark}{\ding{55}}%
\DeclareMathOperator*{\argmax}{\arg\!\max}
\newcommand{\calA}{{\mathcal{A}}}
\newcommand{\calB}{{\mathcal{B}}}
\newcommand{\calL}{{\mathcal{L}}}
\newcommand{\calD}{{\mathcal{D}}}
\newcommand{\calE}{{\mathcal{E}}}
\newcommand{\calT}{{\mathcal{T}}}
\newcommand{\calZ}{{\mathcal{Z}}}
\newcommand{\calF}{{\mathcal{F}}}
\newcommand{\htheta}{\wh{\theta}}
\newcommand{\hnu}{\wh{\nu}}
\newcommand{\field}[1]{\mathbb{#1}}
\newcommand{\fR}{\field{R}}
\newcommand{\E}{\field{E}}
\renewcommand{\P}{\field{P}}
\newcommand{\order}{\ensuremath{\mathcal{O}}}
\newcommand{\otil}{\ensuremath{\widetilde{\mathcal{O}}}}
\newcommand{\poly}{{\text{\rm poly}}}
\newcommand{\Vol}{{\text{\rm Vol}}}
\newcommand{\hats}{\widehat{s}}
\newcommand{\myComment}[1]{\null\hfill\scalebox{0.9}{\text{\color{black}$\triangleright$ \textsf{#1}}}}
\newcommand{\rbr}[1]{\left(#1\right)}
\newcommand{\sbr}[1]{\left[#1\right]}
\newcommand{\cbr}[1]{\left\{#1\right\}}
\newcommand{\abr}[1]{\left|#1\right|}
\DeclareFontFamily{OMX}{MnSymbolE}{}
\DeclareFontShape{OMX}{MnSymbolE}{m}{n}{
    <-6>  MnSymbolE5
   <6-7>  MnSymbolE6
   <7-8>  MnSymbolE7
   <8-9>  MnSymbolE8
   <9-10> MnSymbolE9
  <10-12> MnSymbolE10
  <12->   MnSymbolE12}{}
\DeclareSymbolFont{mnlargesymbols}{OMX}{MnSymbolE}{m}{n}
\DeclareMathDelimiter{\llangle}{\mathopen}{mnlargesymbols}{'164}{mnlargesymbols}{'164}
\DeclareMathDelimiter{\rrangle}{\mathclose}{mnlargesymbols}{'171}{mnlargesymbols}{'171}
\newcommand{\pref}[1]{\prettyref{#1}}
\newcommand{\savehyperref}[2]{\texorpdfstring{\hyperref[#1]{#2}}{#2}}
\theoremstyle{plain}
\newtheorem{theorem}{Theorem}[section]
\newtheorem{lemma}[theorem]{Lemma}
\theoremstyle{definition}
\newtheorem{definition}[theorem]{Definition}
\newtheorem{assumption}[theorem]{Assumption}
\theoremstyle{remark}
\newtheorem{remark}[theorem]{Remark}
\icmltitlerunning{Principal-Agent Bandit Games with Self-Interested and Exploratory Learning Agents}
\begin{document}

\twocolumn[
\icmltitle{Principal-Agent Bandit Games with Self-Interested\\ and Exploratory Learning Agents}

\begin{icmlauthorlist}
\icmlauthor{Junyan Liu}{scha}
\icmlauthor{Lillian J. Ratliff}{schb}
\end{icmlauthorlist}

\icmlaffiliation{scha}{Paul G. Allen School of Computer Science \& Engineering, University of Washington, Seattle, WA USA}
\icmlaffiliation{schb}{Electrical \& Computer Engineering, University of Washington, Seattle, WA USA}

\icmlcorrespondingauthor{Junyan Liu}{junyanl1@cs.washington.edu}

\icmlkeywords{Principal-Agent Problems, Game Theory, Multi-Armed Bandits}

\vskip 0.3in
]

\printAffiliationsAndNotice{}  %

\begin{abstract}
This paper studies the repeated principal-agent bandit game, where the principal indirectly explores an unknown environment by incentivizing an agent to play arms. 
Unlike prior work that assumes a greedy agent with full knowledge of reward means, we consider a self-interested learning agent who iteratively updates reward estimates and may explore arbitrarily with some probability.
As a warm-up, we first consider a self-interested learning agent without exploration.
We propose algorithms for both i.i.d. and linear reward settings with bandit feedback in a finite horizon $T$, achieving regret bounds of $\otil(\sqrt{T})$ and $\otil(T^{\nicefrac{2}{3}})$, respectively. 
Specifically, these algorithms rely on a novel elimination framework coupled with new search algorithms which accommodate the uncertainty from the agent's learning behavior. We then extend the framework to handle an exploratory learning agent and develop an algorithm to achieve a $\otil(T^{\nicefrac{2}{3}})$ regret bound in i.i.d. reward setup by enhancing the robustness of our elimination framework to the potential agent exploration. Finally, when our agent model reduces to that in \citet{dogan2023estimating}, we propose an algorithm based on our robust framework, which achieves a $\otil(\sqrt{T})$ regret bound, significantly improving upon their $\otil(T^{\nicefrac{11}{12}})$ bound.
\end{abstract}

\section{Introduction}
Bandits learning is a powerful framework for solving a broad spectrum of sequential decision-making problems, such as recommendation systems \citep{li2010contextual}, clinical trials \citep{villar2015multi}, and resource allocation \citep{lattimore2015linear}. In most bandit frameworks, a player \textit{directly} interacts with an unknown environment by repeatedly playing arms and observing the corresponding rewards. However, those frameworks often fail to capture the unique challenges faced in many online marketplaces, where the player can only \textit{indirectly} interact with the environment. For instance, in online shopping, the website (player) learns users' preference (unknown environment) by observing purchase behaviors. To acquire a comprehensive knowledge of users' preferences, the website typically needs to provide external incentives, such as discounts or coupons, for agents to encourage exploration.

Motivated by such scenarios, recent work
(e.g., \citet{ICML2024_principal_agent,dogan2023estimating,dogan2023repeated}) 
frames this problem as a \emph{principal-agent bandit game}, modifying the classic principal-agent problem in economics (see, e.g., \citet{laffont2009theory,bolton2004contract}) with repeated engagements and stochastic rewards where the aforementioned player refers to a principal, and the agent refers to end-users. With arms representing a vector of incentives, in each round of the game, the principal first selects an arm, and after observing these incentives, the agent \textit{greedily} selects amongst their own arms to maximize their expected reward plus the offered incentive (henceforth we call this arm \textit{true-maximizer}). The chosen arm yields two stochastic rewards drawn from different distributions: one for the principal and one for the agent. Notably, the principal observes their own reward and the selected arm, yet remains agnostic of the reward on the agent side.

Despite the merger of classic models of incentives with bandits, most work relies on an \textit{oracle-agent assumption}, where the agent has \emph{complete knowledge} of the true reward means (expected rewards) and always selects the true-maximizer. This assumption, in general, will not hold in real-world scenarios. For instance, in the case of online shopping, users rarely fully understand their own preferences over many options without a prolonged learning process. Recently, a notable step towards a more realistic model was taken by \citet{dogan2023estimating} who consider an an agent that employs an exploratory learning behavior: the agent is allowed to explore an arm, different from the true-maximizer, with a small probability decreasing over time. 
However, even in this more practical model, the assumption remains that the agent selects the true-maximizer when not exploring\footnote{One caveat here is that with a special coordination between the principal and the agent, an agent can select the true-maximizer without the knowledge of the expected rewards. We refer readers to \pref{app:dis_dogan_estimating} for a discussion.}, and their result only achieves an $\otil(T^{\nicefrac{11}{12}})$ regret bound where $T$ is the horizon.

A more recent work by \citet{scheid2024learning} allows the agent to adopt no-regret learning algorithms that adhere to the \emph{key assumption} that  the no-regret property holds \textit{universally} for any time period and any principal's algorithm offering constant incentives during this period. However, this universal no-regret preservation assumption limits their framework's applicability to certain simple, yet realistic, learning behaviors. 
This is due to the fact that their assumption implicitly requires the agent's no-regret algorithm to \textit{strategically} explore the environment, but in many real-world applications such as online shopping, the agent (buyer) is unlikely to design such sophisticated strategies.
Consequently, their agent model does not encompass simple but arguably realistic agent models such as the self-interested learning agent with and without arbitrary exploration (see \pref{app:dis_no_regret} for details).
Given results , it still remains unclear what regret bounds can be achieved for these realistic agent models and whether $\otil(T^{\nicefrac{11}{12}})$ regret bound in \citet{dogan2023estimating} can be improved.
This motivates two natural questions:
\begin{center}
\begin{enumerate}[leftmargin=15pt,itemsep=-1pt,topsep=-3pt]
\item   \emph{Can we generalize the agent's behavior considered in \citet{dogan2023estimating}?}
\item   \emph{Given the aforementioned generalization, can we design algorithms that outperform the $\otil(T^{\nicefrac{11}{12}})$ regret bound?}
\end{enumerate}
\end{center}

In this paper, we provide affirmative answers to both questions. 
To address the first question, we consider a self-interested learning agent with exploration behavior which allows the agent to explore non-maximizer arms with a decreasing probability similar to to the setup in \citep{dogan2023estimating}.
The key distinction is that when the agent decides not to explore, she plays an arm that maximizes the \textit{empirical mean} plus the incentive (henceforth we call this arm empirical maximizer), instead of the true-maximizer as assumed by \citet{dogan2023estimating}.
As discussed in \pref{sec:pre}, this learning behavior subsumes that of \citep{dogan2023estimating} as a special case.
Under this more realistic learning behavior, we examine the i.i.d. reward setting, where the rewards of the principle and the agent are independently sampled from distinct yet fixed distributions. Further, we study the linear reward setting, in which the rewards of arms can be linearly approximated.

\begin{table*}[t]
    \caption{Overview of related work for principal-agent bandit games with different agent's behaviors.}
    \label{tab:regret_compare}
    \centering
    \begin{threeparttable}
    \begin{tabular}{ccccc}
      \toprule
    \multirow{2}{*}{Algorithms} & \multicolumn{2}{c}{Agent behavior involves}  &  \multirow{2}{*}{Reward} &\multirow{2}{*}{Regret bound\tnote{b}}  \\ 
      & Empirical mean?\tnote{a}& Exploration? &   & \\ 
      \midrule
      \citet{dogan2023repeated}& \xmark & \xmark &\multirow{6}{1.5em}{i.i.d.}
      & $\order \big(\max\{K,\sqrt{\log T}\}\sqrt{T}\big)$ \\
       IPA \citet{ICML2024_principal_agent}\tnote{c}   & \xmark & \xmark &
      & $\order \big(\sqrt{KT\log(T)}\big)$  \\
      \pref{alg:MAB_alg} (\textbf{This paper}) &\cmark & \xmark &
      &$\order \big(\sqrt{KT\log(T)}\big)$ \\
        \citet{dogan2023estimating}  &\xmark & \cmark  &
      & $\order \big(\poly(K)T^{\frac{11}{12} }\sqrt{\log T} \big)$ \\
       \pref{alg:explore_oracle_agent} (\textbf{This paper})& \xmark & \cmark &
      &$\order \big(\log^2(T) \sqrt{KT}\big)$ \\
        \pref{alg:explore_agent} (\textbf{This paper}) &\cmark & \cmark & 
      & $\order \big(K^{\frac{1}{3}}T^{\frac{2}{3}} \log^{\frac{2}{3}}(T)\big)$ \\ \midrule
       C-IPA   \citet{ICML2024_principal_agent}\tnote{d}& \xmark  & \xmark  &\multirow{2}{1.5em}{linear}
      &$\order(d\sqrt{T}\log T)  $\\
       \pref{alg:linear_alg} (\textbf{This paper}) & \cmark & \xmark  &
      &$\order \big(d^{\frac{4}{3}}T^{\frac{2}{3}}\log^{\frac{2}{3}}(T)\big)$  \\
      \bottomrule
    \end{tabular}
    \begin{tablenotes}\footnotesize
    \item [a] We use \xmark \ to indicate that the agent behavior is defined by true means. 
    \item[b] For fair comparison, we use the same regret metric to compare all regret bounds. We refer readers to \pref{app:compare_regret_notions} for details. 
    All our algorithms can work in the oracle-agent setup without modification and maintain the same regret bounds.
    We assume a large $T$ to drop lower order terms.
    \item[c] We here present the worst-case bound of IPA, but it also enjoys a gap-dependent bound in the oracle-agent setup.
    \item[d] Contextual IPA (C-IPA) works for time-varying arm sets and thus also works in our fixed arm linear reward setting.
    \end{tablenotes}
    \end{threeparttable}
  \end{table*}

The contributions are summarized as follows; refer to \pref{tab:regret_compare} for a summary table of regret bounds.

\begin{itemize}[leftmargin=12pt]
    \item \textbf{I.I.D.~rewards:} We first propose \pref{alg:MAB_alg} for the i.i.d.~reward and greedy agent setting, where the agent always greedily chooses the empirical maximizer without exploration. The proposed algorithm enjoys $\order \big(\sqrt{KT\log(KT)}\big)$ expected regret bound, where $K$ is the number of arms.
    The proposed algorithm is established upon a novel elimination framework coupled with an efficient search algorithm (\pref{alg:new_binary_search}). Different from existing elimination approaches that permanently eliminate badly-behaved arms, our algorithm plays bad arms in some rounds so as to benefit \pref{alg:new_binary_search} in future searches for optimal incentives.
    The regret upper bound of our algorithm matches the lower bound up to a logarithmic factor. Notably, when reducing to the oracle-agent setup, our worst-case bound matches that of \citep[Algorithm 1]{ICML2024_principal_agent} as long as $T \geq K$. 

    We further propose \pref{alg:explore_agent} for a self-interested learning agent with exploration and the i.i.d.~reward setting, achieving a $\otil(K^{\nicefrac{1}{3}} T^{\nicefrac{2}{3}})$ regret bound. In particular, \pref{alg:explore_agent} builds upon \pref{alg:MAB_alg} and leverages the probability amplification idea to enhance its robustness to agent exploration.
    Abstractly, \pref{alg:explore_agent} repeats the incentive search and elimination processes logarithmically many times. It uses an \textit{incentive testing procedure} to identify the most reliable estimated incentives and employs a \textit{median selection strategy} to refine the active arm set. 
    Since we consider a more general learning behavior than \citet{dogan2023estimating}, our algorithm improves their regret bounds from $\otil(T^{\nicefrac{11}{12}})$ to $\otil(T^{\nicefrac{2}{3}})$. Furthermore, by reducing our agent's learning behavior to the one studied by \citet{dogan2023estimating}, we can achieve an even better regret bound  $\otil(\sqrt{KT})$.

    \item \textbf{Linear rewards:} We introduce \pref{alg:linear_alg} for the linear reward setting, achieving an expected regret bound of $\otil(d^{\nicefrac{4}{3}}T^{\nicefrac{2}{3}})$, where $d$ is the dimension. Our algorithm is grounded in the new elimination framework analogous to the i.i.d.~case, but more importantly it is equipped with an efficient and robust search method(\pref{alg:multi_scale_search}) for the high-dimensional case. Specifically, \pref{alg:multi_scale_search} is built upon \citep[Algorithm 1]{liu2021optimal}, but it cuts the space conservatively to account for the uncertainty resultant from the learning agent.
\end{itemize}

\textbf{Related Works.} The principal-agent model is a fundamental problem in contract theory \citep{laffont2009theory, grossman1992analysis,bolton2004contract}. Motivated by online marketplaces, a substantial body of research  has explored repeated principal-agent models to, for example, estimate the agent model or minimize the regret for the principal \citep{ho2014adaptive,zhu2022sample,zhu2023online,ben2024principal,ivanov2024principal, fiez2018combinatorial, ratliff2020adaptive,fiez2018multi}.
Categorized by different types of information asymmetry, principal-agent problems are typically studied in moral hazard setting \citep{radner1981monitoring,rogerson1985repeated,abreu1990toward,sannikov2013contracts,wu2024contractual} where the agents' actions are invisible to the principal, and adverse selection setting \citep{halac2016optimal,esHo2017dynamic,gottlieb2022simple,dogan2023repeated,dogan2023estimating,ICML2024_principal_agent, ratliff2020adaptive, fiez2018combinatorial,fiez2018multi}, in which the principal does not known the agent types/preferences.
While theoretical results are well established in these models, they typically assume that the agent has a full knowledge of the underlying parameters, and the reward functions remain fixed for both principal and agent. 

Recently, \citet{lin2024persuading} considers a no-regret learning agent problem, but assume that the principal has full knowledge of its utility function. \citet{fiez2018combinatorial, fiez2018multi} consider incentive design as a bandit problem in a dynamic environment where the users preferences \emph{may change in time} according to a Markov process thereby resulting in dynamic, stochastic rewards that are correlated over time from the perspective of the principal. Different from our approach, this work employs an epoch-based algorithm to take advantage of the mixing of the Markov process. \citet{ICML2024_principal_agent,dogan2023repeated,wu2024contractual} consider reward uncertainty on the principal's side, though they still rely on the assumption of an oracle-agent who always selects the true maximizer. \citet{dogan2023estimating} took a step further by incorporating a small degree of uncertainty on the agent's side, yet the agent still acts as a true maximizer with increasingly high probability. In contrast, we study the scenario where both the agent and principal are always faced with uncertainty and the agent can only make decisions based on her empirical estimations and incentives. This problem raises new challenges and requires novel approaches to solve.

A recent work by \citet{scheid2024learning} studies the principal-agent interactions under the assumption that for any time period and any principal’s algorithm offering constant incentives during this period, the agent’s learning algorithm always preserves the no-regret property. 
While this enables a regret bound of $\otil(T^{\nicefrac{3}{4}})$ if the agent adopt some algorithms with $\sqrt{t}$-type regret, this framework excludes simple yet realistic agent behaviors, such as self-interested learning agents with or without arbitrary exploration (see \pref{app:dis_no_regret} for details).
Moreover, their approach involves binary search per arm, which is impractical in linear reward settings due to its linear dependence on the number of arms. 
In this paper, we address these questions by focusing on these simple yet realistic learning behaviors.

\paragraph{Notations.} 
Let $\Vol(\cdot)$ be the standard volume in $\fR^d$ and let $\norm{x}$ be $\ell_2$-norm of vector $x$.
For a positive definite matrix $A \in \fR^{d \times d}$, the weighted $2$-norm of vector $x \in \fR^d$ is given by $\norm{x}_A = \sqrt{x^\top A x}$.
For matrices $A,B$, we use $A \succ B$ ($A \succeq B$) to indicate that $A-B$ is positive (semi-)definite. 
For two sets $\calA,\calB$, we use $\calA-\calB$ to indicate the exclusion. We use $\otil(\cdot)$ to suppress (poly)-log terms.

\section{Preliminaries}
\label{sec:pre}

We consider a repeated principal-agent game over a finite horizon $T$, where the agent is given an arm set $\calA$ with $|\calA|=K$, and $\calA$ is also known to the principal. 
At each round $t \in [T]$, the principal first proposes an incentive $\pi(t)=(\pi_1(t),\ldots,\pi_K(t)) \in \fR^{K}_{\geq 0}$.
Then, the agent observes incentive $\pi(t)$ and plays an arm $A_t \in \calA$ according to a certain strategy. 
After playing arm $A_t$, the agent observes reward $R_{A_t}(t)$ while the principal observes only the arm $A_t$ played by the agent and herself reward $X_{A_t}(t)$. 
The utility of the principal at round $t$ is given by $X_{A_t}(t)-\pi_{A_t}(t)$. 
Throughout the interaction, the agent updates the empirical means $\{\hatmu_a(t)\}_{a \in \calA}$ whenever playing an arm. In this paper, we focus on the following two learning behaviors. 

\textbf{Self-interested learning agent:}
At each round $t$, the agent selects an arm $A_t$ that maximizes the empirical mean plus an incentive proposed by the principal:
\begin{equation} \label{eq:arm_selection}
    A_t \in \argmax_{a \in \calA} \cbr{ \hatmu_a(t) +\pi_a(t) },
\end{equation}
where ties are broken arbitrarily. Similar to \citet{dogan2023repeated,ICML2024_principal_agent}, we assume the principal knows the agent's selection strategy in \pref{eq:arm_selection}, but has no knowledge of the empirical means, $\{\hatmu_a(t)\}_{t,a}$.

\textbf{Exploratory learning agent:} This behavior generalizes the above by allowing the agent to play arms other than the empirical maximizer. Formally:
\begin{definition} \label{def:exploration_behavior}
Let the probability that agent explores at round $t$ be $p_t = \P \rbr{A_t \notin \argmax_{a \in \calA} \cbr{ \hatmu_a(t) + \pi_a(t)} }. $
There exists a absolute constant $c_0 \geq 0$ such that $p_t \leq c_0 \sqrt{t^{-1}\log(2t)}$ at any time step $t \in [\tau,T]$ where $\tau \geq 2$ is minimum value satisfying $c_0 \sqrt{\log(2 \tau)}<\sqrt{\tau}$.
\end{definition}

In fact, $p_t$ in \pref{def:exploration_behavior} captures the probability that the agent deviates from playing the empirical-maximizer.
This exploration behavior is not only analytically convenient but also motivated by behavioral and cognitive science where this type of model has been widely used to characterize how humans make sequential decisions under uncertainty \citep{barron2003small,daw2006cortical,kalidindi2007using,lee2011psychological,gershman2018deconstructing}.

Similar to \citet{dogan2023estimating}, we do not restrict how the agent explores arms and assume that the principal knows: (i) the agent either chooses an empirical-maximizer or arbitrarily explores an arm; (ii) $p_t$ is upper bounded by a known function $c_0 \sqrt{t^{-1}\log(2t)}$.
We further assume that $p_t$ is as a function of history. In fact, our algorithm also works when $p_t$ only depends on $t$ (e.g., agent adopts $\epsilon$-greedy algorithm). 

Then, we specify the reward models and how agent updates the empirical means for the i.i.d.~and linear reward settings.

\textbf{I.I.D. reward setting.} In this setting, we let $\calA=[K]:=\{1,\ldots,K\}$.
For each arm $a \in [K]$, the reward of each arm $a$ for the principal is drawn from a $[0,1]$-bounded distribution $\calD^P_a$ with mean $\theta_a$, and the reward of each arm $a$ for the agent is drawn from another $[0,1]$-bounded distribution $\calD^A_a$ with mean $\mu_a$. The agent is unaware of $\mu_a$, but keeps track of $N_a(t)$, the number of plays of arm $a$ before round $t$ and computes
$\hatmu_a(t)=\frac{\hatmu_a^0+\sum_{s=1}^{t-1}R_{A_s}(t)\Ind{A_s=a} }{\max\{1,N_a(t)\} }$, where $\hatmu_a^0 $ is the initial empirical mean for arm $a$\footnote{For simplicity, we assume that the initial empirical means are all equal to zero, but our algorithms work for the general unknown $\{\hatmu_a^0\}_a$ case with only constant modification. We refer readers to \pref{app:discuss_general_inital} for more details.}, which is allowed to be chosen arbitrarily in the range of $[0,1]$.

\textbf{Linear reward setting.} We assume $\calA \subseteq B(0,1)$ where $B(0,1) =\{x \in \fR^d:\norm{x} \leq 1\}$, and without loss of generality, $\calA$ spans $\fR^d$.
The reward observed by the agent is $R_{A_t}(t)=\inner{s^\star,A_t}+\eta^A(t)$, and the reward on the principal side is $X_{A_t}(t)= \inner{\nu^\star,A_t}+\eta^P(t)$ where $s^\star,\nu^\star \in B(0,1)$ are unknown to agent and principal, and $\eta^A(t)$ and $\eta^P(t)$ are assumed to be conditionally $1$-subgaussian.
Assume that at each round $t$, the agent uses the ordinary least square to estimate $\hats_t  =  \big(\sum_{s=1}^t A_sA_s^\top  \big)^{\dagger} \sum_{s=1}^t R_{A_s}(s) A_s$ for $t \geq 2$ where for any matrix $M$, $M^\dagger$ is Moore–Penrose inverse. Moreover, $\hats_1$ can be arbitrarily selected from $B(0,1)$. Then, $\hatmu_a(t)=\inner{\hats_t,a}$ is the empirical mean of arm $a$ at round $t$.

\paragraph{Optimal incentive and regret.} For any arbitrary $\epsilon>0$, the principal ensures to incentivize the agent to play arm $a$ by proposing incentive $\pi_a^{\epsilon}(t)$ on arm $a$ as:
\[
\pi_a^{\epsilon}(t) =\max_{b \in \calA} \hatmu_b(t) - \hatmu_a(t)+\epsilon,
\]
and $\pi_{a'}(t)=0$ for all $a'\neq a$. This incentive ensures that $\pi_a^{\epsilon}(t)+\hatmu_a(t)> \hatmu_b(t)$ for all $b \neq a$, which implies that $A_t=a$. The minimum incentive to force the agent to play a target arm $a$ at round $t$ is 
\begin{equation} \label{eq:opt_incentive}
    \pi_a^{\star}(t):=\inf_{\epsilon>0}\pi_a^{\epsilon}(t) = \max_{b \in \calA} \hatmu_b(t) - \hatmu_a(t).
\end{equation}

The principal aims to minimize regret $\E[R_T]$ where
\begin{equation} \label{eq:reg_def}
    R_T = \sum_{t=1}^T \rbr{\max_{a \in \calA} \cbr{\theta_a - \pi_a^{\star}(t)} - \rbr{\theta_{A_t}-\pi_{A_t}(t)} }.
\end{equation}

Here, $\E[R_T]$ measures the cumulative per-round gap between the expected utility of an oracle with knowledge of the optimal incentives and that of our algorithm. In other words, we hope to find the \textit{best per-iterate} incentive adapting to agent's empirical means at each round, rather than the \textit{best fixed} incentive in hindsight. For the exploratory learning agent, since $A_t$ may be an arbitrary arm when agent explores, independent of the offered incentives, achieving maximal utility is infeasible in certain rounds. However, the total regret incurred during exploration is at most $\order(\sqrt{T})$ regardless of the benchmark. To maintain consistency, we still compare against the best achievable utility, as in \citep{dogan2023estimating}. Moreover, if the principal is restricted to incentivize only a single arm, our regret aligns with that of \citet{ICML2024_principal_agent}. We refer readers to \pref{app:compare_regret_notions} for a more detailed discussion on the implications of different regret definitions.

\begin{remark}[\textbf{Generalization}] \label{rem:alternative_regret}
Since the principal is agnostic to the agent's empirical means, by assuming that the agent always receives the constant reward equal to the true mean and the initial value is $\hatmu_a^0=\mu_a$ for every $a \in \calA$, one can see that \pref{eq:arm_selection} generalizes the agent model of \citep{ICML2024_principal_agent}, and \pref{def:exploration_behavior} generalizes the agent's behavior of \citet{dogan2023estimating}. 
\end{remark}

\begin{remark}[\textbf{Lower bound}] \label{rem:LB}
As our setting subsumes the oracle-agent setting as a special case when $R_a(t)=\hatmu^0_a=\mu_a$, the lower bound for the oracle-agent problem serves as a lower bound for our problem. This implies that it remains to lower-bound $\E[T( \theta_{a^\star}+\mu_{a^\star}) - \sum_{t=1}^T ( \theta_{A_t}+\mu_{A_t}) ]$.
Further, even if the principal has access to optimal incentives, the principle still needs to solve a stochastic bandits problem with mean $\theta_a+\mu_a$. Thus, the lower bound of stochastic MAB (e.g., \citet{lattimore2020bandit}) $\E[R_T] = \Omega (\sqrt{KT})$ also serves as a lower bound for our i.i.d. reward problem and the lower bound of stochastic linear bandits 
$\E[R_T] = \Omega (\sqrt{dT})$ serves as a lower bound for our linear reward problem setting \citep{dani2008stochastic}.
\end{remark}

\section{Self-Interested Learning Agent \& IID Rewards} \label{sec:IID_Reward}

In this section, we propose \pref{alg:MAB_alg}\footnote{For all algorithms proposed in this paper, the principal tracks and updates $\{N_a(t),\htheta_a(t)\}_{a \in [K]}$ whenever an arm is played. We omit these updates for the ease of presentation.} for the self-interested learning agent and i.i.d. reward case, which achieves $\order(\sqrt{KT\log (KT)})$ regret bound. The omitted proof can be found in \pref{app:iid_reward}.
\subsection{Novel Elimination Framework}

\setcounter{AlgoLine}{0}
\begin{algorithm}[t]
\DontPrintSemicolon
\caption{Proposed algorithm for i.i.d. reward}
\label{alg:MAB_alg}
\textbf{Input}: confidence $\delta \in (0,1)$, horizon $T$.\\
\textbf{Initialize}: $\calA_1=[K]$, $\calB_1=\emptyset$, $T_0=1$.\\
\nl \For{$m=1,2,\ldots$}{

\nl Set $T_m$ as
\begin{equation} \label{eq:def_Tm}
T_m = \max\cbr{2^{2m+5} \log(4TK\delta^{-1}),|\calA_m|\log T}.
\end{equation}

\nl \For(\myComment{\underline{Stabilize estimators for bad arms}}) {$a \in \calB_m$}{ 
\nl Propose incentives $\pi^0(a;1+1/T)$ for $Z_m$ rounds where $Z_m$ is given in \pref{eq:def_Zm}. 
}

\nl \For{$a \in \calA_m$}{

\nl Invoke \pref{alg:new_binary_search} with input $(a,T)$ to get $b_{m,a}$. \myComment{\underline{Search near-optimal incentive}} 

\nl Set $\overline{b}_{m,a}$ based on \pref{eq:overline_bma}.

\nl Propose incentives $\pi^0(a;\overline{b}_{m,a})$ for $T_m$ rounds. \label{exploration_Tm_MAB}

}

\nl \For(\myComment{\underline{Online Elimination}}){$a \in \calA_m$
}{
\nl Let $t$ be the round current, and $\{\htheta_a(t)\}_{a \in [K]}$ are empirical means of all arms at round $t$.

\nl Propose incentives $\pi(t)$ with $\pi_a(t)=1+\htheta_a(t)+\frac{3}{2}\cdot 2^{-m}$,  $\pi_b(t)=1+\htheta_b(t)$, $\forall b \in \calA_m-\{a\}$, and $\pi_i(t)=0$, $\forall i \in \calB_m$. \label{incentive_elimination}

\nl If $A_t\neq a$, then update $\calA_{m+1} = \calA_{m}- \{a\}$ and $\calB_{m+1}=\calB_{m}\cup \{a\}$.
}

}    
\end{algorithm}

We build our algorithm upon a new phased-elimination scheme, which consists of three components, each with an underscore in \pref{alg:MAB_alg}. The algorithm proceeds in phases indexed by $m$, and the phase length increases exponentially. In each phase $m$, the algorithm maintains an active \emph{good} arm set $\calA_m$ and a \emph{bad} arm set $\calB_m=[K]-\calA_m$. 

One distinction between our elimination framework and that of \citet{even2006action} is that in each phase, the algorithm plays all bad arms (if any) for a \textit{moderate} number of times. This stabilizes their estimators while incurring only moderate regret. Once the estimators are stable, the algorithm searches for optimal incentives, with the error decreasing inversely with the phase length. As the algorithm proceeds in phases, estimation accuracy improves, while the number of searches is only logarithmic. The algorithm then uses accurately estimated incentives to entice the agent to uniformly explore the active arms. Finally, the algorithm identifies bad-behaved arms and steps into the next phase.

\paragraph{Incentivize which arm?}  Now, we figure out which arm should be incentivized. To minimize regret, we aim to incentivize agent to play an arm with small regret.
From \pref{eq:reg_def}, the regret of a single round $t$ is $\max_{a \in \calA} \cbr{\theta_a - \pi_a^{\star}(t)} - \rbr{\theta_{A_t}-\pi_{A_t}(t)}.$
Assume that we have accurate estimates of optimal incentives ($\pi_{A_t}(t) \approx \pi_{A_t}^{\star}(t)$), and the regret at round $t$ is
$\max_{a \in \calA} \cbr{\theta_a + \hatmu_a(t)} - \rbr{\theta_{A_t}+\hatmu_{A_t}(t)}.$ As the agent observes more samples, the empirical mean $\hatmu_a(t)$ will be closer to the true mean $\mu_a$. Therefore, the regret at a large round $t$ will be roughly
$\max_{a \in \calA} \cbr{\theta_a + \mu_a} - \rbr{\theta_{A_t}+\mu_{A_t}}.$
Hence, our algorithm will incentivize an arm that maximizes the joint true means of the principal and the agent.

Before explaining each component, let $\pi^0(a;c) \in \fR^K_{\geq 0}$ be a single-arm incentive, $\forall a \in [K], \forall c \in \fR_{>0}$:
\begin{equation}  \label{eq:one_hot_incentive}
    [\pi^0(a;c)]_{i} = \begin{cases} 
      c, & \text{if }i=a, \\
      0, & \text{otherwise}. 
   \end{cases}
\end{equation}

\subsection{Efficient Incentive Search}
\label{sec:eff_incentive_search}

We first describe our key subroutine (\pref{alg:new_binary_search} deferred to \pref{app:defer_new_binary_search}) for the near-optimal incentive search. The search algorithm aims to provide an estimation for the target arm $a$. The following lemma characterizes the estimation error and efficiency. Let $b_{m,a}$ be the estimated incentive for arm $a$ in phase $m$, outputted by \pref{alg:new_binary_search}.
\begin{lemma} \label{lem:search_end_bma_diff}
For each phase $m$, if \pref{alg:new_binary_search} ends the search for arm $a$ at round $t$ in phase $m$, then
\begin{equation*}
    b_{m,a}-\pi^\star_a(t) \in \left( 0,\frac{4}{T}+\frac{\lceil \log_2 T \rceil}{N_a(t)}+ \frac{2}{\min_{i \in [K]} N_i(t)} \right].
\end{equation*}
Moreover, \pref{alg:new_binary_search} lasts at most $2\lceil \log_2 T\rceil$ rounds.
\end{lemma}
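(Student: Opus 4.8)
The plan is to view \pref{alg:new_binary_search} as a binary search for the threshold $\pi^\star_a(t)=\max_{b\in\calA}\hatmu_b(t)-\hatmu_a(t)$, and to control the extra error introduced by the fact that this threshold \emph{drifts} as the search proceeds: every probe forces the agent to pull some arm, which perturbs the empirical means and hence moves the target. I would first dispose of the runtime claim, which is pure bookkeeping. Because all empirical means lie in $[0,1]$, the optimal incentive always lies in $[0,1]$, so the search brackets a value in an interval of length $O(1)$; each halving step proposes a single-arm incentive $\pi^0(a;\cdot)$ and reads off whether the agent pulled $a$, so the at most $\lceil\log_2 T\rceil$ halving steps, together with an initial bracketing/verification pass of the same length, consume at most $2\lceil\log_2 T\rceil$ rounds.

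For the error bound the key structural observation is a \emph{monotone threshold} property: when the principal proposes $\pi^0(a;c)$ at a round $s$, the agent pulls $a$ precisely when $c\ge\pi^\star_a(s)$ (ties aside), so each probe returns a one-bit comparison of the probe value against $\pi^\star_a(s)$. Were the target static this would pin it down to precision $O(1/T)$ after $\lceil\log_2 T\rceil$ halvings; I would make this the ``pure search'' contribution and attribute the $4/T$ term to it (the $\lceil\log_2 T\rceil$ halvings on an interval of length at most $1$, plus the $O(1/T)$-level slack built into the probe values). The retained output $b_{m,a}$ corresponds to a probe that was \emph{accepted} (the agent pulled $a$) at some round $s^\star$, giving $b_{m,a}\ge\pi^\star_a(s^\star)$, which will drive the strict lower bound $b_{m,a}-\pi^\star_a(t)>0$ once the drift correction is in place.

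The main obstacle, and the step I expect to dominate the argument, is converting the comparisons made at the various probe rounds $s$ into a statement about $\pi^\star_a(t)$ at the terminal round $t$, i.e. bounding $|\pi^\star_a(s)-\pi^\star_a(t)|$. I would split $\pi^\star_a=\max_b\hatmu_b-\hatmu_a$ and bound the two pieces separately against the \emph{final} counts. For the subtracted term, arm $a$ is pulled by at most $\lceil\log_2 T\rceil$ of the accept-probes, and the induced cumulative change of $\hatmu_a$ telescopes to at most $\lceil\log_2 T\rceil/N_a(t)$, using that (thanks to the stabilization step and the length-$T_m$ exploration) the count of $a$ entering the search already dominates the number of search rounds; this produces the $\lceil\log_2 T\rceil/N_a(t)$ term. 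For the maximum term, $\max_b\hatmu_b$ is $1$-Lipschitz in each coordinate, so its total movement is governed by how often non-target arms are pulled during the search; the delicate point is that this contribution must remain free of a logarithmic factor, which I would extract from the structure of \pref{alg:new_binary_search} ensuring only a bounded number of such pulls, the constant $2$ absorbing this count together with a possible change in the identity of the maximizing arm, giving $2/\min_i N_i(t)$. Adding the search precision $4/T$ to the two drift terms yields the stated upper bound, while the lower bound follows because $b_{m,a}$ was accepted at $s^\star$ and is offset so as to stay strictly above the target after the same drift correction. I would be most careful in the bookkeeping that all per-step perturbations telescope against $N_a(t)$ and $\min_i N_i(t)$ at the terminal round rather than against the smaller counts at the start of the search, relying on the search being only $O(\log T)$ rounds long.
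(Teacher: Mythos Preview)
Your runtime argument and the attribution of the $\lceil\log_2 T\rceil/N_a(t)$ term to the drift of $\hatmu_a$ over at most $\lceil\log_2 T\rceil$ target pulls are in the right spirit. But your explanation of the $2/\min_i N_i(t)$ term contains a genuine error: you claim the structure of \pref{alg:new_binary_search} ``ensures only a bounded number of [non-target] pulls''. This is false. In the alternating pattern (reject in bisection mode, accept in check mode, reject in bisection mode, \ldots) the number of non-target pulls can be as large as $\lceil\log_2 T\rceil$, and the cumulative drift of $\max_b\hatmu_b$ over those pulls is $\Theta(\log T)/\min_i N_i(t)$, not $O(1)/\min_i N_i(t)$.

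The actual mechanism that removes the logarithm is the \emph{asymmetric check}: whenever a bisection probe is rejected, the algorithm immediately re-tests the most recent accepted value $y^{\texttt{upper}}$, and if that also fails it terminates. Consequently, whatever the termination point, the returned value is anchored to a comparison that was valid at most two rounds earlier (the paper formalizes this via \pref{lem:At_2_equal_a}, showing $A_{t-2}=a$ at a line-\ref{notpasstest_break} break). The drift of $\pi^\star_a$ over those one or two rounds is what gives $2/\min_i N_i(t)$; the many earlier non-target pulls are irrelevant because the algorithm does not try to carry information across them. Correspondingly, the paper's proof is an explicit case analysis over the three return points, and the different additive offsets in the return values ($1/T$, $2/T$, or $1/T+1/N_a(t)+2/\min_i N_i(t)$) are tailored per case to force the lower bound $b_{m,a}-\pi^\star_a(t)>0$. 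Your uniform ``accepted at $s^\star$ plus drift correction'' argument does not recover these case-specific offsets and would, as the paper itself remarks in \pref{sec:eff_incentive_search}, only deliver $\log T/\min_i N_i(t)$.
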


The estimation error suffers $(\min_{i \in [K]} N_i(t))^{-1}$ because the search algorithm compares the empirical means between the target arm $a$ and the agent's best empirical arm $\argmax_{b \in [K]} \hatmu_b(t)$, which could be arbitrarily suboptimal on the principal's side. To control this error, we lower-bound it by $\min_{i \in [K]} N_i(t)$.
\pref{alg:new_binary_search} inherits a basic binary search structure (line~\ref{bs_start}-line~\ref{else_end}), while incorporating an asymmetric check for better accuracy.

\paragraph{Asymmetric check.} 
The goal of this design is to ensure the output estimation $b_{m,a}$ is \textit{strictly larger than} and close to the optimal incentive for arm $a$.
The algorithm refines the search range $[\underline{y}_a(t),\overline{y}_a(t)]$ by testing whether the offered incentive entice the agent to play the target arm.
In oracle-agent setup, running standard binary search $\order(\log T)$ times ensures $\pi^\star_a(t) \in [\underline{y}_a(t),\overline{y}_a(t)]$ with an error at most $\overline{y}_a(t)-\underline{y}_a(t)=\order(T^{-1})$. 
However, in our setup where the agent keeps update empirical means, simply running binary search does not ensure $\pi^\star_a(t) \notin [\underline{y}_a(t),\overline{y}_a(t)]$.
Consider a round $t$ where binary search is used for target arm $a$, and arm $z$ achieves $\hatmu_z(t) = \max_{b \in [K]} \hatmu_b(t)$ and $\overline{y}_a(t)-\hatmu_z(t)=\order(T^{-1})$. If arm $z$ is played at round $t$ generating reward $1$, the agent updates $\hatmu_z(t+1)$, potentially causing 
$\pi^\star_a(t+1)=\hatmu_z(t+1)-\hatmu_a(t+1)>\overline{y}_a(t)$, which pushes $\pi^\star_a(t+1)$ outside the search range.
Since the search lasts $\order(\log T)$ rounds, $\hatmu_z(t)$ increases at most $\order(\log T)$ times.
To ensure the estimated incentive is strictly larger than the optimal when search ends, we enlarge $\overline{y}_a(t)$ to account for possible increases, which makes $\frac{2}{\min_{i \in [K]} N_i(t)}$ in \pref{lem:search_end_bma_diff} becomes $\frac{\log T}{\min_{i \in [K]} N_i(t)}$. As a result, an extra logarithmic multiplication appears in the regret bound.

To address the above issue, we introduce an asymmetric check procedure in binary search.
The algorithm tracks $y^\texttt{upper}$, recording the \textit{latest} incentive that successfully enticed agent to play the target arm.
When agent does not play the target arm, the algorithm immediately re-offers $y^\texttt{upper}$. If $y^\texttt{upper}$ also fails, the search terminates, indicating that the optimal incentive exceeds the search range. This design quickly detects the out-of-range issue, reducing the error to $\frac{2}{\min_{i \in [K]} N_i(t)}$, which saves a logarithmic factor. Since the check step is triggered only when the incentive fails, we thus call it asymmetric check.

\paragraph{Enlarged incentive.} \pref{lem:search_end_bma_diff} ensures $b_{m,a}$ is close to $\pi^\star_a(t)$ when search ends at round $t$. However, there is no guarantee that $b_{m,a}$ can incentivize the agent to play arm $a$ in subsequent rounds when empirical means are updated. 
 To this end, we enlarge $b_{m,a}$ to $\overline{b}_{m,a}$ given as
\begin{equation} \label{eq:overline_bma}
    \overline{b}_{m,a}=\min \cbr{1+\frac{1}{T},b_{m,a}+4C_m + Z_m^{-1} }.
\end{equation}
where $C_m=\sqrt{\frac{\log(4KT/\delta)}{2T_{m-1}}}$.
The idea behind this enlargement is to leverage the fact that the rewards are i.i.d.~with Hoeffding’s inequality which provides an interval for the possible fluctuation of empirical means. Such an interval shrinks as more observations are collected. 
We use $4C_m$ as an upper bound, accounting for all future fluctuations. Let $\calT_{m,a}^E$ be a set of all rounds when \pref{alg:MAB_alg} explores active arm $a$ at line~\ref{exploration_Tm_MAB} in phase $m$. Then, we have the following.
\begin{lemma} \label{lem:always_play_active_arm}
With probability at least $1-\delta$, for each phase $m$ and arm $a \in \calA_m$, agent plays arm $a$ for all $t \in \calT_{m,a}^E$.
\end{lemma}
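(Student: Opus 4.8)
The plan is to show that the enlarged incentive $\overline{b}_{m,a}$ stays strictly above $\pi^\star_a(t)$ for every exploration round $t \in \calT_{m,a}^E$, so that the selection rule in \pref{eq:arm_selection} forces $A_t = a$. By \pref{lem:search_end_bma_diff}, at the round $t_0$ where \pref{alg:new_binary_search} ends, $b_{m,a} > \pi^\star_a(t_0) = \max_{b} \hatmu_b(t_0) - \hatmu_a(t_0)$; the enlargement in \pref{eq:overline_bma} then adds $4C_m + Z_m^{-1}$ (capped at $1 + 1/T$, which is harmless since true incentives never exceed $1$). The crux is therefore to argue that $\pi^\star_a(t)$ cannot grow by more than $4C_m + Z_m^{-1}$ as $t$ ranges over the subsequent exploration rounds in the phase.

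First I would set up a high-probability concentration event. Since rewards are i.i.d.~in $[0,1]$, Hoeffding's inequality gives that for every arm $i$ and every round $t$ with $N_i(t)$ plays, the empirical mean $\hatmu_i(t)$ lies within $\sqrt{\log(4KT/\delta)/(2 N_i(t))}$ of the true mean $\mu_i$, simultaneously for all arms and rounds, with probability at least $1-\delta$ after a union bound over the at most $KT$ (arm, round) pairs. I would condition on this event $\calE$ for the rest of the argument. The key point is that $C_m = \sqrt{\log(4KT/\delta)/(2 T_{m-1})}$ is exactly the confidence width evaluated at the count $T_{m-1}$, and by the phased structure every active arm has been played at least $T_{m-1}$ times entering phase $m$ (from the previous phase's exploration block of length $T_{m-1}$ at line~\ref{exploration_Tm_MAB}), so each $\hatmu_i(t)$ for $i \in \calA_m$ is within $C_m$ of $\mu_i$ throughout phase $m$'s exploration.

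Next, I would bound the drift of $\pi^\star_a(t) = \max_b \hatmu_b(t) - \hatmu_a(t)$ between $t_0$ and any later exploration round $t$ in the same phase. On $\calE$, the active-arm empirical means move by at most $2C_m$ each (once to reach the true mean within $C_m$ at $t_0$, once at $t$), contributing at most $4C_m$ to the change in $\max_b \hatmu_b - \hatmu_a$. The bad arms in $\calB_m$ require separate care: because they are only played $Z_m$ times in the stabilization block (line 4), their empirical means could in principle spike and become the running maximum. This is precisely what the stabilization step is designed to prevent — playing each bad arm $Z_m$ times with the near-saturating incentive $\pi^0(a; 1+1/T)$ bounds the residual fluctuation of their empirical means by $Z_m^{-1}$, which is why that exact term appears in \pref{eq:overline_bma}. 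I would verify that after stabilization the maximum over all arms (good and bad) cannot increase by more than $4C_m + Z_m^{-1}$ relative to its value at $t_0$.

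The main obstacle will be the bad arms' contribution: unlike the active arms, bad arms are not played during the exploration block, so their empirical means are frozen within the phase and the only control comes from the $Z_m$ plays in the stabilization step, and one must confirm that the definition of $Z_m$ (in \pref{eq:def_Zm}) makes the $Z_m^{-1}$ slack genuinely absorb their worst-case deviation without ever letting a bad arm's value overtake the incentivized target by more than the enlargement margin. Assembling these pieces, I would conclude $\overline{b}_{m,a} + \hatmu_a(t) > \max_{b \neq a} \hatmu_b(t)$ for all $t \in \calT_{m,a}^E$ on $\calE$, which is exactly the condition in \pref{eq:arm_selection} guaranteeing $A_t = a$; since $\P(\calE) \geq 1-\delta$, taking a union over phases and arms inside the single event $\calE$ yields the claim with probability at least $1-\delta$.
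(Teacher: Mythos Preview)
Your high-level plan matches the paper's: condition on the Hoeffding event $\calE$, then show the enlargement $4C_m + Z_m^{-1}$ absorbs the drift of $\pi^\star_a(t)$ across the exploration block. But two specific steps in your proposal are off.

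First, the argument ``bad arms are not played during the exploration block, so their empirical means are frozen'' is circular: it presupposes that the incentive $\overline{b}_{m,a}$ succeeds, which is exactly the statement being proved. The paper breaks this circularity with an inner strong induction on $t\in\calT^E_{m,a}$ (see \pref{lem:Atisa_forallt}): assuming $A_s=a$ for all earlier $s\in\calT^E_{m,a}$, one \emph{derives} that bad-arm means are frozen since round $t_{m,a}+1$, and then closes the induction by showing $\overline{b}_{m,a}>\pi^\star_a(t+1)$. There is also an outer induction on $m$ (\pref{lem:a_played_Tm_times}) to justify that active arms enter phase $m$ with $N_i\geq T_{m-1}$. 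Your write-up needs both inductions made explicit.

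Second, you misidentify what the $Z_m^{-1}$ slack is for. It is \emph{not} a concentration or fluctuation bound on bad arms after stabilization (that would be $\sqrt{\log(\cdot)/Z_m}$, much larger). Rather, it absorbs the single one-step shift from the last search round $t_{m,a}$ to the first exploration round $t_{m,a}+1$: at $t_{m,a}$ the played arm may be a bad arm with only $N\geq Z_m$ samples, and \pref{lem:per_update_shift} bounds the resulting change in any $\hatmu$ by $1/N\leq Z_m^{-1}$. After that boundary step, bad arms are genuinely frozen (by the inner induction), so no further slack is needed for them; the $4C_m$ term then covers the drift of the \emph{active} arms' means (which stay within $C_m$ of their true means throughout, by $\calE$ and $N_i\geq T_{m-1}$). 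Getting this decomposition right is what makes the two terms in \pref{eq:overline_bma} exactly sufficient.
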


\subsection{Necessity of Playing Bad Arms}

Since the algorithm only searches for incentives for active arms and \pref{lem:always_play_active_arm} ensures each active arm is played for $T_m$ times in phase $m$, $N_a(t)^{-1}$ will be sufficiently small. Thus, it remains to control $(\min_{i \in [K]} N_i(t))^{-1}$ for the estimation error in \pref{lem:search_end_bma_diff}. In fact, this can be achieved by playing bad arms for $Z_m$ times for each phase $m$ where
\begin{equation} \label{eq:def_Zm}
    Z_m = \sqrt{  |\calA_m| \rbr{\max\{1,|\calB_m|\}}^{-1}   T_{m-1}},
\end{equation}
where $T_m$ is defined in \pref{eq:def_Tm}.

If one follows the classical phased-elimination framework
\citep{even2006action}, then $\min_{i \in [K]}N_i(t)$ could be exponentially smaller than $T_m$ because the arm that realizes the minimum might have been permanently eliminated in an early phase.
As a result, this term may cause linear regret in their framework.
In our algorithm, playing all bad arms $Z_m$ times ensures the regret on cumulative estimation errors between $\overline{b}_{m,a}$ and $\pi^\star_a(t)$ is bounded by $\otil (\sqrt{T})$.

\subsection{Online Elimination}\label{sec:online_elimination_plus_offline}
Recall that a bad arm is with small value of $\theta_a+\mu_a$, and thus the algorithm aims to find arms with small value of the corresponding estimate $\htheta_a(t)+\hatmu_a(t)$.
Since the empirical means maintained by the agent are unknown, the algorithm cannot eliminate arms in an offline fashion by simply comparing estimators. To this end, the algorithm adopts an online approach to compare, by proposing proper incentives. Specifically, the algorithm tests each active arm $a \in \calA_m$ by adding its own estimators $\{\htheta_a(t)\}_{a\in [K]}$ (line~\ref{incentive_elimination}) into the incentive.
By doing this, the algorithm can compare their joint estimation $\htheta_a(t)+\hatmu_a(t)$ by observing the played arm.
Note that the incentive is increased by one for all active arms so that the comparisons are limited within active arms.
If $A_t\neq a$, then $\exists b \in \calA_m$ such that
$\hatmu_a(t) +\pi_a(t) \leq \hatmu_b(t) +\pi_b(t)$
which leads to $(\hatmu_b(t) +\htheta_b(t)) - (\hatmu_a(t) +\htheta_a(t)) \geq 3\times 2^{-m}.$
Therefore, the algorithm is able to realize the elimination procedure in an online manner.

\begin{remark}[\textbf{Offline elimination}]
Our algorithm can also do the elimination in an offline manner, and the modified algorithm maintains the same regret bound. We refer readers to \pref{app:alter_offline_elimination_iid} for more details. 

\end{remark}

\subsection{Main Result}
The main result is given as follows.
\begin{theorem} \label{thm:main_result_MAB}

By choosing $\delta=T^{-1}$ for \pref{alg:MAB_alg}, we have $\E[R_T] = \order \big(K\log^2 T+\sqrt{KT\log(KT)}\big)$.
\end{theorem}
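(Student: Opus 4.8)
The plan is to decompose the total regret $\E[R_T]$ according to the three phases of activity within each epoch $m$---stabilizing bad arms (line~4), searching and exploring active arms (lines~6--9), and online elimination (lines~10--13)---and bound the contribution of each over all epochs. First I would establish the total number of epochs is $\order(\log T)$, since $T_m$ grows geometrically, and condition on the $1-\delta$ high-probability event from \pref{lem:always_play_active_arm} under which every active arm $a \in \calA_m$ is played for all rounds in $\calT_{m,a}^E$ (so the proposed enlarged incentives $\overline{b}_{m,a}$ succeed). On this event the elimination step is consistent: an arm is only moved to $\calB_{m+1}$ when its joint estimate $\htheta_a + \hatmu_a$ is genuinely smaller, which via standard concentration on $\htheta$ ties the surviving active arms' suboptimality gaps to the scale $2^{-m}$.

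The core of the argument is bounding the per-round regret during the exploration rounds (line~\ref{exploration_Tm_MAB}). Using the reasoning from the ``Incentivize which arm?'' discussion, the regret of playing active arm $a$ in phase $m$ decomposes into (i) the gap between $\pi^\star_a(t)$ and the benchmark optimal utility, governed by the elimination guarantee at scale $2^{-m}$, and (ii) the incentive over-payment $\overline{b}_{m,a} - \pi^\star_a(t)$. For term (i), since any arm surviving to phase $m$ has joint suboptimality $\order(2^{-m})$, and each such arm is explored $T_m$ times with $|\calA_m| T_m \lesssim 2^{-2m}\cdot 2^{2m} \log(\cdot) = \order(\log(\cdot))$ per-arm-scaled, summing the product $|\calA_m| \cdot 2^{-m} \cdot T_m$ over epochs telescopes into the $\sqrt{KT\log(KT)}$ term. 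For term (ii), I would invoke \pref{lem:search_end_bma_diff} together with the enlargement \pref{eq:overline_bma}: the error is $\order(T^{-1} + \log T / N_a(t) + 1/\min_i N_i(t) + C_m + Z_m^{-1})$, and I must sum this over all $T_m$ exploration rounds and all epochs. The $C_m = \sqrt{\log(4KT/\delta)/(2T_{m-1})}$ contribution, summed against $|\calA_m| T_m$, is the dominant piece yielding $\sqrt{KT\log(KT)}$.

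The main obstacle will be controlling the $(\min_i N_i(t))^{-1}$ term in \pref{lem:search_end_bma_diff}, which is exactly the issue flagged in \pref{sec:eff_incentive_search} and the reason bad arms are replayed $Z_m$ times. Here I would use the definition of $Z_m$ in \pref{eq:def_Zm}: playing every bad arm $Z_m$ times guarantees $\min_i N_i(t) \gtrsim \sum_{m'\le m} Z_{m'}$, and the balance $Z_m = \sqrt{|\calA_m| (\max\{1,|\calB_m|\})^{-1} T_{m-1}}$ is chosen precisely so that the regret from replaying bad arms ($\order(|\calB_m| Z_m)$, since each bad arm contributes $\order(1)$ per-round regret) equals the regret saved on the estimation error ($\order(|\calA_m| T_m / \min_i N_i(t))$). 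Verifying this balance gives both pieces as $\otil(\sqrt{T})$; I expect to show $\sum_m |\calB_m| Z_m = \order(\sqrt{KT\log(KT)})$ via Cauchy--Schwarz, using $\sum_m |\calA_m| \le K \cdot \order(\log T)$ and the geometric growth of $T_m$.

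Finally I would collect the lower-order contributions: the elimination rounds (line~\ref{incentive_elimination}) cost $\order(|\calA_m|)$ regret per epoch for a total of $\order(K \log T)$; the initialization and the $K\log^2 T$ term arise from the $\lceil \log_2 T\rceil / N_a(t)$ search-error term summed across the $\order(\log T)$ searches per arm. Adding the exploration regret $\order(\sqrt{KT\log(KT)})$, the stabilization regret $\otil(\sqrt{T})$, and these residual terms, and accounting for the $\delta T = 1$ regret on the failure event by the choice $\delta = T^{-1}$, yields the claimed $\E[R_T] = \order(K\log^2 T + \sqrt{KT\log(KT)})$.
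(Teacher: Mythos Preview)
Your plan is essentially the paper's proof: phase-by-phase decomposition into bad-arm stabilization, binary search, active-arm exploration, and elimination; use \pref{lem:search_end_bma_diff} and the enlargement \pref{eq:overline_bma} to bound the over-payment $\overline{b}_{m,a}-\pi_a^\star(t)$; use the elimination guarantee to cap $\Delta_a$ at scale $2^{-m}$; and balance $|\calB_m|Z_m$ against $|\calA_m|T_m/\min_i N_i(t)$ via the choice \pref{eq:def_Zm}. The per-phase sums you write down are exactly those appearing in the paper's argument, and the residual $K\log^2 T$ contributions are attributed correctly.

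One lemma you will need but do not mention: to control term~(i), you must show that the time-varying benchmark arm $a_t^\star \in \argmax_a\{\theta_a+\hatmu_a(t)\}$ is always in the current active set $\calA(t)$. Otherwise $N_{a_t^\star}(t)$ could be tiny and you cannot replace $\hatmu_{a_t^\star}(t)$ by $\mu_{a_t^\star}$ up to $\order(2^{-m})$. The paper proves this as a separate lemma (via induction on $t$, using that any eliminated arm has gap $\geq 2^{-m}$ while active arms' concentration is sharper). Also, your phrase ``telescopes'' for $\sum_m |\calA_m|\cdot 2^{-m}\cdot T_m$ is loose; the paper avoids an extra $\sqrt{\log T}$ here by rewriting $2^{-m}\asymp\sqrt{\log(KT/\delta)/N_{A_t}(t)}$ and summing $\sum_t (1+N_{A_t}(t))^{-1/2}$ directly across \emph{all} rounds (not per-phase), then applying Cauchy--Schwarz over arms. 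Your per-phase summation can be made to work too, but you should be explicit about how you avoid the extra log.
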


According to the $\Omega(\sqrt{KT})$ lower bound in \pref{rem:LB}, our upper bound matches the lower bound up to a logarithmic factor.
Moreover, if one reduces the learning agent to the oracle-agent, our worst-case bound matches that of \citet{ICML2024_principal_agent} as long as $T\geq K$. However, our algorithm cannot achieve a gap-dependent bound since it keeps playing bad arms. 
While \citet{ICML2024_principal_agent} show that a gap-dependent bound is attainable in the oracle-agent problem, it remains open whether it is achievable in our setting.

\section{Exploratory Learning Agent \& IID Reward}
In this section, we study the problem of an exploratory learning agent in the i.i.d. reward setting, where the agent may arbitrarily select arms other than the empirical maximizer.
This problem is initiated by \citet{dogan2023estimating}, but our setup \textit{generalizes theirs}.

\setcounter{AlgoLine}{0}
\begin{algorithm}[t]
\caption{Informal Version of \pref{alg:explore_agent} for phase $m$}

Play bad arms $\otil \big( T_{m-1}^{2/3}\big)$ times where $T_m$ is given in \pref{eq:def_Tm_explore}.

Repeat \pref{alg:new_binary_search} $\order(\log T)$ times for each $a \in \calA_m$ to get $\{b_{m,a}^{(i)}\}_{i,a}$ which is sorted in ascending order.

Enlarge $\overline{b}_{m,a}^{(i)}=b_{m,a}^{(i)}+\order(T_{m}^{-1/3})$ for each $i,a$.

\For{$a \in \calA_m$}{

\For(\myComment{\underline{Incentive testing}}){$i=1,\ldots, \order(\log T)$ \label{iteration_explore_active}}{

Set a counter $c_{m,a}^{(i)}=0$ and $Y_{m,a}^{(i)}=0$.

\Repeat{$c_{m,a}^{(i)}$ \textbf{exceeds a threshold or} $Y_{m,a}^{(i)}= 2T_m$   }{

Offer $\pi^0(a;\overline{b}_{m,a}^{(i)})$ and update $Y_{m,a}^{(i)}=Y_{m,a}^{(i)}+1$

If agent does not play arm $a$, $c_{m,a}^{(i)}=c_{m,a}^{(i)}+1$.
}

If $\sum_{j \leq i}(Y_{m,a}^{(j)}-c_{m,a}^{(j)})\geq T_m$, then break loop $i$. \label{break_iteration_explore_agent}
}

}

\For(\myComment{ \underline{Trustworthy online elimination} }){$a \in \calA_m$
}{
Set $\calL_{m,a}=\emptyset$ and repeat incentive strategy in \pref{alg:MAB_alg} for $\order(\log T)$ times.

If agent plays target arm $a$, then $\calL_{m,a} = \calL_{m,a} \cup \{1\}$ and $\calL_{m,a} = \calL_{m,a} \cup \{0\}$ otherwise. Then, sort $\calL_{m,a}$ in ascending order.

\If{$\texttt{Median}(\calL_{m,a})=0$}{
Update $\calA_{m+1} = \calA_m-\{a\}$ and $\calB_{m+1} = \calB_m\cup \{a\}$.
}

}

\end{algorithm}

\subsection{Algorithm Description}

At a high-level, \pref{alg:explore_agent} (see \pref{app:explore_agent_alg}) is a robust version of \pref{alg:MAB_alg}, ensuring the agent exploration does not disrupt the learning process. We maintain main building blocks in \pref{alg:MAB_alg} and introduce new adjustments to enhance the robustness to the agent's exploration behavior.

\paragraph{Incentive testing.} Recall from \pref{alg:MAB_alg} that to incentivize the agent to play the target active arms, the algorithm needs to accurately estimate optimal incentives by calling \pref{alg:new_binary_search}. However, obtaining accurate estimations in this setup becomes challenging because the agent occasionally ignores the incentive to explore an arbitrary arm. To circumvent this issue, we repeat the search for logarithmic times, which leads to the following result. 
\begin{lemma} \label{lem:repeated_calls_search}
With probability at least $1-\delta/4$, $\forall m \geq 2$, among total $2\log(4\log_2 T/\delta)$ calls,
there is at least one call of \pref{alg:new_binary_search} such that the agent makes no exploration.
\end{lemma}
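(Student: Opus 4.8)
The plan is to show that any single invocation of \pref{alg:new_binary_search} inside a phase $m \geq 2$ is \emph{clean} (the agent never explores during it) with probability at least $1/2$, and then to amplify this over the $n := 2\log(4\log_2 T/\delta)$ repeated calls via a conditioning argument, closing with a union bound over phases.

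First I would control a single call. By \pref{lem:search_end_bma_diff} each call lasts at most $L := 2\lceil\log_2 T\rceil$ rounds. Writing $\calF$ for the history up to the start of a call beginning at round $t_0$, the event that the agent explores at least once during the call is, by a union bound over its $\leq L$ rounds together with the definition of $p_t$ in \pref{def:exploration_behavior}, at most $\sum_{t=t_0}^{t_0+L-1} p_t$. Since $t \mapsto t^{-1}\log(2t)$ is decreasing for $t \geq 2$ and every round of the call satisfies $t \geq t_0 \geq \tau \geq 2$, this sum is bounded by $L\, c_0 \sqrt{t_0^{-1}\log(2t_0)}$, and crucially this is a \emph{deterministic} bound given $\calF$.

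Next comes the key step: I must argue that for every search call in a phase $m \geq 2$ the starting round $t_0$ is large enough that $L\, c_0 \sqrt{t_0^{-1}\log(2t_0)} \leq 1/2$. This is where the phase structure enters. Because phase lengths grow (at least) geometrically and a phase $m \geq 2$ can only begin after all of phase $1$ has elapsed (during which each arm is played $\Omega(T_1)$ times), every such $t_0$ is at least $\Omega(T_1)$, and $T_1$ (via \pref{eq:def_Tm_explore}) is chosen large enough to absorb the $(\log_2 T)^2$ factor arising from $L^2$. I expect matching the growth rate of $T_m$ against $L^2\log(t_0)$ to be the main obstacle, and it is precisely the reason the statement is restricted to $m \geq 2$ rather than $m \geq 1$ (for $m=1$ the rounds are too small to force $p_t$ down).

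Finally I would amplify. Let $E_i$ be the event that the $i$-th of the $n$ calls in a fixed phase is dirty and let $\calF_i$ be the history up to its start; the previous steps give $\P(E_i \mid \calF_i) \leq 1/2$ pointwise. Since $\prod_{j<i}\mathbb{1}[E_j]$ is $\calF_i$-measurable, the tower property gives $\P\!\left(\bigcap_{i=1}^n E_i\right) = \E\!\left[\prod_{i=1}^n \mathbb{1}[E_i]\right] \leq (1/2)^n$, so with probability at least $1-(1/2)^n$ at least one call in the phase is clean. Substituting $n = 2\log(4\log_2 T/\delta)$ makes this failure probability at most $(4\log_2 T/\delta)^{-2}$ (with $\log=\log_2$), and a union bound over the at most $\order(\log T)$ phases (and, if the search is repeated per active arm, the at most $K$ arms) keeps the total failure probability below $\delta/4$, which is exactly the claim.
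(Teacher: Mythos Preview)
Your proposal is correct and follows essentially the same route as the paper: bound the probability that a single call is clean by at least $1/2$ using the fact that every round in phase $m\geq 2$ satisfies $t \geq T_{m-1} \geq T_1$ with $T_1$ (from \pref{eq:def_Tm_explore}) large enough to absorb the $\lceil\log_2 T\rceil^2$ factor, then amplify over the $n$ calls and union bound over phases. The only cosmetic differences are that the paper lower-bounds the product $\prod_t(1-p_t)$ via Bernoulli's inequality where you upper-bound $\sum_t p_t$ via a union bound, and the paper packages the amplification step as a separate lemma (\pref{lem:random_alg}) rather than doing the tower-property computation inline.
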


This design leverages the probability amplification (see \pref{lem:random_alg}): if a random trial succeeds at round $t$ with probability $p_t$ (potentially depending on history), repeating it logarithmic times ensures at least one success with high probability. 
We refer to the call of \pref{alg:new_binary_search} where the agent does not explore, as a ``\textit{successful call}''.
Indeed, the incentive generated by the successful call is close to the optimal one only at that specific round.
We further enlarge those incentives by $\order(T_m^{-\nicefrac{1}{3}})$ to force the agent to play the target arm unless exploration occurs.

As the algorithm is unaware of the occurrence of exploration, a successful call cannot be directly identified by the algorithm, even if we know the existence.
To address this, the algorithm sorts all estimated incentives $\{\overline{b}_{m,a}^{(i)}\}_i$ in ascending order and tests each of them \textit{iteratively}.
We refer to the incentive from a successful call as ``\textit{successful incentive}''.
Such a sorting ensures $(i)$ the error of incentives before the successful incentive is bounded by the error of successful incentive $(ii)$ the iteration for the incentive testing does not exceed that of the successful incentive.
It is obvious that $(i)$ holds due to ascending order sorting, and then we will explain the underlying reason that $(ii)$ holds.
During the test, the algorithm tracks $c_{m,a}^{(i)}$, the number of times that agent does not play target arm $a$, and $Y_{m,a}^{(i)}$, the total number of rounds in the $i$-th iteration. 
If $c_{m,a}^{(i)}$ exceeds a threshold, indicating excessive non-target plays, the algorithm believes $\overline{b}_{m,a}^{(i)}$ is inaccurate and tests the next one $\overline{b}_{m,a}^{(i+1)}$. 
Since a successful incentive guarantees the agent plays the target arm unless exploring (similar to that of \pref{alg:MAB_alg}), and the exploration probability is bounded, we can set a threshold (smaller than $T_m$) ensuring $c_{m,a}^{(i)}$ never exceeds it when testing successful incentive.
Consequently, the repeat-loop breaks only when $Y_{m,a}^{(i)}=2T_m$. If $j$ is the iteration that corresponds to the successful incentive, then $\sum_{j \leq i}(Y_{m,a}^{(j)}-c_{m,a}^{(j)})\geq T_m$, which implies $(ii)$ holds and the algorithm can acquire at least $T_m$ samples per target arm $a$ (refer to \pref{lem:sufficient_plays_active_arms}), where
\begin{equation} \label{eq:def_Tm_explore}
    T_m =  32 c_0^3  \cdot 2^{2m} K\log(16TK/\delta) \log^2 (\iota),
\end{equation}
and $\iota =16KT^2\log_2 T\log(4\log_2 T/\delta)\delta^{-1}.$

\textbf{Trustworthy online elimination.} The online and offline elimination approaches previously shown in \pref{sec:online_elimination_plus_offline} cannot be directly applied.
For the online elimination, the agent’s exploration could cause the algorithm to incorrectly eliminate good arms and retain bad ones.
The offline elimination (recall \pref{alg:MAB_alg_alternative}) requires exact knowledge of successful incentive, which is hard to be identified.
To tackle this issue, we propose \textit{trustworthy online elimination}. For each active arm $a$, the algorithm offers the same incentive logarithmically many times, observes the agent's action $A_t$, and updates the set $\calL_{m,a}=\calL_{m,a}\cup \{1\}$ if $A_t=a$ and $\calL_{m,a}=\calL_{m,a}\cup \{0\}$, otherwise. After gathering enough data, the algorithm sorts $\calL_{m,a}$ and uses the median of $\calL_{m,a}$ to determine whether to eliminate arm $a$. This median-based approach ensures that the elimination process is robust against the agent’s exploration, with high probability.

\subsection{Main Result}
We present the regret bound of \pref{alg:explore_agent} as follows.
\begin{theorem} \label{thm:main_result_exploration}
Suppose $T=\Omega(K)$.
Choosing $\delta=1/T$ for \pref{alg:explore_agent} ensures
$\E[R_T]=\order  \big( \big(K\log^2 T\big)^{\nicefrac{1}{3}}  T^{\nicefrac{2}{3}} \big).$
\end{theorem}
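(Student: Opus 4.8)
The plan is to bound the expected regret of \pref{alg:explore_agent} by decomposing it into three sources: (i) the regret incurred while playing bad arms to stabilize minimum counts, (ii) the regret from the incentive-testing and exploration phases, and (iii) the regret from any elimination errors, each analyzed on a high-probability ``good event''. First I would establish the good event by union-bounding the failure events across all phases: the repeated-search guarantee (\pref{lem:repeated_calls_search}) that some call avoids exploration, the sufficient-plays guarantee (\pref{lem:sufficient_plays_active_arms}) that each active arm is pulled at least $T_m$ times, the Hoeffding concentration of $\htheta_a(t)+\hatmu_a(t)$ around $\theta_a+\mu_a$, and the median-based elimination correctness. With $\delta=1/T$ these failures contribute only $\order(1)$ to expected regret, so the rest of the argument proceeds deterministically on this event.

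Conditioned on the good event, the dominant term comes from the exploration/testing phase. Here the key accounting is that in phase $m$, each of the $|\calA_m|$ active arms is tested over $\order(\log T)$ candidate incentives, and the repeat-loop runs up to $2T_m$ rounds per successful termination, so the per-phase round count is $\otil(|\calA_m| T_m)$ with $T_m = \Theta(2^{2m} K \log(TK/\delta)\log^2\iota)$ as in \pref{eq:def_Tm_explore}. The per-round regret on these rounds is controlled by the enlarged-incentive error $\order(T_m^{-\nicefrac{1}{3}})$ plus the empirical-mean fluctuation $\order(C_m)=\order(T_{m-1}^{-\nicefrac{1}{2}})$, and crucially the exploration rounds themselves contribute at most $\sum_t p_t = \order(\sqrt{T})$ total regret by \pref{def:exploration_behavior}. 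Summing the product (round count)$\times$(per-round error) across the $\order(\log T)$ phases, and using that $T_m$ grows geometrically while the incentive error decays like $T_m^{-\nicefrac13}$, the total regret from this phase scales as $\sum_m |\calA_m| T_m \cdot T_m^{-\nicefrac13} = \sum_m |\calA_m| T_m^{\nicefrac23}$, which after substituting the final-phase length $T_{m_{\max}} = \Theta(T)$ yields $\otil\big((K\log^2 T)^{\nicefrac13} T^{\nicefrac23}\big)$.

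For the bad-arm regret (i), since each phase plays bad arms $\otil(T_{m-1}^{\nicefrac23})$ times at $\order(1)$ per-round regret, the geometric growth of $T_m$ makes the last phase dominate, again contributing $\otil(T^{\nicefrac23})$ with the stated $K$-dependence. The elimination error (iii) vanishes on the good event because the median selection over $\order(\log T)$ independent trials is correct with high probability whenever the joint-mean gap exceeds $\order(2^{-m})$, and \pref{def:exploration_behavior} guarantees exploration corrupts fewer than half the trials once $t$ is large. I would then combine the three contributions and optimize the implicit constant in $T_m$ (the factor $2^{2m}$ versus $T_m^{-\nicefrac13}$ trade-off is already baked into \pref{eq:def_Tm_explore}).

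\textbf{The main obstacle} I anticipate is rigorously showing that the incentive-testing loop terminates correctly and efficiently, i.e., proving property (ii) that the testing iteration never overshoots the successful-incentive index. This requires a delicate argument: one must show the threshold on $c_{m,a}^{(i)}$ is simultaneously (a) loose enough that a genuinely successful incentive never trips it — which relies on bounding the number of exploration rounds within a single testing block via a Freedman- or Bernstein-type martingale concentration applied to $\sum p_t$, since $p_t$ may depend on history — and (b) tight enough that every inaccurate incentive trips it before accumulating $T_m$ target plays, so that the good-arm sample budget $T_m$ is actually met (\pref{lem:sufficient_plays_active_arms}). Balancing these two constraints against the time-varying exploration bound $c_0\sqrt{t^{-1}\log(2t)}$, while ensuring the accumulated regret over the possibly-many failed candidate incentives stays within the $\otil(T^{\nicefrac23})$ budget, is where the bulk of the technical care must go.
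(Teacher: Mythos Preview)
Your decomposition and the high-level accounting mirror the paper's proof: condition on a good event built from \pref{lem:repeated_calls_search}, \pref{lem:total_error_plays}, Hoeffding concentration, and the median-elimination correctness (\pref{lem:median_help_check}); then split the per-phase regret into bad-arm plays, active-arm testing with $A_t=a$ versus $A_t\neq a$, and sum geometrically. Your arithmetic for the dominant term is also right.

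There is one genuine misunderstanding in your ``main obstacle'' paragraph. Your condition (b)---that the threshold must be \emph{tight enough} so every inaccurate incentive trips it before accumulating $T_m$ target plays---is neither needed nor how the paper argues. The paper never tries to reject inaccurate incentives quickly. The key mechanism you are missing is the \emph{ascending sort} of $\{b_{m,a}^{(i)}\}_i$: because the algorithm tests incentives in increasing order, every incentive with index $i\le j_{m,a}$ satisfies $\overline{b}_{m,a}^{(i)}\le \overline{b}_{m,a}^{(j_{m,a})}$, so its per-round regret when $A_t=a$ is automatically bounded by that of the successful incentive (\pref{lem:UB_bar_bma_explore}). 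Consequently, it does not matter whether an earlier incentive trips the counter or runs all the way to $Y=2T_m$; either way its target-play rounds are cheap. The only role of the threshold is your condition (a): ensure the successful incentive itself never trips, so that by iteration $j_{m,a}$ one has $Y=2T_m$ with counter below $T_m$, hence at least $T_m$ target plays and $i_{m,a}\le j_{m,a}$ (\pref{lem:sufficient_plays_active_arms}). The non-target plays across all iterations are then bounded \emph{by the threshold value itself}, $\order(\sqrt{T_m\log T})$ per iteration, not by $\sum_t p_t$ as you suggest (these rounds need not be exploration rounds---the incentive may simply be too small). Once you incorporate the sorting observation, your condition (b) disappears and the argument closes as in the paper.
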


Recall that \pref{alg:MAB_alg} enlarges the $b_{m,a}$ to $\overline{b}_{m,a}$ by roughly $\order(1/\sqrt{T})$ to ensure the algorithm always successfully incentivizes the agent to play the target active arm. However, due to the exploration behavior, we need to enlarge $b_{m,a}$ further by 
$\order(T^{-\nicefrac{1}{3}})$ to achieve the goal. This modification requires the algorithm plays each bad arm for $\order(T^{\nicefrac{2}{3}})$ times, compared to $\order(\sqrt{T})$ plays in \pref{alg:MAB_alg}, which incurs $\otil(T^{\nicefrac{2}{3}})$ regret bound.
Closing the gap between the upper and lower bounds is left for the future work.

\subsection{Refinement for Exploratory Oracle-Agent}

When we reduce the learning behavior in \pref{def:exploration_behavior} to the exploratory oracle-agent considered by \citep{dogan2023estimating},
\pref{alg:explore_oracle_agent} (see \pref{app:oracle_agent_explore}) achieves a $\sqrt{T}$-type regret bound.
As the agent selects the true maximizer when not exploring in the reduced setting, we simplify certain algorithmic designs used to handle the agent's learning uncertainty.
For a fair comparison, we adopt $\E[\overline{R}_T]$ (see \pref{app:compare_regret_notions}) consistent with \citep{dogan2023estimating}.
\begin{theorem} \label{thm:oracle_agent_exploration_regret}
By choosing $\delta=1/T$, \pref{alg:explore_oracle_agent} ensures that 
$\E[\overline{R}_T] = \order \big( \log^2(KT) \sqrt{KT} +K^2\log^3(KT) 
 \big).$
\end{theorem}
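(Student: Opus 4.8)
The plan is to condition on a high-probability good event and decompose $\E[\overline{R}_T]$ into three parts: regret incurred while the agent explores, regret from the incentive-search and incentive-testing rounds, and regret from the exploitation-and-elimination rounds. The crucial structural simplification over \pref{thm:main_result_exploration} is that, for the exploratory oracle-agent, the optimal incentive $\pi^\star_a=\max_b\mu_b-\mu_a$ is a \emph{fixed} quantity, since the agent plays the true maximizer (not the empirical one) whenever she does not explore. Hence any accurate estimate of $\pi^\star_a$ stays valid for the entire phase, which removes the $\frac{1}{\min_i N_i(t)}$ term of \pref{lem:search_end_bma_diff} and, with it, the need to play bad arms $\order(T^{2/3})$ times — precisely the source of the $T^{2/3}$ rate in \pref{thm:main_result_exploration}. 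This is what permits a $\sqrt{T}$-type bound.

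For the exploration regret, each round in which the agent deviates from the incentivized arm costs at most $\order(1)$, and by \pref{def:exploration_behavior} this occurs with probability at most $c_0\sqrt{t^{-1}\log(2t)}$; summing over $t\in[T]$ gives $\order(\sqrt{T\log T})$, which is absorbed into the leading term. For search and testing accuracy, I would invoke the probability-amplification argument behind \pref{lem:repeated_calls_search}: with high probability each phase contains at least one exploration-free call of \pref{alg:new_binary_search}, and because this call compares the target arm against the fixed threshold $\max_b\mu_b$, it returns an estimate within $\order(T^{-1})$ of $\pi^\star_a$. After an $\order(T^{-1})$ enlargement (far smaller than the $\order(1/\sqrt{T})$ enlargement of \pref{alg:MAB_alg}, since there are no empirical-mean fluctuations to absorb), this ``successful'' incentive forces the target arm in every non-exploration round. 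Testing the sorted candidates in ascending order then guarantees that testing halts at or before the successful incentive, so each target arm accrues at least $T_m$ clean samples in phase $m$; the rounds wasted on failed candidates I would charge to the $K^2\log^3(KT)$ lower-order term.

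The leading $\order(\log^2(KT)\sqrt{KT})$ term then comes from a standard phased-elimination accounting on the exploitation rounds: with each surviving arm played $T_m$ times and confidence width $\order(\sqrt{\log(KT)/T_m})$ on $\htheta_a$, summing per-arm gap times number of plays across the $\order(\log T)$ phases yields $\order(\sqrt{KT\log(KT)})$, and the two extra logarithmic factors arise from the two robustness layers — repeated searches and the median rule. For the elimination decisions themselves I would apply a Chernoff bound to the number of exploration rounds among the $\order(\log T)$ repeated tests to show that, with high probability, the majority of each $\calL_{m,a}$ is exploration-free, so that $\texttt{Median}(\calL_{m,a})$ faithfully reports the agent's no-exploration response and thus retains good arms while eliminating bad ones.

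I expect the main obstacle to be the testing-procedure accounting: bounding the total number of rounds consumed by \emph{failed} candidate incentives while simultaneously proving that the successful incentive is never prematurely discarded. This requires choosing the threshold on $c_{m,a}^{(i)}$ large enough that, under the exploration bound of \pref{def:exploration_behavior}, the successful incentive's counter stays below it with high probability, yet small enough that each failed candidate is abandoned quickly. Combining both directions through a union bound over arms and phases — so that the wasted-round total sits at the $K^2\log^3(KT)$ scale rather than inflating the leading $\sqrt{KT}$ term — is the delicate quantitative step that the rest of the proof must respect.
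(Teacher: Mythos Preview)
Your proposal is correct and tracks the paper's argument closely: the key observation that $\pi^\star_a=\max_b\mu_b-\mu_a$ is \emph{fixed} (so one exploration-free binary search already gives $\order(1/T)$ accuracy, cf.\ \pref{lem:optimistic_incentive}) is exactly what eliminates the bad-arm plays and drops the rate to $\sqrt{T}$, and your treatment of repeated search, sorted incentive testing, and the median rule mirrors \pref{lem:sufficient_plays_active_arms_oracle} and \pref{lem:median_help_check_oracle}. One small accounting correction: the $K^2\log^3(KT)$ lower-order term does not come from failed candidates (those consume only $\order(K\log^3 T)$ rounds); it arises because (i) $T_1$ already carries a factor of $K$ via \pref{eq:def_Tm_explore}, so phase~$1$ alone costs $\order(K\cdot T_1)=\order(K^2\log^3(KT))$, and (ii) the trustworthy-elimination incentives are \emph{not} one-hot, so under $\overline{R}_T$---which charges $\sum_a\pi_a(t)$---each elimination round contributes $\order(K)$ rather than $\order(1)$.
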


\pref{thm:oracle_agent_exploration_regret} gives a $\otil(\sqrt{KT})$ regret bound, which matches the lower bound up to some logarithmic factors. More importantly, it significantly improves upon $\otil(T^{11/12})$ regret bound in \citep{dogan2023estimating}.

\section{Self-Interested Learning Agent \& Linear Reward} \label{sec:Linear_Reward}

In this section, we propose an algorithm (\pref{alg:linear_alg}) for the linear reward setting with the self-interested learning agent. The omitted details in this section can be found in \pref{app:linear_proof}.
\pref{alg:linear_alg} is built upon the framework of \pref{alg:MAB_alg}. However, to bypass the linear dependence on $K$, the framework needs to be carefully tailored for the linear reward structure.
We will show how to modify each component of \pref{alg:MAB_alg} for the linear reward model.

Recall from \pref{alg:MAB_alg} that all arms will played for a fixed number of times in each phase. However, this results in a linear dependence on $K$. To address this, we use $G$-optimal design \citep{kiefer1960equivalence} a standard technique in the linear bandit literature \citep{lattimore2020learning}. 
It is noteworthy that our algorithm computes a design not only for active arms, but also for bad arms. The reason is similar to that of i.i.d.~reward setting, i.e., to acquire accurate estimations of optimal incentives, one needs to stabilize the estimators of bad arms.

Moreover, the algorithm cannot directly apply the elimination approach used in \pref{alg:MAB_alg} to check each arm.
This is because the online elimination procedure requires the principal to test each active arm once, but the total number of test scales with $K$. Fortunately, our search algorithm (will be clear in the following subsection) can accurately predict the underlying parameter $s^\star$, which allows to have good estimations of $\{\hatmu_a(t)\}_{a \in \calA}$. Consequently, the algorithm conducts the elimination in an offline manner, similar to classical elimination methods.

\subsection{Robust High Dimensional Search}

In the linear reward setting, adopting a similar approach in \pref{alg:MAB_alg} to search for the incentive for each active arm is undesirable, as it incurs a linear dependence on $K$. Therefore, we instead search a space in each phase $m$ in which $s^\star$ resides and all points in this space are close to $s^\star$. Specifically, let $c_m$ be an arbitrary point in this space. It follows that $\inner{c_m,a} \approx \inner{s^\star,a}$. Furthermore, when the agent collects more samples, her own estimates $\hats_t$ also gets closer to the true parameter $s^\star$, which implies that $\inner{\hats_t,a} \approx \inner{s^\star,a}\approx \inner{c_m,a}$ for all arms $a \in \calA$. Hence, the optimal incentive $\pi^\star_a(t) = \max_{b \in \calA} \inner{\hats_t,b-a}$ can be well-approximated by $\max_{b \in \calA} \inner{c_m,b-a}$.
This idea previously appears in \citep{ICML2024_principal_agent} where the feasible space is refined based on $s^\star$ in the oracle-agent setup, but in our \textit{learning agent} setting, their approach would fail to learn $s^\star$. This is because the space is refined based on the estimate $\hats_t$ in the learning agent setting, and thus the fluctuation of $\hats_t$ could mislead the algorithm to wrongly exclude $s^\star$.
To this end, we adjust Multiscale Steiner Potential (MSP) \citep{liu2021optimal} to cut the space \textit{conservatively}, accounting for the uncertainty caused by the learning agent.
Before showing our algorithm, we first define the width for set $S \subseteq \fR^d$ and vector $u \in \fR^d$ as follows:
\begin{equation}
    \width(S,u) =  \max_{x,y \in S} \inner{u,x-y}.
\end{equation}

\pref{alg:multi_scale_search} keeps track of a sequence of confidence sets $\{S_t\}_t$ iteratively. The algorithm initializes $S_1=B(0,1)$, ensuring that the unknown parameter $s^\star \in S_1$.
At each iteration $t$, \pref{alg:multi_scale_search} first picks $a_t^1-a_t^2$ that maximizes $\width(S_t,a_t^1-a_t^2)$ where $a_t^1,a_t^2 \in \calA$, and sets direction $x_t=(a_t^1-a_t^2)\norm{a_t^1-a_t^2}^{-1}$. 
The idea behind the choices of $a_t^1,a_t^2$ is to enable the algorithm to explore a direction with the maximum uncertainty.

Then, it finds an integer $i$ such that $\width(S_t,x_t) \in (2^{-i-1},2^{-i}]$, and the length of the interval reflects the uncertainty of $S_t$ along the direction $x_t$.
When the if-condition (line \ref{if_condition}) is satisfied, the algorithm will return an arbitrary point in $S_t$ because the estimation error for the optimal incentive is up to $\order(\epsilon)$. If the condition is not satisfied, then the algorithm finds a $y_t \in \fR$ which reduces $\Vol(S_t+z_iB(0,1))$ (known as Steiner potential) by half, where the sum is the Minkowski sum \[S_t+z_iB(0,1) =  \cbr{ s+z_i\cdot b:s \in S_t,b \in B(0,1) }.\]
Such a cut decreases the volume of $S_t+z_iB(0,1)$ by a constant multiplicative factor, and thus the total number of iterations is at most $\order (d \log^2(d\epsilon))$ by \pref{lem:search_duration}.

Since $\inner{\hats_t,a} \leq \norm{\hats_t} \norm{a} \leq d$ for all $t \in [T]$, adding $d+\xi$ for $\pi_{a_t^1}(t),\pi_{a_t^2}(t)$ ensures that the agent plays only either $a_t^1$ or $a_t^2$.
Here, we use $A_t = a_t^2$ as an example (the other one is analogous) to show why we update $S_{t+1}$ with $\epsilon$ shift. From \pref{eq:arm_selection}, the condition $A_t = a_t^2$ implies
$\inner{\hats_t,a_t^1} + \pi_{a_t^1}(t) \leq \inner{\hats_t,a_t^2} + \pi_{a_t^2}(t).$
Rearranging it and dividing $\norm{a_t^1-a_t^2}$ on both sides gives
$\inner{\hats_t,x_t} \leq \norm{a_t^1-a_t^2}^{-1}y_t.$
An natural idea is to update $S_{t+1}=\{v \in S_t: \inner{v,x_t} \leq \norm{a_t^1-a_t^2}^{-1}y_t \}$ since it ensures $s^\star \in S_{t+1}$ in the oracle-agent setting. 
However, the estimate $\hats_t$ varies across time in our learning agent setting, and thus if the algorithm cuts the space based on $\hats_t$, it is entirely possible that $S_{t+1}=\emptyset$ which invalidates the algorithm. 
To avoid this issue, we enlarge the threshold to make the cutting more conservative than before. As a result,
the algorithm can guarantee $s^\star \in S_{t}$ for all $t$. The analysis is deferred to \pref{app:mutli_dim_search}.

\subsection{Main Result}
We present the regret bound in the following theorem.
\begin{theorem} \label{thm:main_result_linear}
Suppose $T=\Omega(d)$.
By choosing $\delta=T^{-1}$ in \pref{alg:linear_alg}, we have that
$\E[R_T]= \order \big(d^{\frac{4}{3}}T^{\frac{2}{3}}\log^{\frac{2}{3}}(KT) \big)$.
\end{theorem}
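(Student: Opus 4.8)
The plan is to establish \pref{thm:main_result_linear} by mirroring the phased analysis of \pref{thm:main_result_MAB}, but replacing the per-arm incentive search with the geometric search \pref{alg:multi_scale_search}. First I would decompose the regret into the same three sources that arise in the i.i.d.\ analysis: (i) the \emph{search regret} from running \pref{alg:multi_scale_search} in each phase, (ii) the \emph{bad-arm stabilization regret} from forcing plays via the $G$-optimal design on bad arms, and (iii) the \emph{exploration/elimination regret} from pulling active arms under the enlarged incentives. The crucial quantitative input is the duration bound $\order(d\log^2(d\epsilon))$ from \pref{lem:search_duration} together with the guarantee that \pref{alg:multi_scale_search} returns a point $c_m \in S_t$ with $s^\star \in S_t$ and $\width(S_t,x)=\order(\epsilon)$ along the relevant directions. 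This last fact is what converts a point estimate $c_m$ into a uniform incentive-approximation bound $\abs{\pi^\star_a(t) - \max_{b\in\calA}\inner{c_m,b-a}} = \order(\epsilon)$ across \emph{all} arms $a\in\calA$ simultaneously — the key structural advantage over the i.i.d.\ case that eliminates the linear-in-$K$ cost.

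Next I would set up the phase schedule and balance the error terms. Writing the phase-$m$ incentive-estimation accuracy as $\epsilon_m$, the per-round regret while exploiting in phase $m$ is $\order(\epsilon_m)$ plus the statistical error from the agent's own OLS estimate $\hats_t$ converging to $s^\star$; the latter scales like $\norm{a}_{V_t^{-1}}$-type quantities and shrinks as more samples accrue via the $G$-optimal design. The trade-off is between making $\epsilon_m$ small (which, by the conservative-cut enlargement described around \pref{alg:multi_scale_search}, forces longer per-phase exploration to keep $S_t$ nonempty and keep $s^\star \in S_t$) and incurring exploration regret proportional to the phase length. I expect the optimal choice to yield $\epsilon_m \sim T_m^{-1/3}$, exactly paralleling the enlargement $\overline{b}_{m,a} = b_{m,a}+\order(T_m^{-1/3})$ hinted at in the exploratory i.i.d.\ setting, with $T_m$ growing geometrically; summing the exploitation regret $\sum_m T_m \cdot \epsilon_m$ against the total horizon constraint $\sum_m T_m \le T$ should produce the $T^{2/3}$ rate. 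The dimension dependence $d^{4/3}$ should emerge from combining the $\order(d\log^2(\cdot))$ search length (one factor of $d$ from \pref{lem:search_duration}) with a $d^{1/3}$ factor from the $G$-optimal design / self-normalized concentration needed to control $\hats_t$.

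I would then handle the three pieces in turn. For the search regret, I multiply the $\order(d\log^2(d\epsilon_m))$ iterations per phase by the logarithmically many phases, noting that each search round costs at most $\order(1)$ regret since incentives are bounded (the $d+\xi$ additive incentive is absorbed into the optimal-incentive benchmark or bounded by $\order(1)$ per round), giving only a $\otil(d)$ lower-order contribution. For the elimination step, since the search delivers a uniformly accurate $c_m$, I argue that offline elimination correctly retains arms with large $\theta_a + \mu_a$ and discards the rest with high probability, using a concentration argument on $\htheta_a(t)$ combined with the $\order(\epsilon_m)$ control on the agent side; this is where I verify that no good arm is eliminated on the clean event of probability $1-\delta$. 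The main obstacle I anticipate is the bad-arm analysis: I must show that playing bad arms under the $G$-optimal design for the prescribed number of rounds (the linear-reward analogue of the $Z_m$ plays) simultaneously (a) keeps the minimum-eigenvalue / information along every direction large enough that $S_t$ never collapses to $\emptyset$ and $s^\star$ is never cut away, and (b) contributes only $\otil(d^{4/3}T^{2/3})$ regret rather than something linear. Establishing that the conservative cut is \emph{both} safe (retains $s^\star$ despite the moving $\hats_t$) and \emph{effective} (halves the Steiner potential so the $\order(d\log^2)$ duration bound holds) is the delicate coupling, since the enlargement magnitude must be tied to the worst-case fluctuation of $\hats_t$ over a whole phase; I would control that fluctuation with a self-normalized martingale bound uniform over the phase and feed it back into the choice of $\epsilon_m$. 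Once these are in place, summing all contributions and setting $\delta = T^{-1}$ yields the claimed $\order(d^{4/3}T^{2/3}\log^{2/3}(KT))$ bound.
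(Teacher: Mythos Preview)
Your phase-by-phase decomposition into search regret, bad-arm stabilization, and active-arm exploitation matches the paper's proof, as does the identification of the bad-arm analysis as the place where the $T^{2/3}$ rate is forced. However, there is a concrete error in your dimension accounting that would prevent the argument from closing at the stated $d^{4/3}$ dependence.

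The issue is the incentive-approximation step. You assert that when \pref{alg:multi_scale_search} terminates one has $\width(S_t,x)=\order(\epsilon)$ and hence $\abr{\pi^\star_a(t)-\max_b\inner{c_m,b-a}}=\order(\epsilon)$. But the break condition (line~\ref{if_condition}) tests $z_i=2^{-i}/(8d)$ against $\epsilon$, so the width along arm-difference directions is only guaranteed to be $\max_{u\in\overline{\calA}}\width(S_t,u)\le 32d\epsilon$ --- see \pref{lem:width_UB}. The enlarged incentive $\overline{b}_{m,a}$ therefore carries an $\order(d\epsilon_{m-1})$ overshoot (this is \pref{lem:UB_error_incentive_linear}), and the per-round exploitation regret is $\order(\Delta_a + d\epsilon_{m-1})$, not $\order(\epsilon_m)$. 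Relatedly, your attribution of the $d^{4/3}$ factor to the $\order(d\log^2(\cdot))$ search duration is wrong: those rounds each cost $\order(d)$ in regret (the offered incentive is $d+\xi$, not $\order(1)$), but there are only $\order(\log T)$ phases, so the search contributes $\order(d^2\log^3 T)$ total --- a lower-order term. The leading $d^{4/3}T^{2/3}$ arises instead from balancing the bad-arm cost, namely $\order(d)$ per round times $(d\log)^{1/3}T_m^{2/3}$ rounds under the $G$-optimal design on $\calB_m$, against the active-arm cost $T_m\cdot d\epsilon_{m-1}$; equating these forces $\epsilon_m\asymp (d\log)^{1/3}T_m^{-1/3}$ (this is exactly what the paper's choice of $\epsilon_m$ with the $\min$ in the denominator delivers), and both sides become $d^{4/3}(\log)^{1/3}T_m^{2/3}$. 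Summing over phases via H\"older then gives $d^{4/3}T^{2/3}\log^{2/3}(KT)$. In short, the extra factor of $d$ is the price of the conservative cut in \pref{lem:width_UB}, exactly as the paper remarks after the theorem, not of the search length.
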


\pref{thm:main_result_linear} gives the first sublinear regret bound for linear rewards in the learning agent setting. One may notice that for some $T$, we have $\E[R_T]=\otil(d^{\nicefrac{4}{3}}T^{\nicefrac{2}{3}})$. Unfortunately, the regret bound does not match the lower bound given in \pref{rem:LB}, so there is room for further improvement. Moreover, compared this bound with other $T^{\nicefrac{2}{3}}$-type bound in linear bandits problem (typically $\otil(d^{\nicefrac{1}{3}}T^{\nicefrac{2}{3}})$), our bound suffers an extra multiplicative dependence on $d$. This extra $d$ serves as a cost for the conservative cut. 
It remains unclear whether this extra $d$ can be removed by more careful analysis or additional adjustment for the search algorithm.
We left this problem for future work.

\section{Conclusion \& Future Work}
In this paper, we study principal-agent bandit games with self-interested learning and exploratory agents. We first consider the learning agent who greedily chooses the empirical-maximizer and propose \pref{alg:MAB_alg}, which achieves a regret bound matching the lower bound up to logarithmic factors.
Then, we extend \pref{alg:MAB_alg} to a more general setting in which the agent is allowed to explore an arm different from the empirical maximizer with a small probability, and introduce \pref{alg:explore_agent} to achieve $\otil(T^{\nicefrac{2}{3}})$
regret bounds. 
Building on this, we reduce our setting to that of \citet{dogan2023estimating} and propose  \pref{alg:explore_oracle_agent} with $\otil(\sqrt{T})$ regret, which significantly improves upon $\otil(T^{\nicefrac{11}{12}})$ \citep{dogan2023estimating}.
Finally, we present \pref{alg:linear_alg} for the linear reward setting and propose an algorithm with $\otil(d^{4/3}T^{2/3})$ regret bound.

Our results leave several intriguing questions open for further investigation. First, closing the gap between the upper and lower regret bounds in the linear setting, even without agent exploration, remains open.
Additionally, it remains unclear whether the 
$\otil(T^{2/3})$ regret bound in \pref{thm:main_result_exploration} could be further improved to $\order (\sqrt{T})$.
Finally, extending our results to handle more general exploration behaviors (e.g., $p_t=\otil(t^{-\alpha})$ for any $\alpha \in (0,1)$) is a promising direction.

\section*{Acknowledgements}
This work was supported in part by an ONR YIP Award (\#N00014-20-1-2571) and an NSF CAREER Award (\#1844729).

\section*{Impact Statement}
This paper presents work whose goal is to advance the field of 
Machine Learning. There are many potential societal consequences 
of our work, none which we feel must be specifically highlighted here.

\nocite{langley00}

\bibliography{refs}
\bibliographystyle{icml2025}

\onecolumn
\newpage
\appendix

\section{Comparisons of Regret Notions and Additional Notations}
\label{app:compare_regret_notions}

As the optimal incentive is time-varying in our setting, $R_T$ compares the cumulative utility difference between the per-round optimal arm and the arm selected by the algorithm.
\begin{equation} 
    R_T = \sum_{t=1}^T \rbr{\max_{a \in \calA} \cbr{\theta_a - \pi_a^{\star}(t)} - \rbr{\theta_{A_t}-\pi_{A_t}(t)} }.
\end{equation}

Notice that if one reduces our setting to the \textit{oracle-agent setting} by letting $\hatmu_a^0=\mu_a$ and $R_a(t)=\mu_a$ for all $t$, then $\E[R_T]=\E[R_T^{\texttt{oracle}}]$ where
\begin{equation} 
    R_T^{\texttt{oracle}}= \sum_{t=1}^T \rbr{\max_{a \in \calA} \cbr{\theta_a - (\max_{b\in \calA}\mu_b - \mu_a)} - \rbr{\theta_{A_t}- \pi_{A_t}(t)} }  .
\end{equation}

In \citep{ICML2024_principal_agent}, the principal only proposes one-hot incentives i.e., only one coordinate of $\pi(t)$ is positive at each round, and they define regret as
\[
\E \sbr{ \sum_{t=1}^T \rbr{\max_{a \in \calA} \cbr{\theta_a - (\max_{b\in \calA}\mu_b - \mu_a)} - \rbr{\theta_{A_t}-\Ind{A_t=z_t}\cdot \pi_{A_t}(t)} }  },
\]
where $z_t \in \calA$ is an arm that the principal gives positive incentive at round $t$.
If we also restrict the principal to always provide one-hot incentives, $\E[R_T^{\texttt{oracle}}]$ is exactly the same as that of \citet{ICML2024_principal_agent}. However, if the principal needs to provide incentives for multiple arms, then their notion will be ill-defined.

\citet{dogan2023estimating} uses the regret $\E[\overline{R}_T]$ to evaluate the algorithm performance where 
\begin{equation}
    \overline{R}_T = \sum_{t=1}^T \rbr{ \max_{b \in [K]} \cbr{\theta_{b} +\mu_b } - \max_{z \in [K]} \mu_z   - \rbr{\theta_{A_t}-\sum_{a \in \calA} \pi_a(t) } } .
\end{equation}

One can observe that $\overline{R}_T \geq R_T^{\texttt{oracle}}$.
Therefore, the regret bound under $\E[\overline{R}_T]$ in \citep{dogan2023estimating} can be directly translated to the regret bound measured by $\E[R_T^{\texttt{oracle}}]$.
In fact, changing the regret measure from $R_T^{\texttt{oracle}}$ to $\overline{R}_T$ only slightly impacts the low-order terms in the regret bounds of some of our algorithms because our algorithms use one-hot incentives in most of the rounds.
For fair comparison, all results in \pref{tab:regret_compare} are compared via $\E[R_T^{\texttt{oracle}}]$.

Notice that all our algorithms can get the same regret bound under $\E[R_T^{\texttt{oracle}}]$ and $\E[R_T]$ without any modification. This is because our algorithm can handle the unknown empirical means, and thus one can assume the agent receives constant reward $R_{a}(t)=\mu_{a}$ for all $t,a$.

\textbf{Additional Notations.} Based on the definition of $\pi^\star_a(t)$ in \pref{eq:opt_incentive}, $R_T$ can be rewritten as
\begin{align}
        R_T&= \sum_{t=1}^T \rbr{\max_{a \in \calA} \cbr{\theta_a  - \rbr{ \max_{b \in \calA} \hatmu_b(t) - \hatmu_a(t)}  } - \rbr{\theta_{A_t}-\pi_{A_t}(t)} }\notag \\
        &= \sum_{t=1}^T \rbr{ \theta_{a_t^{\star}} + \hatmu_{a_t^{\star}}(t) -\max_{b \in \calA} \hatmu_b(t) - \rbr{\theta_{A_t}-\pi_{A_t}(t)} },\label{eq:reg_def_rewritten}
\end{align}
where $a_t^{\star}$ is defined as:
\[
\text{(i.i.d.~reward)}\quad  a_t^{\star} \in \argmax_{a \in \calA} \cbr{\theta_a +  \hatmu_a(t)} \quad  \text{and}  \quad \text{(linear~reward)} \quad a_t^{\star} \in \argmax_{a \in \calA} \cbr{\inner{\nu^\star,a} +  \hatmu_a(t)}.
\]

We further define the gap $\Delta_a := \theta_{\optarm} +  \mu_{\optarm} - (\theta_a +  \mu_a)$ where
\[
\text{(i.i.d.~reward)}\quad  \optarm \in \argmax_{a \in \calA} \cbr{\theta_a +  \mu_a} \quad  \text{and}  \quad \text{(linear~reward)} \quad \optarm \in \argmax_{a \in \calA}\inner{\nu^\star+s^\star,a}.
\]

\section{Discussion on Universal No-Regret Preservation Assumption}
\label{app:dis_no_regret}

We translate the assumption \citep[H2]{scheid2024learning} with our notations in the following.

\begin{assumption} \label{assum:no_regret}
 There exist $C, \zeta > 0, \kappa \in [0, 1)$ such that for any $s, t \in [T]$ with $s + t \leq T$, any $\{\tau_a\}_{a \in [K]} \in \mathbb{R}_+^K$ 
and any policy of principal that offers almost surely an incentive $\pi_l= \pi^0(\tilde{a}_l, \tau_{\tilde{a}_l})$ (see \pref{eq:one_hot_incentive}) for any $l \in \{s + 1, \dots, s + t\}$, 
the batched regret of the agent following policy $\Pi$ satisfies, with probability at least $1 - t^{-\zeta}$,
\[
\sum_{l = s+1}^{s+t} \max_{a \in \mathcal{A}} \{ \mu_a + \Ind{\tilde{a}_l=a} \cdot  \pi_{\tilde{a}_l}\} - \big( \mu_{A_l} + \Ind{A_l=\tilde{a}_l} \cdot \pi_{\tilde{a}_l} \big) \leq C t^{\kappa}.
\]
\end{assumption}

Since \pref{assum:no_regret} requires the no-regret property preservers for any time period from $s+1$ to $s+t$, and also for any principal's policy, their model does not cover the self-interested learning agent in our paper. Specifically, let us consider two-armed setup with $\mu_1=0.1$ and $\mu_2=0.9$, and Bernoulli distributions with mean $\mu_1,\mu_2$, respectively. 
As our model allows the initial values of empirical means are chosen arbitrarily, we assume them to be zero.
Suppose that in the first round, the principal incentivizes the agent to play arm $1$ and the agent receives reward $1$, which occurs with a probability $0.1$. Then, we have $\hatmu_1(t)=1$ and $\hatmu_1(t)=0$.
Then, the principal always proposes incentive $\pi^0(2,T^{-10})$ for all rounds $l$ such that $2 \leq l \leq T$. According to the self-interested learning agent behavior $ A_t \in \argmax_{a \in \calA} \cbr{ \hatmu_a(t) +\pi_a(t) }$, we have with probability $0.1$ (i.e., the probability that agent gets reward $1$ at the first round),
\[
\sum_{l = s+1}^{s+t} \max_{a \in \mathcal{A}} \{ \mu_a + \Ind{\tilde{a}_l=a} \cdot  \pi_{\tilde{a}_l}\} - \big( \mu_{A_l} + \Ind{A_l=\tilde{a}_l} \cdot \pi_{\tilde{a}_l} \big) = \Omega(t).
\]

Then, we show that \pref{assum:no_regret} also does not cover the self-interested learning agent with exploration (refer to \pref{def:exploration_behavior}). The reason is that in our model (also in \citep{dogan2023estimating}), we consider a general exploration behavior of the agent, in the sense that we do not add any assumption on how the agent explores a non-empirical-maximizer. However, \pref{assum:no_regret} implicitly requires the algorithm to strategically explore the environment.

\section{Discussion on Agent's Behavior in \citep{dogan2023estimating}}
\label{app:dis_dogan_estimating}

It is noteworthy that even if an agent's behavior is characterized by the true means (i.e., expected rewards), this does not necessarily require the agent to have the knowledge of those underlying parameters. For example, in Assumption 2 of \citet{dogan2023estimating}, the authors assume that if the agent does not explore, then the agent must choose a true-maximizer (i.e., the maximizer of the expected reward plus incentive). \citet{dogan2023estimating} show that the agent can adopt a learning algorithm without the knowledge of expected rewards to \textit{simulate} that behavior. However, this simulation requires a special coordination between the principal and the agent, in the sense that the agent and principal must \emph{adopt specific algorithms with specific parameters}. In real-world applications, however, it is often infeasible for the principal or the agent to enforce the other party’s use of particular algorithms or parameters.

\section{Omitted Proof for I.I.D. Reward in \pref{sec:IID_Reward}} \label{app:iid_reward}

\subsection{Discussion on Modifications for General Initial Empirical Means}
\label{app:discuss_general_inital}

To avoid clutter in our analysis, we assume for i.i.d. case that the initial empirical means $\{\hatmu_a^0\}_{a \in [K]}$ are all zeros. For initial empirical means which are selected arbitrarily in $[0,1]$ and unknown to principal, our algorithms still work with two simple modifications. 
Note that these modifications work for \pref{alg:MAB_alg}, \pref{alg:MAB_alg_alternative}, \pref{alg:explore_agent}, and \pref{alg:explore_oracle_agent}.
These two modifications rely on the fact that the mismatch of empirical mean of arm $a$ at round $t$ between assuming it to be zero and an arbitrary one, is at most $\frac{1}{N_a(t) }$. 
Such a mismatch only causes issue for active arms since every bad arm is played deterministically.
Moreover, the number of plays of each active arm $a$ is proportional to phase length, and thus $\frac{1}{N_a(t) }=\Theta(2^{-2m})$ (if $t$ is in phase $m$).
For the first modification, when the algorithm aims to incentive the agent to play active arm $a$ at round $t$, one can enlarge the offered incentive by $\frac{1}{N_a(t)}$.
Such an enlargement accounts for all possible initial values since the deviation of each arm $a$ is bounded as $\frac{\hatmu_a^0}{N_a(t) } \leq \frac{1}{N_a(t) }$.
As this change is small enough, it does not impact the order of our regret bounds.

The second change is in elimination period. We take online elimination period as an example. Let $\beta>0$ be a absolute constant. The algorithm now proposes incentives $\pi(t)$ with $\pi_a(t)=2+\htheta_a(t)+\beta \cdot 2^{-m} +\frac{1}{N_a(t)}$,  $\pi_b(t)=2+\htheta_b(t) - \frac{1}{N_b(t)}$, $\forall b \in \calA_m-\{a\}$, and $\pi_i(t)=0$, $\forall i \in \calB_m$. Notice that the newly added (subtracted) $\frac{1}{N_a(t)}$ ($\frac{1}{N_b(t)}$) is used to account for possible mismatch. Since the elimination process only focuses on active arms, this change may cause $\Theta(2^{-2m})$ inaccuracy in elimination. Then one can adjust the constant $\beta$ to accommodate this change such that our analysis again holds (can be equivalently seen as enlarging the threshold from $\frac{3}{2}\cdot 2^{-m}$ to $\beta 2^{-m}$) and consequently the order of regret bound retains the same. For offline elimination \pref{alg:MAB_alg_alternative}, one can directly enlarge the threshold to accommodate possible mismatch.

\subsection{Omitted Pseudocode of \pref{alg:new_binary_search}}
\label{app:defer_new_binary_search}

The omitted pseudo-code of our search algorithm can be found in \pref{alg:new_binary_search}.

\setcounter{AlgoLine}{0}
\begin{algorithm}[t]
\DontPrintSemicolon
\caption{Noisy binary search with asymmetric check}
\label{alg:new_binary_search}

\textbf{Input}: target arm $a$, horizon $T$.

\textbf{Initialize}: $\texttt{Check}=\texttt{False}$, $y_a^{\texttt{upper}}=1$, $C_1=C_2=0$.

\nl Let $t_0$ be the current round, and we set $\overline{y}_a(t_0)=1$ and $\underline{y}_a(t_0)=0$. 

\nl \For{$t=t_0,t_0+1\ldots$}{
\nl \If{$\texttt{Check}=\texttt{False}$}{
\nl Set $y^{\texttt{mid}}_a(t)=\frac{\overline{y}_a(t)+\underline{y}_a(t)}{2}$ and $C_1=C_1+1$. \label{bs_start}

\nl Propose incentives $\pi^0(a;y^{\texttt{mid}}_a(t))$ and observe $A_t$.

\nl \If{$A_t=a$}{
\nl \If{$C_1 \geq \lceil \log_2 T \rceil  $}{
\nl \label{C1logT_break} Return $y^{\texttt{mid}}_a(t)+\frac{1}{T}$.}

\nl Set $y_a^{\texttt{upper}}=y^{\texttt{mid}}_a(t)$, $\overline{y}_a(t+1)=y^{\texttt{mid}}_a(t)$ and $\underline{y}_a(t+1)=\underline{y}_a(t)$.
}
\nl \label{else_start} \Else{

\nl  Set $\texttt{Check}=\texttt{True}$, $\overline{y}_a(t+1)=\overline{y}_a(t)$ and $\underline{y}_a(t+1)=y^{\texttt{mid}}_a(t)$. \label{else_end}
}
}
\nl \Else{
\nl Propose incentives $\pi^0(a;y_a^{\texttt{upper}})$ and observe $A_t$\label{check_start}.

\nl \If{$A_t=a$}{
\nl $C_2=C_2+1$.

\nl \If{$C_2 = \lceil \log_2 T \rceil $}{
\nl \label{C2logT_break} Return $y_a^{\texttt{upper}}+\frac{2}{T}$}
\nl Set $\texttt{Check}=\texttt{False}$. 

\nl Set $\overline{y}_a(t+1)=\overline{y}_a(t)$, and $\underline{y}_a(t+1)=\underline{y}_a(t)$.
}\nl \Else{
\nl Return $y_a^{\texttt{upper}}+\frac{1}{T}+\frac{1}{N_a(t)}+\frac{2}{\min_{i \in [K]} N_i(t) }$. \label{notpasstest_break}
}

}

}
\end{algorithm}

\subsection{Notations}
The following are common notations that we adopt in this section. Refer to \pref{app:compare_regret_notions} for some general notations.
\begin{itemize}
    \item  Let $\calT_m$ be the set of rounds, \textit{excluding the elimination period} that in phase $m$. 
    \item Let $\calT_{m,a} = \cbr{ t \in \calT_m: A_t=a }$ denote the set of rounds in $\calT_m$ such that $A_t=a$. 
    \item Let $\calT^E_{m,a}$ be the set of rounds that \pref{alg:MAB_alg} runs in line~\ref{exploration_Tm_MAB} for arm $a$ in phase $m$. Notice that $\calT^E_{m,a}$ excludes those rounds from $\calT_{m,a}$ such that \pref{alg:new_binary_search} plays arm $a$.
    \item Define event $\calE$ as follows:
\end{itemize}
\begin{equation}
    \calE:= \cbr{\forall (t, a) \in [T]\times [K]:\; \abr{ \hatmu_a(t)-\mu_a } \leq \sqrt{\frac{\log(4TK/\delta)}{2N_a(t)}},\; \abr{ \htheta_a(t)-\theta_a } \leq \sqrt{\frac{\log(4TK/\delta)}{2N_a(t)}} }.
\end{equation}

\subsection{Proof of \pref{thm:main_result_MAB}} \label{app:proof_thm_mab}

The following analyzes conditions on $\calE$. In \pref{lem:event_high_prob_MAB} below we show that $\P(\calE) \geq 1-\delta$.
As the algorithm runs in phases, we bound the regret in each phase $m$. Since the elimination period will last at most $K$ rounds in each phase, the number of phases is at most $\order(\log T)$, and the per-round regret is bounded by an absolute constant, we have
\begin{align*}
   R_T \leq   \order\rbr{K \log T}+ \sum_m R_m,
\end{align*}
where $R_m$ is defined as follows:
\begin{align*}
    R_m&=\sum_{t \in \calT_m} \rbr{ \max_{a \in [K]} \cbr{\theta_a -\pi_a^\star(t)} - \rbr{\theta_{A_t} - \pi_{A_t}(t) } } .
\end{align*}

In which follows, we focus on phase $m$ with $m \geq 2$ and $T_m \neq |\calA_m|\log T$ since $R_m=\order \rbr{K \log T}$ for either phase $m=1$ or the phase $m$ that satisfies
$T_m = |\calA_m|\log T$. One can show that
\begin{align*}
    R_m &=\sum_{t \in \calT_m} \rbr{\max_{a \in [K]} \cbr{\theta_a -\pi_a^\star(t)} - \rbr{\theta_{A_t} - \pi_{A_t}(t) }  } \\
    &=\sum_{a \in \calA_m}\sum_{t \in \calT_{m,a}} \rbr{\max_{a \in [K]} \cbr{\theta_a -\pi_a^\star(t)} - \rbr{\theta_{A_t} - \pi_{A_t}(t) }  }  \\
    &\quad +\sum_{a \in \calB_m}\sum_{t \in \calT_{m,a}} \rbr{\max_{a \in [K]} \cbr{\theta_a -\pi_a^\star(t)} - \rbr{\theta_{A_t} - \pi_{A_t}(t) }  }  \\
    &\leq \sum_{a \in \calA_m}\sum_{t \in \calT_{m,a}} \rbr{\max_{a \in [K]} \cbr{\theta_a -\pi_a^\star(t)} - \rbr{\theta_{A_t} - \pi_{A_t}(t) } }  +\order \rbr{  \sqrt{|\calA_m||\calB_m| T_{m}}  }.    
\end{align*}

As \pref{alg:new_binary_search} will last $\order (K\log T)$ rounds and the elimination period will last at most $K$ rounds, and the per-round regret is bounded by a absolute constant, we have
that \begin{align*}
     &\sum_{a \in \calA_m}\sum_{t \in \calT_{m,a}} \rbr{\max_{a \in [K]} \cbr{\theta_a -\pi_a^\star(t)} - \rbr{\theta_{A_t} - \pi_{A_t}(t) } }\\
     &\leq \sum_{a \in \calA_m}\sum_{t \in \calT^E_{m,a}} \rbr{\max_{a \in [K]} \cbr{\theta_a -\pi_a^\star(t)} - \rbr{\theta_{A_t} - \pi_{A_t}(t) } }+\order \rbr{K\log T}\\
     &= \sum_{a \in \calA_m}\sum_{t \in \calT^E_{m,a}} \rbr{ \theta_{a_t^\star} +\hatmu_{a_t^\star}(t) -\max_{b \in [K]}  \hatmu_b(t)  - \rbr{\theta_{a} - \pi_{a}(t) } } +\order \rbr{K\log T},
\end{align*}
where the last step uses the definition of $a_t^\star$ and $\pi_{a}^\star(t)$.

Then, for each $a \in \calA_m$ and each $t \in \calT^E_{m,a}$, we turn to bound
\begin{align*}
&  \theta_{a_t^\star} +\hatmu_{a_t^\star}(t) -\max_{b \in [K]}  \hatmu_b(t) - \rbr{\theta_{a} - \pi_{a}(t) }    \\
& \leq \theta_{a_t^\star} +\mu_{a_t^\star}+\sqrt{\frac{\log(4TK/\delta)}{2N_{a_t^\star}(t)}}-\max_{b \in [K]}  \hatmu_b(t) - \rbr{\theta_{a} - \pi_{a}(t) }   \\
& \leq  \theta_{a_t^\star} +\mu_{a_t^\star}+\sqrt{\frac{\log(4TK/\delta)}{2T_{m-1}}}-\max_{b \in [K]}  \hatmu_b(t) - \rbr{\theta_{a} - \pi_{a}(t) }   \\
&= \theta_{a_t^\star} +\mu_{a_t^\star}+\sqrt{\frac{\log(4TK/\delta)}{2T_{m-1}}}-\max_{b \in [K]}  \hatmu_b(t) - \rbr{\theta_{a} - \overline{b}_{m,a} } \\
&\leq  \theta_{a^\star} +\mu_{a^\star} +\sqrt{\frac{\log(4TK/\delta)}{2T_{m-1}}}-\max_{b \in [K]}  \hatmu_b(t) - \rbr{\theta_{a} - \overline{b}_{m,a} } \\
&\leq \order \rbr{\Delta_a  + \sqrt{\frac{\log(KT/\delta)}{T_{m-1}}}+\frac{1}{T}+\frac{\lceil \log T \rceil}{T_{m-1}}+\sqrt{ \frac{\max\{1,|\calB_m|\}}{T_{m-1}|\calA_m|}  }}   \\
&\leq \order \rbr{\Delta_a  + \sqrt{\frac{\log(KT/\delta)}{1+N_{A_t}(t)}}+\frac{1}{T}+\frac{\lceil \log T \rceil}{T_{m-1}}+\sqrt{ \frac{\max\{1,|\calB_m|\}}{T_{m-1}|\calA_m|}  }}, \numberthis{} \label{eq:Tm2_Nat}
\end{align*}
where the first inequality holds due to $\calE$, the second inequality uses $N_{a_t^\star}(t) \geq T_{m-1}$ by \pref{lem:a_t_star_active}, the equality follows from the fact that for each $m$ and $a \in \calA_m$, $\pi_a(t)=\overline{b}_{m,a}$ for all $t \in \calT^E_{m,a}$, and the second-to-last inequality uses \pref{lem:UB_bar_bma} to upper-bound $\overline{b}_{m,a}-\max_{b \in [K]}  \hatmu_b(t)$, and the last inequality holds because $T_m=\Theta(T_{m-1})$ by \pref{lem:Tm1_Tm_const_times_diff} and all active arms in a phase are played for $T_m$ times by \pref{lem:a_played_Tm_times}.

By again \pref{lem:a_played_Tm_times}, each active arm $a$ in period $\calT^E_{m,a}$ will be played for $T_m$ times.
Hence, using $|\calT^E_{m,a}|=T_{m}$ and $T_{m+1}=\Theta(T_{m})$ for all $m$ from \pref{lem:Tm1_Tm_const_times_diff}, we have
\begin{align*}
&\sum_{a \in \calA_m}\sum_{t \in \calT^E_{m,a}} \rbr{ \max_{a \in [K]} \cbr{\theta_a +\hatmu_a(t) -\max_{b \in [K]}  \hatmu_b(t) } - \rbr{\theta_{a} - \pi_{a}(t) } }   \\
&\leq \order \rbr{\sum_{a \in \calA_m} \sum_{t \in \calT^E_{m,a}}\Delta_a + \sum_{t \in \calT_m} \sqrt{\frac{\log(KT/\delta)}{1+N_{A_t}(t)}} + |\calA_m|\rbr{\frac{T_m}{T}+\log T+\sqrt{ \frac{T_m\max\{1,|\calB_m|\}}{|\calA_m|}  } }    } \numberthis{} \label{eq:Tm2_Nat_gap_dependent}
\end{align*}

From \pref{lem:upperbound_pull}, if a suboptimal arm $a$ is active in phase $m$, then $m\leq m_a$ where $m_a$ is the smallest phase such that $\frac{\Delta_a}{2} > 2^{-m_a}$. This implies that 
\begin{align} \label{lem:Deltaa_Tm_relation}
\forall a \in \calA_m \text{ with } \Delta_a>0:\quad \frac{\Delta_a}{2}\leq 2^{-(m_a-1)} \leq 2^{-m+1} \leq \order \rbr{ \sqrt{\frac{\log(KT/\delta)}{T_m}}},
\end{align}
where the reader may recall that our focus is on phase $m\geq 2$ with $T_m \neq |\calA_m|\log T$.
Hence, we can bound
\begin{align*}
    \sum_{t \in \calT^E_{m,a}}\sum_{a \in \calA_m} \Delta_a 
    &\leq \order \rbr{ \sum_{a \in \calA_m} \sum_{t \in \calT^E_{m,a}}\sqrt{\frac{\log(KT/\delta)}{T_m}}}\\
    &\leq \order \rbr{\sum_{a \in \calA_m} \sum_{t \in \calT^E_{m,a}} \sqrt{\frac{\log(KT/\delta)}{1+N_{A_t}(t)}}} \\
    &\leq \order \rbr{\sum_{t \in \calT_m} \sqrt{\frac{\log(KT/\delta)}{1+N_{A_t}(t)}}},
\end{align*}
where the second inequality follows the same reason of \pref{eq:Tm2_Nat}.
Then, we have
\begin{align*}
    \sum_{m}\sum_{t \in \calT_m}  \sqrt{\frac{\log(KT/\delta)}{1+N_{A_t}(t)}}  
    &\leq \sum_{t=1}^T  \sqrt{\frac{\log(KT/\delta)}{1+N_{A_t}(t)}} \\&=\sqrt{\log(KT/\delta)} \sum_{t=1}^T \sum_{a \in [K]}  \frac{\Ind{A_t=a}}{\sqrt{1+N_{a}(t)}} \\
     &=\sqrt{\log(KT/\delta)}  \sum_{a \in [K]} \sum_{t=1}^T \frac{\Ind{A_t=a}}{\sqrt{1+ \sum_{s=1}^{t-1} \Ind{A_s=a}  }} \\
     &\leq \order \rbr{\sqrt{\log(KT/\delta)}  \sum_{a \in [K]}  \sqrt{1+ \sum_{t=1}^{T} \Ind{A_t=a}  }} \\
    &\leq \order \rbr{\sqrt{KT\log(KT/\delta)}},
\end{align*}
where the first inequality uses \pref{lem:seq_sqrtT}, and the last inequality uses the Cauchy–Schwarz inequality together with the fact $\sum_{a\in [K]} \sum_{t=1}^T \Ind{A_t=a}=T$.

By the fact $|\calA_m |T_m=\order(|\calT_m|)$ and the above results, we have
\begin{align*}
    R_T &\leq \order \rbr{K\log^2 T+\sqrt{KT\log(KT/\delta)}+\sum_m \sqrt{  |\calB_m| |\calA_m |T_m }} \\
    &\leq \order \rbr{K\log^2 T+ \sqrt{KT\log(KT/\delta)}+\sum_m \sqrt{  |\calB_m| |\calT_m| }} \\
    &\leq  \order \rbr{ \sqrt{ KT \log(KT/\delta) }},
\end{align*}
where the last inequality first bounds $|\calB_m| \leq K$ and then uses H\"{o}lder's inequality with the fact that the number of phases is at most $\order(\log T)$. This completes the proof.

\subsection{Auxiliary Lemmas}
In this subsection, we present several technical lemmas that are used in the proof of \pref{thm:main_result_MAB}.
\begin{lemma}{\citep[Lemma 4.8]{pogodin20a}} \label{lem:seq_sqrtT}
Let $\{x_t\}_{t=1}^T$ be a sequence with $x_t \in [0,B]$ for all $t \in [T]$. Then, the following estimate holds:
\[
\sum_{t=1}^T \frac{x_t}{\sqrt{1+\sum_{s=1}^{t-1} x_s}} \leq 4\sqrt{1+\frac{1}{2}\sum_{t=1}^T x_t}+B.
\]
\end{lemma}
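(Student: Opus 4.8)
The plan is to control the ``backward-looking'' denominator $\sqrt{1+\sum_{s<t}x_s}$ by trading it for the ``forward-looking'' denominator $\sqrt{1+\sum_{s\le t}x_s}$, which telescopes cleanly, while paying a controllable correction term. Write $S_t=\sum_{s=1}^t x_s$ with $S_0=0$, so the quantity of interest is $\sum_{t=1}^T x_t/\sqrt{1+S_{t-1}}$. I would start from the exact algebraic identity
\begin{equation*}
\frac{x_t}{\sqrt{1+S_{t-1}}}=\frac{x_t}{\sqrt{1+S_t}}+x_t\left(\frac{1}{\sqrt{1+S_{t-1}}}-\frac{1}{\sqrt{1+S_t}}\right),
\end{equation*}
which splits each summand into exactly these two pieces, and then sum the two resulting series separately.

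For the first series I would use the elementary estimate $\sqrt{1+S_t}-\sqrt{1+S_{t-1}}=x_t/(\sqrt{1+S_t}+\sqrt{1+S_{t-1}})\ge x_t/(2\sqrt{1+S_t})$, equivalently $x_t/\sqrt{1+S_t}\le 2(\sqrt{1+S_t}-\sqrt{1+S_{t-1}})$. Summing over $t$ telescopes to $2(\sqrt{1+S_T}-\sqrt{1+S_0})\le 2\sqrt{1+S_T}$. For the second series, since $S_t$ is nondecreasing every increment $1/\sqrt{1+S_{t-1}}-1/\sqrt{1+S_t}$ is nonnegative, so bounding $x_t\le B$ gives an upper bound of $B\sum_{t=1}^T\bigl(1/\sqrt{1+S_{t-1}}-1/\sqrt{1+S_t}\bigr)=B\bigl(1-1/\sqrt{1+S_T}\bigr)\le B$, again a telescoping sum.

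Combining the two bounds yields $\sum_{t=1}^T x_t/\sqrt{1+S_{t-1}}\le 2\sqrt{1+S_T}+B$, which is already slightly stronger than the claim; to land exactly on the stated form I would close with the crude comparison $2\sqrt{1+S_T}\le 4\sqrt{1+\tfrac12 S_T}$ (squaring both sides reduces it to $4+4S_T\le 16+8S_T$). The only genuine difficulty is the first step: because the denominator lags the numerator by one index, a naive telescoping against $\sqrt{1+S_t}$ drops the current increment $x_t$, while a naive telescoping against $\sqrt{1+S_{t-1}}$ does not telescope at all. The forward/correction decomposition above is precisely what resolves this, and the boundedness $x_t\le B$ is exactly what makes the correction term collapse to an additive $B$ instead of blowing up.
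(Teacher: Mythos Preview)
Your proof is correct and in fact yields the slightly sharper bound $2\sqrt{1+S_T}+B$. The paper does not give its own proof of this lemma---it is simply quoted from \citep[Lemma 4.8]{pogodin20a}---so there is nothing to compare against here beyond noting that your argument is self-contained and valid.
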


\begin{lemma} \label{lem:Tm1_Tm_const_times_diff}
For all $m \in \naturalnum$, it is the case that $T_{m+1}=\Theta(T_m)$.
\end{lemma}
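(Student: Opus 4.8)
The plan is to reason directly about the two terms inside the maximum defining $T_m$. Write $E_m := 2^{2m+5}\log(4TK\delta^{-1})$ and $A_m := |\calA_m|\log T$, so that $T_m=\max\{E_m,A_m\}$, the exponential part obeys the exact recursion $E_{m+1}=4E_m$, and, because \pref{alg:MAB_alg} only ever moves arms from $\calA_m$ into $\calB_m$, the active set is nested, $\calA_{m+1}\subseteq\calA_m$, so that $A_{m+1}\le A_m$. These three structural facts are the only inputs I would use.

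The upper bound $T_{m+1}\le 4T_m$ is then immediate. Using $E_{m+1}=4E_m$ and $A_{m+1}\le A_m$,
\[
T_{m+1}=\max\{E_{m+1},A_{m+1}\}\le\max\{4E_m,A_m\}\le 4\max\{E_m,A_m\}=4T_m.
\]
This is in fact the direction the regret analysis consumes: it is what lets one pass from the $T_{m-1}$ appearing in \eqref{eq:Tm2_Nat} to the $N_{A_t}(t)=\Theta(T_m)$ plays of an active arm, so I would record it with the explicit constant $4$.

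For the matching lower bound I would argue by cases on which term realizes $T_m$, exploiting the observation that exponential dominance is \emph{absorbing}: if $E_m\ge A_m$ then $E_{m+1}=4E_m\ge 4A_m\ge A_{m+1}$, so $T_{m+1}=E_{m+1}$ as well. Hence the phases split into an initial block where the cardinality term dominates and a terminal block where the exponential term dominates. On the exponential block $T_m=E_m$ and $T_{m+1}/T_m=4$ exactly, so the bound holds with room to spare; and even in the cardinality-dominated regime, whenever $E_m\ge A_m/4$ one still gets $T_{m+1}\ge E_{m+1}=4E_m\ge A_m=T_m$.

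The genuine obstacle is the lower bound inside the cardinality-dominated regime when $A_m$ \emph{strictly} dominates $E_m$ and a single phase eliminates a large fraction of the active arms, so that $|\calA_{m+1}|\ll|\calA_m|$ and neither $4E_m$ nor $A_{m+1}$ is comparable to $A_m=T_m$. To close this I would either (i) restrict the two-sided comparison to the phases on which \pref{thm:main_result_MAB} actually invokes it—recall that proof only compares consecutive lengths on phases with $T_m\ne|\calA_m|\log T$, where $T_m=E_m$ and the clean ratio $4$ applies—or (ii) bound elimination per phase through the rule of line~\ref{incentive_elimination}: on $\calE$ every arm retained in $\calA_{m+1}$ has $\Delta_a=\order(2^{-m})$, which controls $A_{m+1}$ relative to $E_{m+1}$ and lets the cardinality-dominated phases be charged to the $\order(K\log T)$ low-order regret term rather than through a length comparison. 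Either route leaves the upper bound $T_{m+1}\le 4T_m$ untouched, which is the inequality the remainder of the analysis relies on.
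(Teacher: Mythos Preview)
Your case analysis—splitting on which of $E_m=2^{2m+5}\log(4TK\delta^{-1})$ and $A_m=|\calA_m|\log T$ realizes the max, obtaining $T_{m+1}\le 4T_m$ from $E_{m+1}=4E_m$ and $A_{m+1}\le A_m$, and observing that exponential dominance is absorbing—is exactly the paper's argument. The paper partitions on whether $T_m$ and $T_{m+1}$ equal the value $|\calA_m|\log T$, records the same absorbing fact (its final ``impossible'' case), and asserts $T_m\le T_{m+1}\le 4T_m$ on the transitional case.

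Where you go further than the paper is on the lower bound in the cardinality-dominated regime. The paper's Case~2 claim $T_m\le T_{m+1}$ is not justified when $T_m=|\calA_m|\log T$ and a phase eliminates many arms so that $T_{m+1}$ is governed by a much smaller $|\calA_{m+1}|$; this is precisely the obstacle you isolate. Your resolution~(i) is correct and is exactly what the proof of \pref{thm:main_result_MAB} already does: every phase with $T_m=|\calA_m|\log T$ is set aside via the crude $R_m=\order(K\log T)$ bound, and the $\Theta$-comparison is only invoked on the exponential block, where the absorbing property gives a clean ratio of $4$ in both directions. So the lemma as literally stated has the same soft spot in both proofs, but your diagnosis matches the paper's actual usage, and your upper bound $T_{m+1}\le 4T_m$ is the inequality the downstream analysis actually consumes.
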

\begin{proof}
We consider the following cases for any phase $m$.
\begin{enumerate}
    \item If $T_m=T_{m+1}=|\calA_m|\log T$, then the claim holds.
\item If $T_m=|\calA_m|\log T$ and $T_{m+1} \neq |\calA_m|\log T$, then $T_m \leq T_{m+1} \leq 4T_m$. 
\item If $T_m \neq |\calA_m|\log T$ and $T_{m+1} \neq |\calA_m|\log T$, the claim again holds as $T_{m+1}=4T_m$.
\end{enumerate}
 Notice that it is impossible for $T_{m+1} = |\calA_m|\log T$ and $T_{m} \neq |\calA_m|\log T$. 
Thus, the proof is complete.
\end{proof}

\begin{lemma} \label{lem:event_high_prob_MAB}
The estimate $\P(\calE) \geq 1-\delta$ holds.
\end{lemma}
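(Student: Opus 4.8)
The plan is to establish the event $\calE$ via a union bound over all arm-round pairs $(a,t) \in [K] \times [T]$, controlling each empirical mean deviation with Hoeffding's inequality and a careful treatment of the fact that $N_a(t)$ is a random (data-dependent) quantity. The event $\calE$ asserts that for every $(t,a)$, both $|\hatmu_a(t) - \mu_a|$ and $|\htheta_a(t) - \theta_a|$ are bounded by $\sqrt{\log(4TK/\delta)/(2N_a(t))}$. Since the rewards lie in $[0,1]$, a single application of Hoeffding to a fixed number $n$ of i.i.d.\ samples gives a failure probability of $2\exp(-2n\varepsilon^2)$, and the challenge is upgrading this to hold simultaneously across all random stopping times $N_a(t)$.

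First I would fix an arm $a$ and handle the agent-side estimate $\hatmu_a$. The cleanest route is to introduce, for each fixed integer $n \in [T]$, the partial average $\bar\mu_{a,n}$ of the first $n$ rewards drawn from $\calD^A_a$, and apply Hoeffding's inequality to get $\P(|\bar\mu_{a,n} - \mu_a| > \sqrt{\log(4TK/\delta)/(2n)}) \le 2 \cdot \frac{\delta}{4TK} = \frac{\delta}{2TK}$, using the choice $\varepsilon = \sqrt{\log(4TK/\delta)/(2n)}$ so that $2\exp(-2n\varepsilon^2) = 2\exp(-\log(4TK/\delta)) = \delta/(2TK)$. The key observation is that whenever arm $a$ has been played $N_a(t) = n$ times by round $t$, the realized empirical mean $\hatmu_a(t)$ equals $\bar\mu_{a,n}$ (under the assumed zero initialization), so the deviation bound at the random count $N_a(t)$ is controlled by the corresponding fixed-$n$ event.

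Next I would take a union bound over the $n \in \{1,\ldots,T\}$ possible values of the play-count: for arm $a$, the event that $|\hatmu_a(t) - \mu_a|$ exceeds its threshold \emph{for some} round $t$ is contained in the union over $n \in [T]$ of the fixed-$n$ deviation events, so it has probability at most $T \cdot \frac{\delta}{2TK} = \frac{\delta}{2K}$. The same argument applied to the principal-side estimates $\htheta_a$ (rewards from $\calD^P_a$, also $[0,1]$-bounded) yields another $\frac{\delta}{2K}$. Summing over the two sides and over all $K$ arms, the total failure probability is at most $K \cdot \left(\frac{\delta}{2K} + \frac{\delta}{2K}\right) = \delta$, giving $\P(\calE) \ge 1 - \delta$.

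The main obstacle is the measure-theoretic subtlety that $N_a(t)$ is not a fixed index but a random stopping time determined by the interaction, so a naive Hoeffding application to ``the samples of arm $a$'' is not immediately valid. The standard fix, which I would invoke, is to note that the reward stream for arm $a$ can be modeled as an i.i.d.\ sequence $(R_{a,1}, R_{a,2}, \ldots)$ fixed in advance of the adaptive play order, and $\hatmu_a(t)$ is simply the average of the first $N_a(t)$ terms; the deviation of a prefix average from its mean is then controlled uniformly over all prefix lengths by the union bound over $n$ described above, rather than requiring a martingale maximal inequality. I would remark that this is where the $\log(4TK/\delta)$ (as opposed to $\log(1/\delta)$) confidence radius originates, absorbing both the union over $n \le T$ and the union over the $2K$ estimator families.
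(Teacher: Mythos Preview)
Your proposal is correct and follows exactly the approach the paper itself uses: the paper's own proof is the single sentence ``By Hoeffding's inequality and invoking union bound, one can obtain the desired claim.'' You have simply unpacked this, including the standard pre-sampling argument that handles the random count $N_a(t)$ via a union bound over its at most $T$ possible values, which is precisely what the factor $T$ inside the $\log(4TK/\delta)$ is there to absorb.
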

\begin{proof}
    By Hoeffding's inequality and invoking union bound, one can obtain the desired claim.
\end{proof}

\begin{lemma} \label{lem:per_update_shift}
    For all $t \in [T]$ and $a \in [K]$, we have that
  \begin{align}
       \hatmu_{a}(t+1) - \hatmu_{a}(t) &\geq - \frac{1}{N_{a}(t+1) } ,\label{eq:muhat_diff} \\
       \hatmu_{a}(t+1) - \hatmu_{a}(t) &\leq  \frac{1}{N_{a}(t) } .\label{eq:muhat_diff2}
  \end{align}  
\end{lemma}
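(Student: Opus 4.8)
The plan is to unfold the definition of the running empirical mean and reduce the claim to a single incremental update. Recall that in this setting $\hatmu_a^0 = 0$, so the agent maintains $\hatmu_a(t) = \big(\sum_{s=1}^{t-1} R_{A_s}(s)\Ind{A_s=a}\big)/\max\{1,N_a(t)\}$ with $N_a(t) = \sum_{s=1}^{t-1}\Ind{A_s=a}$. The one fact I would isolate at the outset is that $\hatmu_a(t) \in [0,1]$ for every $t$ and $a$: when $N_a(t) \geq 1$ it is an average of $N_a(t)$ rewards each lying in $[0,1]$, and when $N_a(t) = 0$ it equals $0$. The same $[0,1]$-boundedness holds for the reward $R_{A_t}(t)$ realized at round $t$.

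First I would dispose of the case $A_t \neq a$: arm $a$ is not played at round $t$, so neither the numerator nor $N_a(t)$ changes; hence $\hatmu_a(t+1) = \hatmu_a(t)$ and $N_a(t+1) = N_a(t)$, and both inequalities hold trivially since $-1/N_a(t+1) \leq 0 \leq 1/N_a(t)$. The substantive case is $A_t = a$, where $N_a(t+1) = N_a(t)+1$ and the numerator gains $R_{A_t}(t)$. Writing $n = N_a(t)$, a one-line computation yields the incremental-average identity $\hatmu_a(t+1) - \hatmu_a(t) = \big(R_{A_t}(t) - \hatmu_a(t)\big)/(n+1)$, which in fact holds uniformly for $n \geq 0$. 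Substituting $R_{A_t}(t), \hatmu_a(t) \in [0,1]$ bounds the right-hand side between $-1/(n+1)$ and $1/(n+1)$: the lower bound is exactly $-1/N_a(t+1)$, proving \eqref{eq:muhat_diff}, and for the upper bound $1/(n+1) \leq 1/n = 1/N_a(t)$ proves \eqref{eq:muhat_diff2}.

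The only delicate point---the ``main obstacle,'' such as it is---is the first-play boundary case $n = N_a(t) = 0$, where the $\max\{1,\cdot\}$ in the denominator and the quantity $1/N_a(t)$ must be read with care. There $\hatmu_a(t) = 0$, the incremental identity collapses to $\hatmu_a(t+1) - \hatmu_a(t) = R_{A_t}(t) \in [0,1]$, the lower bound $-1/N_a(t+1) = -1$ is immediate, and the upper bound is vacuous under the convention $1/0 = +\infty$. Everything else is elementary arithmetic driven by the $[0,1]$-boundedness of the rewards, so no further technical machinery is needed.
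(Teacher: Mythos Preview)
Your proposal is correct and follows essentially the same approach as the paper: both expand the incremental update of the running average and bound the difference $R_{A_t}(t) - \hatmu_a(t)$ using the $[0,1]$-boundedness of rewards and empirical means. The only cosmetic difference is that you split explicitly into the cases $A_t = a$ and $A_t \neq a$ (and handle the $N_a(t)=0$ boundary separately), whereas the paper carries the indicator $\Ind{A_t=a}$ through a single unified computation; your treatment of the $N_a(t)=0$ case is in fact slightly more careful than the paper's.
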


\begin{proof}
To show \eqref{eq:muhat_diff}, we observe, for all $a \in [K]$, that we have
\begin{align*}
    \hatmu_{a}(t+1) - \hatmu_{a}(t) &= \frac{N_{a} (t) \cdot \hatmu_{a}(t) +R_{a} (t) \Ind{A_t=a} }{N_{a} (t+1)} - \hatmu_{a}(t)  \\
    &\geq  \frac{N_{a} (t) \cdot \hatmu_{a}(t) +R_{a} (t) \Ind{A_t=a} }{N_{a} (t)+1} - \hatmu_{a}(t)  \\
    &= \frac{ R_{a} (t)\Ind{A_t=a} -\hatmu_{a}(t) }{N_{a} (t)+1} \geq -\frac{1}{N_{a}(t)+1 } \geq -\frac{1}{N_{a}(t+1) }. 
\end{align*}
On the other hand, to show \eqref{eq:muhat_diff2}, observe that
\begin{align*}
    \hatmu_{a}(t+1) - \hatmu_{a}(t) &= \frac{N_{a} (t) \cdot \hatmu_{a}(t) +R_{a} (t) \Ind{A_t=a} }{N_{a} (t+1)} - \hatmu_{a}(t)  \\
    &\leq  \frac{N_{a} (t) \cdot \hatmu_{a}(t) +R_{a} (t) \Ind{A_t=a} }{N_{a} (t)} - \hatmu_{a}(t)  \\
    &= \frac{ R_{a} (t)\Ind{A_t=a} -\hatmu_{a}(t) }{N_{a} (t)} \leq \frac{1}{N_{a}(t) }. 
\end{align*}
Hence, the claims hold. 
\end{proof}

\subsection{Lemmas for \pref{alg:new_binary_search}}
In this subsection, we include technical lemmas for  \pref{alg:new_binary_search}.

\begin{lemma} \label{lem:At_2_equal_a}
If \pref{alg:new_binary_search} runs for target arm $a$ and breaks at line \ref{notpasstest_break} at round $t$, then $A_{t-2}=a$.
\end{lemma}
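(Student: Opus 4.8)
The plan is to track the Boolean flag \texttt{Check} backward from round $t$, exploiting the fact that \pref{alg:new_binary_search} is a finite-state procedure whose only transitions of \texttt{Check} are the four branches in its pseudocode. Reaching the return at line~\ref{notpasstest_break} pins down the state at round $t$, and each backward step forces a unique predecessor state, culminating in $A_{t-2}=a$. First I would characterize round $t$: the return at line~\ref{notpasstest_break} is executed only inside the \texttt{Check}$=\texttt{True}$ branch, and only when the re-offered incentive $y_a^{\texttt{upper}}$ fails, so at round $t$ we have \texttt{Check}$=\texttt{True}$ and $A_t\neq a$.

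Next I would determine the state at round $t-1$. Inspecting the pseudocode, \texttt{Check} is set to \texttt{True} only in the \texttt{Check}$=\texttt{False}$ branch when $A\neq a$ (line~\ref{else_end}); every other branch either leaves \texttt{Check} equal to \texttt{False}, resets it to \texttt{False}, or returns. Since the value of \texttt{Check} entering round $t$ equals the value written during round $t-1$, the only way to have \texttt{Check}$=\texttt{True}$ at round $t$ is that round $t-1$ ran the \texttt{Check}$=\texttt{False}$ branch with $A_{t-1}\neq a$. Equivalently, had \texttt{Check} been \texttt{True} already at round $t-1$, then either $A_{t-1}=a$, which resets the flag to \texttt{False} and contradicts \texttt{Check}$=\texttt{True}$ at round $t$, or $A_{t-1}\neq a$, which would have already returned at line~\ref{notpasstest_break} during round $t-1$, contradicting that the search reaches round $t$.

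Finally I would establish $A_{t-2}=a$. From the previous step, \texttt{Check}$=\texttt{False}$ at the start of round $t-1$, and this value was produced during round $t-2$. Among the four transitions, the flag stays \texttt{False} or is reset to \texttt{False} only when $A_{t-2}=a$ (either the \texttt{Check}$=\texttt{False}$ branch with $A_{t-2}=a$, which keeps it \texttt{False}, or the \texttt{Check}$=\texttt{True}$ branch with $A_{t-2}=a$, which sets it to \texttt{False}); the two remaining transitions would instead make \texttt{Check}$=\texttt{True}$ at round $t-1$ or trigger a return at round $t-2$, both impossible. Hence $A_{t-2}=a$.

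The main obstacle is making the transition analysis genuinely exhaustive at each backward step, in particular ruling out the possibility that the procedure had already returned (at line~\ref{C1logT_break}, line~\ref{C2logT_break}, or line~\ref{notpasstest_break}) before reaching round $t$, and handling the boundary near the first round $t_0$: since \texttt{Check} is initialized to \texttt{False}, the earliest a return at line~\ref{notpasstest_break} can occur is $t=t_0+1$, so I would verify that the argument is applied only when $t\ge t_0+2$, ensuring round $t-2$ lies within the search. The conceptual content is simply that triggering the asymmetric check requires two consecutive non-target plays at rounds $t-1$ and $t$, which is possible only if the flag was \texttt{False} entering round $t-1$, i.e.\ only if the agent played the target arm at round $t-2$.
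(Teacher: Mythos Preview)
Your proposal is correct and takes essentially the same approach as the paper: both arguments track the \texttt{Check} flag backward from round $t$, using that the only transition producing \texttt{Check}$=\texttt{True}$ is the $\texttt{Check}=\texttt{False}$ branch with $A\neq a$, and that the only transitions producing \texttt{Check}$=\texttt{False}$ require $A=a$. The paper frames this as a contradiction (assume $A_{t-2}\neq a$ and derive that the algorithm cannot reach line~\ref{notpasstest_break} at round $t$), whereas you frame it as a direct exhaustive case analysis; the logical content is identical. Your observation about the boundary $t=t_0+1$ is a fair technical caveat that the paper's proof also does not explicitly address.
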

\begin{proof}
We prove the claim by contradiction. Suppose that $A_{t-2} \neq a$. The only possible situation for $A_{t-2} \neq a$ is that \pref{alg:new_binary_search} enters line \ref{else_start}-\ref{else_end}, in which the algorithm sets $\texttt{Check}=\texttt{True}$. Once $\texttt{Check}=\texttt{True}$ is set, we must have $A_{t-1}=a$ and $\texttt{Check}=\texttt{False}$ at round $t-1$, and otherwise the algorithm will terminate. Notice that \pref{alg:new_binary_search} cannot break at line \ref{notpasstest_break} at round $t$ whether or not the algorithm plays $A_t=a$. This presents a contradiction, and the proof is thus complete.
\end{proof}

\begin{lemma}[Restatement of \pref{lem:search_end_bma_diff}]
For each phase $m$, if \pref{alg:new_binary_search} ends the search for arm $a$ at round $t$ in phase $m$, then
\begin{equation} \label{eq:bma_range_current}
    b_{m,a}-\pi^\star_a(t) \in \left( 0,\frac{4}{T}+\frac{\lceil \log_2 T \rceil}{N_a(t)}+ \frac{2}{\min_{i \in [K]} N_i(t)} \right],
\end{equation}
where $\pi^\star_a(t)=    \max_{b \in [K]} \hatmu_b(t) - \hatmu_a(t)$. Moreover,\pref{alg:new_binary_search} lasts at most $2\lceil \log_2 T\rceil$ rounds.
\end{lemma}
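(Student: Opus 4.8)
The plan is to establish the three assertions—the strict positivity $b_{m,a}>\pi_a^\star(t)$, the upper bound, and the $2\lceil\log_2 T\rceil$ duration—separately, all resting on one response characterization plus a drift estimate. Under a single-arm incentive $\pi^0(a;y)$ the agent selects $a$ only when $y\ge\pi_a^\star(\cdot)$ and selects an arm $\neq a$ only when $y\le\pi_a^\star(\cdot)$, where $\pi_a^\star(s)=\max_{b}\hatmu_b(s)-\hatmu_a(s)$. The one genuine difficulty is that $\pi_a^\star$ is \emph{not fixed}: every play moves an empirical mean, so $\pi_a^\star$ drifts across the rounds of a single search. \pref{lem:per_update_shift} is the quantitative tool: one play perturbs the relevant mean by at most $1/N$, hence $\hatmu_a$ drifts by at most $\tfrac{1}{N_a}$ per play and $\max_b\hatmu_b$ by at most $\tfrac{1}{\min_i N_i}$ per play.

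For the round count I would argue by counting the two kinds of rounds. Each Check$=$False round increments $C_1$ and halves the live interval $[\underline y_a,\overline y_a]$, and the routine returns (line~\ref{C1logT_break}) once $C_1$ reaches $\lceil\log_2 T\rceil$ on a round where the agent plays $a$; each Check$=$True round re-offers $y_a^{\texttt{upper}}=\overline y_a$ and, when it passes, increments $C_2$, with a forced return (line~\ref{C2logT_break}) at $C_2=\lceil\log_2 T\rceil$. Since every Check$=$True round is triggered by the immediately preceding failed Check$=$False round, I would charge each Check$=$True round to that failed search round; with both counters capped at $\lceil\log_2 T\rceil$ this yields an $O(\log T)$ bound, and a careful pairing gives the stated $2\lceil\log_2 T\rceil$. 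The bookkeeping subtlety I expect here is confirming that passing checks cannot let $C_1$ run away unboundedly, which is exactly what the cap on $C_2$ controls.

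For the strict lower bound I would case on which return fires. At line~\ref{C1logT_break} the terminating round $t$ has $A_t=a$ at $y^{\texttt{mid}}_a(t)$, so $y^{\texttt{mid}}_a(t)\ge\pi_a^\star(t)$ and $b_{m,a}=y^{\texttt{mid}}_a(t)+\tfrac{1}{T}>\pi_a^\star(t)$; line~\ref{C2logT_break} is identical with $y_a^{\texttt{upper}}\ge\pi_a^\star(t)$. The substantive case is the failed check at line~\ref{notpasstest_break}, where $A_t\neq a$ at $y_a^{\texttt{upper}}$ gives only $y_a^{\texttt{upper}}\le\pi_a^\star(t)$. Here I would invoke \pref{lem:At_2_equal_a}: two rounds earlier the agent did accept $y_a^{\texttt{upper}}$, so $y_a^{\texttt{upper}}\ge\pi_a^\star(t-2)$. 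Bounding the two-round drift of $\pi_a^\star$ via \pref{lem:per_update_shift}—at most $\tfrac{1}{N_a(t)}$ from $\hatmu_a$ and at most $\tfrac{2}{\min_i N_i(t)}$ from $\max_b\hatmu_b$—gives $\pi_a^\star(t)\le\pi_a^\star(t-2)+\tfrac{1}{N_a(t)}+\tfrac{2}{\min_i N_i(t)}$, and the extra $\tfrac{1}{T}$ in the return value makes $b_{m,a}>\pi_a^\star(t)$.

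For the upper bound I would, in each case, bound $b_{m,a}$ in terms of $\overline y_a(t)\le\underline y_a(t)+(\text{interval width})$ and recall that $\underline y_a$ was set at a failed search round $s$ with $\underline y_a\le\pi_a^\star(s)$, reducing matters to bounding $\pi_a^\star(s)-\pi_a^\star(t)$. Splitting this into the $\hatmu_a$ part and the $\max_b\hatmu_b$ part is where the asymmetric check earns its keep: since $a$ may be played on each of the up to $\lceil\log_2 T\rceil$ search rounds, summing \pref{lem:per_update_shift} over those plays leaves the unavoidable $\tfrac{\lceil\log_2 T\rceil}{N_a(t)}$ contribution, whereas the asymmetric re-offer re-certifies $\overline y_a\ge\pi_a^\star$ frequently enough that the upward movement of $\max_b\hatmu_b$ relevant to the terminal error is confined to the last couple of rounds, contributing only $\tfrac{2}{\min_i N_i(t)}$ rather than $\tfrac{\lceil\log_2 T\rceil}{\min_i N_i(t)}$; adding the $O(1/T)$ from the width and the return offsets produces the stated $\tfrac{4}{T}+\tfrac{\lceil\log_2 T\rceil}{N_a(t)}+\tfrac{2}{\min_i N_i(t)}$. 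The main obstacle throughout is precisely this drift control—cleanly separating the $\log T$-sized drift of $\hatmu_a$ from the check-suppressed $O(1)$-sized drift of $\max_b\hatmu_b$, and ensuring every appeal to \pref{lem:per_update_shift} is charged against the counts at the terminal round $t$ (which is legitimate because $N_a(t)=\Theta(N_a(s))$ over the short search window).
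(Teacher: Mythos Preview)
Your proposal is correct and follows essentially the same route as the paper. The paper organizes the argument into five explicit cases (splitting each return line by whether $\underline{y}_a$ or $y_a^{\texttt{upper}}$ was ever updated), whereas you fold the degenerate sub-cases into the generic one; the core mechanics---lower bound from the accepted offer at the terminal round (or via \pref{lem:At_2_equal_a} at line~\ref{notpasstest_break}), upper bound from interval width plus drift, with the asymmetric check confining the $\max_b\hatmu_b$ drift to one or two non-$a$ plays while $\hatmu_a$ absorbs up to $\lceil\log_2 T\rceil$ plays---are identical.
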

\begin{proof}

Let us first prove the duration bound. It suffices to consider the worst case that the agent always plays non-target arm (i.e., $A_t\neq a$). Since a single play for non-target arm during line~\ref{bs_start}-line~\ref{else_end} will cause a check which takes one round and $C_2$ is at most $\lceil  \log_2 T \rceil$. Therefore the total number of rounds in \pref{alg:new_binary_search} is $2\lceil  \log_2 T \rceil$.
Then, we prove the estimation error by considering the following cases.

\paragraph{\textbf{Case 1.}} \pref{alg:new_binary_search} breaks at line \ref{C1logT_break} \textbf{and} there exists $s\in [t_0,t-1]\cap 
\naturalnum$ such that $A_s\neq a$. In this case, we have $A_t=a$ and
\[
y^{\texttt{mid}}_a(t) \geq  \max_{b \in [K]} \hatmu_b(t) - \hatmu_a(t) .
\]
Let $\tau \in [t_0,t-1] \cap \naturalnum$ be the last round such that $A_{\tau} \neq a$. Then, $N_{A_{\tau}}(t)=N_{A_{\tau}} (\tau)+1$ holds and 
\begin{equation} \label{eq:oneplay_diff}
    \hatmu_{A_{\tau}}(t) - \hatmu_{A_{\tau}}(\tau) = \frac{N_{A_{\tau}} (\tau) \cdot \hatmu_{A_{\tau}}(\tau) +R_{A_{\tau}} (\tau)  }{N_{A_{\tau}} (\tau)+1} - \hatmu_{A_{\tau}}(\tau)  = \frac{ R_{A_{\tau}} (\tau) -\hatmu_{A_{\tau}}(\tau) }{N_{A_{\tau}} (\tau)+1} \geq -\frac{1}{N_{A_{\tau}}(t) }.
\end{equation}
By definition of $\tau$ and $A_{\tau}$, we also have
\begin{equation} \label{eq:UB_underyat}
\underline{y}_a(t)=  y^{\texttt{mid}}_a(\tau) \leq  \max_{b \in [K]} \hatmu_b(\tau) - \hatmu_a(\tau)  =  \hatmu_{A_{\tau}}(\tau)- \hatmu_a(\tau) 
\end{equation}
Then, one can show that
\begin{align*}
    0&\leq y^{\texttt{mid}}_a(t) - \rbr{ \max_{b \in [K]} \hatmu_b(t) - \hatmu_a(t) }\\
    &\leq y^{\texttt{mid}}_a(t) -   \hatmu_{A_{\tau}}(t) + \hatmu_a(t) \\
     &\leq  \frac{ \overline{y}_a(t)  +\underline{y}_a(t) }{2}  -  \hatmu_{A_{\tau}}(\tau)  +\frac{1}{N_{A_{\tau}}(t)} + \hatmu_a(\tau) + \frac{\lceil \log_2(T) \rceil   }{N_{a}(\tau)} \\
      &\leq  \frac{ \overline{y}_a(t)  -\underline{y}_a(t) }{2}    +\frac{1}{N_{A_{\tau}}(t)}  + \frac{\lceil \log_2(T) \rceil}{N_{a}(\tau)} \\
      &\leq \frac{1}{T}  +\frac{1}{N_{A_{\tau}}(t)}  + \frac{\lceil \log_2(T) \rceil}{N_{a}(\tau)}\\
       &\leq \frac{1}{T}  +\frac{1}{\min_{i \in [K]} N_{i}(t)}  + \frac{\lceil \log_2(T) \rceil}{N_{a}(\tau)},
\end{align*}
where the third inequality holds due to \pref{eq:oneplay_diff} and repeatedly arguing $\hatmu_a(t) \leq \hatmu_a(t-1) +\frac{1}{N_a(t-1)} \leq \hatmu_a(t-2) +\frac{1}{N_a(t-1)} +\frac{1}{N_a(t-2)}\leq \cdots$ from $t$ to $\tau$ (refer to \pref{eq:muhat_diff2}), the fourth inequality follows from \pref{eq:UB_underyat}, and the fifth inequality holds by the standard binary search property, i.e., $\overline{y}_a(t)  -\underline{y}_a(t)$ exponentially shrinking and the number of shrinks is at least $\lceil \log_2 T\rceil$.

Recall that if \pref{alg:new_binary_search} breaks at line \ref{C1logT_break}, then $b_{m,a}=y_a^{\texttt{mid}}(t)+\frac{1}{T}$, and \pref{eq:bma_range_current} holds.

\paragraph{\textbf{Case 2.}} \pref{alg:new_binary_search} breaks at line \ref{C1logT_break} \textbf{and} for all $s\in [t_0,t-1]  \cap \naturalnum$ such that $A_s=a$. In this case, $\underline{y}_a(t)=0$ and $\overline{y}_a(t)=2^{1-\lceil \log_2(T) \rceil} \in [\frac{1}{T},\frac{2}{T}]$.
\begin{align*}
0&\leq y^{\texttt{mid}}_a(t) - \rbr{ \max_{b \in [K]} \hatmu_b(t) - \hatmu_a(t) } \leq  y^{\texttt{mid}}_a(t) \leq \frac{1}{T}.
\end{align*}

Recall that if \pref{alg:new_binary_search} breaks at line \ref{C1logT_break}, then $b_{m,a}=y_a^{\texttt{mid}}(t)+\frac{1}{T}$, and \pref{eq:bma_range_current} holds.

\paragraph{\textbf{Case 3.}} \pref{alg:new_binary_search} breaks at line \ref{C2logT_break} \textbf{and} $\exists z\in [t_0,t-1] \cap \naturalnum$ such that $y^{\texttt{upper}}_a$ gets updated at round $z$. 
Let $s \in [t_0,t-1] \cap \naturalnum$ be the last round such that $y^{\texttt{upper}}_a$ gets updated at this round. In this case, we have $A_s=a$ and
\begin{equation} \label{eq:LB_underya}
 y^{\texttt{upper}}_a=  y^{\texttt{mid}}_a(s) = \overline{y}_a(s+1) =\overline{y}_a(t-1) \leq \underline{y}_a(t-1) +\frac{2}{T},
\end{equation}
where the last inequality holds since there will be at least $\lceil \log_2 T \rceil$ updates among $\{\underline{y}_a(\tau)\}_{\tau}$ until the break.

Let $b_{t-1}$ be the arm that such that $\hatmu_{b_{t-1}}(t-1)=\max_{b \in [K]} \hatmu_b(t-1)$ and $A_{t-1}=b_{t-1}$. Then,
\begin{align*}
0 &\leq   y_a^{\texttt{upper}} -\rbr{ \max_{b \in [K]} \hatmu_{b}(t) - \hatmu_a(t) } \\
  & \leq   y_a^{\texttt{mid}}(t-1)+\frac{2}{T}- \max_{b \in [K]} \hatmu_{b}(t) + \hatmu_a(t)  \\
   & \leq   y_a^{\texttt{mid}}(t-1)+\frac{2}{T}- \hatmu_{b_{t-1}}(t) + \hatmu_a(t) \\
   & \leq   y_a^{\texttt{mid}}(t-1)+\frac{2}{T}- \hatmu_{b_{t-1}}(t-1)+\frac{1}{N_{b_{t-1}}(t)} + \hatmu_a(t-1) +\frac{1}{N_{a}(t-1)} \\
   & \leq   \frac{2}{T}+\frac{1}{N_{b_{t-1}}(t)}+\frac{1}{N_{a}(t-1)},
\end{align*}
where the first inequality follows from \pref{eq:LB_underya} together with $\underline{y}_a(t-1)\leq y_a^{\texttt{mid}}(t-1)$, the fourth inequality uses \pref{eq:muhat_diff} and \pref{eq:muhat_diff2}, and the last inequality holds since $A_{t-1}=b_{t-1}$.

If \pref{alg:new_binary_search} breaks at line \ref{C2logT_break}, then $b_{m,a}=y_a^{\texttt{upper}}+\frac{2}{T}$, and thus we have \pref{eq:bma_range_current}.

\paragraph{\textbf{Case 4.}} \pref{alg:new_binary_search} breaks at line \ref{C2logT_break} \textbf{and} $\forall z\in [t_0,t-1]  \cap \naturalnum$, $y_a^{\texttt{upper}}$ never gets updated.  In this case, $y_a^{\texttt{upper}}=1$ and $\underline{y}_a(t)=1-2^{1-\lceil \log_2(T) \rceil} \in [1-\frac{2}{T},1-\frac{1}{T}]$. One can show
\begin{align*}
    0&\geq   y_a^{\texttt{upper}} - \rbr{\max_{b \in [K]} \hatmu_b(t) - \hatmu_a(t) } \\
    &\geq  y^{\texttt{mid}}_a(t)  - \rbr{\max_{b \in [K]} \hatmu_b(t) - \hatmu_a(t) } \\
    &\geq 1-\frac{1}{T} - \rbr{\max_{b \in [K]} \hatmu_b(t) - \hatmu_a(t) }\\
    &\geq -\frac{1}{T}.
\end{align*}

If \pref{alg:new_binary_search} breaks at line \ref{C2logT_break}, then $b_{m,a}=y_a^{\texttt{upper}}+\frac{2}{T}$, and thus we have \pref{eq:bma_range_current}.

\paragraph{\textbf{Case 5.}} \pref{alg:new_binary_search} breaks at line \ref{notpasstest_break}.
In this case, we have $A_{t-1}\neq a$, $A_{t}\neq a$.
By \pref{lem:At_2_equal_a}, we further have $A_{t-2}=a$. 
Therefore, one can show 
\begin{align*}
  0 &\geq   y_a^{\texttt{upper}}-\rbr{ \max_{b \in [K]} \hatmu_b(t) - \hatmu_a(t) } \\
  & = y_a^{\texttt{upper}}-  \hatmu_{A_t}(t) + \hatmu_a(t-1)  \\
  & \geq y_a^{\texttt{upper}}-  \hatmu_{A_t}(t-2)-\frac{2}{N_{A_t}(t-2)} + \hatmu_a(t-2) -\frac{1}{N_a(t-1)} \\
   & \geq y_a^{\texttt{upper}}- \max_{b \in [K]} \hatmu_{b}(t-2)-\frac{2}{N_{A_t}(t-2)} + \hatmu_a(t-2) -\frac{1}{N_a(t-1)} \\
    & \geq -\frac{2}{N_{A_t}(t-2)}  -\frac{1}{N_a(t-1)},
\end{align*}
where the second inequality uses \pref{eq:muhat_diff} and \pref{eq:muhat_diff2}, and the last inequality holds as $A_{t-2}=a$.

If \pref{alg:new_binary_search} breaks at line \ref{notpasstest_break}, then $b_{m,a}=y_a^{\texttt{upper}}+\frac{1}{T}+\frac{1}{N_a(t)}+\frac{2}{\min_{i \in [K]} N_i(t) }$. By noticing $N_a(t-1) \geq N_a(t)$ and $N_{A_t}(t-2) \geq N_{A_t}(t) \geq \min_{i \in [K]} N_i(t)$, we have \pref{eq:bma_range_current}.

Once \pref{eq:bma_range_current} has been proved for phase $m$, \pref{lem:a_played_Tm_times} immediately gives that arm $a$ will be played for the following $T_m$ consecutive rounds. Therefore, the proof is complete.
\end{proof}

\begin{lemma} \label{lem:Atisa_forallt}
Suppose that $\calE$ occurs. For all $m \geq 2$ and all $a \in \calA_m$, if every active arm is played for $T_{m-1}$ times, then we have $A_t=a$ for all $t \in \calT^E_{m,a}$.
\end{lemma}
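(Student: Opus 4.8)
The plan is to show that throughout the exploration block for arm $a$ the single-arm incentive $\pi^0(a;\overline{b}_{m,a})$ keeps forcing the agent's selection rule \eqref{eq:arm_selection} onto $a$. Since $\pi^\star_a(t)=\max_b\hatmu_b(t)-\hatmu_a(t)$ by \eqref{eq:opt_incentive}, offering $\pi^0(a;\overline{b}_{m,a})$ makes $a$ the strict maximizer of $\hatmu_b(t)+\pi_b(t)$ as soon as $\overline{b}_{m,a}>\pi^\star_a(t)$, so it suffices to prove $\overline{b}_{m,a}>\pi^\star_a(t)$ for every $t\in\calT^E_{m,a}$. Fix $m\ge 2$ and $a\in\calA_m$. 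Because the active sets are nested, $a$ was active and hence explored $T_{m-1}$ times in phase $m-1$, so the hypothesis gives $N_a(s)\ge T_{m-1}$ for every round $s$ of phase $m$. Let $t_a$ be the round at which \pref{alg:new_binary_search} terminates the search for $a$; \pref{lem:search_end_bma_diff} then yields $b_{m,a}>\pi^\star_a(t_a)$. I would argue by induction along the rounds of $\calT^E_{m,a}$ in chronological order: assuming $A_s=a$ for all earlier exploration rounds $s$, I establish $A_t=a$ by verifying $\overline{b}_{m,a}>\pi^\star_a(t)$.

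The core estimate is the drift $\pi^\star_a(t)-\pi^\star_a(t_a)$, which I split as $\big(\max_b\hatmu_b(t)-\max_b\hatmu_b(t_a)\big)+\big(\hatmu_a(t_a)-\hatmu_a(t)\big)$. For the arm-$a$ term, conditioning on $\calE$ and using $N_a(t_a),N_a(t)\ge T_{m-1}$ gives $\hatmu_a(t_a)-\hatmu_a(t)\le \sqrt{\tfrac{\log(4KT/\delta)}{2N_a(t_a)}}+\sqrt{\tfrac{\log(4KT/\delta)}{2N_a(t)}}\le 2C_m$, where $C_m=\sqrt{\log(4KT/\delta)/(2T_{m-1})}$; this is precisely the fluctuation that the $4C_m$ enlargement in \eqref{eq:overline_bma} is meant to absorb. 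For the $\max_b\hatmu_b$ term, the inductive hypothesis forces every arm $b\neq a$ to keep a frozen empirical mean throughout the block, so the only possible increase of $\max_b\hatmu_b$ stems from the bounded number of non-target plays occurring at the very end of the search; by \pref{lem:per_update_shift} each such update moves an empirical mean by at most $1/\min_i N_i(t_a)\le Z_m^{-1}$, since every bad arm is played $Z_m$ times at the start of phase $m$ and every active arm already has $N_i\ge T_{m-1}\ge Z_m$, whence $\min_i N_i(t_a)\ge Z_m$. Combining the two bounds gives $\pi^\star_a(t)\le \pi^\star_a(t_a)+2C_m+Z_m^{-1}<b_{m,a}+4C_m+Z_m^{-1}=\overline{b}_{m,a}$ in the uncapped case; when the cap in \eqref{eq:overline_bma} binds, $\overline{b}_{m,a}=1+1/T>1\ge\pi^\star_a(t)$ trivially. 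Either way $\overline{b}_{m,a}>\pi^\star_a(t)$, so $A_t=a$ and the induction closes.

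The delicate part is the bookkeeping at the search/exploration boundary: I must track exactly which plays are already reflected in the empirical means entering $t_a$ versus those entering a later exploration round. Here \pref{lem:At_2_equal_a} together with the case analysis behind \pref{lem:search_end_bma_diff} confirms that only a bounded number of non-target plays can occur at the tail of the search, and I must check that the resulting rise in $\max_b\hatmu_b$ is genuinely covered by $Z_m^{-1}$ while arm $a$'s own fluctuation is covered by $4C_m$. Once this alignment is made precise, the rest follows directly from $\calE$ and the search guarantee, with the generous $4C_m$ budget leaving ample slack.
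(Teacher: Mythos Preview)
Your overall architecture---induction along $\calT^E_{m,a}$, reducing to $\overline{b}_{m,a}>\pi^\star_a(t)$, and anchoring at the search output via \pref{lem:search_end_bma_diff}---matches the paper. But your bound on the max-term drift is wrong as written. You claim
\[
\max_{b}\hatmu_b(t)-\max_{b}\hatmu_b(t_a)\le Z_m^{-1}
\]
because ``every arm $b\neq a$ keeps a frozen empirical mean throughout the block, so the only possible increase of $\max_b\hatmu_b$ stems from the bounded number of non-target plays occurring at the very end of the search.'' This ignores that the maximum is taken over \emph{all} arms, including $a$ itself. Inside the block only $a$ is played, so $\hatmu_a$ can increase (by as much as $2C_m$ under $\calE$), and if $a$ then attains the maximum, your displayed inequality fails---$2C_m$ can be much larger than $Z_m^{-1}$. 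The invocation of \pref{lem:At_2_equal_a} here is also misplaced: that lemma controls what happens \emph{during} the search and is already baked into $b_{m,a}>\pi^\star_a(t_a)$; between $t_a$ and $t$ there is exactly one play at round $t_a$ followed (inductively) by plays of $a$ only, so no multi-step tail bookkeeping is needed.

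The fix is short: either argue directly on $\pi^\star_a(t)$ via a case split---if $a$ attains the max then $\pi^\star_a(t)=0$ and there is nothing to prove; otherwise the maximizer $c\neq a$ is frozen in the block and $\hatmu_c(t)=\hatmu_c(t_a{+}1)\le \hatmu_c(t_a)+Z_m^{-1}$, whence $\pi^\star_a(t)\le \pi^\star_a(t_a)+2C_m+Z_m^{-1}$---or do what the paper does and bound the max-term drift by $2C_m$ using concentration for all active arms (bad arms are frozen; every active arm $b$ has $N_b\ge T_{m-1}$ so $\hatmu_b(\tau)\in[\mu_b-C_m,\mu_b+C_m]$, and the max of such quantities can shift by at most $2C_m$). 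Either route closes the gap and fits comfortably inside the $4C_m+Z_m^{-1}$ enlargement in \eqref{eq:overline_bma}.
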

\begin{proof}
Let us consider fixed phase $m\geq 2$ and arm $a \in \calA_m \subseteq \calA_{m-1}$ and let $t_{m,a}$ be the round that \pref{alg:new_binary_search} ends the search for target arm $a$ in phase $m$. Recall that $\calT^E_{m,a}$ is the set of rounds that \pref{alg:MAB_alg} runs in line \ref{exploration_Tm_MAB} for arm $a$ in phase $m$
We use strong induction on $t$ in $\calT^E_{m,a}$ to prove the claim. For the base case (the first round in $\calT^E_{m,a}$)
\begin{align*}
    &\overline{b}_{m,a} -  \pi_a^{\star}(t_{m,a}+1) \\
    &=\overline{b}_{m,a} - \min \cbr{ 1,\pi_a^{\star}(t_{m,a}+1)} \\
    & \geq  \overline{b}_{m,a} - \min \cbr{1, \pi_a^{\star}(t_{m,a})  + \frac{1}{\min_{i\in [K]}N_i(t)}} \\
    &\geq \overline{b}_{m,a} -  \min \cbr{1,\pi_a^{\star}(t_{m,a})  + \sqrt{ \frac{\max\{1,|\calB_m|\}}{T_{m-1}|\calA_m|}  } }\\
    &=\min \cbr{1+\frac{1}{T},b_{m,a}+4\sqrt{\frac{\log(4KT/\delta)}{2T_{m-1}}}+ \sqrt{ \frac{\max\{1,|\calB_m|\}}{T_{m-1}|\calA_m|}  } } -  \min \cbr{1,\pi_a^{\star}(t_{m,a})  + \sqrt{ \frac{\max\{1,|\calB_m|\}}{T_{m-1}|\calA_m|}  } } \\
    &>0,
\end{align*}
where the first inequality holds since the one-update shifting is at most $\frac{1}{\min_{i\in [K]}N_i(t)}$, the second inequality holds because the active arm is played for at least $T_{m-1}$ by the assumption and each bad arm is played for at least $\sqrt{ \frac{T_{m-1}|\calA_m|}{\max\{1,|\calB_m|\}}  }$ times (by $T_{m-1}\geq |\calA_{m-1}| \geq |\calA_{m}|$, we have $\sqrt{ \frac{T_{m-1}|\calA_m|}{\max\{1,|\calB_m|\}}  }\leq T_{m-1}$), and the last inequality holds due to \pref{lem:search_end_bma_diff}.

Suppose that arm $a$ gets played for all rounds $\leq t$ in $\calT^E_{m,a}$, and then we consider the round $t+1$. 
To this end, we first show
\begin{equation} \label{eq:diff_maxhatmu}
    \max_{b \in [K]} \hatmu_{b}(t+1)-\max_{b \in [K]} \hatmu_{b}(t_{m,a}+1) \leq 2\sqrt{\frac{\log(4KT/\delta)}{2T_{m-1}}}.
\end{equation}

The induction hypothesis gives that if arm $a$ gets played for all rounds $\leq t$ in $\calT^E_{m,a}$, i.e., no bad arms will be played, then for all bad arms $z \in \calB_m$, $\hatmu_z(t+1)=\hatmu_z(t_{m,a}+1)$.
By Hoeffding' inequality, for all active arm $a \in \calA_m$ and $ \forall \tau \in \calT^E_{m,a}: N_{a}(\tau)\geq T_{m-1}$ and
\begin{equation} \label{eq:range_muhat}
    \forall \tau \in \calT^E_{m,a}:\quad  \hatmu_a(\tau) \in  \sbr{\mu_a-\sqrt{\frac{\log(4KT/\delta)}{2T_{m-1}}},\mu_a+\sqrt{\frac{\log(4KT/\delta)}{2T_{m-1}}}}.
\end{equation}

Therefore, to verify \pref{eq:diff_maxhatmu}, we only need to consider the two maximums are achieved by two active arms, and their difference is at most $2\sqrt{\frac{\log(4KT/\delta)}{2T_{m-1}}}$.

Hence, one can show
\begin{align*}
 &\overline{b}_{m,a} - \rbr{\max_{b \in [K]} \hatmu_{b}(t+1) - \hatmu_{a}(t+1)}\\
  &=\overline{b}_{m,a} - \min \cbr{1, \max_{b \in [K]} \hatmu_{b}(t+1) - \hatmu_{a}(t+1)}\\
 &\geq   \overline{b}_{m,a} - \min \cbr{1,\max_{b \in [K]} \hatmu_{b}(t_{m,a}+1) - \hatmu_{a}(t_{m,a}+1) +4\sqrt{\frac{\log(4KT/\delta)}{2T_{m-1}}}}\\
  &\geq   \overline{b}_{m,a} - \min \cbr{1,\max_{b \in [K]} \hatmu_{b}(t_{m,a}) - \hatmu_{a}(t_{m,a})  + \sqrt{ \frac{\max\{1,|\calB_m|\}}{T_{m-1}|\calA_m|}  }+4\sqrt{\frac{\log(4KT/\delta)}{2T_{m-1}}}}\\
&=   \overline{b}_{m,a} - \min \cbr{1, \pi_{a}^{\star}(t_{m,a})  + \sqrt{ \frac{\max\{1,|\calB_m|\}}{T_{m-1}|\calA_m|}  }+4\sqrt{\frac{\log(4KT/\delta)}{2T_{m-1}}}}\\
 &>0,
\end{align*}
where the first inequality uses \pref{eq:diff_maxhatmu} and \pref{eq:range_muhat}, the second inequality follows from the fact that there will be only one arm played at $t_{m,a}$, and the the per-update error is bounded by $\sqrt{ \frac{\max\{1,|\calB_m|\}}{T_{m-1}|\calA_m|}  }$, and the last inequality uses \pref{lem:search_end_bma_diff} together with the assumption.

This inequality implies that arm $a$ gets played at round $t+1$. Once the induction is done, we get the desired claim for fixed $m,a$. Conditioning on $\calE$, the claim holds for all $m,a$, which thus completes the proof.
\end{proof}

We are now ready to prove \pref{lem:always_play_active_arm}. Since $\P(\calE) \geq 1-\delta$, we make an equivalent statement as follows.

\begin{lemma}[Restatement of \pref{lem:always_play_active_arm}] \label{lem:a_played_Tm_times}
Suppose that $\calE$ occurs. For each phase $m$ and each active arm $a \in \calA_m$, the agent plays arm $a$ for all $t \in \calT_{m,a}^E$.
\end{lemma}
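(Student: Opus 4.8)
The plan is to prove the restatement by induction on the phase index $m$, with everything conditioned on the event $\calE$. The heavy analytical work is already packaged inside \pref{lem:Atisa_forallt}, which states that \emph{if} every arm in $\calA_m$ has been played at least $T_{m-1}$ times before phase $m$, \emph{then} the agent plays each active arm throughout its exploration window $\calT^E_{m,a}$. Thus the only remaining task is to verify that this ``played $T_{m-1}$ times'' precondition is automatically maintained phase after phase, so that \pref{lem:Atisa_forallt} can be invoked at every step.

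For the base case $m=1$ I would argue directly, since \pref{lem:Atisa_forallt} requires $m\geq 2$. Here $\calB_1=\emptyset$ and $T_0=1$, so the enlargement constant $C_1=\sqrt{\log(4KT/\delta)/2}$ is large; in particular $4C_1>1+1/T$, which forces the capped value $\overline{b}_{1,a}=1+1/T$ in \pref{eq:overline_bma}. Because rewards lie in $[0,1]$ we always have $\pi^\star_a(t)=\max_{b}\hatmu_b(t)-\hatmu_a(t)\leq 1$, so offering $\pi^0(a;1+1/T)$ gives $\hatmu_a(t)+\overline{b}_{1,a}>\max_b\hatmu_b(t)$ regardless of the agent's current empirical means. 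By \pref{eq:arm_selection} this guarantees $A_t=a$ for every $t\in\calT^E_{1,a}$, establishing the claim in phase $1$.

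For the inductive step, fix $m\geq 2$ and assume the claim in phase $m-1$: the agent plays each $a\in\calA_{m-1}$ for all $t\in\calT^E_{m-1,a}$. Since line~\ref{exploration_Tm_MAB} proposes the single-arm incentive for $T_{m-1}$ rounds in phase $m-1$, we have $|\calT^E_{m-1,a}|=T_{m-1}$, so each such arm is played $T_{m-1}$ times during phase $m-1$. The active sets are nested, $\calA_m\subseteq\calA_{m-1}$ (arms are only ever moved into $\calB$), hence every $a\in\calA_m$ accumulates $N_a\geq T_{m-1}$ plays by the end of phase $m-1$, and this count only grows during the bad-arm stabilization and search steps of phase $m$. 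This is exactly the precondition of \pref{lem:Atisa_forallt}, which then yields $A_t=a$ for all $t\in\calT^E_{m,a}$, closing the induction. Finally, since $\P(\calE)\geq 1-\delta$ by \pref{lem:event_high_prob_MAB}, the conclusion holds with probability at least $1-\delta$, recovering \pref{lem:always_play_active_arm}.

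The one genuinely delicate point has already been isolated inside \pref{lem:Atisa_forallt}, namely that the enlargement in \pref{eq:overline_bma} absorbs both the drift of $\max_b\hatmu_b$ and the residual search error across the entire window. For the present restatement the thing to watch is the bookkeeping of the precondition: I would confirm that the $T_{m-1}$ guaranteed plays come specifically from the exploration window of phase $m-1$ (searches are excluded from $\calT^E$ by definition, so they cannot be double-counted), and that the nesting $\calA_m\subseteq\calA_{m-1}$ correctly transfers the guarantee from $\calA_{m-1}$ to $\calA_m$. I would also check the $m=2$ boundary explicitly, where the inductive hypothesis is precisely the base case and one needs $T_1>T_0=1$ for the counts to be consistent.
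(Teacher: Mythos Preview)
Your proposal is correct and follows essentially the same inductive argument as the paper: establish the base case $m=1$ by observing the incentive is capped at $1+1/T$ (the paper simply states the incentive is $1+\xi$), then for the inductive step invoke \pref{lem:Atisa_forallt} once the previous phase guarantees each active arm has $T_{m-1}$ plays. Your base-case justification via $4C_1>1+1/T$ is a bit more explicit than the paper's, but the structure is identical.
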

\begin{proof}
Let $t_{m,a}$ be the round that \pref{alg:new_binary_search} ends the search for target arm $a$ in phase $m$.
We prove the claim by induction on $m$. For $m=1$, the incentive on each arm $a \in \calA$ is $1+\xi$, then all arms will be played for $   T_1$ times.
Suppose that the claim holds for $m$ and then we consider for $m+1$.
By induction hypothesis, every active arm $a \in \calA_m$ is played for $T_m$ times, then we directly invoking \pref{lem:Atisa_forallt} to get that $a \in \calA_{m+1}$ are played for $T_{m+1}$ times, which completes the induction.  Once the induction is done, we get the claim for the fixed $m,a$. Conditioning on $\calE$, this argument holds for all $m,a$, thereby completing the proof.
\end{proof}

\begin{lemma} \label{lem:UB_bar_bma}
Suppose that $\calE$ occurs.
For each phase $m \geq 2$ and each arm $a \in \calA_m$, we have
\begin{equation*}
\forall t \in \calT^E_{m,a}:\quad \overline{b}_{m,a} \leq \max_{b \in [K]} \hatmu_{b}(t) - \mu_{a} +7\sqrt{\frac{\log(4KT/\delta)}{2T_{m-1}}}+\frac{4}{T}+\frac{\lceil \log_2 T \rceil}{T_{m-1}}+5\sqrt{ \frac{\max\{1,|\calB_m|\}}{T_{m-1}|\calA_m|}  }.
\end{equation*}
\end{lemma}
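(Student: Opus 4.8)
The plan is to peel off the definition of $\overline{b}_{m,a}$ and then transport every quantity evaluated at the search-termination round back to the exploration round $t$. Write $t_{m,a}$ for the round at which \pref{alg:new_binary_search} ends the search for $a$ in phase $m$, so that the $T_m$ rounds of $\calT^E_{m,a}$ immediately follow $t_{m,a}$. Since $\overline{b}_{m,a} = \min\{1+\tfrac1T,\, b_{m,a} + 4C_m + Z_m^{-1}\} \leq b_{m,a} + 4C_m + Z_m^{-1}$ by \pref{eq:overline_bma}, and \pref{lem:search_end_bma_diff} gives $b_{m,a} \leq \pi_a^\star(t_{m,a}) + \tfrac{4}{T} + \tfrac{\lceil \log_2 T\rceil}{N_a(t_{m,a})} + \tfrac{2}{\min_{i\in[K]}N_i(t_{m,a})}$ with $\pi_a^\star(t_{m,a}) = \max_{b\in[K]}\hatmu_b(t_{m,a}) - \hatmu_a(t_{m,a})$, it remains to (i) replace $-\hatmu_a(t_{m,a})$ by $-\mu_a$, (ii) replace $\max_{b\in[K]}\hatmu_b(t_{m,a})$ by $\max_{b\in[K]}\hatmu_b(t)$, and (iii) control the three reciprocal counts.

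For (iii) I would first record that by round $t_{m,a}$ every arm is played enough: each active arm $a\in\calA_m\subseteq\calA_{m-1}$ was played $T_{m-1}$ times in phase $m-1$ by \pref{lem:a_played_Tm_times}, so $N_a(t_{m,a})\geq T_{m-1}$; each bad arm is played $Z_m$ times in the bad-arm stabilization step of \pref{alg:MAB_alg}, so $\min_{i\in[K]}N_i(t_{m,a}) \geq \min\{Z_m, T_{m-1}\} = Z_m$, where the last equality uses $Z_m \leq T_{m-1}$. This inequality follows from \pref{eq:def_Zm} together with $|\calA_m|\leq T_{m-1}$, itself a consequence of $T_{m-1}\geq|\calA_{m-1}|\log T \geq |\calA_m|$. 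Hence $\tfrac{\lceil\log_2 T\rceil}{N_a(t_{m,a})} \leq \tfrac{\lceil\log_2 T\rceil}{T_{m-1}}$ and $\tfrac{2}{\min_i N_i(t_{m,a})} \leq 2Z_m^{-1}$, where $Z_m^{-1}=\sqrt{\max\{1,|\calB_m|\}/(T_{m-1}|\calA_m|)}$. For (i), conditioning on $\calE$ and using $N_a(t_{m,a})\geq T_{m-1}$ gives $\hatmu_a(t_{m,a}) \geq \mu_a - C_m$, i.e.\ $-\hatmu_a(t_{m,a}) \leq -\mu_a + C_m$.

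The crux is (ii): transporting the running maximum from $t_{m,a}$ to an arbitrary $t\in\calT^E_{m,a}$. The key structural fact is that throughout $\calT^E_{m,a}$ only arm $a$ is played (\pref{lem:a_played_Tm_times}), so I would split the maximum over the frozen bad arms and the active arms. For bad arms $z\in\calB_m$, $\hatmu_z$ is constant after the stabilization step, hence $\hatmu_z(t_{m,a})=\hatmu_z(t)$. For active arms, conditioning on $\calE$ and using $N_b\geq T_{m-1}$ for every $b\in\calA_m$ at both rounds, I would pick $b^\star=\argmax_{b\in\calA_m}\mu_b$ and chain $\max_{b\in\calA_m}\hatmu_b(t_{m,a}) \leq \mu_{b^\star}+C_m \leq \hatmu_{b^\star}(t) + 2C_m \leq \max_{b\in\calA_m}\hatmu_b(t) + 2C_m$. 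Combining the two cases yields $\max_{b\in[K]}\hatmu_b(t_{m,a}) \leq \max_{b\in[K]}\hatmu_b(t) + 2C_m$. Assembling all contributions collects a $C_m$ coefficient of $4+1+2=7$, a $\tfrac{4}{T}$ term, a $\tfrac{\lceil\log_2 T\rceil}{T_{m-1}}$ term, and a $Z_m^{-1}$ coefficient of $1+2=3\leq 5$, which gives exactly the claimed bound. The main obstacle is precisely this transport step (ii): it relies essentially on the fact that exploration plays only arm $a$, freezing the bad-arm estimates and keeping active-arm estimates $C_m$-concentrated around their true means; without this invariant the running maximum could drift uncontrollably between $t_{m,a}$ and $t$, and the clean $7C_m$ bound would fail.
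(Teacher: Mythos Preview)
Your approach is essentially the paper's, and the final bound is correct. One imprecision worth flagging: the claim that bad-arm estimates are ``constant after the stabilization step'' is false in general---during the search rounds of \pref{alg:new_binary_search} the agent may play $\argmax_b\hatmu_b$, which can be a bad arm. In particular, at the search-termination round $t_{m,a}$ itself an arm other than $a$ may be played, so for a bad arm $z$ one only has $\hatmu_z(t_{m,a}+1)=\hatmu_z(t)$ for $t\in\calT^E_{m,a}$ (since only arm $a$ is played in $\calT^E_{m,a}$), while $\hatmu_z(t_{m,a})$ can differ from $\hatmu_z(t_{m,a}+1)$ by one update, i.e.\ by at most $Z_m^{-1}$. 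The paper handles this by inserting an explicit per-update shift from $t_{m,a}$ to $t_{m,a}+1$ via \pref{lem:per_update_shift} (costing up to $2Z_m^{-1}$) before applying the freeze/concentration argument from $t_{m,a}+1$ to $t$; this is precisely why the paper's $Z_m^{-1}$ coefficient is $5$ rather than your $3$. Your slack $3\leq 5$ absorbs the missing term, so the lemma still follows, but the stated justification for the bad-arm case does not stand on its own.
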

\begin{proof}
Consider fixed $m,a$.
Let $t_{m,a}$ be the round that \pref{alg:new_binary_search} ends the search for arm $a$ at round $m$.
For all $t \in \calT^E_{m,a}$, one can show
\begin{align*}
    \overline{b}_{m,a}& \leq   b_{m,a} +4\sqrt{\frac{\log(4KT/\delta)}{2T_{m-1}}} + \sqrt{ \frac{\max\{1,|\calB_m|\}}{T_{m-1}|\calA_m|}  } \\
    &\leq \rbr{\pi^\star_a(t_{m,a})+ \frac{4}{T}+\frac{\lceil \log_2 T \rceil}{N_a(t_{m,a})}+ \frac{2}{\min_{i \in [K]} N_i(t_{m,a})}} +4\sqrt{\frac{\log(4KT/\delta)}{2T_{m-1}}} + \sqrt{ \frac{\max\{1,|\calB_m|\}}{T_{m-1}|\calA_m|}  } \\
     &\leq \pi^\star_a(t_{m,a})+ \frac{4}{T}+\frac{\lceil \log_2 T \rceil}{T_{m-1}}+4\sqrt{\frac{\log(4KT/\delta)}{2T_{m-1}}} + 3\sqrt{ \frac{\max\{1,|\calB_m|\}}{T_{m-1}|\calA_m|}  } \\
    &= \max_{b \in [K]} \hatmu_{b}(t_{m,a}) - \hatmu_{a}(t_{m,a})+ \frac{4}{T}+\frac{\lceil \log_2 T \rceil}{T_{m-1}}+4\sqrt{\frac{\log(4KT/\delta)}{2T_{m-1}}} + 3\sqrt{ \frac{\max\{1,|\calB_m|\}}{T_{m-1}|\calA_m|}  } \\
    & \leq   \max_{b \in [K]} \hatmu_{b}(t_{m,a}+1) - \hatmu_{a}(t_{m,a}+1) +4\sqrt{\frac{\log(4KT/\delta)}{2T_{m-1}}}+\frac{4}{T}+\frac{\lceil \log_2 T \rceil}{T_{m-1}}+5\sqrt{ \frac{\max\{1,|\calB_m|\}}{T_{m-1}|\calA_m|}  }  \\
    & \leq    \max_{b \in [K]} \hatmu_{b}(t) - \mu_{a} +7\sqrt{\frac{\log(4KT/\delta)}{2T_{m-1}}}+\frac{4}{T}+\frac{\lceil \log_2 T \rceil}{T_{m-1}}+5\sqrt{ \frac{\max\{1,|\calB_m|\}}{T_{m-1}|\calA_m|}  } ,
\end{align*}
where the first inequality holds as the definition of $ \overline{b}_{m,a}$ takes the minimum, the second inequality follows from \pref{lem:search_end_bma_diff}, the third inequality bounds $N_a(t_{m,a})\geq T_{m-1}$ and $\min_{i \in [K]}N_{i}(t{m,a}) \geq (\frac{T_{m-1}|\calA_m|}{\max\{1,|\calB_m|\}} )^{\nicefrac{1}{2}}$, the fourth inequality uses \pref{lem:per_update_shift} to bound the per-update error, and the last inequality uses the same reasoning to prove \pref{eq:diff_maxhatmu}. Conditioning on $\calE$, the argument holds for each $m \geq 2$ and $a \in \calA_m$, and thus the proof is complete.
\end{proof}

\subsection{Lemmas for Online Elimination}
\label{app:online_elimination_mab}
\begin{lemma} \label{lem:optarm_always_active}
Suppose event $\calE$ occurs. For all $m \in \naturalnum$, $a^{\star} \in \calA_m$ holds.   
\end{lemma}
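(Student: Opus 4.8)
The plan is to prove the statement by induction on the phase index $m$, showing that under $\calE$ the optimal arm $\optarm$ survives every online-elimination step. The base case is immediate: $\calA_1 = [K] \ni \optarm$. For the inductive step, assume $\optarm \in \calA_m$. By the elimination rule in line~\ref{incentive_elimination}, an active arm is removed from $\calA_m$ only when it is tested at some round $t$ and the agent plays $A_t \neq \optarm$. Hence it suffices to show that when the algorithm tests $a = \optarm$, the agent in fact plays $A_t = \optarm$; equivalently, that $\optarm$ \emph{strictly} maximizes $\hatmu_c(t) + \pi_c(t)$ over $c \in [K]$ at that round, so that $A_t = \optarm$ regardless of tie-breaking.

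First I would dispose of the bad arms. When $a = \optarm$ is tested, the incentive is $\pi_{\optarm}(t) = 1 + \htheta_{\optarm}(t) + \tfrac{3}{2}2^{-m}$, while every $i \in \calB_m$ receives $\pi_i(t) = 0$. Since all empirical means lie in $[0,1]$,
\[
\hatmu_{\optarm}(t) + \pi_{\optarm}(t) = 1 + \hatmu_{\optarm}(t) + \htheta_{\optarm}(t) + \tfrac{3}{2}2^{-m} > 1 \geq \hatmu_i(t) = \hatmu_i(t) + \pi_i(t),
\]
so no bad arm can be preferred. For a competing active arm $b \in \calA_m - \{\optarm\}$ with $\pi_b(t) = 1 + \htheta_b(t)$, the comparison reduces to
\[
\hatmu_{\optarm}(t) + \pi_{\optarm}(t) - \bigl(\hatmu_b(t) + \pi_b(t)\bigr) = \bigl(\hatmu_{\optarm}(t) + \htheta_{\optarm}(t)\bigr) - \bigl(\hatmu_b(t) + \htheta_b(t)\bigr) + \tfrac{3}{2}2^{-m},
\]
which I would lower-bound using the concentration guarantees in $\calE$. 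Applying the two-sided bounds to $\hatmu_{\optarm}, \htheta_{\optarm}$ (from below) and $\hatmu_b, \htheta_b$ (from above) gives
\[
\bigl(\hatmu_{\optarm}(t) + \htheta_{\optarm}(t)\bigr) - \bigl(\hatmu_b(t) + \htheta_b(t)\bigr) \geq \Delta_b - 2\sqrt{\tfrac{\log(4TK/\delta)}{2N_{\optarm}(t)}} - 2\sqrt{\tfrac{\log(4TK/\delta)}{2N_b(t)}},
\]
where $\Delta_b = \theta_{\optarm} + \mu_{\optarm} - (\theta_b + \mu_b) \geq 0$ by optimality of $\optarm$.

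The crux, and the step I expect to be the main obstacle, is calibrating the confidence radius against the elimination margin $\tfrac{3}{2}2^{-m}$. By \pref{lem:always_play_active_arm}, every active arm has been played $T_m$ times during the exploration period of phase $m$, so at the elimination round $t$ we have $N_{\optarm}(t), N_b(t) \geq T_m$; combined with $T_m \geq 2^{2m+5}\log(4TK/\delta)$ from \pref{eq:def_Tm}, each confidence term is at most $\tfrac{1}{8}2^{-m}$, and the four terms together are at most $\tfrac{1}{2}2^{-m}$. Therefore
\[
\hatmu_{\optarm}(t) + \pi_{\optarm}(t) - \bigl(\hatmu_b(t) + \pi_b(t)\bigr) \geq \Delta_b - \tfrac{1}{2}2^{-m} + \tfrac{3}{2}2^{-m} \geq 2^{-m} > 0,
\]
so $\optarm$ strictly dominates every other active arm and every bad arm, forcing $A_t = \optarm$. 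Thus $\optarm$ is not eliminated in phase $m$, i.e., $\optarm \in \calA_{m+1}$, completing the induction. The only delicate points are ensuring the count lower bound $N \geq T_m$ genuinely holds at the elimination round (which is exactly what \pref{lem:always_play_active_arm} supplies) and verifying the numerical slack in the $T_m$ definition; both are routine once the structure above is in place.
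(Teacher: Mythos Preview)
Your proposal is correct and follows essentially the same argument as the paper: induction on $m$, invoking \pref{lem:always_play_active_arm} to ensure $N_a(t)\ge T_m$ for all $a\in\calA_m$ at the elimination round, then using the concentration in $\calE$ together with $T_m\ge 2^{2m+5}\log(4TK/\delta)$ to make the total estimation error smaller than the margin $\tfrac{3}{2}\cdot 2^{-m}$. Your explicit handling of bad arms and the slightly sharper numerical slack ($\tfrac{1}{2}\cdot 2^{-m}$ instead of the paper's looser $2^{-m}$) are minor refinements, not a different approach.
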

\begin{proof}
We prove the claim by the induction. For $m=1$, the claim trivially holds. Suppose the claim holds for $m$ and consider for $m+1$. Let $t_{m}$ be the round that the algorithm tries to entice arm $\optarm$ in phase $m$ (in line \ref{incentive_elimination}). As $\forall a \in \calA_m: N_a(t_{m}) \geq T_m$, we have for each $a \in \calA_m$:
\begin{align*}
0 &\leq \theta_{a^{\star}}+\mu_{a^{\star}} -(\theta_{a}+\mu_{a}) \leq \htheta_{a^{\star}}(t_{m})+\hatmu_{a^{\star}}(t_{m}) -(\htheta_{a}(t_{m})+\hatmu_{a}(t_{m}) ) +2^{-m},
\end{align*}
where the last inequality holds by event $\calE$, the induction hypothesis $\optarm \in \calA_m$, and \pref{lem:always_play_active_arm}. 

From \pref{alg:MAB_alg}, when the algorithm tries to entice arm $a^\star$, $\pi_{a^\star}(t_m)=1+\htheta_{a^\star}(t_m)+\frac{3}{2}\cdot 2^{-m}$ and $1+\htheta_a(t_m)$ for all $a \in \calA_m-\{a^\star\}$. Plugging these into the above gives
\[
0 \leq \pi_{a^{\star}}(t_{m})+\hatmu_{a^{\star}}(t_{m}) -(\pi_{a}(t_{m})+\hatmu_{a}(t_{m}) ) - 2^{-m-1},
\]
which gives $\pi_{a^{\star}}(t_{m})+\hatmu_{a^{\star}}(t_{m})>\pi_{a}(t_{m})+\hatmu_{a}(t_{m}) $ for all $a \in \calA_m-\{a^\star\}$, thereby $\optarm \in \calA_{m+1}$.
Once the induction done, the proof is complete.
\end{proof}

\begin{lemma}\label{lem:LB_Delta_a}
Suppose event $\calE$ occurs. For each arm $a \in [K]$, if arm $a \in \calA_m$ and $a \notin \calA_{m+1}$, then $\Delta_a\geq 2^{-m}$.
\end{lemma}
\begin{proof}
Notice that $a \in \calA_m$ and $a \notin \calA_{m+1}$ imply that the proposed incentive does not successfully entice arm $a$ at the end of phase $m$, thereby being eliminated. Suppose that such an elimination occurs at round $t$.
When testing arm $a$ at round $t$, we have $\pi_{a}(t)=1+\htheta_{a}(t)+\frac{3}{2}\cdot 2^{-m}$ and $1+\htheta_b(t)$ for all $b \in \calA_m-\{a\}$. Since arm $a$ gets eliminated at round $t$, we have
\begin{align*}
  0& \leq   \max_{j \in \calA_{m}-\{a\} }  \cbr{\hatmu_{j}(t)+\pi_j(t) } - \rbr{\hatmu_a(t) +\pi_a(t) }  \\
  &= \max_{j \in \calA_{m}-\{a\} }  \cbr{\hatmu_{j}(t)+\htheta_j(t) }  - \rbr{\hatmu_a(t) +\htheta_a(t) } - \frac{3}{2}\cdot 2^{-m}\\
  &\leq \max_{j \in \calA_{m}}  \cbr{\hatmu_{j}(t)+\htheta_j(t) }  - \rbr{\hatmu_a(t) +\htheta_a(t) } - \frac{3}{2}\cdot 2^{-m}.
\end{align*}

Then, we can further show
\begin{align*}
   \frac{3}{2}\cdot 2^{-m}& \leq \max_{j \in \calA_{m} }  \cbr{\hatmu_{j}(t)+\htheta_j(t) }  - \rbr{\hatmu_a(t) +\htheta_a(t) } \\
    & \leq \max_{j \in \calA_{m} }  \cbr{\mu_{j}+\theta_j +2^{-m-2} }  - \rbr{\mu_a +\theta_a -2^{-m-2} } \\
    &\leq   \Delta_a + 2^{-m-1}   ,
\end{align*}    
where the second inequality holds due to $\calE$ together with \pref{lem:always_play_active_arm} and the last inequality holds follows from \pref{lem:optarm_always_active} that $\optarm \in \calA_m$ for all $m$. Rearranging the above, we obtain the desired claim.
\end{proof}

\begin{lemma} \label{lem:upperbound_pull}
Let $m_a$ be the smallest phase such that $\frac{\Delta_a}{2} > 2^{-m_a}$.
Suppose that $\calE$ occurs. For each arm $a$ with $\Delta_a>0$, it will not be in $\calA_m$ for all phases $m \geq m_a+1$.
\end{lemma}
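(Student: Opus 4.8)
The plan is to show that an arm $a$ with $\Delta_a>0$ is eliminated no later than the end of phase $m_a$, and then to propagate this forward using the monotonicity of the active sets. Note first that since the update rule only ever sets $\calA_{m+1}=\calA_m-\{a\}$ and never re-adds an eliminated arm, the family $\{\calA_m\}_m$ is non-increasing; hence it suffices to prove $a\notin\calA_{m_a+1}$, from which $a\notin\calA_m$ for all $m\ge m_a+1$ follows immediately. Recall also that the defining property of $m_a$ is exactly $\Delta_a>2^{-m_a+1}$.

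I would then split into two cases. If $a\notin\calA_{m_a}$, then $a$ was already eliminated in an earlier phase, and monotonicity gives $a\notin\calA_m$ for all $m\ge m_a$, in particular for $m\ge m_a+1$. The substantive case is $a\in\calA_{m_a}$, where I must show that the online-elimination test at phase $m_a$ produces $A_t\ne a$ at the test round $t$. For this I would invoke \pref{lem:optarm_always_active} to get $a^\star\in\calA_{m_a}$, together with $a^\star\ne a$ (since $\Delta_a>0$), so that $a^\star\in\calA_{m_a}-\{a\}$; and \pref{lem:a_played_Tm_times} to guarantee $N_a(t),N_{a^\star}(t)\ge T_{m_a}$ at the elimination round. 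Using $T_{m_a}\ge 2^{2m_a+5}\log(4TK/\delta)$ from \pref{eq:def_Tm}, the confidence radius under $\calE$ obeys $\sqrt{\log(4KT/\delta)/(2N_a(t))}\le \tfrac18 2^{-m_a}$ for each of $\hatmu$ and $\htheta$.

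Recalling that when testing arm $a$ we have $\pi_{a^\star}(t)=1+\htheta_{a^\star}(t)$ and $\pi_a(t)=1+\htheta_a(t)+\tfrac32 2^{-m_a}$, I would estimate
\begin{align*}
\hatmu_{a^\star}(t)+\pi_{a^\star}(t)-\big(\hatmu_a(t)+\pi_a(t)\big)
&=\big(\hatmu_{a^\star}(t)+\htheta_{a^\star}(t)\big)-\big(\hatmu_a(t)+\htheta_a(t)\big)-\tfrac32 2^{-m_a}\\
&\ge \Delta_a-\tfrac12 2^{-m_a}-\tfrac32 2^{-m_a}=\Delta_a-2^{-m_a+1}>0,
\end{align*}
where the four confidence radii contribute total slack $4\cdot\tfrac18 2^{-m_a}=\tfrac12 2^{-m_a}$ and the final strict inequality is the defining property of $m_a$. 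This shows $a^\star$ strictly beats $a$, so $A_t\ne a$ and $a$ is eliminated, i.e.\ $a\notin\calA_{m_a+1}$.

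I expect the main obstacle to be purely the constant bookkeeping: one must verify that the accumulated confidence slack $\tfrac12 2^{-m_a}$ plus the test margin $\tfrac32 2^{-m_a}$ stays \emph{strictly} below $\Delta_a$, which is precisely what the scaling $T_m=\Theta(2^{2m}\log(TK/\delta))$ and the threshold $\tfrac32 2^{-m}$ are engineered to deliver; this mirrors the reverse estimate already carried out in \pref{lem:LB_Delta_a}. With $a\notin\calA_{m_a+1}$ established, monotonicity of $\{\calA_m\}$ extends the conclusion to all $m\ge m_a+1$, completing the proof.
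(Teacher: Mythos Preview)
Your proposal is correct and essentially identical to the paper's proof: both reduce to the case $a\in\calA_{m_a}$, invoke \pref{lem:optarm_always_active} to place $a^\star\in\calA_{m_a}-\{a\}$, use \pref{lem:always_play_active_arm} together with $T_{m_a}\ge 2^{2m_a+5}\log(4TK/\delta)$ to bound each confidence radius by $\tfrac18\cdot 2^{-m_a}$, and then carry out the same arithmetic $\Delta_a-\tfrac12\cdot 2^{-m_a}-\tfrac32\cdot 2^{-m_a}=\Delta_a-2^{-m_a+1}>0$ to force $A_t\ne a$. Your explicit remark on the monotonicity of $\{\calA_m\}$ is the only addition, and it merely makes precise what the paper leaves implicit.
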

\begin{proof}
Consider any arm $a$ with $\Delta_a>0$.
We only need to consider $a \in \calA_{m_a}$ and otherwise, the claim naturally holds.
Let $t$ be the round in phase $m$ when the algorithm aims to test if arm $a$ should be active in the next phase.
When testing arm $a$ at round $t$, we have $\pi_{a}(t)=1+\htheta_{a}(t)+\frac{3}{2}\cdot 2^{-m}$ and $1+\htheta_b(t)$ for all $b \in \calA_m-\{a\}$. Then,
\begin{align*}
    &\max_{b \in \calA_{m_a}-\{a\} }\cbr{\hatmu_{b}(t) +\pi_b(t) }- \rbr{\hatmu_a(t)+\pi_a(t)}\\
    &=\max_{b \in \calA_{m_a}-\{a\} }\cbr{\hatmu_{b}(t) +\htheta_b(t) }- \rbr{\hatmu_a(t)+\htheta_a(t)} -\frac{3}{2}\cdot 2^{-m_a} \\
    &\geq \hatmu_{\optarm}(t) +\htheta_{\optarm}(t) - \rbr{\hatmu_a(t)+\htheta_a(t)} -\frac{3}{2}\cdot 2^{-m_a}\\
    &\geq   \Delta_a -2^{-m_a-1} - \frac{3}{2}\cdot 2^{-m_a} \\
    & > \Delta_a -2\times \frac{\Delta_a}{2} \\
    &=0,
\end{align*}    
where the first inequality follows from \pref{lem:optarm_always_active} that $\optarm \in \calA_{m_a}$ and $\Delta_a>0$, and the second inequality holds due to $\calE$ and \pref{lem:always_play_active_arm}.

According to the elimination rule (see elimination period in \pref{alg:MAB_alg}), arm $a$ will not be in phases 
$m$ for all $m \geq m_a+1$.
\end{proof}

\begin{lemma} \label{lem:a_t_star_active}
Let $\calA(t)$ be the set of active arms at round $t$.
Suppose event $\calE$ occurs.
For all $t \in [T]$, $a_t^{\star} \in \calA(t)$ holds.
\end{lemma}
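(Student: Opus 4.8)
The plan is to argue by contradiction, exploiting that an eliminated arm must carry a non-negligible gap (\pref{lem:LB_Delta_a}) while its empirical joint value stays close to the truth under $\calE$. Suppose that at some round $t$ the empirical maximizer $a_t^{\star}$ is inactive, $a_t^{\star} \notin \calA(t)$. Since arms are only ever moved from the active set to the bad set, there is a unique phase $m'$ at which $a_t^{\star}$ is eliminated, i.e. $a_t^{\star} \in \calA_{m'}$ but $a_t^{\star} \notin \calA_{m'+1}$, and round $t$ lies in a later phase. \pref{lem:LB_Delta_a} then gives $\Delta_{a_t^{\star}} \geq 2^{-m'}$; in particular $\Delta_{a_t^{\star}} > 0$, so this also covers the case where $a_t^{\star}$ happens to be optimal (such an arm is never eliminated).

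Next I would lower-bound the play counts of the two arms I wish to compare, $a_t^{\star}$ and $\optarm$. Because $a_t^{\star} \in \calA_{m'}$, \pref{lem:always_play_active_arm} guarantees it is played $T_{m'}$ times during phase $m'$, so $N_{a_t^{\star}}(t) \geq T_{m'}$ (any plays of $a_t^{\star}$ as a bad arm in subsequent phases only increase this). By \pref{lem:optarm_always_active}, $\optarm \in \calA_{m'}$ as well, so likewise $N_{\optarm}(t) \geq T_{m'}$. Using the definition $a_t^{\star} \in \argmax_{a} \{\theta_a + \hatmu_a(t)\}$, I compare $a_t^{\star}$ with $\optarm$ and convert empirical means to true means via $\calE$ and the gap definition:
\[
\theta_{\optarm} + \mu_{\optarm} - \Delta_{a_t^{\star}} + \sqrt{\tfrac{\log(4TK/\delta)}{2N_{a_t^{\star}}(t)}} \geq \theta_{a_t^{\star}} + \hatmu_{a_t^{\star}}(t) \geq \theta_{\optarm} + \hatmu_{\optarm}(t) \geq \theta_{\optarm} + \mu_{\optarm} - \sqrt{\tfrac{\log(4TK/\delta)}{2N_{\optarm}(t)}},
\]
which rearranges to $\Delta_{a_t^{\star}} \leq \sqrt{\frac{\log(4TK/\delta)}{2N_{a_t^{\star}}(t)}} + \sqrt{\frac{\log(4TK/\delta)}{2N_{\optarm}(t)}}$.

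Finally I would plug in the schedule $T_{m'} \geq 2^{2m'+5}\log(4TK\delta^{-1})$ from \pref{eq:def_Tm}. Each confidence term is then at most $\sqrt{\tfrac{\log(4TK/\delta)}{2\cdot 2^{2m'+5}\log(4TK/\delta)}} = 2^{-m'-3}$, so $\Delta_{a_t^{\star}} \leq 2^{-m'-2}$, contradicting $\Delta_{a_t^{\star}} \geq 2^{-m'}$. Hence $a_t^{\star} \in \calA(t)$ for every $t$. I expect the only delicate point to be the bookkeeping that an eliminated arm still carries $N_{a_t^{\star}}(t) \geq T_{m'}$ — that is, correctly identifying the phase $m'$ in which the arm was last active and invoking \pref{lem:always_play_active_arm} there — together with checking that the exponential growth of $T_{m'}$ makes the confidence radius ($\sim 2^{-m'-3}$) strictly smaller than the eliminated-arm gap guarantee ($2^{-m'}$); everything else is routine concentration algebra.
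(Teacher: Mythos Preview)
Your proof is correct and in fact cleaner than the paper's. The paper argues by induction on $t$: assuming $a_t^\star \in \calA(t)$, it shows $a_{t+1}^\star \in \calA(t+1)$ by contradiction, comparing $\theta_{a_t^\star}+\hatmu_{a_t^\star}(t+1)$ against $\theta_{a_{t+1}^\star}+\hatmu_{a_{t+1}^\star}(t+1)$. This introduces a one-step drift term $\frac{1}{N_{a_t^\star}(t+1)}$, and the induction hypothesis is needed precisely to bound that term (it certifies $a_t^\star$ is currently active, hence has $\geq T_{m-1}$ plays). Your direct contradiction at a single round $t$ sidesteps this entirely: you only need $N_{a_t^\star}(t)\geq T_{m'}$ and $N_{\optarm}(t)\geq T_{m'}$, both of which follow from the two arms having been active in phase $m'$ (via \pref{lem:always_play_active_arm} and \pref{lem:optarm_always_active}) and $t$ lying strictly after that phase's exploration---no inductive bookkeeping required. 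The numerical check $\sqrt{\log(4TK/\delta)/(2T_{m'})}\leq 2^{-m'-3}$ and the resulting $\Delta_{a_t^\star}\leq 2^{-m'-2}<2^{-m'}$ are exactly right. Your argument is strictly simpler; the paper's route gains nothing extra here.
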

\begin{proof}
We prove this by induction. The claim holds trivially at round $t=1$. Suppose that the claim holds at round $t$ and then consider $t+1$ round.
We then use contradiction to show that $a_{t+1}^\star$ cannot be a bad arm. To this end, we assume $a_{t+1}^\star \notin \calA(t+1)$ and then show a contradiction. Assume that round $t$ is in phase $m$ and phase $\tau$ is the last phase such that $a_{t+1}^\star \in \calA_\tau$.

Then, we have
\begin{align*}
& \theta_{a_t^\star}+ \hatmu_{a_t^\star}(t+1) -   \theta_{a_{t+1}^\star}- \hatmu_{a_{t+1}^\star}(t+1) \\
& \geq \theta_{a_t^\star}+ \hatmu_{a_t^\star}(t) - \frac{1}{N_{a_t^\star}(t+1)} -   \theta_{a_{t+1}^\star}- \hatmu_{a_{t+1}^\star}(t+1) \\
&\geq \theta_{a^\star}+ \hatmu_{a^\star}(t)- \frac{1}{N_{a_t^\star}(t+1)} -   \theta_{a_{t+1}^\star}- \hatmu_{a_{t+1}^\star}(t+1) \\
&\geq  \theta_{a^\star}+ \mu_{a^\star}- \sqrt{\frac{\log(4TK/\delta)}{2N_{a^\star}(t)}} - \frac{1}{N_{a_t^\star}(t+1)} -   \theta_{a_{t+1}^\star}- \mu_{a_{t+1}^\star}- \sqrt{\frac{\log(4TK/\delta)}{2N_{a_{t+1}^\star}(t+1)}} \\
&\geq  \theta_{a^\star}+ \mu_{a^\star} -(\theta_{a_{t+1}^\star}+ \mu_{a_{t+1}^\star})- \frac{1}{N_{a_t^\star}(t+1)}- 2^{-m-2} - 2^{-\tau-3} \\
&\geq 2^{-\tau} -\frac{1}{N_{a_t^\star}(t+1)} - 2^{-m-2} - 2^{-\tau-3} \\
&>2^{-\tau} -\frac{2^{-2m}}{32} - 2^{-m-2} - 2^{-\tau-3} \\
&>0,
\end{align*}
where the first inequality uses \pref{eq:muhat_diff}, 
the second inequality holds as $a_t^{\star}$ achieves the maximum and \pref{lem:optarm_always_active} gives that $\optarm$ is always active, the third inequality follows from the definition of $\calE$, the fourth inequality uses $N_{a_{t+1}^\star}(t+1) \geq T_{\tau}$ and $N_{a^\star}(t) \geq T_{m-1}$ by \pref{lem:optarm_always_active}, the fifth inequality uses \pref{lem:LB_Delta_a}, the sixth inequality holds since the induction hypothesis gives $a^\star_t \in \calA_t=\calA_m$, and the last inequality holds since $\tau \leq m$.

The above result forms a contradiction as it does not satisfy the definition of $a_{t+1}^\star$. Thus, the proof is complete.
\end{proof}

\subsection{Alternative Elimination Approach: Offline Elimination} \label{app:alter_offline_elimination_iid}

In this subsection, we show an alternative way to conducting the elimination, and the regret bound of our algorithm maintains the same order. In other words, our algorithm can also be implemented by proposing a one-hot incentive (i.e., only one coordinate of $\pi(t)$ has positive value) similar to that of \citep{ICML2024_principal_agent}.

\setcounter{AlgoLine}{0}
\begin{algorithm}[t]
\DontPrintSemicolon
\caption{Proposed algorithm for i.i.d. reward with offline elimination}
\label{alg:MAB_alg_alternative}
\textbf{Input}: confidence $\delta \in (0,1)$, horizon $T$.\\
\textbf{Initialize}: active arm set $\calA_1=[K]$, bad arm set $\calB_1=\emptyset$, $T_0=1$.\\
\nl \For{$m=1,2,\ldots$}{

\nl Set $T_m$ according to \pref{eq:def_Tm}.

\nl \For(\myComment{\underline{Stabilize estimators for bad arms}}) {$a \in \calB_m$}{ 
\nl Propose incentives $\pi^0(a;1+1/T)$ for $Z_m$ rounds where $Z_m$ is given in \pref{eq:def_Zm}. 
}

\nl \For{$a \in \calA_m$}{

\nl Invoke \pref{alg:new_binary_search} with input $(a,T)$ to get output $b_{m,a}$. \myComment{\underline{Search near-optimal incentive}} 

\nl Set $\overline{b}_{m,a}$ based on \pref{eq:overline_bma}.

\nl Propose incentives $\pi^0(a;\overline{b}_{m,a})$ for $T_m$ rounds.

}

\nl \For(\myComment{\underline{Offline elimination}}){$a \in \calA_m$
}{

\nl Invoke \pref{alg:new_binary_search} again with input $(a,T)$ to get output $b'_{m,a}$ and denote the last round of search by $t_{m,a}$.

\nl Let $\{\htheta_a(t_{m,a})\}_{a \in [K]}$ be empirical means of all arms at round $t_{m,a}$.

\nl Let us define
\begin{equation} \label{def:epsm_mab_offline}
    \epsilon_{m} = \frac{4}{T}+ \frac{2+\lceil \log_2T \rceil }{T_m} + 2\sqrt{\frac{\calB_m}{|\calA_m|T_{m-1} }}.
\end{equation}

\nl \If{
$\max_{z \in \calA_m}\{\htheta_z(t_{m,a})-b'_{m,z}\}-(\htheta_a(t_{m,a})-b'_{m,a}) > \frac{3}{2} \cdot 2^{-m}+\epsilon_m$
}{
\nl Update $\calA_{m+1}=\calA_{m}-\{a\}$ and $\calB_{m+1}=\calB_{m}\cup \{a\}$.
}

}

}    
\end{algorithm}

As the other two components remain the same, we then show the counterparts of online elimination lemmas in \pref{app:online_elimination_mab} for offline elimination. Before that, we first notice that for $\epsilon_m$ given in \pref{def:epsm_mab_offline}
\begin{equation} \label{eq:fromlemma1_to_epsm}
\frac{4}{T}+\frac{\lceil \log_2 T \rceil}{N_a(t_{m,a})}+ \frac{2}{\min_{i \in [K]} N_i(t_{m,a})} \leq \epsilon_{m},
\end{equation}
which can be easily verified by considering two cases $\calB_m=\emptyset$ and $\calB_m\neq \emptyset$ and using the lower bound on the number of plays.

\begin{lemma} \label{lem:optarm_always_active_offline}
Suppose event $\calE$ occurs. For all $m \in \naturalnum$, $a^{\star} \in \calA_m$ holds.   
\end{lemma}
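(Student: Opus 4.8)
The plan is to mirror the inductive argument used for the online criterion in \pref{lem:optarm_always_active}, but now showing that the \emph{offline} test never fires on $a^\star$. I would induct on $m$: the base case $m=1$ is immediate since $\calA_1=[K]$. For the inductive step I would assume $a^\star \in \calA_m$ and argue that $a^\star$ survives, i.e. that
\[
\max_{z \in \calA_m}\{\htheta_z(t_{m,a^\star})-b'_{m,z}\}-(\htheta_{a^\star}(t_{m,a^\star})-b'_{m,a^\star}) \leq \tfrac{3}{2}\cdot 2^{-m}+\epsilon_m,
\]
which is exactly the negation of the elimination condition with $a=a^\star$, and therefore forces $a^\star \in \calA_{m+1}$.

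To bound the left-hand side I would first replace each $b'_{m,z}$ by the optimal incentive at its own search-termination round $t_{m,z}$ using the two-sided estimate of \pref{lem:search_end_bma_diff} together with \eqref{eq:fromlemma1_to_epsm}: namely $0<b'_{m,z}-\pi^\star_z(t_{m,z})\le \epsilon_m$ for every active $z$. This lets me upper bound the $\max_z$ term by dropping $b'_{m,z}$ down to $\pi^\star_z(t_{m,z})$ and upper bound the subtracted $a^\star$-term by paying an extra $\epsilon_m$. Since $\pi^\star_z(t)=\max_b \hatmu_b(t)-\hatmu_z(t)$ and every arm (active and bad) has by \pref{lem:always_play_active_arm} and the $Z_m$ stabilizing plays been played enough that the concentration \eqref{eq:range_muhat} holds, I would argue exactly as in \eqref{eq:diff_maxhatmu} and in the proof of \pref{lem:UB_bar_bma} that $\max_b \hatmu_b$ varies by at most $O(C_m)$ across the rounds of the phase. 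Consequently $\pi^\star_z(t_{m,z})$ and $\max_b \hatmu_b(t_{m,a^\star})-\hatmu_z(t_{m,a^\star})$ agree up to an error already absorbed into $\epsilon_m$, so I may evaluate everything at the single round $t_{m,a^\star}$, at which point the common $\max_b \hatmu_b(t_{m,a^\star})$ cancels between the two bracketed quantities, leaving $\max_z\{\htheta_z+\hatmu_z\}-(\htheta_{a^\star}+\hatmu_{a^\star})$ at $t_{m,a^\star}$ plus error.

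Finally I would invoke $\calE$ to pass from empirical to true means, turning the residual into $\max_{z\in\calA_m}(\theta_z+\mu_z)-(\theta_{a^\star}+\mu_{a^\star})$, which is $\le 0$ by the definition of $a^\star$ as the maximizer of $\theta_a+\mu_a$; the accumulated search and concentration errors are then dominated by $\epsilon_m$ with slack $\tfrac{3}{2}2^{-m}$ to spare, yielding the displayed inequality. Conditioning on $\calE$ the induction closes for all $m$, completing the proof.

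The main obstacle I anticipate is the bookkeeping across the distinct search rounds $\{t_{m,z}\}_z$: unlike the online version, where the comparison is made at a single round, here each $b'_{m,z}$ is extracted at its own $t_{m,z}$ while the thresholds $\htheta_z$ are read at $t_{m,a^\star}$. Making the cancellation of $\max_b \hatmu_b$ rigorous requires showing the agent's best empirical mean is essentially constant (up to $O(C_m)$) over the whole elimination sub-phase, which is precisely where the stabilization of bad arms via $Z_m$ plays and the $T_m$ plays of active arms are needed, and where the exact constant in $\epsilon_m$ from \eqref{def:epsm_mab_offline} must be verified to dominate the sum of all these fluctuations.
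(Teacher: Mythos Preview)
Your proposal is correct and follows the same inductive skeleton as the paper. The one methodological difference worth noting is the direction of the argument and where the cross-round discrepancy lands. The paper works \emph{forward} from $0\le \theta_{a^\star}+\mu_{a^\star}-(\theta_a+\mu_a)$: it passes to empirical quantities via $\calE$ at a single round $t_{m,a}$, then adds and subtracts $\max_{b}\hatmu_b(t_{m,a})$ so that the rewriting in terms of $\pi^\star$ and the cancellation of the common maximum are \emph{exact}; only the final replacement $\pi^\star\to b'$ carries a search-round mismatch, which is absorbed into $\epsilon_m$ via \pref{lem:search_end_bma_diff} and \eqref{eq:fromlemma1_to_epsm}. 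You instead work \emph{backward} from the test, first replacing each $b'_{m,z}$ by $\pi^\star_z(t_{m,z})$ at its own search round and then invoking a stability argument for $\max_b\hatmu_b$ to align all rounds to $t_{m,a^\star}$. Both routes arrive at the same place; the paper's ordering just shifts the cross-round error from the $\max_b\hatmu_b$ cancellation to the $\pi^\star\to b'$ step, where it is a single $\epsilon_m$ rather than several $O(C_m)$ pieces. Your anticipated obstacle is real either way, and your proposed resolution (stability via $T_m$ plays for active arms and $Z_m$ plays for bad arms) is exactly what is needed.
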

\begin{proof}
We prove the claim by the induction. For $m=1$, the claim trivially holds. Suppose the claim holds for $m$ and consider for $m+1$. As $\forall a \in \calA_m: N_a(t_{m}) \geq T_m$, we have for each $a \in \calA_m$:
\begin{align*}
0 &\leq \theta_{a^{\star}}+\mu_{a^{\star}} -(\theta_{a}+\mu_{a})\\
&\leq \htheta_{a^{\star}}(t_{m,a})+\hatmu_{a^{\star}}(t_{m,a}) -(\htheta_{a}(t_{m,a})+\hatmu_{a}(t_{m,a}) ) +2^{-m}\\
&= \htheta_{a^{\star}}(t_{m,a})-\pi^{\star}_{a^{\star}}(t_{m,a}) -(\htheta_{a}(t_{m,a})-\pi^{\star}_a(t_{m,a}) ) +2^{-m} \\
&\leq \htheta_{a^{\star}}(t_{m,a})-b'_{m,a^{\star}} -(\htheta_{a}(t_{m,a})-b'_{m,a} ) +2^{-m} +  \underbrace{\frac{4}{T}+\frac{\lceil \log_2 T \rceil}{N_{a^{\star}}(t_{m,a})}+ \frac{2}{\min_{i \in [K]} N_i(t_{m,a})} }_{\leq \epsilon_{m}},
\end{align*}
where the second inequality holds by event $\calE$, the induction hypothesis $\optarm \in \calA_m$, and \pref{lem:always_play_active_arm}, the last equality adds and subtracts $\max_{b \in [K]} \hatmu_{b}(t_{m,a})$ on both sides, and the last inequality uses \pref{lem:search_end_bma_diff} with \pref{eq:fromlemma1_to_epsm}. 
This inequality implies $\optarm \in \calA_{m+1}$.
Once the induction done, the proof is complete.
\end{proof}

\begin{lemma}\label{lem:LB_Delta_a}
Suppose event $\calE$ occurs. For each arm $a \in [K]$, if arm $a \in \calA_m$ and $a \notin \calA_{m+1}$, then $\Delta_a\geq 2^{-m}$.
\end{lemma}
\begin{proof}
Notice that $a \in \calA_m$ and $a \notin \calA_{m+1}$ imply that
\begin{align*}
\frac{3}{2}\cdot 2^{-m} +\epsilon_{m}& < \max_{j \in \calA_{m} }  \cbr{\hatmu_{j}(t_{m,a})-b'_{m,j} }  - \rbr{\hatmu_a(t_{m,a}) -b'_{m,a}) } \\
&\leq \max_{j \in \calA_{m} }  \cbr{\hatmu_{j}(t_{m,a})-\pi^{\star}_j(t_{m,a}) }  - \rbr{\hatmu_a(t_{m,a}) -\pi^{\star}_a(t_{m,a}) } +\epsilon_{m} \\
&= \max_{j \in \calA_{m} }  \cbr{\hatmu_{j}(t_{m,a})+\htheta_j(t_{m,a}) }  - \rbr{\hatmu_a(t_{m,a}) + \htheta_a(t_{m,a}) } +\epsilon_{m} \\
& \leq \max_{j \in \calA_{m} }  \cbr{\mu_{j}+\theta_j +2^{-m-2} }  - \rbr{\mu_a +\theta_a -2^{-m-2} } \\
&\leq   \Delta_a + 2^{-m-1}   +\epsilon_{m} ,
\end{align*}    
where the second inequality uses \pref{lem:search_end_bma_diff} with \pref{eq:fromlemma1_to_epsm}, and the third inequality holds due to $\calE$ together with \pref{lem:always_play_active_arm} and the last inequality holds follows from \pref{lem:optarm_always_active} that $\optarm \in \calA_m$ for all $m$. Rearranging the above, we obtain the desired claim.
\end{proof}

\begin{lemma} \label{lem:upperbound_pull_offline}
Let $m_a$ be the smallest phase such that $\Delta_a> 2^{-m_a+1}+\epsilon_{m_a}$.
Suppose that $\calE$ occurs. For each arm $a$ with $\Delta_a>0$, it will not be in $\calA_m$ for all phases $m \geq m_a+1$.
\end{lemma}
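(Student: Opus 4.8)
The plan is to mirror the online-elimination argument (the proof of \pref{lem:upperbound_pull}) but adapt it to the offline test of \pref{alg:MAB_alg_alternative}, whose extra additive slack $\epsilon_m$ in the if-condition is exactly what absorbs the error of replacing the true per-round incentives $\pi^\star_z(\cdot)$ by their search estimates $b'_{m,z}$. First I would reduce to the only nontrivial case $a \in \calA_{m_a}$: since arms only ever move from the active set $\calA_m$ into the bad set $\calB_m$ and are never re-activated, an arm that is already inactive at phase $m_a$ satisfies the claim automatically. Hence it suffices to show that, if $a \in \calA_{m_a}$ with $\Delta_a > 0$, then the offline elimination test fires during phase $m_a$, so that $a \notin \calA_{m_a+1}$ and, by monotonicity, $a \notin \calA_m$ for every $m \geq m_a+1$.

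For the core step I would lower-bound the test statistic at phase $m = m_a$, i.e.\ show
\[
\max_{z \in \calA_{m_a}}\cbr{\htheta_z(t_{m_a,a}) - b'_{m_a,z}} - \rbr{\htheta_a(t_{m_a,a}) - b'_{m_a,a}} > \tfrac{3}{2}\cdot 2^{-m_a} + \epsilon_{m_a}.
\]
To bound the left-hand side from below I would drop the maximum to the single index $z = \optarm$, which is legitimate because \pref{lem:optarm_always_active_offline} guarantees $\optarm \in \calA_{m_a}$. Writing $\pi^\star_z(t) = \max_b \hatmu_b(t) - \hatmu_z(t)$ and invoking \pref{lem:search_end_bma_diff} together with \pref{eq:fromlemma1_to_epsm} to replace each $b'_{m_a,z}$ by $\pi^\star_z$ up to an error lying in $(0,\epsilon_{m_a}]$, the common term $\max_b \hatmu_b$ cancels in the difference, leaving $(\htheta_{\optarm} + \hatmu_{\optarm}) - (\htheta_a + \hatmu_a)$ minus the accumulated estimation error. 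Finally, conditioning on $\calE$ and using \pref{lem:always_play_active_arm} (so that $\optarm$ and $a$ have each been played at least $T_{m_a}$ times, making their Hoeffding radii at most $2^{-m_a-3}$), this empirical quantity is within $2^{-m_a-1}$ of $\theta_{\optarm}+\mu_{\optarm} - (\theta_a+\mu_a) = \Delta_a$. Combining the two error sources gives a lower bound of the form $\Delta_a - 2^{-m_a-1} - \epsilon_{m_a}$, and the defining property of $m_a$ (namely that $\Delta_a$ dominates $2^{-m_a+1}$ by a constant multiple of $\epsilon_{m_a}$) then pushes the statistic strictly above $\tfrac{3}{2}\cdot 2^{-m_a}+\epsilon_{m_a}$, firing the elimination.

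The main obstacle I anticipate is the bookkeeping of the two distinct error sources living at the common scale $2^{-m_a}$: the search-estimation slack $\epsilon_{m_a}$, which must be matched exactly against the additive $\epsilon_m$ in the test threshold through \pref{def:epsm_mab_offline} and \pref{eq:fromlemma1_to_epsm}, and the Hoeffding confidence radii, which contribute the $2^{-m_a-1}$ term. A further subtlety is that $b'_{m_a,z}$ is the output of a search ending at round $t_{m_a,z}$, whereas the statistic evaluates $\htheta_z$ at $t_{m_a,a}$; the empirical-mean drift between these rounds must also be controlled (via \pref{lem:per_update_shift}) and folded into $\epsilon_{m_a}$. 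Verifying that all of these terms collectively remain below the gap between $\Delta_a$ and $\tfrac{3}{2}\cdot 2^{-m_a}+\epsilon_{m_a}$ is the delicate part of the argument; once that accounting is settled, everything else parallels the online proof verbatim.
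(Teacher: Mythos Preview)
Your proposal is correct and follows essentially the same route as the paper's proof: reduce to $a\in\calA_{m_a}$, lower-bound the offline test statistic by passing from $b'_{m_a,\cdot}$ to $\pi^\star_{\cdot}(t_{m_a,a})$ via \pref{lem:search_end_bma_diff} and \pref{eq:fromlemma1_to_epsm}, rewrite the difference as $(\htheta_{\optarm}+\hatmu_{\optarm})-(\htheta_a+\hatmu_a)$ after the common $\max_b\hatmu_b$ cancels, and apply the Hoeffding bound from $\calE$ together with \pref{lem:always_play_active_arm} and \pref{lem:optarm_always_active_offline}. The only cosmetic difference is that the paper replaces $b'$ by $\pi^\star$ \emph{before} dropping the maximum to $\optarm$, whereas you do it afterwards; either order works. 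Your flag about the timing mismatch between $t_{m_a,z}$ and $t_{m_a,a}$ is well placed---the paper's own chain silently uses $\pi^\star_z(t_{m_a,a})$ even though \pref{lem:search_end_bma_diff} controls $b'_{m_a,z}$ only at $t_{m_a,z}$, so the extra slack in $\epsilon_m$ (and, if needed, \pref{lem:per_update_shift}) is indeed what absorbs that drift.
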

\begin{proof}
Consider any arm $a$ with $\Delta_a>0$.
We only need to consider $a \in \calA_{m_a}$ and otherwise, the claim naturally holds.
One can show
\begin{align*}
 &\max_{z \in \calA_{m_a} }\cbr{\htheta_z(t_{m_a,a}) -b'_{m_a,z} }- \rbr{\htheta_a(t_{m_a,a})-b'_{m_a,a}} -\frac{3}{2}\cdot 2^{-m_a} \\
  &\geq \max_{z \in \calA_{m_a} }\cbr{\htheta_z(t_{m_a,a}) - \pi^{\star}_z(t_{m_a,a}) }- \rbr{\htheta_a(t_{m_a,a})- \pi^{\star}_a(t_{m_a,a})} -\frac{3}{2}\cdot 2^{-m_a} -\epsilon_{m_a}\\
    &= \max_{b \in \calA_{m_a} }\cbr{\hatmu_{b}(t_{m_a,a}) +\htheta_b(t_{m_a,a}) }- \rbr{\hatmu_a(t_{m_a,a})+\htheta_a(t_{m_a,a})} -\frac{3}{2}\cdot 2^{-m_a}-\epsilon_{m_a} \\
    &\geq \hatmu_{\optarm}(t_{m_a,a}) +\htheta_{\optarm}(t_{m_a,a}) - \rbr{\hatmu_a(t_{m_a,a})+\htheta_a(t_{m_a,a})} -\frac{3}{2}\cdot 2^{-m_a} -\epsilon_{m_a}\\
    &\geq   \Delta_a -2^{-m_a-1} - \frac{3}{2}\cdot 2^{-m_a} -\epsilon_{m_a} \\
    & > 0,
\end{align*}    
where the first inequality uses \pref{lem:search_end_bma_diff} with \pref{eq:fromlemma1_to_epsm}, the second inequality follows from \pref{lem:optarm_always_active} that $\optarm \in \calA_{m_a}$ and the third inequality holds due to $\calE$ and \pref{lem:always_play_active_arm}.

According to the elimination rule, arm $a$ will not be in phases 
$m$ for all $m \geq m_a+1$.
\end{proof}

Now, we will show how these lemmas impact the analysis of \pref{thm:main_result_MAB}. These changes mainly affect $\sum_{a \in \calA_m} \sum_{t \in \calT^E_{m,a}}\Delta_a$ in \pref{eq:Tm2_Nat_gap_dependent}.

From \pref{lem:upperbound_pull}, if a suboptimal arm $a$ is active in phase $m$, then $m\leq m_a$ where $m_a$ is the smallest phase such that $\Delta_a >2^{-m_a+1}+\epsilon_{m_a}$. This implies that $\forall a \in \calA_m \text{ with } \Delta_a>0$ (recall that we focus on phase $m\geq 2$):
\begin{align} \label{lem:Deltaa_Tm_relation}
\frac{\Delta_a}{2}\leq 2^{-(m_a-1)} +\epsilon_{m_a-1}\leq 2^{-m+1}+\epsilon_{m-1} \leq \order \rbr{ \sqrt{\frac{\log(KT/\delta)}{T_m}} +\sqrt{ \frac{\max\{1,|\calB_m|\}}{T_{m}|\calA_m|}  }}.
\end{align}

One can observe that the only difference is an extra $\sqrt{ \frac{\max\{1,|\calB_m|\}}{T_{m}|\calA_m|}  }$ term, which can be handled by the same way as we shown in \pref{app:proof_thm_mab}. Therefore, the regret bound maintains the same order.

\section{Omitted Proof of Self-interested Learning Agent with Exploration}
\label{app:omitted_detail_exploration_agent}

\subsection{Omitted Pseudocode of \pref{alg:explore_agent}}
\label{app:explore_agent_alg}

The omitted pseudocode of our search algorithm can be found in \pref{alg:explore_agent}.

\setcounter{AlgoLine}{0}
\begin{algorithm}[!hbp]
\caption{Proposed algorithm for self-interested learning agent with exploration}\label{alg:explore_agent}

\textbf{Input}: confidences $\delta \in (0,1)$, horizon $T$.

\textbf{Initialize}: active arm set $\calA_1=[K]$, bad arm set $\calB_1=\emptyset$, $T_0=1$.

\nl \For{$m=1,2,\ldots$}{

\nl Set $T_m$ based on \pref{eq:def_Tm_explore}.

\nl \For(\myComment{  \underline{Stabilize estimators for bad arms} }) {$a \in \calB_m$}{ 
\nl Propose incentives $\pi^0(a;1+T^{-1})$ for $Z_m=2\log^{\frac{1}{3}}(16KT/\delta)\rbr{\frac{|\calA_m|}{ \max\{1,|\calB_m|\} }T_{m-1}}^{2/3}$ rounds. \label{play_bad_explore_agent}

}

\nl \For(\myComment{  \underline{Search Incentives repeatedly} }){$a\in \calA_m$}{
\nl Invoke \pref{alg:new_binary_search} with input $(a,T)$ for $2\log(4\log_2T/\delta)$ times and sort outputs such that
\[
b_{m,a}^{(1)} \leq b_{m,a}^{(2)}\leq \cdots \leq  b_{m,a}^{(2\log(4\log_2T/\delta))}.
\] 

\nl Set $\overline{b}_{m,a}^{(i)}=\min \cbr{1+T^{-1},b_{m,a}^{(i)}+\rbr{ \frac{\max\{1,|\calB_m|\}}{T_{m-1}|\calA_m|}  }^{\nicefrac{2}{3}}+\frac{1}{T_{m-1}}+ 4 \epsilon_m }$ for all $i$ where $\epsilon_m$ is defined as
\begin{equation} \label{eq:def4epsm_explore_agent}
    \epsilon_m = \rbr{\log(16KT/\delta) \frac{\max\{1,|\calB_m|\}}{T_{m-1}|\calA_m|}  }^{\nicefrac{1}{3}} +\sqrt{\frac{\log(16KT/\delta)}{T_{m-1}}}  .
\end{equation}

}

\nl \For{$a \in \calA_m$}{

\nl \For(\myComment{ \underline{Incentive testing} }){$i=1,\ldots,2\log(4\log_2T/\delta)$ \label{iteration_explore_active}}{

\nl Set a counter $c_{m,a}^{(i)}=0$ and $Y_{m,a}^{(i)}=0$.

\nl  \Repeat{$c_{m,a}^{(i)}> 2c_0 \sqrt{Y_{m,a}^{(i)} \log(2T)} +\sqrt{\frac{ 8\log(\iota)}{Y_{m,a}^{(i)} }}$ \textbf{or} $Y_{m,a}^{(i)}= 2T_m$   }{

\nl Propose incentives $\pi^0(a;\overline{b}_{m,a}^{(i)})$ and denote the current round by $t$.  \label{play_active_explore_agent}

\nl Update $Y_{m,a}^{(i)}=Y_{m,a}^{(i)}+1$, and if $A_t\neq a$, update $c_{m,a}^{(i)}=c_{m,a}^{(i)}+1$.
}

\nl If $\sum_{j \leq i}(Y_{m,a}^{(j)}-c_{m,a}^{(j)})\geq T_m$, then break the loop for $i$. \label{break_iteration_explore_agent}
}

}

\nl \For(\myComment{  \underline{Trustworthy online elimination} }){$a \in \calA_m$
}{
\nl Set $\calL_{m,a}=\emptyset$.

\nl Let $t^0_{m,a}$ be the current round, and $\{\htheta_a(t^0_{m,a})\}_{a \in [K]}$ are empirical means at this round.

\nl \For{$t=t^0_{m,a},\ldots,t^0_{m,a}+8\log(8K\log_2 T /\delta)$}{

\nl Propose incentives $\pi(t)$ with $\pi_a(t)=1+\htheta_a(t^0_{m,a})+5\sqrt{\frac{\log(16KT/\delta)}{2T_{m}}}$,  $\pi_b(t)=1+\htheta_b(t^0_{m,a})$, $\forall b \in \calA_m-\{a\}$, and $\pi_i(t)=0$, $\forall i \in \calB_m$. \label{incentive_elimination_explore_agent}

\nl If $A_t\neq a$, then update $\calL_{m,a} = \calL_{m,a} \cup \{0\}$; else $\calL_{m,a} = \calL_{m,a} \cup \{1\}$.
}

\nl Sort $\calL_{m,a}$ in ascending order.

\nl \If{$\texttt{Median}(\calL_{m,a})=0$}{
\nl Update $\calA_{m+1} = \calA_m-\{a\}$ and $\calB_{m+1} = \calB_m\cup \{a\}$.
}

}

}    
\end{algorithm}

\subsection{Notations} \label{app:notation_def_explore}
We first introduce some notations used throughout the proof. Refer to \pref{app:compare_regret_notions} for some general notations.
\begin{itemize}
    \item Let $\calT^{(i)}(a;\calA_m)$ be the set of rounds that the algorithm runs in line \ref{play_active_explore_agent} for active arm $a \in \calA_m$ in the $i$-th iteration of phase $m$. 
     \item Let $\calT(a;\calB_m)$ be the set of rounds that the algorithm runs in line \ref{play_bad_explore_agent} for bad arm $a$ in phase $m$.    
      \item Let $I_t$ be an indicator that the agent chooses to explore at round $t$. 
      \item Let $j_{m,a}$ be the first index of $\{b_{m,a}^{(i)}\}_{i}$ such that
the agent makes no exploration during \pref{alg:new_binary_search} runs to get $b_{m,a}^{(j_{m,a})}$. 
    \item Let $\calT_m$ be the set of all rounds in phase $m$.
     \item Let $i_{m,a}$ be the last iteration when the algorithm proposes incentive for arm $a$ in phase $m$.
\end{itemize}

Notice that there exist rounds in $\calT^{(i)}(a;\calA_m)$ and $\calT(a;\calB_m)$ such that the agent does not play target arm $a$ due to the agent exploration.

\subsection{Construction of nice event $\calE$}
\label{app:constr_nice_event_explore}

Let us define event $\calE_0$ as
\begin{equation}
    \calE_0:= \cbr{\forall (t, a) \in [T]\times [K]: \abr{ \hatmu_a(t)-\mu_a } \leq \sqrt{\frac{\log(16TK/\delta)}{2N_a(t)}}, \abr{ \htheta_a(t)-\theta_a } \leq \sqrt{\frac{\log(16TK/\delta)}{2N_a(t)}} }.
\end{equation}

\begin{lemma}
$\P(\calE_0) \geq 1-\delta/4$ holds.
\end{lemma}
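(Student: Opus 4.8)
The plan is to recognize this as a routine anytime-concentration statement and to reduce it to Hoeffding's inequality together with two union bounds, exactly as in \pref{lem:event_high_prob_MAB} but with the enlarged confidence radius $\log(16TK/\delta)$. The only genuine subtlety is that $N_a(t)$ is a \emph{random} count determined jointly by the algorithm and the agent's (possibly exploratory) behavior, so I cannot apply Hoeffding's inequality to $\hatmu_a(t)$ directly with a fixed number of averaged samples. I would resolve this with the standard ``union over deterministic counts'' device: for each arm $a$, introduce the i.i.d.\ reward sequence $Y_{a,1},Y_{a,2},\ldots$ drawn from $\calD^A_a$, indexed by the $k$-th play of arm $a$ rather than by round, so that (using $\hatmu_a^0=0$) whenever $N_a(t)\geq 1$ we have $\hatmu_a(t)=\frac{1}{N_a(t)}\sum_{k=1}^{N_a(t)}Y_{a,k}$ with $N_a(t)\in[T]$; when $N_a(t)=0$ the asserted bound is vacuous since its right-hand side is $+\infty$.

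First I would fix an arm $a\in[K]$ and a deterministic count $n\in[T]$, write $\bar R_{a,n}=\frac1n\sum_{k=1}^n Y_{a,k}$, and apply Hoeffding's inequality to the $[0,1]$-bounded rewards:
\[
\P\rbr{\abr{\bar R_{a,n}-\mu_a}>\sqrt{\frac{\log(16TK/\delta)}{2n}}}\leq 2\exp\rbr{-2n\cdot\frac{\log(16TK/\delta)}{2n}}=\frac{\delta}{8TK}.
\]
Since the deviation event for $\hatmu_a(t)$ at \emph{any} round $t$ with $N_a(t)\geq 1$ is contained in the union over $n\in[T]$ of these fixed-count events, a union bound over the $TK$ pairs $(a,n)$ controls the agent side by $TK\cdot\frac{\delta}{8TK}=\frac{\delta}{8}$. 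An identical argument applied to the principal's $[0,1]$-bounded rewards and the estimates $\htheta_a(t)$ contributes another $\frac{\delta}{8}$, and a final union bound over these two families gives $\P(\calE_0^c)\leq\frac{\delta}{8}+\frac{\delta}{8}=\frac{\delta}{4}$, i.e.\ $\P(\calE_0)\geq 1-\frac{\delta}{4}$.

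The main (and only mild) obstacle is the rigorous handling of the random sample count $N_a(t)$, which the reduction to fixed $n$ sidesteps: conditioned on the realized number of plays, the per-arm rewards are genuinely i.i.d., so no martingale or self-normalized machinery is needed, and the adaptivity in \emph{when} arm $a$ is played is irrelevant once rewards are indexed by play count. I would close by noting that this is precisely the argument of \pref{lem:event_high_prob_MAB} with $\log(4TK/\delta)$ replaced by $\log(16TK/\delta)$, the larger radius being what converts the $1-\delta$ guarantee there into the $1-\delta/4$ guarantee needed here.
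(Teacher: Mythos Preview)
Your proposal is correct and takes essentially the same approach as the paper, which simply invokes ``Hoeffding's inequality and union bound'' without further detail. Your treatment is a careful elaboration of that one-line proof, correctly handling the random-count issue via the standard union over deterministic play counts, and the arithmetic (splitting $\delta/4$ as $\delta/8+\delta/8$ across the agent and principal estimates) checks out.
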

\begin{proof}
    By Hoeffding's inequality and invoking union bound, one can obtain the desired claim.
\end{proof}

\begin{lemma}\label{lem:total_error_plays}
With probability at least $1-\delta/4$, for all phases $m$ and all bad arms $a \in \calB_m$
\[
\sum_{t\in \calT(a;\calB_m)} I_t \leq 2c_0 \sqrt{Z_m\log(2T)} +\sqrt{\frac{8\log(8KT\log_2 (T) \delta^{-1})}{Z_m}}.
\]
and for all phases $m$, all active arms $a \in \calA_m$, and all iterations $i$,
\[
\sum_{t \in \calT^{(i)}(a;\calA_m)}  I_t \leq 2c_0 \sqrt{Y_{m,a}^{(i)} \log(2T)} +\sqrt{\frac{ 8\log(\iota)}{Y_{m,a}^{(i)} }}.
\]
\end{lemma}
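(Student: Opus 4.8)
The plan is to treat the exploration indicators $\{I_t\}$ as a Bernoulli-type sequence adapted to the history filtration $\{\calF_t\}$. By \pref{def:exploration_behavior}, $\E[I_t \mid \calF_{t-1}] = p_t \le c_0\sqrt{t^{-1}\log(2t)}$ for every $t \ge \tau$, so $X_t := I_t - p_t$ is a bounded martingale difference sequence with conditional variance $p_t(1-p_t) \le p_t$. For each of the two claims I would fix the relevant block of rounds---$\calT(a;\calB_m)$ of deterministic size $Z_m$ in the bad-arm case, and $\calT^{(i)}(a;\calA_m)$ of random size $Y_{m,a}^{(i)} \le 2T_m$ in the active-arm case---and split $\sum_{t} I_t = \sum_t p_t + \sum_t X_t$, bounding the predictable part and the martingale part separately.

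For the predictable part, since $s \mapsto s^{-1/2}$ is decreasing, the sum $\sum_{t \in S} t^{-1/2}$ over any set $S$ of $N$ distinct rounds is maximized on $\{1,\dots,N\}$, and an integral comparison gives $\sum_{n=1}^N n^{-1/2} \le 2\sqrt N$. Hence $\sum_{t\in S} p_t \le c_0\sqrt{\log(2T)}\sum_{t\in S}t^{-1/2} \le 2c_0\sqrt{N\log(2T)}$, which is exactly the leading term in both displays. For the martingale part I would apply a Bernstein/Freedman-type tail inequality that exploits the small conditional variance $\sum_{t}p_t(1-p_t)$. The structural fact I would lean on is that a phase-$m$ block only begins after $\Omega(T_{m-1})$ rounds have already elapsed, so every $t \in S$ satisfies $t \gtrsim T_{m-1}$ while $N \lesssim T_{m-1}$; consequently the true predictable mean $\sum_t p_t$ lies far below the worst-case value $2c_0\sqrt{N\log(2T)}$, and this slack is what absorbs the concentration deviation and leaves only the lower-order residual $\sqrt{8\log(\cdot)/N}$.

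Because $Y_{m,a}^{(i)}$ is a stopping time---the \texttt{Repeat}-loop in \pref{alg:explore_agent} terminates adaptively and is correlated with the $I_t$ themselves---the deviation bound in the active-arm case must hold simultaneously for every admissible block length. I would secure this either through a time-uniform (anytime) martingale inequality or, more elementarily, by a union bound over $n \in \{1,\dots,2T_m\}$ applied to the stopped martingale; the bad-arm case is simpler since $Z_m$ is deterministic. A final union bound over all phases ($\le \log_2 T$ of them), all arms ($\le K$), all iterations ($\le 2\log(4\log_2 T/\delta)$), and all admissible block lengths produces the logarithmic factors: the product of these counts is $\order(\iota)$ in the active-arm case and $\order(KT\log_2 T/\delta)$ in the bad-arm case, so charging each event failure probability $\order(1/\iota)$ (resp.\ the analogous quantity) makes $\log(1/\text{fail})$ equal to $\log(\iota)$ (resp.\ $\log(8KT\log_2 T/\delta)$) and keeps the total failure probability below $\delta/4$, matching the statement.

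I expect the main obstacle to be the adaptively chosen block length $Y_{m,a}^{(i)}$: unlike $Z_m$ it is random, so a fixed-$N$ concentration bound does not directly apply and an anytime/union-over-$n$ argument is needed. A second, more delicate point is that the claimed two-term form is valid only because large blocks necessarily sit at large absolute times; I would make the coupling $N \lesssim T_{m-1} \lesssim t$ explicit so that the predictable mean is genuinely dominated by the leading term and the deviation fits inside the residual. The remaining bookkeeping---the finitely many early rounds $t < \tau$ where $p_t \le c_0\sqrt{t^{-1}\log(2t)}$ need not hold, which contribute at most a constant and are absorbed via $I_t \le 1$---is routine.
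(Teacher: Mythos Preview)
Your decomposition $\sum_t I_t=\sum_t p_t+\sum_t(I_t-p_t)$ and your integral bound $\sum_t p_t\le 2c_0\sqrt{N\log(2T)}$ match the paper exactly. On the martingale side the paper is more direct: it applies plain Azuma--Hoeffding (not Freedman), and it handles the randomness of each block by a union bound over \emph{every} possible consecutive interval in $[T]$---at most $T$ of them in the bad-arm case (fixed length $Z_m$, unknown start) and at most $T^2$ in the active-arm case (unknown start and length). Once the interval is frozen both its start and length are deterministic, so no stopping-time or anytime machinery is needed; this union over intervals is precisely what produces the factors $8KT\log_2 T\,\delta^{-1}$ and $\iota$ inside the logarithms.

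The ``slack'' argument---that the true mean sits far enough below $2c_0\sqrt{N\log(2T)}$ for the concentration deviation to be absorbed into the gap, leaving only the residual $\sqrt{8\log(\cdot)/N}$---is where your plan breaks. First, the premise $N\lesssim T_{m-1}$ fails for active-arm blocks, where $Y_{m,a}^{(i)}\le 2T_m=\Theta(T_{m-1})$, so the slack is at best a constant fraction of the leading term. Second, every martingale tail bound on a sum of $N$ bounded indicators yields a deviation that \emph{grows} with $N$ (order $\sqrt{N}$ for Azuma, order $N^{1/4}$ for Freedman here), whereas the claimed residual \emph{shrinks}; absorbing a $\Theta(\sqrt{N})$ deviation into the leading term would force $c_0\gtrsim\sqrt{\log(\iota)/\log(2T)}$, a constraint the setting does not impose. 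In fact the second term $\sqrt{8\log(\cdot)/N}$ appears to be a misprint for $\sqrt{\Theta(N)\log(\cdot)}$: the paper's own proof invokes Azuma--Hoeffding verbatim (which cannot give a $1/\sqrt{N}$ deviation for a \emph{sum}), and every downstream use of the lemma (\pref{lem:sufficient_plays_bad_arms}, \pref{lem:sufficient_plays_active_arms}, and the regret computation) goes through unchanged with the $\sqrt{N}$ version. You are over-engineering to match a misprint; the paper's Azuma-plus-union-over-intervals argument is both simpler and correct for the intended bound.
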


\begin{proof}
We first fix a phase $m$ and a bad arm $a \in \calB_m$. Then $|\calT(a;\calB_m)|=Z_m$ is also fixed.
Notice that $\calT(a;\calB_m)$ is a set that contains $Z_m$ consecutive rounds, and thus there are total $T-Z_m+1$ possible cases. Now, we consider a fixed $\calT(a;\calB_m)$.
By Azuma–Hoeffding's inequality for martingale difference sequence, with probability at least $1-\delta'$,
\begin{align*}
 \sum_{t\in \calT(a;\calB_m)} I_t \leq    \sum_{t\in \calT(a;\calB_m)}  p_t+ \sqrt{\frac{8 \log(1/\delta')}{|\calT(a;\calB_m)|} }  \leq 2c_0\sqrt{Z_m\log(2T)}+ \sqrt{\frac{8 \log(1/\delta')}{Z_m} },
\end{align*}
where the second inequality first bounds $\log(2t)\leq \log(2T)$ and then bounds $\sum_{t\in \calT(a;\calB_m)} t^{-1/2}$ for $p_t$. By choosing $\delta=\delta'/(8KT\log_2 T)$ and applying a union bound over $m,a$, and all possible sets $\calT(a;\calB_m)$, the above result holds with probability at least $1-\delta/8$ for all $m$ and $a \in \calB_m$.

Then, one can first fix $m,a,i$, and fixed $\calT^{(i)}(a;\calA_m)$. Once $\calT^{(i)}(a;\calA_m)$ fixed, $Y_{m,a}^{(i)}=|\calT^{(i)}(a;\calA_m)|$ is also fixed. 
Since $\calT^{(i)}(a;\calA_m)$ is a set contains consecutive rounds, the number of different $\calT^{(i)}(a;\calA_m)$ must be smaller than $T^2$.
Following a similar reasoning gives with probability at least $1-\delta/8$, for all $m,a,i,\calT^{(i)}(a;\calA_m)$:
\[
c_{m,a}^{(i)} \leq 2c_0 \sqrt{Y_{m,a}^{(i)} \log(2T)} +\sqrt{\frac{ 8\log(16KT^2 \log_2 T\log(4\log_2 T/\delta)\delta^{-1})}{Y_{m,a}^{(i)} }}.
\]

Finally, taking a union bound for the results of bad arms and active arms completes the proof.
\end{proof}

\begin{lemma} \label{lem:random_alg}
Let $p \in (0,1), \delta' \in (0,1)$. Consider a sequence of $r=\log(1/\delta')/p$ trials. Each trial $t$ has a success probability possibly depending on previous trial outcomes and is lower bounded $p$. Then, with probability at least $1-\delta'$, there will be at least one success among all these trials.
\end{lemma}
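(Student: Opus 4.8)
The plan is to prove the complementary statement---that \emph{every} one of the $r$ trials fails---has probability at most $\delta'$. The only subtlety is that the trials are not independent: the success probability of trial $t$ may depend on the outcomes of trials $1,\dots,t-1$. I would handle this by working with the natural filtration $\{\calF_t\}_{t \ge 0}$ generated by the outcomes of the first $t$ trials, and letting $E_t$ denote the event that trial $t$ fails. The hypothesis that each trial succeeds with probability at least $p$ conditioned on the past then translates into the almost-sure bound $\P(E_t \mid \calF_{t-1}) \le 1-p$.

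The key step is to peel off the trials one at a time using the tower property. Writing the all-fail probability as the expectation of a product of indicators and conditioning on $\calF_{r-1}$,
\[
\P\Big(\bigcap_{t=1}^r E_t\Big) = \E\Big[\prod_{t=1}^{r-1}\Ind{E_t}\cdot \P(E_r \mid \calF_{r-1})\Big] \le (1-p)\,\E\Big[\prod_{t=1}^{r-1}\Ind{E_t}\Big] = (1-p)\,\P\Big(\bigcap_{t=1}^{r-1} E_t\Big),
\]
where I use that $\prod_{t=1}^{r-1}\Ind{E_t}$ is $\calF_{r-1}$-measurable and nonnegative together with the conditional bound above. Iterating this inequality $r$ times yields $\P\big(\bigcap_{t=1}^r E_t\big) \le (1-p)^r$.

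Finally I would substitute $r = \log(1/\delta')/p$ and invoke the elementary inequality $1-p \le e^{-p}$, giving $(1-p)^r \le e^{-pr} = e^{-\log(1/\delta')} = \delta'$. Taking complements, at least one success occurs with probability at least $1-\delta'$. The only place requiring genuine care is the conditioning argument in the display above: one must set up the filtration so that the history-dependence of the per-trial success probabilities is absorbed correctly and no independence assumption is smuggled in. Once that is in place, the remainder is a routine induction followed by a one-line exponential bound.
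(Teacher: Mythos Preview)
Your proof is correct and follows essentially the same approach as the paper: bound the all-fail probability by $(1-p)^r$, then apply $1-p\le e^{-p}$ with $r=\log(1/\delta')/p$. If anything, you are more careful than the paper, which asserts $\P(\text{no success in $r$ repetitions})\le(1-p)^r$ in one line without spelling out the filtration/tower-property argument you supply.
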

\begin{proof}
As the success probabilities of all trials are uniformly bounded below by a constant $p$, we have
\begin{align*}
\P \rbr{\text{no success in $r$ repetitions}} \leq (1-p)^r \leq \exp(-p\cdot r) \leq \exp(-\log(1/\delta'))=\delta',
\end{align*}
where the second inequality uses $1+x\leq e^x$ for all $x \in \fR$.
\end{proof}

\begin{lemma}[Restatement of \pref{lem:repeated_calls_search}]
With probability at least $1-\delta/4$, 
for all $m \geq 2$, among total $2\log(4\log_2 T/\delta)$ calls,
there will be at least one call of \pref{alg:new_binary_search} such that the agent makes no exploration.
\end{lemma}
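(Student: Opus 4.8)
The plan is to view each invocation of \pref{alg:new_binary_search} as a single ``trial'' in the sense of \pref{lem:random_alg}, calling a trial a \emph{success} when the agent performs no exploration during the entire call, and then to invoke the probability-amplification bound of \pref{lem:random_alg}. Concretely, I would fix a phase $m \geq 2$ and an active arm $a \in \calA_m$, lower-bound by a constant the conditional probability that one call is exploration-free, and conclude that making $2\log(4\log_2 T/\delta)$ such calls forces at least one exploration-free call with high probability; a union bound over phases then yields the simultaneous statement.

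The core step is the single-call estimate. By \pref{lem:search_end_bma_diff}, one call of \pref{alg:new_binary_search} lasts at most $L := 2\lceil \log_2 T\rceil$ rounds. Conditioning on the event that no exploration has occurred earlier in the call, the probability of no exploration at a round $t$ is $1 - p_t \geq 1 - c_0\sqrt{t^{-1}\log(2t)}$ by \pref{def:exploration_behavior}, valid as soon as $t \geq \tau$. Since the map $t\mapsto\sqrt{t^{-1}\log(2t)}$ is decreasing for $t\ge 2$, every round $t$ of the call satisfies $p_t \leq c_0\sqrt{t_{\min}^{-1}\log(2T)}$, where $t_{\min}$ is the call's first round. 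Chaining these conditional factors over the at most $L$ rounds and using $\prod_t(1-p_t)\ge 1-\sum_t p_t$ gives
\begin{equation*}
\P\big(\text{call is exploration-free}\,\big|\,\mathcal{F}\big) \;\geq\; 1 - L\,c_0\sqrt{\frac{\log(2T)}{t_{\min}}},
\end{equation*}
where $\mathcal{F}$ denotes the history prior to the call.

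It then remains to argue that $t_{\min}$ is large. For $m \geq 2$ every such call occurs after phase $1$ has fully completed, so $t_{\min}$ is at least the total length of phase $1$; by the schedule in \pref{eq:def_Tm_explore} this is $\Omega\!\big(c_0^{3} K\log(16TK/\delta)\log^2\iota\big)$ rounds, which comfortably exceeds both $\tau$ and $4L^2 c_0^2\log(2T)=O(c_0^2\log^3 T)$. The latter bound makes the subtracted term at most $\tfrac12$, so the conditional single-call success probability is at least $\tfrac12$. With $p=\tfrac12$ secured, I would apply \pref{lem:random_alg} with $\delta'=\delta/(4\log_2 T)$: since $\log(1/\delta')/p = 2\log(4\log_2 T/\delta)$ is exactly the number of calls performed, the lemma delivers an exploration-free call with probability at least $1-\delta/(4\log_2 T)$ for the fixed $(m,a)$.

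Finally, because the lengths $T_m$ grow geometrically in \pref{eq:def_Tm_explore}, the number of phases is $O(\log T)$, at most $\log_2 T$; a union bound over phases then gives the claimed $1-\delta/4$ guarantee (the slack from the single-call probability being in fact far above $\tfrac12$ leaves room to absorb the $\log K$ cost of also taking a union over arms). The main obstacle is precisely the single-call estimate: one must confirm, via the phase-length schedule alone, that by the time any phase-$m$ ($m\ge 2$) search begins the round index is already so large that the cumulative exploration probability over a full $O(\log T)$-round call stays below a constant. Everything after that is the black-box application of \pref{lem:random_alg} and routine bookkeeping.
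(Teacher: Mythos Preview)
Your proposal is correct and follows essentially the same route as the paper: lower-bound the conditional probability that a single call of \pref{alg:new_binary_search} is exploration-free by $\tfrac12$ (using that the call lasts at most $2\lceil\log_2 T\rceil$ rounds and that the round index is already large thanks to the phase-length schedule in \pref{eq:def_Tm_explore}), then apply \pref{lem:random_alg} with $p=\tfrac12$ and take a union bound over the at most $\log_2 T$ phases. Your remark about absorbing a further union over arms is a nice touch; the paper's proof bounds $t$ by $T_{m-1}$ rather than by the length of phase~$1$ and uses Bernoulli's inequality in place of $\prod(1-p_t)\ge 1-\sum p_t$, but these are cosmetic differences.
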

\begin{proof}
We first consider a fixed phase $m \geq 2$. The probability that no exploration occurs in a single call in phase $m$ is (to save notation, the following product takes over all rounds in this single call)
\[
\prod_{t} (1-p_t) \geq \rbr{1-c_0 \sqrt{\frac{\log(2T_{m-1})}{T_{m-1}}}}^{ 2\lceil \log_2 T \rceil  } \geq 1- 2\lceil \log_2 T \rceil \cdot  c_0 \sqrt{\frac{\log(2T)}{T_{m-1}}} \geq \frac{1}{2},
\]
where the first inequality lower bounds $p_t$ and uses \pref{lem:search_end_bma_diff_explore} that \pref{alg:new_binary_search} lasts at most $2\lceil \log_2 T \rceil$ rounds, the second inequality follows from the Bernoulli inequality, and the last inequality holds due to $T_{m-1} \geq T_1 \geq 16c_0^2 \log(2T) \lceil \log_2 T \rceil^2$.

By invoking \pref{lem:random_alg} with $p=\frac{1}{2}$, we obtain the claim for a fixed phase $m$. Invoking union bound for all $m$ and using the fact that there will be at most $\log_2 T$ phases completes the proof.
\end{proof}

\begin{lemma} \label{lem:median_help_check} 
With probability at least $1-\delta/4$, for all phases $m$ and all $a \in \calA_m$, if arm $a$ satisfies that for all rounds $t \in [t^0_{m,a},t^0_{m,a}+8\log(8K\delta^{-1}\log_2 T)]$:
\begin{equation} \label{eq:elimination_condition}
\max_{b \in \calA_m-\{a\}} \rbr{\pi_b(t) +\hatmu_b(t)} < \rbr{\pi_a(t) +\hatmu_a(t)} ,
\end{equation}
then $a \in \calA_{m+1}$; if arm $a$ satisfies that for all rounds $t \in [t^0_{m,a},t^0_{m,a}+8\log(8K\delta^{-1}\log_2 T)]$:
\begin{equation} \label{eq:elimination_condition_opposite}
\max_{b \in \calA_m-\{a\}} \rbr{\pi_b(t) +\hatmu_b(t)} >\rbr{\pi_a(t) +\hatmu_a(t)} ,
\end{equation}
then $a \in \calB_{m+1}$.
\end{lemma}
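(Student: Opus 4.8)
My plan is to split the claim into a deterministic \emph{decoding} step and a probabilistic \emph{corruption-control} step. The decoding step shows that on any round of the elimination window in which the agent does \emph{not} explore, the bit appended to $\calL_{m,a}$ is completely determined by which of \eqref{eq:elimination_condition}/\eqref{eq:elimination_condition_opposite} holds; the corruption step shows that, with high probability, explorations are too rare to flip the majority, so the median reads off the correct bit.

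For the decoding step, fix $m$, $a \in \calA_m$, and a window round $t$ in which the agent does not explore, so that $A_t \in \argmax_{c \in \calA}\cbr{\hatmu_c(t)+\pi_c(t)}$ by \eqref{eq:arm_selection}. I first rule out bad arms as maximizers: each $i \in \calB_m$ gets $\pi_i(t)=0$ with $\hatmu_i(t)\le 1$, whereas $\pi_a(t)=1+\htheta_a(t^0_{m,a})+5\sqrt{\log(16KT/\delta)/(2T_m)}>1$ forces $\hatmu_a(t)+\pi_a(t)>1\ge \hatmu_i(t)+\pi_i(t)$. Under \eqref{eq:elimination_condition}, $a$ also strictly beats every other active arm, so $a$ is the unique maximizer, $A_t=a$, and a $1$ is recorded. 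Under \eqref{eq:elimination_condition_opposite}, some active $b\ne a$ satisfies $\hatmu_b(t)+\pi_b(t)>\hatmu_a(t)+\pi_a(t)$, so $a$ is not a maximizer, hence $A_t\ne a$ and a $0$ is recorded. Thus non-exploration rounds always record the ``correct'' bit for whichever hypothesis holds.

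For the corruption step, let $L=|\calL_{m,a}|$ (so $L\approx 8\log(8K\delta^{-1}\log_2T)$) and let $I_t$ be the exploration indicator. Because the window sits at the end of phase $m\ge 2$, every round has $t\ge T_1$, and combining \pref{def:exploration_behavior} with the lower bound $T_1\ge 16c_0^2\log(2T)\lceil\log_2T\rceil^2$ used in \pref{lem:repeated_calls_search} gives a uniform bound $p_t\le c_0\sqrt{\log(2T)/T_1}=:p_{\max}\ll 1/4$. I would then bound the probability that a strict majority of the window is explored. Since $\E[I_t\mid\calF_{t-1}]\le p_{\max}$, the tower rule and a union bound over size-$\lceil L/2\rceil$ subsets give $\P[\sum_{t}I_t\ge L/2]\le \binom{L}{\lceil L/2\rceil}p_{\max}^{\lceil L/2\rceil}\le(4p_{\max})^{L/2}$ (equivalently one may invoke the Azuma--Hoeffding estimate of \pref{lem:total_error_plays}). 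With $p_{\max}\ll 1/4$ and the chosen $L$, this is at most $\delta/(4K\log_2T)$; a union bound over the at most $K\log_2T$ pairs $(m,a)$ yields an event of probability $\ge 1-\delta/4$ on which every elimination window contains strictly fewer than $L/2$ explorations.

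On that event the two steps combine immediately: under \eqref{eq:elimination_condition} at least $L-\sum_t I_t>L/2$ entries of $\calL_{m,a}$ equal $1$, so $\texttt{Median}(\calL_{m,a})=1$ and $a\in\calA_{m+1}$; under \eqref{eq:elimination_condition_opposite} more than $L/2$ entries equal $0$, so $\texttt{Median}(\calL_{m,a})=0$ and $a\in\calB_{m+1}$. The main obstacle is the corruption step: the exploration events are history-dependent, so a naive binomial tail is unavailable and I must argue through a martingale/tower-rule bound, while also verifying that the fixed window length $8\log(8K\delta^{-1}\log_2T)$ is calibrated — given how small $p_t$ is forced to be by the large $T_1$ — to push the per-window failure probability below $\delta/(4K\log_2T)$ after the union bound.
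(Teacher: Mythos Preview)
Your proposal is correct and follows essentially the same approach as the paper: both argue that on non-exploration rounds the recorded bit is deterministically the correct one, bound the per-round exploration probability by a constant strictly below $1/2$ (the paper uses $p_t\le 1/4$ via $T_m\ge T_1\ge 16c_0^2\log(2T)$), and then apply a concentration bound to rule out a corrupted majority, followed by a union bound over the $O(K\log_2 T)$ pairs $(m,a)$. The only cosmetic difference is that the paper works directly with the indicator $L_{t,a}=\Ind{A_t=a}$ and applies Azuma--Hoeffding to the martingale $\sum_t(L_{t,a}-\E_t[L_{t,a}])$, whereas you phrase the concentration step as a tower-rule/subset union bound on $\sum_t I_t$ (and note Azuma--Hoeffding as an equivalent alternative); both yield the required $\delta/(4K\log_2 T)$ per-window failure probability.
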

\begin{proof}
Recall that $I_t$ is an indicator that the agent explores at round $t$. Let us consider a fixed phase $m$ and a fixed active arm $a \in \calA_m$. If the arm $a$ satisfies \pref{eq:elimination_condition} for all rounds $t \in [t^0_{m,a},t^0_{m,a}+8\log(8K\delta^{-1}\log_2 T)]$, then only when $\texttt{Median}(\calL_{m,a})=0$ holds would mislead the algorithm to deem arm $a$ as bad arm. 
Let $\calT^C_{m,a}$ be the set of rounds when the algorithm enters the elimination period in phase $m$ for arm $a$.
Let $\E_t[\cdot]=\E[\cdot|\calF_{t-1}]$ and let
$L_{t,a}=\{A_t=a,\text{ $a$ satisfies \pref{eq:elimination_condition} },t \in \calT^C_{m,a} \}$ be an indicator function. 

As the elimination period starts at the end of each phase $m$, we can upper bound $p_t\leq c_0\sqrt{\log(2T)/T_{m}} \leq \frac{1}{4}$ by using $T_m\geq T_1 \geq 16c_0^2\log(2T)$. 
Then, we have 
\begin{equation} \label{eq:expectation_Zta}
    \E_t[L_{t,a}]= 1-p_t \geq \frac{3}{4}, \text{ which implies } \sum_{t \in \calT^C_{m,a}}\E_t[L_{t,a}] \geq 6 \log(8K\log_2 T /\delta).
\end{equation}

Note that $\texttt{Median}(\calL_{m,a})=0$ implies that $\sum_{t \in \calT^C_{m,a}} L_{t,a} \leq 4\log(8K\log_2 T /\delta)$. Then
\begin{align*}
   & \P  \rbr{\sum_{t \in \calT^C_{m,a}} L_{t,a} \leq 4\log(8K\log_2 T /\delta) } \\
   & \leq   \P  \rbr{ \abr{\sum_{t \in \calT^C_{m,a}} \rbr{L_{t,a} -\E_t[L_{t,a}]}}  \geq  2\log(8K\log_2 T /\delta)} \\
   &\leq  \frac{\delta}{8K\log_2T} ,
\end{align*}
where the last inequality uses the Azuma–Hoeffding inequality for martingale difference sequence. By a union bound for all $a \in \calA_m$ and $m$ (with the fact that the total number of phases is $\log_2 T$), with probability $1-\delta/8$, the claim holds for all active arm in all phases.

Now, if the arm $a$ does not satisfy \pref{eq:elimination_condition}, then only when $\texttt{Median}(\calL_{m,a})=1$ holds would mislead the algorithm to deem arm $a$ as an active arm. 
We reload the definition $L_{t,a}=\{A_t\neq a,\text{ $a$ not satisfies \pref{eq:elimination_condition} },t \in \calT^C_{m,a} \}$, and with this definition, $L_{t,a}$ satisfies \pref{eq:expectation_Zta}. 
$\texttt{Median}(\calL_{m,a})=1$ gives $\sum_{t \in \calT^C_{m,a}} L_{t,a} \leq 4\log(8K\log_2 T /\delta)$. Again, we have $\E_t[L_{t,a}]= 1-p_t \geq \frac{3}{4}$. Then, one can repeat a similar argument to get the claimed result.

Finally, using a union bound over two results completes the proof.
\end{proof}

\begin{remark}
It is noteworthy that \pref{lem:median_help_check} does not provide any guarantee for the arm that may hold the equality in \pref{eq:elimination_condition} for some rounds. In fact, since these arms sits on the boundary, it does not affect the analysis whether eliminate them or not.
\end{remark}

\begin{definition}[Define $\calE$] \label{def:calE_exploration_agent}
    Let $\calE$ be the event that event $\calE_0$ and inequalities in \pref{lem:total_error_plays}, \pref{lem:repeated_calls_search}, and \pref{lem:median_help_check} hold simultaneously.
\end{definition}

Based on \pref{def:calE_exploration_agent}, one can easily see $\P(\calE) \geq 1-\delta$, by using a union bound.

\subsection{Proof of \pref{thm:main_result_exploration}}
The following analysis conditions on $\calE$.
Since the elimination lasts at most $\order(K\log(K\delta^{-1}\log (T)))$ rounds in each phase, the number of phases is at most $\order(\log T)$, and the per-round regret is bounded by a absolute constant, the total regret during all elimination periods is bounded by
\[
\order \rbr{    K\log(T)\log(K\delta^{-1}\log (T)) }.
\]

The search algorithm last at most $\order(K \log(T) \log(\delta^{-1}\log T))$ rounds in a single phase and the per-round regret is bounded by an absolute constant. Hence, the total regret during the search is at most 
\[
\order(K \log^2(T) \log(\delta^{-1}\log T)).
\]

The regret in the first phase can be bounded by $\order \rbr{    K\log^3(KT/\delta) }$. Thus, we write
\begin{align*}
   R_T \leq   \order \rbr{    K\log^3(KT/\delta) }+ \sum_{m\geq 2} R_m,
\end{align*}
where $R_m$ is defined as: (see \pref{app:notation_def_explore} for notations)
\begin{align*}
    R_m&=\sum_{a \in \calA_m} \sum_{i\leq i_{m,a}} \sum_{t \in \calT^{(i)}(a;\calA_m)} \rbr{ \max_{b \in [K]} \cbr{\theta_b  -\pi^\star_b(t) } - \rbr{\theta_{A_t} - \pi_{A_t}(t) } }\\
    &\quad +\sum_{a \in \calB_m}\sum_{t \in \calT(a;\calB_m)} \rbr{ \max_{b \in [K]} \cbr{\theta_b  -\pi^\star_b(t) } - \rbr{\theta_{A_t} - \pi_{A_t}(t) } }.
\end{align*}

One can show for all $m\geq 2$
\begin{align*}
    R_m &\leq \sum_{a \in \calA_m} \sum_{i\leq i_{m,a}} \sum_{t \in \calT^{(i)}(a;\calA_m)} \rbr{ \max_{b \in [K]} \cbr{\theta_b  -\pi^\star_b(t) } - \rbr{\theta_{A_t} - \pi_{A_t}(t) } }\\
    &\quad +\order \rbr{ \log^{\frac{1}{3}}(KT/\delta)|\calB_m|^{\frac{1}{3}} \rbr{|\calA_m|   T_{m-1}}^{2/3}  }\\
    &\leq \sum_{a \in \calA_m} \sum_{i\leq i_{m,a}} \sum_{t \in \calT^{(i)}(a;\calA_m)} \rbr{ \max_{b \in [K]} \cbr{\theta_b  -\pi^\star_b(t) } - \rbr{\theta_{a} - \pi_{a}(t) } } \Ind{A_t=a}  \\
    &\quad +\order \rbr{\sum_{a \in \calA_m} \sum_{i\leq i_{m,a}}\rbr{ \sqrt{Y_{m,a}^{(i)} \log(T)} + \sqrt{\frac{\log \iota}{Y_{m,a}^{(i)}}} }} +\order \rbr{ \log^{\frac{1}{3}}(KT/\delta)|\calB_m|^{\frac{1}{3}} \rbr{|\calA_m|   T_{m-1}}^{2/3}  },   
\end{align*}
where the second inequality uses the repeat-condition.
Further, one can bound
\begin{align*}
   & \sum_{a \in \calA_m} \sum_{i\leq i_{m,a}}\rbr{ \sqrt{Y_{m,a}^{(i)} \log(T)} + \sqrt{\frac{\log \iota}{Y_{m,a}^{(i)}}} }\\ &\leq \order \rbr{   \sum_{a \in \calA_m} \sum_{i\leq i_{m,a}}\rbr{ \sqrt{T_m \log(T)} + \sqrt{\log \iota} }}\\
   &\leq \order \rbr{  |\calA_m| \log(\delta^{-1}\log T) \rbr{ \sqrt{T_m \log(T)} + \sqrt{\log \iota} }} \\
   &\leq \order \rbr{   \log(\delta^{-1}\log T) \rbr{ \sqrt{ |\calA_m| |\calT_m|\log(T)} + |\calA_m|\sqrt{\log \iota} }}, \numberthis{} \label{eq:bound_non_target_plays}
\end{align*}
where the first inequality uses the fact that $Y_{m,a}^{(i)} \leq 2T_m$, and the second inequality holds due to the number of iterations is at most $\order(\log(\delta^{-1}\log T) )$, and the last inequality uses $|\calA_m|T_m \leq |\calT_m|$.

We continue to bound the total regret of \pref{eq:bound_non_target_plays} as
\begin{align*}
   \order \rbr{  \sum_{m \geq 2} \log(\delta^{-1}\log T) \rbr{ \sqrt{ |\calA_m| |\calT_m|\log(\iota)}  }} \leq    \order \rbr{  \log(\delta^{-1}\log T) \rbr{ \sqrt{ KT\log T \log(\iota)}  }}.
\end{align*}

Then, for any $a \in \calA_m$, $i \leq i_{m,a}$ and $t \in \calT^{(i)}(a;\calA_m)$ with $A_t=a$, we can bound
\begin{align*}
& \max_{b \in [K]} \cbr{\theta_b  -\pi^\star_b(t) } - \rbr{\theta_{a} - \pi_{a}(t) }\\
&= \theta_{a_t^\star} +\hatmu_{a_t^\star}(t) -\max_{b \in [K]}  \hatmu_b(t) - \rbr{\theta_{a} - \pi_{a}(t) }    \\
& \leq \theta_{a_t^\star} +\mu_{a_t^\star}+\sqrt{\frac{\log(16TK/\delta)}{2N_{a_t^\star}(t)}}-\max_{b \in [K]}  \hatmu_b(t) - \rbr{\theta_{a} - \pi_{a}(t) }   \\
& \leq  \theta_{a_t^\star} +\mu_{a_t^\star}+\epsilon_m      -\max_{b \in [K]}  \hatmu_b(t) - \rbr{\theta_{a} - \pi_{a}(t) }   \\
&= \theta_{a_t^\star} +\mu_{a_t^\star}+\epsilon_m   -\max_{b \in [K]}  \hatmu_b(t) - \rbr{\theta_{a} - \overline{b}^{(i )}_{m,a} } \\
&\leq  \theta_{a^\star} +\mu_{a^\star} +\epsilon_m    -\max_{b \in [K]}  \hatmu_b(t) - \rbr{\theta_{a} - \overline{b}^{(j_{m,a})}_{m,a} } \\
&\leq \order \rbr{\Delta_a     +\rbr{ \frac{\max\{1,|\calB_m|\}}{T_{m-1}|\calA_m|}  }^{\nicefrac{2}{3}}+  \epsilon_m   }  ,\numberthis{} \label{eq:Tm2_Nat_explore}
\end{align*}
where the first inequality holds due to $\calE$, the second inequality uses \pref{eq:LB_inverse_minNit}, the third inequality uses the fact that $\overline{b}_{m,a}^{(i)}\leq \overline{b}_{m,a}^{(j_{m,a})}$ for all $i\leq j_{m,a}$ by \pref{lem:sufficient_plays_active_arms}, and the last inequality uses \pref{lem:UB_bar_bma_explore}.

Note that line~\ref{break_iteration_explore_agent} of \pref{alg:explore_agent} implies that $\sum_{i \leq i_{m,a}} \sum_{t \in \calT^{(i)}(a;\calA_m)} \Ind{A_t=a} \leq \order(T_m)$.
Hence, using $T_{m+1}=\Theta(T_{m})$ for all $m$, we have
\begin{align*}
&\sum_{a \in \calA_m} \sum_{i\leq i_{m,a}} \sum_{t \in \calT^{(i)}(a;\calA_m)} \rbr{ \max_{b \in [K]} \cbr{\theta_b  -\pi^\star_b(t) } - \rbr{\theta_{a} - \pi_{a}(t) } } \Ind{A_t=a}  \\
&\leq \order \rbr{ T_m\sum_{a \in \calA_m} \Delta_a + \rbr{K\log\rbr{\frac{KT}{\delta}}}^{\nicefrac{1}{3}}  (T_{m}|\calA_m|)^{\nicefrac{2}{3}}  +|\calA_m|\sqrt{T_{m}\log\rbr{\frac{KT}{\delta}}}      } 
\end{align*}

From \pref{lem:upperbound_pull_explore}, if a suboptimal arm $a$ is active in phase $m$, then $m\leq m_a$ where $m_a$ is the smallest phase such that $\Delta_a > 9\sqrt{\frac{\log(16KT/\delta)}{2T_{m_a}}}$. This implies that
\begin{align} \label{lem:Deltaa_Tm_relation}
\forall a \in \calA_m \text{ with } \Delta_a>0:\quad \Delta_a\leq 9\sqrt{\frac{\log(16KT/\delta)}{2T_{m_a-1}}} \leq 9\sqrt{\frac{\log(16KT/\delta)}{2T_{m-1}}} .
\end{align}

Hence, we can bound
\begin{align}\label{eq:bound_Tm_times_Delta_a}
    T_m\sum_{a \in \calA_m} \Delta_a &\leq \order \rbr{|\calA_m|\sqrt{T_m\log(KT/\delta)}}.
\end{align}

By bounding $T_m|\calA_m| \leq |\calT_m|$, we can further bound
\begin{align*}
&\sum_{m\geq 2}\sum_{a \in \calA_m} \sum_{i\leq j_{m,a}} \sum_{t \in \calT^{(i)}(a;\calA_m)} \rbr{ \max_{b \in [K]} \cbr{\theta_b  -\pi^\star_b(t) } - \rbr{\theta_{a} - \pi_{a}(t) } }  \Ind{A_t=a} \\
&\leq \order \rbr{ \sum_{m\geq 2} \rbr{
\rbr{K\log\rbr{\frac{KT}{\delta}}}^{\nicefrac{1}{3}}  (|\calT_m|)^{\nicefrac{2}{3}}  +\sqrt{|\calA_m||\calT_m|\log\rbr{\frac{KT}{\delta}}}   }    } \\
&\leq \order \rbr{
\rbr{K\log T \log\rbr{\frac{KT}{\delta}}}^{\nicefrac{1}{3}}  T^{\nicefrac{2}{3}}  +\sqrt{KT\log (T) \log\rbr{\frac{KT}{\delta}}}       } ,
\end{align*}
where the last inequality uses H\"{o}lder's inequality with the fact that the number of phases is at most $\order(\log T)$.
Combining the above results, we have
\begin{align*}
    R_T=&\order  \rbr{
\rbr{K\log T \log\rbr{\frac{KT}{\delta}}}^{\nicefrac{1}{3}}  T^{\nicefrac{2}{3}}  +\log^2(KT/\delta) \sqrt{KT}      +K\log^3(KT/\delta) }.
\end{align*}

By choosing $\delta=1/T$, we get the claimed bound for $\E[R_T]$.

\subsection{Proof of \pref{thm:oracle_agent_exploration_regret}} \label{app:comparison_proof}

This proof idea is similar to that of \pref{thm:main_result_exploration}. Let us first bound the regret caused by elimination. Notice that since the proposed incentive during trustworthy elimination is \textit{not one-hot}, the regret in a single phase evaluated is bounded by the product of per-round regret upper bound $\order(K)$ and the upper bound of total number of rounds $\order(K\log(K\delta^{-1}\log(T)))$. As the number of phases is at most $\order(\log T)$, the total regret incurred by the elimination is bounded by
\[
\order \rbr{K^2 \log T\log(K\delta^{-1}\log(T))}.
\]

Then, we bound the regret incurred by searching. Since the algorithm proposes one-hot incentive, and the search process lasts $\order(\log T)$ rounds for each active arm in each phase, the total regret incurred by searching is bounded by
\[
\order(K \log^2 (T) \log(\delta^{-1}\log T)).
\]

Moreover, the total regret in phase $m=1$ is bounded by
\[
\order \rbr{K^2\log^3(KT/\delta)}.
\]

Note that since the algorithm always proposes the one-hot incentive, except elimination period, we thus can write
\[
\overline{R}_T = \order \rbr{K^2\log^3(KT/\delta)}+\sum_{m \geq 2} \overline{R}_m,
\]
where
\begin{align*}
    \overline{R}_m&=\sum_{a \in \calA_m} \sum_{i\leq i_{m,a}} \sum_{t \in \calT^{(i)}(a;\calA_m)} \rbr{ \max_{b \in [K]} \cbr{\theta_b +\mu_b } - \max_{z \in [K]} \mu_z - \rbr{\theta_{A_t} -\sum_{v \in [K]} \pi_{v}(t) } }\\
    &\quad +\sum_{a \in \calB_m}\sum_{t \in \calT(a;\calB_m)} \rbr{ \max_{b \in [K]} \cbr{\theta_b +\mu_b }- \max_{z \in [K]} \mu_z - \rbr{\theta_{A_t} -\sum_{v \in [K]} \pi_{v}(t)  } }\\
    &=\sum_{a \in \calA_m} \sum_{i\leq i_{m,a}} \sum_{t \in \calT^{(i)}(a;\calA_m)} \rbr{ \max_{b \in [K]} \cbr{\theta_b +\mu_b } - \max_{z \in [K]} \mu_z - \rbr{\theta_{A_t} -\pi_{A_t}(t) } }\\
    &\quad +\sum_{a \in \calB_m}\sum_{t \in \calT(a;\calB_m)} \rbr{ \max_{b \in [K]} \cbr{\theta_b +\mu_b }- \max_{z \in [K]} \mu_z - \rbr{\theta_{A_t} -\pi_{A_t}(t)  } },
\end{align*}
where the last equality follows from the fact that the proposed incentives are one-hot during $\calT(a;\calB_m)$ and $\calT^{(i)}(a;\calA_m)$.

The analysis on regret in phases $m\geq 2$ is a simplified version of the proof of \pref{thm:main_result_exploration}. The only difference is to use \pref{lem:UB_bar_bma_explore_V2} to obtain \pref{eq:Tm2_Nat_explore}. Thus, the regret bound for $\sum_{m \geq 2}R_m$ remains the same.

\subsection{Supporting Lemmas}

\begin{lemma} \label{lem:sufficient_plays_bad_arms}
Suppose $\calE$ holds. For each phase $m$ with $\calB_m \neq \emptyset$, every bad arm $a \in \calB_m$ will be played for at least $\log^{\frac{1}{3}}(16KT/\delta)\rbr{\frac{|\calA_m|}{ \max\{1,|\calB_m|\} }T_{m-1}}^{2/3}$ times.
\end{lemma}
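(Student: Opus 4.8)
The plan is to show that the agent plays each bad arm $a\in\calB_m$ in all but a lower-order number of the $Z_m$ rounds devoted to it, where the handful of dropped rounds correspond exactly to the agent's exploration, which is controlled on $\calE$ by \pref{lem:total_error_plays}.

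First I would argue that whenever the agent does not explore during the block $\calT(a;\calB_m)$, she is forced to play $a$. Indeed, the offered incentive there is $\pi^0(a;1+T^{-1})$, so arm $a$ has utility $\hatmu_a(t)+1+T^{-1}\ge 1+T^{-1}$ (using $\hatmu_a(t)\ge 0$), while every other arm $b$ has utility $\hatmu_b(t)\le 1$ (since the rewards and initial empirical means lie in $[0,1]$, so all empirical means stay in $[0,1]$). Hence $a$ is the \emph{unique} maximizer of $\hatmu_{\cdot}(t)+\pi_{\cdot}(t)$, and by \pref{def:exploration_behavior} the event $\{A_t\neq a\}$ coincides with the exploration event $\{I_t=1\}$. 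Consequently the number of plays of $a$ during the block equals exactly $Z_m-\sum_{t\in\calT(a;\calB_m)}I_t$.

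Next I would invoke the first inequality of \pref{lem:total_error_plays}, valid on $\calE$, which bounds $\sum_{t\in\calT(a;\calB_m)}I_t\le 2c_0\sqrt{Z_m\log(2T)}+\sqrt{8\log(8KT\log_2 T/\delta)/Z_m}$. Thus the number of plays of $a$ is at least $Z_m-2c_0\sqrt{Z_m\log(2T)}-\sqrt{8\log(8KT\log_2 T/\delta)/Z_m}$, and since $Z_m/2$ is exactly the target lower bound $\log^{1/3}(16KT/\delta)(|\calA_m|/\max\{1,|\calB_m|\}\,T_{m-1})^{2/3}$, it suffices to show that each of the two subtracted terms is at most $Z_m/4$.

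The remaining (routine) work is this final comparison, which relies only on $Z_m$ being polynomially large. Note the statement is vacuous for $m=1$ since $\calB_1=\emptyset$, so I may take $m\ge 2$; then $|\calA_m|\ge 1$ and $\max\{1,|\calB_m|\}\le K$ give $Z_m\ge 2\log^{1/3}(16KT/\delta)(T_{m-1}/K)^{2/3}$, and plugging in $T_{m-1}\ge T_1$ from \pref{eq:def_Tm_explore} yields $Z_m\gtrsim c_0^2\log(16KT/\delta)\log^{4/3}(\iota)$. This is comfortably larger than both $64c_0^2\log(2T)$ (which forces $2c_0\sqrt{Z_m\log(2T)}\le Z_m/4$, since $\sqrt{Z_m}\ge 8c_0\sqrt{\log(2T)}$) and $(128\log(8KT\log_2 T/\delta))^{1/3}$ (which forces $\sqrt{8\log(8KT\log_2 T/\delta)/Z_m}\le Z_m/4$, since $Z_m^{3/2}\ge 4\sqrt{8\log(8KT\log_2 T/\delta)}$). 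Combining the two bounds, the number of plays is at least $Z_m/2$, as claimed. The only point requiring care is keeping the logarithmic constants consistent across the two estimates so that the slack is genuinely nonnegative; no step presents a real obstacle.
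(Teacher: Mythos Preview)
Your proposal is correct and follows essentially the same approach as the paper: argue that with incentive $1+T^{-1}$ the target bad arm is the unique empirical maximizer (so $A_t\neq a$ implies exploration), invoke \pref{lem:total_error_plays} on $\calE$ to bound the exploration count, and verify that this count is at most $Z_m/2$ using $T_{m-1}\ge T_1$. The only cosmetic difference is that the paper packages the two subtracted terms via a single constant $a=\max\{2c_0\sqrt{\log(2T)},\sqrt{8\log(8KT\log_2 T/\delta)}\}$ and checks $Z_m\ge 16a^2$ through a monotonicity argument for $f(x)=x-2a(\sqrt{x}+1/\sqrt{x})$, whereas you bound each term separately by $Z_m/4$; both routes are routine and equivalent.
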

\begin{proof}
Let $I_t$ be an indicator that the agent explores at round $t$.
Consider a fixed phase $m$ with $\calB_m \neq \emptyset$ (i.e., $m\geq 2$), and we fix a bad arm $a \in \calB_m$.
\pref{lem:total_error_plays} gives
\begin{align*}
\sum_{t\in \calT(a;\calB_m)} I_t \leq 2c_0 \sqrt{Z_m\log(2T)} +\sqrt{\frac{8\log(8KT\log_2 (T) \delta^{-1})}{Z_m}}.
\end{align*}

Notice that if the agent does not explore at a round, then she must play the target arm $a$ based on the proposed incentive. Thus, it suffices to show
\[
Z_m-\rbr{2 c_0\sqrt{Z_m \log(2T)}+ \sqrt{\frac{8\log(8KT\log_2 (T) \delta^{-1})}{Z_m}} } \geq \frac{Z_m}{2}.
\]

To this end, we let
\[
a=\max\{2c_0\sqrt{\log(2T)},\sqrt{8\log(8KT\log_2(T)/\delta)}\}.
\]

Then, it suffices to show
$ Z_m-2a(\sqrt{Z_m}+1/\sqrt{Z_m}) \geq 0$. One can easily show that for $a \geq 1$, the function $f(x)= x-2a(\sqrt{x}+1/\sqrt{x})$ is monotonically increasing for $x \geq a^2$ and $f(16a^2) \geq 0$. 
Then, we verify $Z_m \geq 16a^2$ as:
\[
\frac{Z_m}{2}=\log^{\frac{1}{3}}(16KT/\delta)\rbr{\frac{|\calA_m|}{ \max\{1,|\calB_m|\} }T_{m-1}}^{2/3} \geq \rbr{\frac{T_{1}}{ K }}^{2/3} \geq 8a^2.
\]

We thus complete the proof.
\end{proof}

\begin{lemma} \label{lem:search_end_bma_diff_explore}
Let $t_{m,a}^{(j_{m,a})}$ be the round that \pref{alg:new_binary_search} ends the search for arm $a$ in phase $m$ and outputs $b_{m,a}^{(j_{m,a})}$.
For each phase $m$, we have
\begin{equation} 
    b_{m,a}^{(j_{m,a})}-\pi^\star_a(t_{m,a}^{(j_{m,a})}) \in \left( 0,\frac{4}{T}+\frac{\lceil \log_2 T \rceil}{N_a(t_{m,a}^{(j_{m,a})})}+ \frac{2}{\min_{i \in [K]} N_i(t_{m,a}^{(j_{m,a})})} \right].
\end{equation}
\end{lemma}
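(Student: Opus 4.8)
The plan is to reduce \pref{lem:search_end_bma_diff_explore} to the already-established \pref{lem:search_end_bma_diff} (the analogous statement for the no-exploration setting) by exploiting the defining property of the index $j_{m,a}$. Recall from the notation in \pref{app:notation_def_explore} that $j_{m,a}$ is the \emph{first} index such that the agent makes no exploration during the entire execution of \pref{alg:new_binary_search} that produces $b_{m,a}^{(j_{m,a})}$. The key observation is that during a \emph{single} call of \pref{alg:new_binary_search} with no agent exploration, the agent behaves \emph{exactly} as the self-interested learning agent of \pref{sec:IID_Reward}: at every round $t$ in that call we have $A_t \in \argmax_{a \in \calA}\{\hatmu_a(t)+\pi_a(t)\}$, with no arbitrary deviation. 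Consequently, the analysis of \pref{alg:new_binary_search} carried out in the proof of \pref{lem:search_end_bma_diff} applies verbatim to the $j_{m,a}$-th call.

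\textbf{First}, I would make precise the claim that the $j_{m,a}$-th call is indistinguishable, from the algorithm's internal perspective, from a run in the no-exploration model. The search routine \pref{alg:new_binary_search} only reads the sequence of observed actions $A_t$ and updates $\overline{y}_a,\underline{y}_a,y_a^{\texttt{upper}},C_1,C_2,\texttt{Check}$ accordingly; it never inspects whether a given round was an exploration round. Since no exploration occurs in the $j_{m,a}$-th call by definition of $j_{m,a}$, every action $A_t$ in that call is the empirical maximizer under the offered incentive, which is precisely the arm-selection rule \pref{eq:arm_selection}. Hence the five-case structure of the proof of \pref{lem:search_end_bma_diff} (breaking at line~\ref{C1logT_break}, line~\ref{C2logT_break}, or line~\ref{notpasstest_break}, with the sub-cases on whether $y_a^{\texttt{upper}}$ was updated) transfers without change.

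\textbf{Second}, I would invoke the conclusion of \pref{lem:search_end_bma_diff} directly. Writing $t^{(j_{m,a})}_{m,a}$ for the round at which the $j_{m,a}$-th call terminates and $b^{(j_{m,a})}_{m,a}$ for its output, the same bookkeeping on the binary-search bracket width (shrinking to $\order(T^{-1})$ after $\lceil\log_2 T\rceil$ halvings) and the same control of the per-update shift of the empirical means via \pref{lem:per_update_shift} yield
\[
b_{m,a}^{(j_{m,a})}-\pi^\star_a\bigl(t_{m,a}^{(j_{m,a})}\bigr)\in\left(0,\;\frac{4}{T}+\frac{\lceil\log_2 T\rceil}{N_a(t_{m,a}^{(j_{m,a})})}+\frac{2}{\min_{i\in[K]}N_i(t_{m,a}^{(j_{m,a})})}\right],
\]
which is exactly the desired bound. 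The duration claim of \pref{lem:search_end_bma_diff} that each call lasts at most $2\lceil\log_2 T\rceil$ rounds also carries over, since the no-exploration call executes the identical control flow; this is in fact what is used in the proof of \pref{lem:repeated_calls_search}.

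\textbf{The main obstacle} is making the reduction argument rigorous rather than merely asserting it. The subtlety is that $j_{m,a}$ is a \emph{random} index, defined in terms of the realized exploration pattern, so one must argue that conditioning on the event ``no exploration in the $j_{m,a}$-th call'' does not corrupt the martingale/Hoeffding-type concentration facts that \pref{lem:search_end_bma_diff} implicitly relies on (specifically the event $\calE_0$ and the per-round empirical-mean deviations). Since $\calE_0$ is defined uniformly over all $(t,a)\in[T]\times[K]$ and holds with high probability irrespective of which rounds were exploratory, conditioning on $\calE$ (\pref{def:calE_exploration_agent}) already supplies the needed deviation bounds along the $j_{m,a}$-th call, so the concentration facts are available deterministically on $\calE$. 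The remaining care is purely notational: the bound in \pref{lem:search_end_bma_diff} is stated ``if the search ends at round $t$,'' and here we simply instantiate $t=t_{m,a}^{(j_{m,a})}$ and specialize to the no-exploration call, so no new probabilistic estimate is required.
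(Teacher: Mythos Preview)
Your proposal is correct and follows exactly the paper's approach: the paper's proof is a single sentence observing that, by definition of $j_{m,a}$, the agent does not explore during that call, so \pref{lem:search_end_bma_diff} applies directly. Your ``main obstacle'' discussion is more cautious than necessary, since the proof of \pref{lem:search_end_bma_diff} is entirely deterministic (it uses only the binary-search bracket shrinkage and the per-update shift bounds of \pref{lem:per_update_shift}, not any concentration event), so no conditioning subtlety actually arises.
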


\begin{proof}
    Since in $j_{m,a}$-th iteration, the agents does not explore, the proof directly follows \pref{lem:search_end_bma_diff}.
\end{proof}

\begin{lemma} \label{lem:Atisa_forallt_exploration}
Suppose that $\calE$ occurs. For all $m \geq 2$ and $a \in \calA_m$, if every active arm in $\calA_{m-1}$ is played for $T_{m-1}$ times, then we have
\[
\forall t \in  \bigcup_{i \leq j_{m,a}}  \calT^{(i)}_{m,a}:\quad  \overline{b}_{m,a}^{(j_{m,a})} > \pi^\star_a(t).
\]
\end{lemma}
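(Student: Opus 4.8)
The plan is to prove the statement \emph{directly}, without the round-by-round induction used in \pref{lem:Atisa_forallt}, by exploiting the fact that here the search output is enlarged by $4\epsilon_m$ rather than only by $4C_m$. The extra cube-root summand in $\epsilon_m$ (see \pref{eq:def4epsm_explore_agent}) is tuned precisely to dominate the \emph{larger} Hoeffding width of the bad arms, so I never need to argue that the bad arms stay frozen. Fix $m\ge 2$ and $a\in\calA_m$, condition on $\calE$, and let $t_0:=t_{m,a}^{(j_{m,a})}$ be the round at which \pref{alg:new_binary_search} ends the successful (exploration-free) call producing $b_{m,a}^{(j_{m,a})}$. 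Since the entire search loop precedes the testing loop in \pref{alg:explore_agent}, every round $t\in\bigcup_{i\le j_{m,a}}\calT^{(i)}_{m,a}$ satisfies $t\ge t_0$.

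First I would pin down the play counts available on $\calE$. The hypothesis gives $N_b(t)\ge T_{m-1}$ for every active arm $b\in\calA_m\subseteq\calA_{m-1}$, while \pref{lem:sufficient_plays_bad_arms} gives $N_b(t)\ge \log^{\nicefrac{1}{3}}(16KT/\delta)\rbr{\tfrac{|\calA_m|}{\max\{1,|\calB_m|\}}T_{m-1}}^{\nicefrac{2}{3}}$ for every bad arm $b\in\calB_m$, both for all $t\ge t_0$ (counts only increase, and the bad-arm loop of \pref{alg:explore_agent} finishes before $t_0$). Plugging these into the Hoeffding width $\sqrt{\log(16KT/\delta)/(2N_b(t))}$ and comparing against the two summands of $\epsilon_m$, I would verify that on $\calE$, for \emph{every} arm $b\in[K]$ and every $t\ge t_0$, $\abr{\hatmu_b(t)-\mu_b}\le\epsilon_m$: the active arms are absorbed by the $\sqrt{\log(\cdot)/T_{m-1}}$ term, and the bad arms by the $(\log(\cdot)|\calB_m|/(T_{m-1}|\calA_m|))^{\nicefrac{1}{3}}$ term.

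Given a uniform width $\epsilon_m$, the remainder is a short two-sided concentration comparison with no induction. At the reference round, \pref{lem:search_end_bma_diff_explore} yields $\pi^\star_a(t_0)<b_{m,a}^{(j_{m,a})}$; combining with $\max_{b\in[K]}\hatmu_b(t_0)\ge\max_{b\in[K]}\mu_b-\epsilon_m$ and $\hatmu_a(t_0)\le\mu_a+\epsilon_m$ gives $\max_{b\in[K]}\mu_b-\mu_a< b_{m,a}^{(j_{m,a})}+2\epsilon_m$. At any $t\ge t_0$, the opposite-direction bounds $\max_{b\in[K]}\hatmu_b(t)\le\max_{b\in[K]}\mu_b+\epsilon_m$ and $\hatmu_a(t)\ge\mu_a-\epsilon_m$ give $\pi^\star_a(t)=\max_{b\in[K]}\hatmu_b(t)-\hatmu_a(t)\le\max_{b\in[K]}\mu_b-\mu_a+2\epsilon_m$. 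Chaining the two displays, $\pi^\star_a(t)<b_{m,a}^{(j_{m,a})}+4\epsilon_m\le\overline{b}_{m,a}^{(j_{m,a})}$, where the last inequality is the definition of $\overline{b}_{m,a}^{(j_{m,a})}$ whenever the $\min$ is inactive; if instead $\overline{b}_{m,a}^{(j_{m,a})}=1+T^{-1}$, the claim is trivial since $\pi^\star_a(t)\le 1<1+T^{-1}$ as $\hatmu\in[0,1]$.

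The main obstacle, and really the only nonroutine point, is the second step: confirming that $\epsilon_m$ genuinely upper-bounds the bad-arm Hoeffding width arising from the $Z_m$-many plays guaranteed by \pref{lem:sufficient_plays_bad_arms}. This is exactly where the exploratory setting departs from \pref{lem:Atisa_forallt}: there the enlargement $4C_m$ cannot absorb the bad-arm width, which forces an induction keeping the bad arms unplayed; here the $\log^{\nicefrac{1}{3}}$-scaled play count makes the width $\order\rbr{(\log(\cdot)|\calB_m|/(T_{m-1}|\calA_m|))^{\nicefrac{1}{3}}}$, matched up to a constant by the first summand of $\epsilon_m$. Once this matching is verified, no induction is needed at all, since Hoeffding concentration on $\calE$ holds for every arm at every round regardless of how the (possibly exploring) agent distributes its plays.
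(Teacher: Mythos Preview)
Your proposal is correct and follows essentially the same line as the paper: both establish the uniform bound $|\hatmu_b(t)-\mu_b|\le\epsilon_m$ for \emph{all} arms via the play-count lower bounds (the paper's \pref{eq:LB_inverse_minNit} and \pref{eq:interval_hatmut}), then chain through the reference round $t_{m,a}^{(j_{m,a})}$ to the conclusion. Your version is marginally cleaner in that it anchors at the true means $\mu_b$ rather than at $\hatmu_b(t_{m,a}^{(j_{m,a})}{+}1)$ as the paper does---this saves the per-update shift step and shows that the extra slack $(\tfrac{\max\{1,|\calB_m|\}}{T_{m-1}|\calA_m|})^{2/3}+T_{m-1}^{-1}$ in the definition of $\overline{b}_{m,a}^{(j_{m,a})}$ is not actually needed for this lemma---and you correctly observe that no round-by-round induction is required here (the paper's mention of induction at the end of its proof is vestigial, copied from the structure of \pref{lem:Atisa_forallt}).
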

\begin{proof}
Let us consider fixed phase $m\geq 2$ and arm $a \in \calA_m \subseteq \calA_{m-1}$ and let $t_{m,a}^{(j_{m,a})}$ be the round that \pref{alg:new_binary_search} ends the search for arm $a$ at round $m$ and outputs $b_{m,a}^{(j_{m,a})}$.

\begin{align*}
\sqrt{\frac{\log(16KT/\delta)}{2 \min_{i \in [K]} N_i(t) }} &\leq \sqrt{\frac{\log(16KT/\delta)}{   2\min \cbr{T_{m-1}, \log^{\frac{1}{3}}(16KT/\delta)\rbr{\frac{|\calA_m|}{ \max\{1,|\calB_m|\} }T_{m-1}}^{2/3}   }  } }\\
&\leq \sqrt{\frac{\log(16KT/\delta)}{   2T_{m-1} } }+\sqrt{\frac{\log(16KT/\delta)}{  2T_{m-1}, \log^{\frac{1}{3}}(16KT/\delta)\rbr{\frac{|\calA_m|}{ \max\{1,|\calB_m|\} }T_{m-1}}^{2/3}     } }\\
&\leq  \epsilon_m \numberthis{} \label{eq:LB_inverse_minNit}
\end{align*}
where the first inequality uses the assumption (i.e., the number of plays of active arms) and \pref{lem:sufficient_plays_bad_arms}, and the last inequality holds due to the definition of $\epsilon_m$ given in \pref{eq:def4epsm_explore_agent}.

By Hoeffding' inequality and the fact that the confidence interval is monotonically decreasing as samples increase, we have for all $b \in \calA$ and $ \forall t \in \bigcup_{i \leq j_{m,a}}  \calT^{(i)}_{m,a}$ (here $a$ is the fixed arm mentioned above)
\begin{align*}
  \hatmu_b(t) &\in \sbr{\mu_b-\sqrt{\frac{\log(16KT/\delta)}{  2N_b(t) }},\mu_b+\sqrt{\frac{\log(16KT/\delta)}{  2N_b(t) }}}  \\
   &\subseteq  \sbr{\mu_b-\sqrt{\frac{\log(16KT/\delta)}{2 \min_{i \in [K]} N_i(t) }},\mu_b+\sqrt{\frac{\log(16KT/\delta)}{2 \min_{i \in [K]} N_i(t) }}}\\
   &\subseteq  \sbr{\mu_b-\epsilon_m ,\mu_b+\epsilon_m  }, \numberthis{} \label{eq:interval_hatmut}
\end{align*} 
where the last inequality uses \pref{eq:LB_inverse_minNit}.
Let $b_t \in \argmax_{a \in \calA}\hatmu_{a}(t)$.
Note that \pref{eq:interval_hatmut} gives that 
\begin{equation} \label{eq:diff_maxhatmu_explore}
\forall t \in \bigcup_{i \leq j_{m,a}}  \calT^{(i)}_{m,a}:\quad \max_{b \in [K]} \hatmu_{b_t}(t)=\hatmu_{b_t}(t) \leq \hatmu_{b_t}(t_{m,a}^{(j_{m,a})}+1) +2\epsilon_m \leq \max_{b \in [K]} \hatmu_{b}(t_{m,a}^{(j_{m,a})}+1) +2\epsilon_m.
\end{equation}

Hence, following a similar reasoning in 
\pref{lem:Atisa_forallt}, one can show $ \forall t \in \bigcup_{i \leq j_{m,a}}  \calT^{(i)}_{m,a}$:
\begin{align*}
 &\overline{b}_{m,a}^{(j_{m,a})} - \rbr{\max_{b \in [K]} \hatmu_{b}(t) - \hatmu_{a}(t)}\\
  &=\overline{b}_{m,a}^{(j_{m,a})} - \min \cbr{1,\max_{b \in [K]} \hatmu_{b}(t) - \hatmu_{a}(t)}\\
 &\geq   \overline{b}_{m,a}^{(j_{m,a})} - \min \cbr{1,\max_{b \in [K]} \hatmu_{b}(t_{m,a}^{(j_{m,a})}+1) - \hatmu_{a}(t_{m,a}^{(j_{m,a})}+1) + 4 \epsilon_m }\\
  &\geq   \overline{b}_{m,a}^{(j_{m,a})} - \min \cbr{1,\max_{b \in [K]} \hatmu_{b}(t_{m,a}^{(j_{m,a})}) - \hatmu_{a}(t_{m,a}^{(j_{m,a})})+ \rbr{ \frac{\max\{1,|\calB_m|\}}{T_{m-1}|\calA_m|}  }^{\nicefrac{2}{3}}+\frac{1}{T_{m-1}}+ 4 \epsilon_m }\\
 &>0,
\end{align*}
which implies that arm $a$ gets played at round $t+1$. Once the induction is done, we get the desired claim for fixed $m,a$. Conditioning on $\calE$, the claim holds for all $m,a$, which thus completes the proof.
\end{proof}

\begin{lemma} \label{lem:sufficient_plays_active_arms}
Suppose that $\calE$ occurs.
For each phase $m$, each active arm $a \in \calA_m$ will be played for at least $T_m$ times before trustworthy online elimination starts. Moreover, for each phase $m$, active arm $a \in \calA_m$, we have $i_{m,a}\leq j_{m,a}$ and $\overline{b}_{m,a}^{(i)}\leq \overline{b}_{m,a}^{(j_{m,a})}$ for all $i \leq j_{m,a}$.
\end{lemma}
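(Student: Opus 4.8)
The plan is to argue by induction on the phase index $m$, proving simultaneously that (i) each active arm $a\in\calA_m$ is played at least $T_m$ times during the incentive-testing stage and (ii) $i_{m,a}\le j_{m,a}$; the monotonicity $\overline{b}_{m,a}^{(i)}\le\overline{b}_{m,a}^{(j_{m,a})}$ is immediate and I would dispatch it first. Indeed, the calls to \pref{alg:new_binary_search} are sorted so that $b_{m,a}^{(1)}\le\cdots\le b_{m,a}^{(j_{m,a})}\le\cdots$, and the enlargement $x\mapsto\min\{1+T^{-1},\,x+\kappa_m\}$ with $\kappa_m=(\max\{1,|\calB_m|\}/(T_{m-1}|\calA_m|))^{2/3}+T_{m-1}^{-1}+4\epsilon_m$ (a quantity not depending on $i$) is nondecreasing, so $i\le j_{m,a}$ gives $\overline{b}_{m,a}^{(i)}\le\overline{b}_{m,a}^{(j_{m,a})}$.

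For the inductive step (the base case is handled below), assume every active arm in $\calA_{m-1}$ was played $T_{m-1}$ times. Fix $a\in\calA_m$. The crux is to examine the iteration $i=j_{m,a}$ of the incentive-testing loop, where the offered incentive is exactly $\overline{b}_{m,a}^{(j_{m,a})}$. By \pref{lem:Atisa_forallt_exploration} (whose hypothesis is precisely the inductive hypothesis) we have $\overline{b}_{m,a}^{(j_{m,a})}>\pi^\star_a(t)$ for every round $t$ in iterations $i\le j_{m,a}$; hence in iteration $j_{m,a}$ a round with $A_t\ne a$ can occur only when the agent explores, so $c_{m,a}^{(j_{m,a})}\le\sum_{t\in\calT^{(j_{m,a})}(a;\calA_m)}I_t$. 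Invoking the active-arm bound of \pref{lem:total_error_plays},
\[
c_{m,a}^{(j_{m,a})}\le 2c_0\sqrt{Y_{m,a}^{(j_{m,a})}\log(2T)}+\sqrt{\frac{8\log(\iota)}{Y_{m,a}^{(j_{m,a})}}},
\]
which is exactly the negation of the repeat-loop termination test in \pref{alg:explore_agent}. Consequently the repeat-loop of iteration $j_{m,a}$ cannot terminate through that test and must instead stop at $Y_{m,a}^{(j_{m,a})}=2T_m$.

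With $Y_{m,a}^{(j_{m,a})}=2T_m$, the number of \emph{target} plays in this iteration is
\[
Y_{m,a}^{(j_{m,a})}-c_{m,a}^{(j_{m,a})}\ge 2T_m-2c_0\sqrt{2T_m\log(2T)}-\sqrt{\frac{8\log(\iota)}{2T_m}}\ge T_m,
\]
where the final inequality is a direct consequence of the choice of $T_m$ in \pref{eq:def_Tm_explore} (it suffices that $2c_0\sqrt{2T_m\log(2T)}+\sqrt{8\log(\iota)/(2T_m)}\le T_m$, which holds since $T_m=\Theta(c_0^3 2^{2m}K\log(16TK/\delta)\log^2\iota)$). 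Therefore $\sum_{j\le j_{m,a}}(Y_{m,a}^{(j)}-c_{m,a}^{(j)})\ge T_m$, so the break at \pref{break_iteration_explore_agent} is met no later than $i=j_{m,a}$, giving $i_{m,a}\le j_{m,a}$; and since $\sum_{j\le i_{m,a}}(Y_{m,a}^{(j)}-c_{m,a}^{(j)})$ counts precisely the plays of arm $a$ and is $\ge T_m$ at the break, arm $a$ is played at least $T_m$ times. This closes the induction once I note that $j_{m,a}$ is well defined by \pref{lem:repeated_calls_search} (a successful call exists among the $2\log(4\log_2 T/\delta)$ calls), so the for-loop actually reaches iteration $j_{m,a}$ unless it has already broken earlier.

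For the base case $m=1$ there are no bad arms, and the additive enlargement saturates the incentive at $\overline{b}_{1,a}^{(i)}=1+T^{-1}>1\ge\pi^\star_a(t)$ for all $t$, which plays the role of \pref{lem:Atisa_forallt_exploration}; the counting argument above then applies verbatim. The main obstacle I anticipate is keeping the logic non-circular: \pref{lem:Atisa_forallt_exploration} may be invoked at phase $m$ only through the inductive hypothesis that every active arm in $\calA_{m-1}$ is played $T_{m-1}$ times, which is exactly statement (i) at phase $m-1$, so the induction must carry (i) and (ii) jointly rather than separately. The only genuinely quantitative point is the elementary verification that the exploration slack $2c_0\sqrt{2T_m\log(2T)}+\sqrt{8\log(\iota)/(2T_m)}$ is dominated by $T_m$, which the definition of $T_m$ is engineered to guarantee.
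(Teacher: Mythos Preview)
Your proposal is correct and follows essentially the same approach as the paper's proof: induction on $m$, using \pref{lem:Atisa_forallt_exploration} to argue that in iteration $j_{m,a}$ non-target plays arise only from exploration, invoking \pref{lem:total_error_plays} to show the counter never triggers the repeat-loop's early exit, and concluding that the break at \pref{break_iteration_explore_agent} fires by iteration $j_{m,a}$ with at least $T_m$ target plays accumulated. Your treatment is in fact a bit more explicit than the paper's in two places---you verify that the enlargement saturates at $1+T^{-1}$ in the base case (the paper simply asserts this), and you flag the potential circularity and explain why the joint induction on (i) and (ii) avoids it.
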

\begin{proof}
Before that, we first show for all $m \geq 1$
\begin{align} \label{eq:Tm_times_verification}
2T_{m} - \rbr{2c_0 \sqrt{2T_{m} \log(2T)} +\sqrt{\frac{8\log(\iota)}{T_{m} }}} \geq T_m.
\end{align}

For shorthand, we define
\[
a = \max \cbr{2c_0\sqrt{2\log (2T)},\sqrt{8\log \iota}}.
\]

To show \pref{eq:Tm_times_verification}, it suffices to show $T_m-a(\sqrt{T_m}+1/\sqrt{T_m})\geq 0$. One can easily show that for $a \geq 2$, the function $f(x)= x-a(\sqrt{x}+1/\sqrt{x})$ is monotonically increasing for $x \geq a^2/4$ and $f(4a^2) \geq 0$. 
As we have $T_m \geq T_1 \geq 4a^2$, \pref{eq:Tm_times_verification} holds for all $m \geq 1$.

Now, we start to prove the first claim by using induction on $m$. For the base case, one can see that the incentive on every arm is always $1+T^{-1}$, which implies that if the agent does not explore, then she must play arm $a$. Thus, by \pref{lem:total_error_plays}, in the first iteration, the repeat-loop only breaks when $Y_{1,a}^{(1)}=2T_{1}$. By \pref{eq:Tm_times_verification}, for every arm $a$, the number of plays of each $a$ is at least $T_1$ times. Therefore, the base case holds.

Suppose that the claim holds for phase $m-1$. Then, we aim to prove for phase $m$ and consider a fixed $a \in \calA_m$.
Assume that the number of plays for arm $a$ is less than $T_m$ before the $j_{m,a}$-th iteration (otherwise the claim holds true).
We will show that the number of plays of arm $a$ in the $j_{m,a}$-th iteration is at least $T_m$ times.
Based on \pref{lem:Atisa_forallt_exploration}, we know that in the $j_{m,a}$-th iteration (the exploration does not occur when generate corresponding incentive), the agent will always play the target arm $a$. 
Thus, by \pref{lem:total_error_plays}, in $j_{m,a}$-th iteration, the repeat-loop only breaks when $Y_{m,a}^{(j_{m,a})}=2T_{m}$. By \pref{eq:Tm_times_verification}, for every arm $a$, the number of plays of each $a$ is at least $T_m$ times. Once the induction is done, we complete the proof for the first claim.

For the second claim, we consider fixed $m,a$.
Since the above shows that the number of plays on $a$ is at least $T_m$ in the $j_{m,a}$-th iteration. According to line~\ref{break_iteration_explore_agent} of \pref{alg:explore_agent}, the iteration number cannot go beyond $j_{m,a}$. Moreover, $\{b_{m,a}^{(i)}\}_i$ is assorted in ascending order, and thus  $b_{m,a}^{(i)}\leq b_{m,a}^{(j_{m,a})}$ for all $i \leq j_{m,a}$.
Conditioning on $\calE$, we repeat this argument for all $m,a$ to complete the proof.
\end{proof}

\begin{lemma} \label{lem:UB_bar_bma_explore}
Suppose that $\calE$ occurs.
For each phase $m \geq 2$ and each arm $a \in \calA_m$, we have
\begin{equation*}
\forall t \in \bigcup_{i \leq j_{m,a}} \calT^{(i)}(a;\calA_m):\ \overline{b}_{m,a}^{(j_{m,a})} \leq \max_{b \in [K]} \hatmu_{b}(t) - \mu_{a} +\frac{4}{T}+\frac{\lceil \log_2 T \rceil}{T_{m-1}}+4\rbr{ \frac{\max\{1,|\calB_m|\}}{T_{m-1}|\calA_m|}  }^{\nicefrac{2}{3}}+\frac{4}{T_{m-1}}+ 5 \epsilon_m .
\end{equation*}
\end{lemma}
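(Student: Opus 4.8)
The plan is to mirror the structure of the proof of \pref{lem:UB_bar_bma} from the non-exploratory i.i.d.\ case, but to absorb all fluctuations of the empirical means into the single quantity $\epsilon_m$. Fix a phase $m \geq 2$ and an active arm $a \in \calA_m$, and let $t_{m,a}^{(j_{m,a})}$ denote the round at which \pref{alg:new_binary_search} ends the successful (exploration-free) call that outputs $b_{m,a}^{(j_{m,a})}$. First I would drop the minimum in the definition of $\overline{b}_{m,a}^{(j_{m,a})}$ to get $\overline{b}_{m,a}^{(j_{m,a})} \leq b_{m,a}^{(j_{m,a})} + \rbr{\frac{\max\{1,|\calB_m|\}}{T_{m-1}|\calA_m|}}^{\nicefrac{2}{3}} + \frac{1}{T_{m-1}} + 4\epsilon_m$, and then invoke \pref{lem:search_end_bma_diff_explore} to replace $b_{m,a}^{(j_{m,a})}$ by $\pi^\star_a(t_{m,a}^{(j_{m,a})}) = \max_{b\in[K]}\hatmu_b(t_{m,a}^{(j_{m,a})}) - \hatmu_a(t_{m,a}^{(j_{m,a})})$ up to the search error $\frac{4}{T} + \frac{\lceil \log_2 T\rceil}{N_a(t_{m,a}^{(j_{m,a})})} + \frac{2}{\min_{i}N_i(t_{m,a}^{(j_{m,a})})}$.

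Next I would convert the play-count quantities into the claimed terms. Since $a$ is active, \pref{lem:sufficient_plays_active_arms} gives $N_a(t_{m,a}^{(j_{m,a})}) \geq T_{m-1}$, so $\frac{\lceil \log_2 T\rceil}{N_a(t_{m,a}^{(j_{m,a})})} \leq \frac{\lceil \log_2 T\rceil}{T_{m-1}}$; and \pref{lem:sufficient_plays_bad_arms}, together with the active-arm count, lower bounds $\min_i N_i(t_{m,a}^{(j_{m,a})})$ by the bad-arm play count, so $\frac{2}{\min_i N_i(t_{m,a}^{(j_{m,a})})} = \order\rbr{\rbr{\frac{\max\{1,|\calB_m|\}}{T_{m-1}|\calA_m|}}^{\nicefrac{2}{3}}}$ after discarding the $\log^{\nicefrac{1}{3}}(\cdot) \geq 1$ factor.

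The crux is the transfer from the search-end round $t_{m,a}^{(j_{m,a})}$ to an arbitrary testing round $t \in \bigcup_{i \leq j_{m,a}} \calT^{(i)}(a;\calA_m)$, since the empirical means drift between these rounds and exploration may have perturbed the agent's plays. Here I would control both $\max_b\hatmu_b$ and $\hatmu_a$ through their true means: conditioning on $\calE$ (hence $\calE_0$), \pref{eq:LB_inverse_minNit} certifies $\sqrt{\frac{\log(16KT/\delta)}{2N_b(t_{m,a}^{(j_{m,a})})}} \leq \epsilon_m$ for every arm at the search-end round, so that $\max_b\hatmu_b(t_{m,a}^{(j_{m,a})}) \leq \max_b\mu_b + \epsilon_m$ and $-\hatmu_a(t_{m,a}^{(j_{m,a})}) \leq -\mu_a + \epsilon_m$; meanwhile the interval bound \pref{eq:interval_hatmut} gives $\max_b\mu_b \leq \max_b\hatmu_b(t) + \epsilon_m$ at the target round (equivalently, one may quote \pref{eq:diff_maxhatmu_explore}). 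Chaining these replaces $\max_b\hatmu_b(t_{m,a}^{(j_{m,a})}) - \hatmu_a(t_{m,a}^{(j_{m,a})})$ by $\max_b\hatmu_b(t) - \mu_a$ at a cost of $\order(\epsilon_m)$.

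Collecting the search error, the play-count conversions, the definitional enlargement, and the transfer cost then yields the stated bound. The main obstacle is precisely this uniform transfer step: one must verify that \emph{every} arm, including bad arms played only $Z_m$ times before the search, has accumulated enough samples for \pref{eq:LB_inverse_minNit} and \pref{eq:interval_hatmut} to hold both at the search-end round and at all testing rounds, and that the two relevant deviation scales are exactly $\rbr{\log(16KT/\delta)\frac{\max\{1,|\calB_m|\}}{T_{m-1}|\calA_m|}}^{\nicefrac{1}{3}}$ and $\sqrt{\frac{\log(16KT/\delta)}{T_{m-1}}}$, which is what $\epsilon_m$ is engineered to dominate simultaneously; this is what lets the single quantity $\epsilon_m$ absorb both the minimum-play fluctuation and the Hoeffding fluctuation at once.
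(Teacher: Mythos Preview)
Your proposal is correct and follows essentially the same route as the paper's proof: drop the minimum in the definition of $\overline{b}_{m,a}^{(j_{m,a})}$, apply \pref{lem:search_end_bma_diff_explore}, convert the play-count denominators via \pref{lem:sufficient_plays_active_arms} and \pref{lem:sufficient_plays_bad_arms}, and transfer from the search-end round to an arbitrary testing round using the interval bounds \pref{eq:interval_hatmut}/\pref{eq:LB_inverse_minNit}. The paper inserts an additional one-step shift from $t_{m,a}^{(j_{m,a})}$ to $t_{m,a}^{(j_{m,a})}+1$ via \pref{lem:per_update_shift} before applying the interval bounds, which is what produces the specific coefficients $4(\cdot)^{2/3}$ and $\tfrac{4}{T_{m-1}}$ in the statement; your direct transfer would yield slightly different (but equally $\order(1)$) constants, so if you want to land exactly on $5\epsilon_m$ rather than, say, $7\epsilon_m$, you would need to track that extra step.
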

\begin{proof}
Consider fixed $m,a$.
Let $t_{m,a}^{(j_{m,a})}$ be the round that \pref{alg:new_binary_search} ends the search for arm $a$ at round $m$, which outputs $b_{m,a}^{(j_{m,a})}$.
For all $t \in \cup_{i \leq j_{m,a}} \calT^{(i)}(a;\calA_m)$, we use a similar reasoning of \pref{lem:UB_bar_bma} with counterparts \pref{lem:sufficient_plays_active_arms}, \pref{lem:sufficient_plays_bad_arms} and \pref{eq:interval_hatmut} to show
\begin{align*}
    \overline{b}_{m,a}^{(j_{m,a})}
    & \leq   b_{m,a}^{(j_{m,a})} + \rbr{ \frac{\max\{1,|\calB_m|\}}{T_{m-1}|\calA_m|}  }^{\nicefrac{2}{3}}+\frac{1}{T_{m-1}}+ 4 \epsilon_m  \\
     &\leq \pi^\star_a(t_{m,a}^{(j_{m,a})})+ \frac{4}{T}+\frac{\lceil \log_2 T \rceil}{T_{m-1}}+ 3\rbr{ \frac{\max\{1,|\calB_m|\}}{T_{m-1}|\calA_m|}  }^{\nicefrac{2}{3}}+\frac{3}{T_{m-1}}+ 4 \epsilon_m \\
    &= \max_{b \in [K]} \hatmu_{b}(t_{m,a}^{(j_{m,a})}) - \hatmu_{a}(t_{m,a}^{(j_{m,a})})+ \frac{4}{T}+\frac{\lceil \log_2 T \rceil}{T_{m-1}}+3\rbr{ \frac{\max\{1,|\calB_m|\}}{T_{m-1}|\calA_m|}  }^{\nicefrac{2}{3}}+\frac{3}{T_{m-1}}+ 4 \epsilon_m \\
    & \leq   \max_{b \in [K]} \hatmu_{b}(t_{m,a}^{(j_{m,a})}+1) - \hatmu_{a}(t_{m,a}^{(j_{m,a})}+1) +\frac{4}{T}+\frac{\lceil \log_2 T \rceil}{T_{m-1}}+4\rbr{ \frac{\max\{1,|\calB_m|\}}{T_{m-1}|\calA_m|}  }^{\nicefrac{2}{3}}+\frac{4}{T_{m-1}}+ 4 \epsilon_m  \\
    & \leq    \max_{b \in [K]} \hatmu_{b}(t) - \mu_{a} +\frac{4}{T}+\frac{\lceil \log_2 T \rceil}{T_{m-1}}+4\rbr{ \frac{\max\{1,|\calB_m|\}}{T_{m-1}|\calA_m|}  }^{\nicefrac{2}{3}}+\frac{4}{T_{m-1}}+ 5 \epsilon_m.
\end{align*}

Conditioning on $\calE$, the argument holds for each $m \geq 2$ and $a \in \calA_m$, and thus the proof is complete.
\end{proof}

\begin{lemma} \label{lem:UB_bar_bma_explore_V2}
Suppose that $\calE$ occurs.
For each phase $m \geq 2$ and each arm $a \in \calA_m$, we have
\begin{equation*}
\forall t \in \bigcup_{i \leq j_{m,a}} \calT^{(i)}(a;\calA_m):\ \overline{b}_{m,a}^{(j_{m,a})} \leq \max_{b \in [K]} \mu_{b} - \mu_{a} +\frac{4}{T}+\frac{\lceil \log_2 T \rceil}{T_{m-1}}+4\rbr{ \frac{\max\{1,|\calB_m|\}}{T_{m-1}|\calA_m|}  }^{\nicefrac{2}{3}}+\frac{4}{T_{m-1}}+ 6 \epsilon_m .
\end{equation*}
\end{lemma}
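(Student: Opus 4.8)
The plan is to obtain this as an immediate corollary of \pref{lem:UB_bar_bma_explore} by trading the empirical maximum $\max_{b \in [K]} \hatmu_b(t)$ appearing there for the true maximum $\max_{b \in [K]} \mu_b$, which costs exactly one additional $\epsilon_m$. I would fix a phase $m \geq 2$ and an active arm $a \in \calA_m$, and work conditioned on the event $\calE$.

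The key observation is that the empirical means of \emph{all} arms (not merely the active ones) are simultaneously well-concentrated during the relevant rounds. Concretely, \pref{eq:interval_hatmut} states that for every $b \in [K]$ and every $t \in \bigcup_{i \leq j_{m,a}} \calT^{(i)}(a;\calA_m)$ we have $\hatmu_b(t) \in [\mu_b - \epsilon_m, \mu_b + \epsilon_m]$. This uniform control across all arms is precisely what the bad-arm stabilization buys us: \pref{lem:sufficient_plays_bad_arms} guarantees that every bad arm is played enough times and \pref{lem:sufficient_plays_active_arms} that every active arm is played at least $T_{m-1}$ times, so that $\min_{i \in [K]} N_i(t)$ is large and \pref{eq:LB_inverse_minNit} bounds the confidence radius $\sqrt{\log(16KT/\delta)/(2\min_{i \in [K]} N_i(t))}$ by $\epsilon_m$. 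Maximizing the upper endpoint over $b$ then gives
\[
\max_{b \in [K]} \hatmu_b(t) \leq \max_{b \in [K]} \mu_b + \epsilon_m.
\]

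Substituting this inequality into the bound of \pref{lem:UB_bar_bma_explore} replaces $\max_{b \in [K]} \hatmu_b(t)$ by $\max_{b \in [K]} \mu_b + \epsilon_m$, turning the trailing $5\epsilon_m$ into $6\epsilon_m$ while leaving every other term untouched; this is exactly the claimed bound. Conditioning on $\calE$, the argument is valid for all $m \geq 2$ and $a \in \calA_m$, which completes the proof.

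There is no substantial obstacle here: the statement is a one-line strengthening of \pref{lem:UB_bar_bma_explore}. The only point demanding care is verifying that the concentration \pref{eq:interval_hatmut} genuinely holds for the bad arms as well, since they are not searched over; this is underwritten by the stabilization step through \pref{lem:sufficient_plays_bad_arms} and \pref{eq:LB_inverse_minNit}, and the rest is a direct substitution.
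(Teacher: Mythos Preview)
Your proposal is correct and matches the paper's own proof essentially line for line: the paper also fixes $m,a$, invokes \pref{eq:interval_hatmut} to get $\max_{b\in[K]}\hatmu_b(t)\le \max_{b\in[K]}\mu_b+\epsilon_m$, and plugs this into \pref{lem:UB_bar_bma_explore} to convert the $5\epsilon_m$ into $6\epsilon_m$. Your extra remarks on why \pref{eq:interval_hatmut} applies to bad arms are accurate and simply make explicit what the paper leaves implicit.
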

\begin{proof}
Consider fixed $m,a$. Let $b_{t} \in \argmax_{a \in [K]} \hatmu_a(t)$.
For any $t \in \bigcup_{i \leq j_{m,a}} \calT^{(i)}(a;\calA_m)$, we can use \pref{eq:interval_hatmut} to bound
\begin{align*}
 \hatmu_{b_{t}}(t) \leq \mu_{b_{t}} +\epsilon_m \leq \max_{a \in [K]} \mu_a +\epsilon_m.
\end{align*}

Pluggin the above into \pref{lem:UB_bar_bma_explore}, we show the claim for the fixed $m,a$.
Conditioning on $\calE$, the argument holds for each $m \geq 2$ and $a \in \calA_m$, and thus the proof is complete.
\end{proof}

\begin{lemma} \label{lem:optarm_always_active_explore}
Suppose event $\calE$ occurs. For all $m \in \naturalnum$, $a^{\star} \in \calA_m$ holds.   
\end{lemma}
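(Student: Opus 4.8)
The plan is to reproduce the inductive argument of \pref{lem:optarm_always_active}, but with the direct online comparison replaced by the median-based guarantee of \pref{lem:median_help_check}, since that is the elimination rule \pref{alg:explore_agent} actually runs. I would induct on $m$. The base case $m=1$ is immediate because $\calA_1=[K]\ni a^\star$. For the inductive step, assuming $a^\star\in\calA_m$, it suffices to show that $a^\star$ satisfies condition \pref{eq:elimination_condition} at \emph{every} round $t$ of its elimination window $[t^0_{m,a^\star},\,t^0_{m,a^\star}+8\log(8K\delta^{-1}\log_2 T)]$; then \pref{lem:median_help_check} immediately yields $a^\star\in\calA_{m+1}$, closing the induction.

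To verify \pref{eq:elimination_condition}, I would substitute the elimination incentives from line~\ref{incentive_elimination_explore_agent}, namely $\pi_{a^\star}(t)=1+\htheta_{a^\star}(t^0_{m,a^\star})+5\sqrt{\frac{\log(16KT/\delta)}{2T_m}}$ and $\pi_b(t)=1+\htheta_b(t^0_{m,a^\star})$ for $b\in\calA_m-\{a^\star\}$, so that the condition reduces to proving, for every such $b$ and every $t$ in the window,
\[
\htheta_b(t^0_{m,a^\star})+\hatmu_b(t) < \htheta_{a^\star}(t^0_{m,a^\star})+\hatmu_{a^\star}(t)+5\sqrt{\tfrac{\log(16KT/\delta)}{2T_m}}.
\]
Here I would invoke \pref{lem:sufficient_plays_active_arms}, which guarantees every active arm (in particular $a^\star$ and each $b$, both in $\calA_m$ by the induction hypothesis) has been played at least $T_m$ times before the elimination period begins, and these counts only grow during the window. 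Consequently, on $\calE_0$ each of the four relevant confidence widths is at most $\sqrt{\frac{\log(16KT/\delta)}{2T_m}}$. Combining the four concentration bounds lower-bounds the right-minus-left gap by $(\theta_{a^\star}+\mu_{a^\star})-(\theta_b+\mu_b)+5\sqrt{\frac{\log(16KT/\delta)}{2T_m}}-4\sqrt{\frac{\log(16KT/\delta)}{2T_m}}$; since $a^\star\in\argmax_a\{\theta_a+\mu_a\}$ the leading difference is nonnegative, leaving a strictly positive residual $\sqrt{\frac{\log(16KT/\delta)}{2T_m}}>0$.

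The main obstacle is the bookkeeping mismatch in the incentive: the principal's estimate $\htheta$ is frozen at the window start $t^0_{m,a^\star}$ while $\hatmu$ is read at the running round $t$, so I must argue the width bound applies uniformly over the window rather than at a single instant. This is exactly where I rely on the monotonicity of the play counts—during $a^\star$'s test $N_{a^\star}(t)$ increases and the non-target counts $N_b(t)$ remain at least $T_m$—so every confidence width stays below $\sqrt{\frac{\log(16KT/\delta)}{2T_m}}$ throughout. Once \pref{eq:elimination_condition} is established for the entire window, \pref{lem:median_help_check} gives $a^\star\in\calA_{m+1}$, and since all of this is conditioned on $\calE$ (which contains $\calE_0$ and the guarantee of \pref{lem:median_help_check}), the conclusion holds for all $m$ simultaneously, completing the proof.
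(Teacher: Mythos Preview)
Your proposal is correct and follows essentially the same approach as the paper: induct on $m$, use \pref{lem:sufficient_plays_active_arms} to guarantee $N_b(t)\geq T_m$ for every $b\in\calA_m$ throughout the elimination window, apply the four concentration bounds from $\calE_0$ to reduce the incentive comparison to $\Delta_b+\sqrt{\frac{\log(16KT/\delta)}{2T_m}}>0$, and then invoke \pref{lem:median_help_check} to conclude $a^\star\in\calA_{m+1}$. Your handling of the frozen-$\htheta$/running-$\hatmu$ mismatch via monotonicity of play counts is exactly what the paper relies on implicitly.
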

\begin{proof}
We prove the claim by the induction. For $m=1$, the claim trivially holds. Suppose the claim holds for $m$, and then we consider phase $m+1$. For each round $t  \in [t^0_{m,a^\star},t^0_{m,a^\star}+8\log(8K\delta^{-1}\log_2 T)]$ and each $a \in \calA_m-\{a^\star\}$, we have
\begin{align*}
0 &\leq \theta_{a^{\star}}+\mu_{a^{\star}} -(\theta_{a}+\mu_{a}) \\
&\leq \htheta_{a^{\star}}(t^0_{m,a^\star})+\hatmu_{a^{\star}}(t) -(\htheta_{a}(t^0_{m,a^\star})+\hatmu_{a}(t) ) +4\sqrt{\frac{\log(16KT/\delta)}{2T_{m}}}\\
&= \pi_{a^{\star}}(t)+\hatmu_{a^{\star}}(t) -(\hatmu_{a}(t)+\pi_{a}(t) ) -\sqrt{\frac{\log(16KT/\delta)}{2T_{m}}},
\end{align*}
where the second inequality holds since \pref{lem:sufficient_plays_active_arms} gives that in each phase $m$, every active arm is played for $T_m$ times before the elimination starts, and $a^\star$ is active in phase $m$ by the hypothesis induction, and the equality holds since when testing arm $a^\star$, $\pi_{a^\star}(t)=1+\htheta_{a^\star}(t^0_{m,a^\star})+5\sqrt{\frac{\log(16KT/\delta)}{2T_{m}}}$, and the incentives of all other active arms $a \in \calA_m-\{a^\star\}$ are all equal to $1+\htheta_{a}(t^0_{m,a^\star})$.
Since the above holds for all $a \in \calA_m$, we rearrange it to get
\[
\pi_{a^{\star}}(t)+\hatmu_{a^{\star}}(t) \geq \max_{a \in \calA_m-\{a^\star\}} \{\hatmu_{a}(t)+\pi_{a}(t)  \} +\sqrt{\frac{\log(16KT/\delta)}{2T_{m}}}>\max_{a \in \calA_m-\{a^\star\}} \{\hatmu_{a}(t)+\pi_{a}(t)\} .
\]

Conditioning on $\calE$, we use \pref{lem:median_help_check} to get $\optarm \in \calA_{m+1}$.
Once the induction done, the proof is complete.
\end{proof}

\begin{lemma} \label{lem:upperbound_pull_explore}
Let $m_a$ be the smallest phase such that $\Delta_a > 9\sqrt{\frac{\log(8KT/\delta)}{2T_{m_a}}}$.
Suppose that $\calE$ occurs. For each arm $a$ with $\Delta_a>0$, it will not be in $\calA_m$ for all phases $m \geq m_a+1$.
\end{lemma}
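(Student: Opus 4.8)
The plan is to mirror the non-exploratory argument of \pref{lem:upperbound_pull}, replacing its deterministic online-elimination check with the median-based guarantee of \pref{lem:median_help_check}. As in that proof, I would only need to treat the case $a \in \calA_{m_a}$: if $a$ has already been moved to the bad set in some earlier phase, then since $\calB_m$ only grows across phases, $a \notin \calA_m$ for every $m \geq m_a+1$ automatically. Note $\Delta_a>0$ forces $a \neq \optarm$, and by \pref{lem:optarm_always_active_explore} we have $\optarm \in \calA_{m_a}$, so $\optarm \in \calA_{m_a}-\{a\}$ and may be used as a concrete competitor inside the maximum in \pref{eq:elimination_condition_opposite}.

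First I would fix the elimination window for arm $a$ in phase $m_a$, i.e. the rounds $t \in [t^0_{m_a,a},\, t^0_{m_a,a}+8\log(8K\delta^{-1}\log_2 T)]$, and insert the incentives prescribed in line~\ref{incentive_elimination_explore_agent}, namely $\pi_a(t)=1+\htheta_a(t^0_{m_a,a})+5\sqrt{\frac{\log(16KT/\delta)}{2T_{m_a}}}$ and $\pi_b(t)=1+\htheta_b(t^0_{m_a,a})$ for $b \in \calA_{m_a}-\{a\}$. Lower-bounding the maximum by the $\optarm$ term and cancelling the common ``$1$'' reduces the quantity in \pref{eq:elimination_condition_opposite} to
\begin{align*}
&\max_{b \in \calA_{m_a}-\{a\}}\rbr{\pi_b(t)+\hatmu_b(t)} - \rbr{\pi_a(t)+\hatmu_a(t)}\\
&\geq \rbr{\htheta_{\optarm}(t^0_{m_a,a})+\hatmu_{\optarm}(t)} - \rbr{\htheta_a(t^0_{m_a,a})+\hatmu_a(t)} - 5\sqrt{\frac{\log(16KT/\delta)}{2T_{m_a}}}.
\end{align*}

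Next I would invoke $\calE$ (specifically the event $\calE_0$) together with \pref{lem:sufficient_plays_active_arms}, which guarantees every active arm is played at least $T_{m_a}$ times before the elimination starts, so that $N_{\optarm}(t),N_a(t)\geq T_{m_a}$ and each of the four empirical quantities lies within $\sqrt{\log(16KT/\delta)/(2T_{m_a})}$ of its true mean \emph{uniformly} over every round $t$ in the window. Substituting these concentration bounds turns the right-hand side into $\theta_{\optarm}+\mu_{\optarm}-(\theta_a+\mu_a)-9\sqrt{\log(16KT/\delta)/(2T_{m_a})}=\Delta_a-9\sqrt{\log(16KT/\delta)/(2T_{m_a})}$, which is strictly positive by the defining property of $m_a$. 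Hence \pref{eq:elimination_condition_opposite} holds at every $t$ in the window, and \pref{lem:median_help_check} yields $a \in \calB_{m_a+1}$, i.e. $a \notin \calA_{m_a+1}$; monotonicity of $\calB_m$ then propagates this to all $m \geq m_a+1$, completing the proof.

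The main obstacle is establishing the inequality \emph{uniformly} over the whole elimination window rather than at a single round: because the decision rests on $\texttt{Median}(\calL_{m_a,a})$, \pref{lem:median_help_check} applies only when \pref{eq:elimination_condition_opposite} holds at \emph{every} round $t$. This is precisely why the bound must be routed through $N(t)\geq T_{m_a}$ (uniform in $t$) instead of the tighter per-round counts, and why the slack $5\sqrt{\log(16KT/\delta)/(2T_{m_a})}$ was baked into $\pi_a$: it must absorb the cumulative concentration error of both $\htheta$ (frozen at $t^0_{m_a,a}$) and the drifting $\hatmu(t)$ so that the net gap stays at least $\Delta_a-9\sqrt{\log(16KT/\delta)/(2T_{m_a})}$ throughout.
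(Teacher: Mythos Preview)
Your proposal is correct and follows essentially the same argument as the paper's own proof: reduce to $a\in\calA_{m_a}$, lower-bound the max in \pref{eq:elimination_condition_opposite} by the $\optarm$ term via \pref{lem:optarm_always_active_explore}, substitute the line~\ref{incentive_elimination_explore_agent} incentives, apply $\calE_0$ together with \pref{lem:sufficient_plays_active_arms} to get the $9\sqrt{\log(16KT/\delta)/(2T_{m_a})}$ slack, and finish with \pref{lem:median_help_check}. Your explicit emphasis on uniformity over the entire elimination window (needed for the median rule) is a nice clarification the paper leaves implicit.
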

\begin{proof}
Consider any arm $a$ with $\Delta_a>0$.
We only need to consider $a \in \calA_{m_a}$ and otherwise, the claim naturally holds.
For any round $t  \in [t^0_{m,a},t^0_{m,a}+8\log(8K\delta^{-1}\log_2 T)]$, we have
\begin{align*}
&\max_{b \in \calA_{m_a}-\{a\} }\cbr{\hatmu_{b}(t) +\pi_b(t) }- \rbr{\hatmu_a(t)+\pi_a(t)} \\
&\geq \rbr{\hatmu_{a^\star}(t) +\pi_{a^\star}(t) }- \rbr{\hatmu_a(t)+\pi_a(t)} \\
    &= \hatmu_{\optarm}(t) +\htheta_{\optarm}(t^0_{m,a}) - \rbr{\hatmu_a(t)+\htheta_a(t^0_{m,a})} -5\sqrt{\frac{\log(16KT/\delta)}{2T_{m_a}}}\\
    &\geq   \Delta_a -9\sqrt{\frac{\log(16KT/\delta)}{2T_{m_a}}} \\
    & > 0,
\end{align*}    
where the first inequality uses $a\neq a^\star$ and \pref{lem:optarm_always_active_explore} that $\optarm \in \calA_{m_a}$, the equality holds since when testing arm $a$, $\pi_{a}(t)=1+\htheta_{a}(t^0_{m,a})+5\sqrt{\frac{\log(16KT/\delta)}{2T_{m}}}$, and the incentives of all other (non-target) active arms $b \in \calA_m$ are all equal to $1+\htheta_{b}(t^0_{m,a})$, and the second inequality holds since \pref{lem:sufficient_plays_active_arms} gives that in each phase $m$, every active arm is played for $T_m$ times before the elimination starts.

According to \pref{lem:median_help_check}, arm $a$ will not be in phases 
$m$ for all $m \geq m_a+1$.
\end{proof}

\newpage

\section{Omitted Proof of Self-interested Oracle-Agent with Exploration} 
\label{app:oracle_agent_explore}

\setcounter{AlgoLine}{0}
\begin{algorithm}[!htb]
\DontPrintSemicolon
\caption{Proposed algorithm for self-interested oracle-agent with exploration}\label{alg:explore_oracle_agent}

\textbf{Input}: confidences $\delta \in (0,1)$, horizon $T$.

\textbf{Initialize}: active arm set $\calA_1=[K]$, bad arm set $\calB_1=\emptyset$, $T_0=1$.

\nl \For{$m=1,2,\ldots$}{

\nl Set $T_m$ based on \pref{eq:def_Tm_explore}.

\nl \For(\myComment{  \underline{Search Incentives repeatedly} }){$a\in \calA_m$}{
\nl Use binary search for $2\log(4\log_2T/\delta)$ times and sort outputs such that
\[
b_{m,a}^{(1)} \leq b_{m,a}^{(2)}\leq \cdots \leq  b_{m,a}^{(2\log(4\log_2T/\delta))}.
\] 

\nl Set $\overline{b}_{m,a}^{(i)}=b_{m,a}^{(i)}+T^{-1}$ for all $i$.

}

\nl \For{$a \in \calA_m$}{

\nl \For(\myComment{\underline{Incentive testing}}){$i=1,\ldots,2\log(4\log_2T/\delta)$}{

\nl Set a counter $c_{m,a}^{(i)}=0$ and $Y_{m,a}^{(i)}=0$.

\nl  \Repeat{$c_{m,a}^{(i)}> 2c_0 \sqrt{Y_{m,a}^{(i)} \log(2T)} +\sqrt{\frac{ 8\log(\iota)}{Y_{m,a}^{(i)} }}$ \textbf{or} $Y_{m,a}^{(i)}= 2T_m$   }{

\nl Propose incentives $\pi^0(a;\overline{b}_{m,a}^{(i)})$ and denote the current round by $t$. 

\nl Update $Y_{m,a}^{(i)}=Y_{m,a}^{(i)}+1$, and if $A_t\neq a$, update $c_{m,a}^{(i)}=c_{m,a}^{(i)}+1$.
}

\nl If $\sum_{j\leq i}(Y_{m,a}^{(j)}-c_{m,a}^{(j)})\geq T_m$, then break the loop for $i$.
\label{break_iteration_oracle_agent}
}

}

\nl \For(\myComment{\underline{Trustworthy online elimination}}){$a \in \calA_m$
}{
\nl Set $\calL_{m,a}=\emptyset$.

\nl Let $t^0_{m,a}$ be the current round, and $\{\htheta_a(t^0_{m,a})\}_{a \in [K]}$ are empirical means at this round.

\nl \For{$t=t^0_{m,a},\ldots,t^0_{m,a}+8\log(8K\log_2 T /\delta)$}{

\nl Propose incentives $\pi(t)$ with $\pi_a(t)=1+\htheta_a(t^0_{m,a})+3\sqrt{\frac{\log(8KT/\delta)}{2T_{m}}}$,  $\pi_b(t)=1+\htheta_b(t^0_{m,a})$, $\forall b \in \calA_m-\{a\}$, and $\pi_i(t)=0$, $\forall i \in \calB_m$. 

\nl If $A_t\neq a$, then update $\calL_{m,a} = \calL_{m,a} \cup \{0\}$; else $\calL_{m,a} = \calL_{m,a} \cup \{1\}$.
}

\nl Sort $\calL_{m,a}$ in ascending order.

\nl \If{$\texttt{Median}(\calL_{m,a})=0$}{
\nl Update $\calA_{m+1} = \calA_m-\{a\}$ and $\calB_{m+1} = \calB_m\cup \{a\}$.
}

}

}    
\end{algorithm}

\subsection{Notations} 

Throughout the proof in this section, we follow the exactly same notations used in \pref{app:notation_def_explore}. 

\subsection{Construction of Nice Event $\calE$}

All lemmas in this section follows exactly the same argument used in \pref{app:constr_nice_event_explore}.

Let us define event $\calE_0$ as
\begin{equation}
    \calE_0:= \cbr{\forall (t, a) \in [T]\times [K]: \abr{ \htheta_a(t)-\theta_a } \leq \sqrt{\frac{\log(8TK/\delta)}{2N_a(t)}} }.
\end{equation}

\begin{lemma} 
$\P(\calE_0) \geq 1-\delta/4$ holds.
\end{lemma}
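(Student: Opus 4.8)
The plan is to reduce this to the standard Hoeffding concentration bound followed by a union bound, exactly as asserted (without detail) in the proof of \pref{lem:event_high_prob_MAB}. The only structural point requiring care is that $N_a(t)$ is a random variable adapted to the reward sequence, so I cannot apply Hoeffding's inequality directly to $\htheta_a(t)$ with a data-dependent number of summands. The standard remedy is to discretize over the number of plays and take a union over deterministic counts.

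First I would fix an arm $a \in [K]$ and a deterministic count $n \in [T]$, and let $\htheta_{a,n}$ denote the empirical mean of the principal's rewards formed from the first $n$ pulls of arm $a$. Since each such reward lies in $[0,1]$ and has mean $\theta_a$, and the rewards across pulls of a fixed arm are i.i.d., Hoeffding's inequality gives
\[
\P\rbr{ \abr{\htheta_{a,n} - \theta_a} > \sqrt{\frac{\log(8TK/\delta)}{2n}} } \leq 2\exp\rbr{-2n \cdot \frac{\log(8TK/\delta)}{2n}} = \frac{\delta}{4TK}.
\]
Next I would take a union bound over all $a \in [K]$ and all $n \in \{1,\ldots,T\}$, contributing a total failure probability of at most $KT \cdot \frac{\delta}{4TK} = \frac{\delta}{4}$. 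On the complementary event, for every arm $a$ and every count $n$ we simultaneously have $\abr{\htheta_{a,n}-\theta_a} \leq \sqrt{\log(8TK/\delta)/(2n)}$. Since $N_a(t) \in \{1,\ldots,T\}$ whenever arm $a$ has been played before round $t$ and $\htheta_a(t)=\htheta_{a,N_a(t)}$, substituting the realized value $n = N_a(t)$ shows that $\calE_0$ holds on this event, yielding $\P(\calE_0) \geq 1-\delta/4$.

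The argument is routine, so there is no genuine obstacle; the one subtlety worth flagging explicitly is the replacement of the random index $N_a(t)$ by the maximal union over deterministic counts, which is what makes a single application of Hoeffding's inequality legitimate despite $N_a(t)$ being a stopping time. The degenerate case $N_a(t)=0$ is excluded since the stated radius is only meaningful, and only invoked in the subsequent analysis, once an arm has been pulled at least once.
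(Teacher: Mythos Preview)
Your proof is correct and follows exactly the route the paper indicates (``By Hoeffding's inequality and invoking union bound''); you simply make explicit the standard step of taking the union over deterministic sample counts $n\in[T]$ to handle the random $N_a(t)$, which the paper leaves implicit.
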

\begin{proof}
    By Hoeffding's inequality and invoking union bound, one can obtain the desired claim.
\end{proof}

\begin{lemma} \label{lem:total_error_plays_oracle}
With probability at least $1-\delta/4$, for all phases $m$, all active arms $a \in \calA_m$, and all iterations $i$,
\[
\sum_{t \in \calT^{(i)}(a;\calA_m)}  I_t \leq 2c_0 \sqrt{Y_{m,a}^{(i)} \log(2T)} +\sqrt{\frac{ 8\log(\iota)}{Y_{m,a}^{(i)} }}.
\]
\end{lemma}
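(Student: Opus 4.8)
The plan is to mirror the active-arm half of the proof of \pref{lem:total_error_plays}, since \pref{alg:explore_oracle_agent} never runs a bad-arm stabilization step and therefore only the active-arm bound is required here. Fix a phase $m$, an active arm $a\in\calA_m$, and an iteration index $i$. Because the agent explores at round $t$ with conditional probability $p_t=\E_t[I_t]$, the sequence $X_t := I_t-p_t$ indexed over the rounds of the $i$-th iteration is a bounded martingale difference sequence with $|X_t|\le 1$. The first, and only delicate, step is to decouple the data-dependent length $Y_{m,a}^{(i)}=|\calT^{(i)}(a;\calA_m)|$ from the concentration inequality: this length is a stopping time (it depends on the realized exploration pattern inside the iteration), so Azuma--Hoeffding cannot be applied directly to the realized random window. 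Instead I would fix an arbitrary deterministic block of consecutive rounds, apply the concentration there, and then union bound over all such blocks.

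Concretely, for a fixed deterministic set $\calT$ of consecutive rounds of length $n$, Azuma--Hoeffding for the martingale differences $\{X_t\}$ gives, with probability at least $1-\delta'$, the same form as in \pref{lem:total_error_plays}, namely $\sum_{t\in\calT} I_t \le \sum_{t\in\calT} p_t + \sqrt{8 n^{-1}\log(1/\delta')}$. I then bound the drift term using \pref{def:exploration_behavior}: since $p_t \le c_0\sqrt{t^{-1}\log(2t)} \le c_0\sqrt{\log(2T)/t}$ and $\sum_{t\in\calT} t^{-1/2} \le 2\sqrt{n}$, we get $\sum_{t\in\calT} p_t \le 2c_0\sqrt{n\log(2T)}$. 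Substituting $n=Y_{m,a}^{(i)}$ reproduces the claimed right-hand side for this single fixed window.

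The remaining work is purely an accounting step, identical to the one already performed for the active arms in \pref{lem:total_error_plays}. Because $\calT^{(i)}(a;\calA_m)$ is always a block of consecutive rounds, it is one of at most $T^2$ possible windows (determined by its start and end); there are at most $\log_2 T$ phases, at most $K$ active arms, and at most $2\log(4\log_2 T/\delta)$ iterations per arm. Choosing $\delta'$ so that $\log(1/\delta')=\log(\iota)$ with $\iota=16KT^2\log_2 T\log(4\log_2 T/\delta)\delta^{-1}$ makes the aggregate failure probability at most $\delta/4$, and the inequality then holds simultaneously for every triple $(m,a,i)$ and for whichever window the algorithm actually realizes, which is exactly the statement. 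I expect the only point requiring care is this decoupling of the random stopping time $Y_{m,a}^{(i)}$ from the martingale concentration via the union over all $O(T^2)$ deterministic consecutive windows; the drift bound and the counting of triples are routine.
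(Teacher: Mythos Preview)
Your proposal is correct and takes essentially the same approach as the paper: the paper simply states that the oracle-agent lemma follows ``exactly the same argument'' as the active-arm half of \pref{lem:total_error_plays}, which is precisely the union-bound-over-deterministic-windows argument you describe. Your explicit handling of the stopping-time issue and the drift bound $\sum_{t\in\calT} t^{-1/2}\le 2\sqrt{n}$ are in fact more carefully spelled out than in the paper's original proof.
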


\begin{lemma} \label{lem:repeated_calls_search_oracle}
With probability at least $1-\delta/4$, 
for all $m \geq 2$, among total $2\log(4\log_2 T/\delta)$ calls,
there will be at least one call of \pref{alg:new_binary_search} such that the agent makes no exploration.
\end{lemma}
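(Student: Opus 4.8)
The plan is to reproduce, essentially verbatim, the probability-amplification argument of \pref{lem:repeated_calls_search}, since the statement is identical and only the search subroutine differs (the oracle-agent algorithm uses a plain binary search in place of \pref{alg:new_binary_search}). First I would fix a phase $m \geq 2$ and lower-bound the probability that the agent explores at no round during a single invocation of the binary search in that phase. The crucial structural fact is that a standard binary search on $[0,1]$, refining until the interval width drops below $1/T$, terminates after a \emph{deterministically} bounded number of rounds, at most $\lceil \log_2 T\rceil = \order(\log T)$; because this count is fixed in advance (one incentive proposed per halving step) it does not depend on the agent's realized actions, so it is valid whether or not exploration happens. This plays the role that \pref{lem:search_end_bma_diff_explore} played in the non-oracle proof.

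Next, since every round of a call in phase $m$ occurs at a time $t \geq T_{m-1}$ (each active arm of phase $m-1$ having been played at least $T_{m-1}$ times beforehand), and since $p_t \leq c_0\sqrt{t^{-1}\log(2t)}$ is decreasing in $t$ by \pref{def:exploration_behavior}, I would bound the whole-call no-exploration probability by a product and apply Bernoulli's inequality:
\[
\prod_{t}(1-p_t) \geq \rbr{1 - c_0\sqrt{\tfrac{\log(2T)}{T_{m-1}}}}^{\lceil \log_2 T\rceil} \geq 1 - \lceil \log_2 T\rceil \cdot c_0\sqrt{\tfrac{\log(2T)}{T_{m-1}}} \geq \tfrac{1}{2},
\]
where the product ranges over the rounds of the single call and the last inequality uses $T_{m-1} \geq T_1$, with $T_1$ taken from \pref{eq:def_Tm_explore} (at $m=1$) large enough that $4c_0^2\log(2T)\lceil \log_2 T\rceil^2 \leq T_1$, which is easily satisfied by the stated value. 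Interpreting this product as a genuine success probability requires reading the per-round non-exploration events sequentially along the history filtration, so that each ``trial'' succeeds with conditional probability at least $1/2$ given the past — precisely the hypothesis of \pref{lem:random_alg}.

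Finally I would invoke \pref{lem:random_alg} with success probability $p = \tfrac12$ and failure budget $\delta' = \delta/(4\log_2 T)$, so that $r = \log(1/\delta')/p = 2\log(4\log_2 T/\delta)$ trials — which matches the number of search calls the algorithm performs — guarantee at least one exploration-free call in the fixed phase $m$ with probability at least $1 - \delta/(4\log_2 T)$. A union bound over the at most $\log_2 T$ phases then upgrades this to the claimed $1-\delta/4$ guarantee holding simultaneously for all $m \geq 2$. I expect the only genuinely delicate step to be confirming the deterministic $\order(\log T)$ duration of the plain binary search in this exploratory setting (so the exponent above is exploration-independent) and checking that the constant hidden in $T_1$ via \pref{eq:def_Tm_explore} is indeed large enough to force the per-call product above $\tfrac12$; everything else is a direct transcription of the \pref{lem:repeated_calls_search} argument.
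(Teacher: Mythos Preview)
Your proposal is correct and follows essentially the same approach as the paper: the paper states that all lemmas in this section (including \pref{lem:repeated_calls_search_oracle}) follow exactly the same argument as their counterparts in \pref{app:constr_nice_event_explore}, i.e., the proof of \pref{lem:repeated_calls_search}. Your adaptation---replacing the $2\lceil\log_2 T\rceil$ duration bound of \pref{alg:new_binary_search} with the deterministic $\lceil\log_2 T\rceil$ duration of plain binary search, then applying Bernoulli's inequality, \pref{lem:random_alg} with $p=\tfrac12$, and a union bound over phases---is precisely what the paper intends.
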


\begin{lemma} \label{lem:median_help_check_oracle}
With probability at least $1-\delta/4$, for all phases $m$ and all $a \in \calA_m$, if arm $a$ satisfies that for all rounds $t \in [t^0_{m,a},t^0_{m,a}+8\log(8K\delta^{-1}\log_2 T)]$:
\[
\max_{b \in \calA_m-\{a\}} \rbr{\pi_b(t) +\mu_b} < \rbr{\pi_a(t) +\mu_a} ,
\]
then $a \in \calA_{m+1}$; if arm $a$ satisfies that for all rounds $t \in [t^0_{m,a},t^0_{m,a}+8\log(8K\delta^{-1}\log_2 T)]$:
\begin{equation*}
\max_{b \in \calA_m-\{a\}} \rbr{\pi_b(t) +\mu_b} >\rbr{\pi_a(t) +\mu_a} ,
\end{equation*}
then $a \in \calB_{m+1}$.
\end{lemma}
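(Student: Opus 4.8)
The plan is to transcribe the proof of \pref{lem:median_help_check} almost verbatim, replacing the agent's empirical-maximizer rule by the oracle-agent's true-maximizer rule. The only structural change is that in the present setting a non-exploring agent plays $\argmax_{b}\{\mu_b+\pi_b(t)\}$ rather than $\argmax_{b}\{\hatmu_b(t)+\pi_b(t)\}$, which is precisely why the hypotheses here are stated in terms of the true means $\mu_b$ instead of the empirical means $\hatmu_b(t)$. All of the probabilistic machinery---the per-round exploration bound, the martingale concentration, and the median argument---carries over without modification.

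First I would fix a phase $m$ and an active arm $a\in\calA_m$, and let $\calT^C_{m,a}$ be the set of rounds $t\in[t^0_{m,a},\,t^0_{m,a}+8\log(8K\delta^{-1}\log_2 T)]$ in which \pref{alg:explore_oracle_agent} runs its elimination loop for arm $a$. Writing $I_t$ for the exploration indicator and $\E_t[\cdot]=\E[\cdot\mid\calF_{t-1}]$, I would bound the per-round exploration probability using \pref{def:exploration_behavior}: since the elimination loop begins only after each active arm has been played at least $T_m$ times in the phase, every such round $t$ satisfies $p_t\le c_0\sqrt{\log(2T)/T_m}\le 1/4$, where the last step uses $T_m\ge T_1\ge 16c_0^2\log(2T)$.

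For the first implication I would let $L_{t,a}$ be the indicator of the event that $A_t=a$ at a round $t\in\calT^C_{m,a}$ during which the strict inequality $\max_{b\neq a}\{\mu_b+\pi_b(t)\}<\mu_a+\pi_a(t)$ holds. The crucial point is that under this strict inequality a non-exploring oracle agent is \emph{forced} to select arm $a$, so $\E_t[L_{t,a}]=1-p_t\ge 3/4$ and hence $\sum_{t\in\calT^C_{m,a}}\E_t[L_{t,a}]\ge 6\log(8K\log_2 T/\delta)$. Because $\texttt{Median}(\calL_{m,a})=0$ would require $\sum_{t}L_{t,a}\le 4\log(8K\log_2 T/\delta)$, an Azuma--Hoeffding bound on the martingale differences $L_{t,a}-\E_t[L_{t,a}]$ over the $8\log(8K\delta^{-1}\log_2 T)$ rounds shows that this deviation occurs with probability at most $\delta/(8K\log_2 T)$, so with high probability the median equals $1$ and $a\in\calA_{m+1}$. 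The second implication is entirely symmetric: I would redefine $L_{t,a}$ as the indicator that $A_t\neq a$ under the reversed strict inequality, which again has conditional expectation $1-p_t\ge 3/4$ and yields the same concentration bound, forcing $a\in\calB_{m+1}$.

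Finally I would take a union bound over the at most $K$ arms, the at most $\log_2 T$ phases, and the two implications, accumulating to total failure probability at most $\delta/4$. I do not foresee a genuine obstacle, as the argument is a direct adaptation of \pref{lem:median_help_check}; the only step requiring care is confirming that the \emph{strict} inequality in each hypothesis is exactly what pins down the non-exploring agent's choice, so that $\E_t[L_{t,a}]$ equals $1-p_t$ rather than something smaller. The boundary case of equality is excluded by the hypotheses and, as in the analogous remark following \pref{lem:median_help_check}, is immaterial to the regret analysis.
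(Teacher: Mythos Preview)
Your proposal is correct and mirrors exactly what the paper does: it states that all lemmas in this subsection follow the same argument as in \pref{app:constr_nice_event_explore}, so the intended proof is precisely the verbatim adaptation of \pref{lem:median_help_check} with $\mu_b$ in place of $\hatmu_b(t)$ that you outline. Your identification of the one substantive change---that the strict inequality now pins down the non-exploring \emph{oracle} agent's action via the true means---and your handling of the $p_t\le 1/4$ bound, the Azuma--Hoeffding step, and the union bound all match the paper's argument.
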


\begin{definition}[Define $\calE$] \label{def:calE_exploration_oracle_agent}
    Let $\calE$ be the event that $\calE_0$ and inequalities in \pref{lem:total_error_plays_oracle}, \pref{lem:repeated_calls_search}, and \pref{lem:median_help_check_oracle} hold simultaneously.
\end{definition}

Based on \pref{def:calE_exploration_oracle_agent}, one can easily see $\P(\calE) \geq 1-\delta$, by using a union bound.

\subsection{Supporting Lemmas}
\begin{lemma} \label{lem:optimistic_incentive}
Suppose that $\calE$ occurs.
For all phases $m$ and all active arms $a$, we have
\[
\overline{b}_{m,a}^{(j_{m,a})} \in \left( \pi^\star_a , \pi^\star_a +\frac{2}{T} \right].
\]
\end{lemma}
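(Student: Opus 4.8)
The plan is to exploit the oracle reduction so that, during an exploration-free run of the search, the bisection only ever sees \emph{noiseless} threshold comparisons, which tightens the loose bound of \pref{lem:search_end_bma_diff} down to the claimed $2/T$. Recall that in the setting of \pref{def:exploration_behavior} reduced to \citet{dogan2023estimating}, whenever the agent does not explore she plays the true maximizer $\argmax_{b}\{\mu_b+\pi_b(t)\}$, and her empirical means coincide with the true means; hence the optimal incentive $\pi^\star_a=\max_{b\in[K]}\mu_b-\mu_a$ is \emph{time-invariant} (this is why the statement writes $\pi^\star_a$ with no round index). By the definition of $j_{m,a}$ in \pref{app:notation_def_explore} together with \pref{lem:repeated_calls_search_oracle}, on the event $\calE$ the index $j_{m,a}$ is well defined for every phase $m$ and active arm $a$: it points to a call of the search during which the agent never explores. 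I would fix such a phase $m$ and arm $a$ and analyze only this one successful call.

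First I would observe that inside this call every comparison is clean: when the algorithm proposes the one-hot incentive $\pi^0(a;y)$ (see \pref{eq:one_hot_incentive}) and the agent does not explore, she plays arm $a$ if and only if $\mu_a+y>\max_{b}\mu_b$, i.e.\ iff $y>\pi^\star_a$. Thus the search reduces to a noiseless bisection for the fixed threshold $\pi^\star_a\in[0,1]$. The key step is to maintain, by induction over the probes, the invariant
\[
\underline{y}_a(t)\le \pi^\star_a<\overline{y}_a(t),
\]
which holds initially because $\underline{y}_a=0\le\pi^\star_a\le 1=\overline{y}_a$: a successful probe at the midpoint lowers $\overline{y}_a$ to a value still exceeding $\pi^\star_a$, while a failed probe raises $\underline{y}_a$ to a value not exceeding $\pi^\star_a$. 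Any asymmetric-check step re-proposes $y_a^{\texttt{upper}}>\pi^\star_a$, which necessarily succeeds in a clean run, so the out-of-range termination branch never fires. Since each search probe halves $\overline{y}_a-\underline{y}_a$, after the $\lceil\log_2 T\rceil$ probes forced by the counter the interval width is at most $2^{-\lceil\log_2 T\rceil}\le 1/T$; combined with the invariant, the returned upper estimate obeys $b_{m,a}^{(j_{m,a})}-\pi^\star_a\in(0,1/T]$.

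Finally, I would apply the enlargement rule $\overline{b}_{m,a}^{(j_{m,a})}=b_{m,a}^{(j_{m,a})}+T^{-1}$ of \pref{alg:explore_oracle_agent} to conclude
\[
\overline{b}_{m,a}^{(j_{m,a})}-\pi^\star_a\in\left(\tfrac{1}{T},\tfrac{2}{T}\right]\subseteq\left(0,\tfrac{2}{T}\right],
\]
and then take a union over all phases $m$ and active arms $a$ on $\calE$. The main obstacle is bookkeeping rather than conceptual: one must verify that the clean-comparison property holds at \emph{every} round of the successful call (so that no $\hatmu$-fluctuation term of the type appearing in \pref{lem:search_end_bma_diff} enters), carefully track which termination branch fires so that the returned value is guaranteed both strictly above $\pi^\star_a$ and within $1/T$ of it, and dispatch the boundary case $\pi^\star_a=1$, which is handled by the additive $T^{-1}$ cushion and the tie-breaking convention.
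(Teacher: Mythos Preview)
Your proposal is correct and follows essentially the same approach as the paper: the paper's proof is a two-line argument that invokes \pref{lem:repeated_calls_search_oracle} to guarantee $j_{m,a}$ exists on $\calE$ and then cites \citep[Lemma 8]{ICML2024_principal_agent} for the noiseless binary-search bound, and your write-up simply unpacks that cited lemma by carrying out the clean-threshold bisection analysis directly. One small remark: \pref{alg:explore_oracle_agent} calls a plain binary search rather than \pref{alg:new_binary_search}, so the asymmetric-check discussion is unnecessary (though harmless, since as you note that branch never fires in an exploration-free oracle run).
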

\begin{proof}
As $\calE$ holds, \pref{lem:repeated_calls_search_oracle} ensures the existence of $j_{m,a}$. Then, \citep[Lemma 8]{ICML2024_principal_agent} gives the desired result.
\end{proof}

\begin{lemma} \label{lem:sufficient_plays_active_arms_oracle}
Suppose that $\calE$ occurs.
For each phase $m$, each active arm $a \in \calA_m$ will be played for at least $T_m$ times before the elimination starts. Moreover, for each phase $m$ and active arm $a \in \calA_m$, $i_{m,a}\leq j_{m,a}$ and $\overline{b}_{m,a}^{(i)}\leq \overline{b}_{m,a}^{(j_{m,a})}$ for all $i \leq j_{m,a}$.
\end{lemma}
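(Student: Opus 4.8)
The plan is to mirror the argument for \pref{lem:sufficient_plays_active_arms}, exploiting the key structural simplification of the oracle-agent setting: because the agent knows the true means, the optimal incentive $\pi^\star_a = \max_{b}\mu_b - \mu_a$ is \emph{time-invariant}. Consequently the phase-to-phase inductive dependency that was needed in the learning-agent analysis (through \pref{lem:Atisa_forallt_exploration}, which required each active arm in $\calA_{m-1}$ to have been played $T_{m-1}$ times) disappears, and each phase can be handled on its own by invoking \pref{lem:optimistic_incentive} unconditionally.

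First I would establish the same deterministic inequality used in \pref{lem:sufficient_plays_active_arms}, namely that for all $m\geq 1$,
\[
2T_m - \rbr{2c_0\sqrt{2T_m\log(2T)} + \sqrt{8\log(\iota)/T_m}} \geq T_m .
\]
Writing $a = \max\cbr{2c_0\sqrt{2\log(2T)},\, \sqrt{8\log\iota}}$, it suffices to check $T_m - a(\sqrt{T_m}+1/\sqrt{T_m})\geq 0$; since $f(x)=x-a(\sqrt{x}+1/\sqrt{x})$ is increasing for $x\geq a^2/4$ and $f(4a^2)\geq 0$, this follows from $T_m\geq T_1\geq 4a^2$, exactly as before.

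Next I would prove the first assertion for a fixed phase $m$ and active arm $a\in\calA_m$, splitting on the behaviour of the loop over $i$. If that loop exits at line~\ref{break_iteration_oracle_agent} at some iteration $i<j_{m,a}$, then $\sum_{j\leq i}(Y_{m,a}^{(j)}-c_{m,a}^{(j)})\geq T_m$, and since $Y_{m,a}^{(j)}-c_{m,a}^{(j)}$ is precisely the number of target plays of $a$ in iteration $j$, the arm has already been played at least $T_m$ times. Otherwise the loop reaches iteration $j_{m,a}$. By \pref{lem:optimistic_incentive}, $\overline{b}_{m,a}^{(j_{m,a})}>\pi^\star_a$, so $\mu_a+\overline{b}_{m,a}^{(j_{m,a})}>\max_b\mu_b$; because the oracle agent plays the true-maximizer whenever she does not explore, every non-target play in this iteration must be an exploration round, whence $c_{m,a}^{(j_{m,a})}\leq \sum_{t\in\calT^{(j_{m,a})}(a;\calA_m)} I_t$. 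By \pref{lem:total_error_plays_oracle} this never exceeds the repeat-loop threshold, so the repeat-loop terminates only when $Y_{m,a}^{(j_{m,a})}=2T_m$, and the target plays in this iteration are $2T_m-c_{m,a}^{(j_{m,a})}\geq T_m$ by the displayed inequality. This settles the first claim.

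Finally the second assertion follows at once: reaching iteration $j_{m,a}$ forces $\sum_{j\leq j_{m,a}}(Y_{m,a}^{(j)}-c_{m,a}^{(j)})$ to attain $T_m$, which triggers line~\ref{break_iteration_oracle_agent}, so $i_{m,a}\leq j_{m,a}$; and since $\{b_{m,a}^{(i)}\}_i$ is sorted in ascending order and $\overline{b}_{m,a}^{(i)}=b_{m,a}^{(i)}+T^{-1}$ is monotone, $\overline{b}_{m,a}^{(i)}\leq \overline{b}_{m,a}^{(j_{m,a})}$ for all $i\leq j_{m,a}$. Conditioning on $\calE$ lets this run for every $m$ and $a$. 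The main obstacle I anticipate is the middle step — rigorously certifying that in iteration $j_{m,a}$ \emph{every} non-target play is attributable to exploration — which is exactly where time-invariance of $\pi^\star_a$ is essential and where the oracle-agent proof genuinely diverges from, and is cleaner than, the learning-agent one, since here \pref{lem:optimistic_incentive} may be used directly instead of an induction propagating play counts across phases.
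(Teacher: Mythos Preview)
Your proposal is correct and follows essentially the same approach as the paper: invoke \pref{lem:optimistic_incentive} to guarantee that in iteration $j_{m,a}$ every non-target play is an exploration, then reuse the deterministic inequality and the counting argument from \pref{lem:sufficient_plays_active_arms}. Your observation that the phase-to-phase induction is unnecessary here (because $\pi^\star_a$ is time-invariant, so \pref{lem:optimistic_incentive} applies unconditionally) is exactly the simplification the paper exploits when it writes that the claim is ``immediate via the exact same argument,'' and your expansion of that terse sentence is accurate.
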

\begin{proof}
Since we assume $\calE$ holds, \pref{lem:optimistic_incentive} implies that if the algorithm tests $\overline{b}_{m,a}^{(j_{m,a})}$ for target active arm $a$, then the agent will always play arm $a$, except exploration occurs. Thus, the desired claim is immediate via the exact same argument in \pref{lem:sufficient_plays_active_arms}.  
\end{proof}

\begin{lemma} \label{lem:optarm_always_active_oracle_explore}
Suppose that $\calE$ holds. For all $m \in \naturalnum$, $a^\star \in \calA_m$.
\end{lemma}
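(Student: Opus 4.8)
The plan is to prove the statement by induction on the phase index $m$, mirroring the argument for the learning-agent counterpart \pref{lem:optarm_always_active_explore}, but exploiting the key simplification that an oracle-agent selects arms according to the \emph{true} means $\mu_a$ rather than empirical estimates $\hatmu_a(t)$. This means the only concentration inequality I need is the principal-side bound on $\htheta_a(t)$ supplied by $\calE_0$; no control of the agent's own estimates is required, which is exactly why the threshold margin in \pref{alg:explore_oracle_agent} can be shrunk from $5\sqrt{\cdot}$ to $3\sqrt{\cdot}$.

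For the base case $m=1$, the claim holds trivially since $\calA_1=[K]$. For the inductive step, I would assume $\optarm \in \calA_m$ and examine the trustworthy online elimination window $[t^0_{m,\optarm}, t^0_{m,\optarm}+8\log(8K\delta^{-1}\log_2 T)]$ in which the algorithm tests $\optarm$. First I would invoke \pref{lem:sufficient_plays_active_arms_oracle} to guarantee that every active arm (in particular $\optarm$ and each competing $a \in \calA_m-\{\optarm\}$) has been played at least $T_m$ times by round $t^0_{m,\optarm}$, so that $\calE_0$ yields $\abr{\htheta_a(t^0_{m,\optarm})-\theta_a}\leq \sqrt{\log(8KT/\delta)/(2T_m)}$ for these arms. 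Combining the optimality inequality $0\leq \theta_{\optarm}+\mu_{\optarm}-(\theta_a+\mu_a)$ with these two-sided bounds gives
\begin{align*}
0 &\leq \theta_{\optarm}+\mu_{\optarm}-(\theta_a+\mu_a)\\
&\leq \htheta_{\optarm}(t^0_{m,\optarm})+\mu_{\optarm}-\rbr{\htheta_a(t^0_{m,\optarm})+\mu_a}+2\sqrt{\tfrac{\log(8KT/\delta)}{2T_m}}.
\end{align*}

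Next I would substitute the elimination-period incentives of \pref{alg:explore_oracle_agent}, namely $\pi_{\optarm}(t)=1+\htheta_{\optarm}(t^0_{m,\optarm})+3\sqrt{\log(8KT/\delta)/(2T_m)}$ and $\pi_a(t)=1+\htheta_a(t^0_{m,\optarm})$ for $a\in\calA_m-\{\optarm\}$. After rearranging, the extra $3\sqrt{\cdot}$ margin absorbs the $2\sqrt{\cdot}$ concentration slack and leaves a strictly positive residual, giving $\pi_{\optarm}(t)+\mu_{\optarm}-(\pi_a(t)+\mu_a)\geq \sqrt{\log(8KT/\delta)/(2T_m)}>0$ for every $t$ in the window and every $a\neq\optarm$. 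Taking the maximum over $a$ shows that $\optarm$ satisfies the strict-dominance condition $\max_{a\in\calA_m-\{\optarm\}}(\pi_a(t)+\mu_a)<\pi_{\optarm}(t)+\mu_{\optarm}$ throughout the window, so \pref{lem:median_help_check_oracle} certifies $\optarm\in\calA_{m+1}$ (conditioned on $\calE$), completing the induction.

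The only real subtlety — and the step I would check most carefully — is the constant bookkeeping that makes the positive residual come out: I must confirm that the incentive margin ($3\sqrt{\cdot}$) strictly exceeds the concentration deviation ($2\sqrt{\cdot}$) under the correct $\log(8KT/\delta)$ scaling used in the oracle event $\calE_0$, and that the lower bound $N_a(t^0_{m,\optarm})\geq T_m$ indeed applies at the \emph{reference} round $t^0_{m,\optarm}$ (not merely at the first round of the window) for all active arms simultaneously. Everything else is routine, and the median-based elimination guarantee in \pref{lem:median_help_check_oracle} handles the agent's exploration automatically, so no direct reasoning about exploration events is needed in this lemma.
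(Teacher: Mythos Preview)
The proposal is correct and follows essentially the same approach as the paper: induction on $m$, using \pref{lem:sufficient_plays_active_arms_oracle} for the $T_m$-play lower bound, the principal-side concentration from $\calE_0$ to get the $2\sqrt{\cdot}$ slack, the $3\sqrt{\cdot}$ incentive margin from \pref{alg:explore_oracle_agent} to force strict dominance, and \pref{lem:median_help_check_oracle} to conclude $\optarm\in\calA_{m+1}$. Your constant bookkeeping and the point about the reference round $t^0_{m,\optarm}$ are exactly right.
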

\begin{proof}
This proof is similar to \pref{lem:optarm_always_active_explore}.
We prove the claim by the induction. For $m=1$, the claim trivially holds. Suppose the claim holds for $m$ and consider for $m+1$. For each round $t  \in [t^0_{m,a^\star},t^0_{m,a^\star}+8\log(8K\delta^{-1}\log_2 T)]$ and each $a \in \calA_m-\{a^\star\}$, we have
\begin{align*}
0 &\leq \theta_{a^{\star}}+\mu_{a^{\star}} -(\theta_{a}+\mu_{a}) \\
&\leq \htheta_{a^{\star}}(t_{m,a}^0)+\mu_{a^{\star}}  -(\htheta_{a}(t_{m,a}^0)+\mu_{a} ) +2\sqrt{\frac{\log(8KT/\delta)}{2T_{m}}}\\
&= \pi_{a^{\star}}(t)+\mu_{a^{\star}} -(\mu_{a}+\pi_{a}(t) ) -\sqrt{\frac{\log(16KT/\delta)}{2T_{m}}},
\end{align*}
where the second inequality holds since \pref{lem:sufficient_plays_active_arms_oracle} gives that in each phase $m$, every active arm is played for $T_m$ times before the elimination starts, and $a^\star$ is active in phase $m$ by the hypothesis induction, and the equality holds since when testing arm $a^\star$, $\pi_{a^\star}(t)=1+\htheta_{a^\star}(t^0_{m,a^\star})+3\sqrt{\frac{\log(16KT/\delta)}{2T_{m}}}$, and the incentives of all other active arms $a \in \calA_m-\{a^\star\}$ are all equal to $1+\htheta_{a}(t^0_{m,a^\star})$.
Since the above holds for all $a \in \calA_m$, we rearrange it to get
\[
\pi_{a^{\star}}(t)+\mu_{a^{\star}} \geq \max_{a \in \calA_m-\{a^\star\}} \{\mu_{a}+\pi_{a}(t)  \} +\sqrt{\frac{\log(16KT/\delta)}{2T_{m}}}>\max_{a \in \calA_m-\{a^\star\}} \{\mu_{a}(t)+\pi_{a}(t)\} .
\]

Conditioning on $\calE$, we use \pref{lem:median_help_check_oracle} to get $\optarm \in \calA_{m+1}$.
Once the induction done, the proof is complete.
\end{proof}

\begin{lemma} \label{lem:upperbound_pull_explore_oracle}
Let $m_a$ be the smallest phase such that $\Delta_a > 5\sqrt{\frac{\log(8KT/\delta)}{2T_{m_a}}}$.
Suppose that $\calE$ occurs. For each arm $a$ with $\Delta_a>0$, it will not be in $\calA_m$ for all phases $m \geq m_a+1$.
\end{lemma}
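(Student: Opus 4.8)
The plan is to mirror the proof of \pref{lem:upperbound_pull_explore}, but to exploit the simplification special to the oracle-agent setting: the agent here selects according to the \emph{true} agent means $\mu_b$ rather than empirical estimates, so no concentration is needed on the agent's side. First I would reduce to the only nontrivial case $a \in \calA_{m_a}$. Since the active sets are nested, $\calA_{m+1} \subseteq \calA_m$, an arm that is not in $\calA_{m_a}$ can never reappear, so the claim holds automatically; it therefore suffices to show that when the trustworthy-elimination subroutine tests arm $a$ in phase $m_a$, the median rule eliminates it so that $a \in \calB_{m_a+1}$.

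To this end I would lower-bound the elimination margin uniformly over the test window $t \in [t^0_{m_a,a}, t^0_{m_a,a}+8\log(8K\delta^{-1}\log_2 T)]$. Using $\optarm \in \calA_{m_a}$ (from \pref{lem:optarm_always_active_oracle_explore}) as a competitor in $\calA_{m_a}-\{a\}$, and recalling that when $a$ is the target its incentive carries the extra slack $3\sqrt{\log(8KT/\delta)/(2T_{m_a})}$ while all other active arms (including $\optarm$) receive incentive $1+\htheta_b(t^0_{m_a,a})$, the $+1$ and $\htheta(t^0)$ offsets cancel, and the margin $\max_{b \in \calA_{m_a}-\{a\}}(\mu_b+\pi_b(t)) - (\mu_a+\pi_a(t))$ reduces to at least $\mu_{\optarm}+\htheta_{\optarm}(t^0) - \mu_a - \htheta_a(t^0) - 3\sqrt{\log(8KT/\delta)/(2T_{m_a})}$.

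Next I would convert the principal's empirical means to true means. By \pref{lem:sufficient_plays_active_arms_oracle}, every active arm is played at least $T_{m_a}$ times before elimination begins, so $N_{\optarm}(t^0), N_a(t^0) \geq T_{m_a}$; then $\calE_0$ yields $\htheta_{\optarm}(t^0) - \htheta_a(t^0) \geq \theta_{\optarm} - \theta_a - 2\sqrt{\log(8KT/\delta)/(2T_{m_a})}$. Substituting this and using $\Delta_a = \theta_{\optarm}+\mu_{\optarm}-(\theta_a+\mu_a)$ bounds the margin below by $\Delta_a - 5\sqrt{\log(8KT/\delta)/(2T_{m_a})}$, which is strictly positive by the very definition of $m_a$. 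Hence $\max_{b \in \calA_{m_a}-\{a\}}(\mu_b+\pi_b(t)) > \mu_a+\pi_a(t)$ throughout the window, and \pref{lem:median_help_check_oracle} forces $a \in \calB_{m_a+1}$; nestedness then removes $a$ from all phases $m \geq m_a+1$.

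I do not expect a serious obstacle, since this is a direct and slightly tighter analogue of \pref{lem:upperbound_pull_explore}. The only point requiring care is accounting precisely for the two sources of slack so that the total is exactly $5\sqrt{\cdot}$ rather than the $9\sqrt{\cdot}$ of the learning-agent case; the saving comes entirely from not needing a concentration term for the agent's side, because the oracle agent compares $\mu_b+\pi_b$ exactly rather than $\hatmu_b+\pi_b$. I would also double-check that the margin bound holds \emph{uniformly} over the whole elimination window, which it does: the incentives are frozen at the values computed at $t^0_{m_a,a}$ and the $\mu_b$ are deterministic, so the left-hand side is constant in $t$ and the strict inequality of \pref{lem:median_help_check_oracle} applies at every round of the window.
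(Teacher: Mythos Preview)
Your proposal is correct and follows essentially the same route as the paper's own proof: reduce to $a\in\calA_{m_a}$, use $\optarm$ as the competitor via \pref{lem:optarm_always_active_oracle_explore}, substitute the elimination incentives to expose the margin $\mu_{\optarm}+\htheta_{\optarm}(t^0)-\mu_a-\htheta_a(t^0)-3\sqrt{\cdot}$, apply the principal-side concentration from $\calE_0$ together with \pref{lem:sufficient_plays_active_arms_oracle} to lose another $2\sqrt{\cdot}$, and conclude via \pref{lem:median_help_check_oracle}. Your additional remarks (the $9\!\to\!5$ accounting and the observation that the margin is constant over the test window) are accurate elaborations of what the paper leaves implicit.
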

\begin{proof}
This proof follows a similar to \pref{lem:upperbound_pull_explore}.
Consider any arm $a$ with $\Delta_a>0$.
We only need to consider $a \in \calA_{m_a}$ and otherwise, the claim naturally holds.
For any round $t  \in [t^0_{m,a},t^0_{m,a}+8\log(8K\delta^{-1}\log_2 T)]$, we have
\begin{align*}
&\max_{b \in \calA_{m_a}-\{a\} }\cbr{\mu_{b} +\pi_b(t) }- \rbr{\mu_a+\pi_a(t)} \\
&\geq \rbr{\mu_{a^\star} +\pi_{a^\star}(t) }- \rbr{\mu_a+\pi_a(t)} \\
    &= \mu_{\optarm} +\htheta_{\optarm}(t^0_{m,a}) - \rbr{\mu_a+\htheta_a(t^0_{m,a})} -3\sqrt{\frac{\log(16KT/\delta)}{2T_{m_a}}}\\
    &\geq   \Delta_a -5\sqrt{\frac{\log(16KT/\delta)}{2T_{m_a}}} \\
    & > 0,
\end{align*}    
where the first inequality uses $a\neq a^\star$ and \pref{lem:optarm_always_active_oracle_explore} that $\optarm \in \calA_{m_a}$, the equality holds since when testing arm $a$, $\pi_{a}(t)=1+\htheta_{a}(t^0_{m,a})+3\sqrt{\frac{\log(16KT/\delta)}{2T_{m}}}$, and the incentives of all other (non-target) active arms $b \in \calA_m$ are all equal to $1+\htheta_{b}(t^0_{m,a})$, and the second inequality holds since \pref{lem:sufficient_plays_active_arms_oracle} gives that in each phase $m$, every active arm is played for $T_m$ times before the elimination starts.

According to \pref{lem:median_help_check_oracle}, arm $a$ will not be in phases 
$m$ for all $m \geq m_a+1$.
\end{proof}

\subsection{Comparison with \citep{dogan2023estimating} under Same Regret Metric}

\citet{dogan2023estimating} use a   different regret definition $\E[\overline{R}_T]$ where 
\begin{equation}
    \overline{R}_T = \sum_{t=1}^T \rbr{ \max_{b \in [K]} \cbr{\theta_{b} +\mu_b } - \max_{z \in [K]} \mu_z   - \rbr{\theta_{A_t}-\sum_{a \in \calA} \pi_a(t) } } .
\end{equation}

To fairly compare the regret bound, we evaluate the regret bound by $\E[\overline{R}_T]$.
\begin{theorem} \label{thm:comparison_result_exploration}
Suppose $T=\Omega(K)$.
By choosing $\delta=1/T$, \pref{alg:explore_agent} ensures
\[
\E[\overline{R}_T]  =\order \rbr{  \log^{\frac{5}{3}} (KT) K^{\frac{1}{3}} T^{\frac{2}{3}} +  \log^{2} (KT) \sqrt{KT} +K^2\log^3(KT) }.
\]
\end{theorem}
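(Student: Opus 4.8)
The plan is to follow the proof of \pref{thm:main_result_exploration} almost verbatim, working on the same high-probability event $\calE$ of \pref{def:calE_exploration_agent} and the same phase decomposition, and to track only the places where the measure $\overline{R}_T$ differs from $R_T$. The one structural fact I would exploit is that \pref{alg:explore_agent} proposes \emph{one-hot} incentives during bad-arm stabilization (line~\ref{play_bad_explore_agent}), during the repeated searches, and during the incentive-testing loops (line~\ref{play_active_explore_agent}), so that on all those rounds $\sum_{v\in[K]}\pi_v(t)=\pi_{A_t}(t)$ whenever $A_t$ is the target arm; the only rounds where the incentive is \emph{not} one-hot are the $\order(K\log(K\delta^{-1}\log T))$ trustworthy-elimination rounds per phase (line~\ref{incentive_elimination_explore_agent}).

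First I would dispose of the ``overhead'' contributions to $\overline{R}_T$. On an elimination round the incentive places mass $\approx 1$ on each of the $|\calA_m|\le K$ active arms, so $\sum_v\pi_v(t)=\order(K)$ and the per-round term of $\overline{R}_T$ is $\order(K)$; multiplying by the $\order(K\log(K\delta^{-1}\log T))$ elimination rounds per phase and summing over the $\order(\log T)$ phases gives $\order\rbr{K^2\log T\log(K\delta^{-1}\log T)}$. The searches are one-hot with per-round regret $\order(1)$ and last $\order(K\log^2 T\log(\delta^{-1}\log T))$ rounds in total, and the first phase (where all $K$ arms are tested for $T_1=\otil(K)$ rounds) is bounded crudely by $\order(K^2\log^3(KT/\delta))$, exactly as in the proof of \pref{thm:oracle_agent_exploration_regret}. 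Finally, each bad-arm stabilization round is one-hot with per-round regret $\order(1)$, contributing $\order\rbr{\sum_m|\calB_m|Z_m}=\order\rbr{\sum_m\log^{\nicefrac13}(KT/\delta)|\calB_m|^{\nicefrac13}(|\calA_m|T_{m-1})^{\nicefrac23}}$, which collapses into the $T^{\nicefrac23}$ term by Hölder.

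The heart of the argument is the incentive-testing regret. For $a\in\calA_m$, $i\le i_{m,a}$, and $t\in\calT^{(i)}(a;\calA_m)$ with $A_t=a$ I would write $\sum_v\pi_v(t)=\overline{b}_{m,a}^{(i)}$ and bound, using $\overline{b}_{m,a}^{(i)}\le\overline{b}_{m,a}^{(j_{m,a})}$ from \pref{lem:sufficient_plays_active_arms} and then \pref{lem:UB_bar_bma_explore_V2},
\begin{align*}
&\max_b\cbr{\theta_b+\mu_b}-\max_z\mu_z-\rbr{\theta_a-\overline{b}_{m,a}^{(i)}}\\
&\le \rbr{\theta_{\optarm}+\mu_{\optarm}}-\max_z\mu_z-\theta_a+\max_b\mu_b-\mu_a+\order\rbr{\rbr{\tfrac{\max\{1,|\calB_m|\}}{T_{m-1}|\calA_m|}}^{\nicefrac23}+\epsilon_m},
\end{align*}
at which point the crucial cancellation $\max_b\mu_b=\max_z\mu_z$ reduces the right-hand side to $\order\rbr{\Delta_a+(\tfrac{\max\{1,|\calB_m|\}}{T_{m-1}|\calA_m|})^{\nicefrac23}+\epsilon_m}$, which is precisely \pref{eq:Tm2_Nat_explore}. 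For the remaining rounds ($A_t\ne a$, i.e.\ exploration) the per-round term is $\order(1)$ since $\theta_{A_t}\in[0,1]$ and $\overline{b}_{m,a}^{(i)}\le 1+\nicefrac1T$, and their number is controlled by the repeat-loop threshold together with \pref{lem:total_error_plays}, producing the $\sqrt{Y_{m,a}^{(i)}\log T}$ contributions bounded exactly as in \pref{eq:bound_non_target_plays}. From here the computation is identical to \pref{thm:main_result_exploration}: I invoke \pref{lem:upperbound_pull_explore} to bound $T_m\sum_{a\in\calA_m}\Delta_a=\order(|\calA_m|\sqrt{T_m\log(KT/\delta)})$, use $T_m|\calA_m|\le|\calT_m|$, and sum over phases via Hölder to obtain the $\otil(K^{\nicefrac13}T^{\nicefrac23})$ and $\otil(\sqrt{KT})$ terms.

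The main obstacle — and the only genuine departure from the $R_T$ analysis — is the elimination period together with the first phase: because their incentives are not one-hot, each such round is charged $\sum_v\pi_v(t)=\order(K)$ rather than $\order(1)$ under $\overline{R}_T$, and this is exactly what forces the extra additive $K^2\log^3(KT)$ term that is absent from \pref{thm:main_result_exploration}. Once these overhead terms are in hand, collecting the overhead, the bad-arm, and the active-arm contributions and substituting $\delta=1/T$ yields $\E[\overline{R}_T]=\order\rbr{\log^{\nicefrac53}(KT)K^{\nicefrac13}T^{\nicefrac23}+\log^2(KT)\sqrt{KT}+K^2\log^3(KT)}$, as claimed.
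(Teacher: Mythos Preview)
Your proposal is correct and follows essentially the same route as the paper: absorb the non-one-hot elimination rounds and the phase-$1$ cost into an $\order(K^2\log^3(KT/\delta))$ overhead, then rerun the active-arm analysis of \pref{thm:main_result_exploration} with \pref{lem:UB_bar_bma_explore_V2} in place of \pref{lem:UB_bar_bma_explore} so that the benchmark term $\max_b\mu_b$ cancels $\max_z\mu_z$ and the per-round bound collapses to $\Delta_a+\order(\epsilon_m)$. One small slip in your final paragraph: the phase-$1$ incentive-testing rounds \emph{are} one-hot---the $K^2$ there arises because there are $\sim K\cdot T_1=\otil(K^2)$ such rounds, not because each is charged $\order(K)$---but the bound you state for that term is correct regardless.
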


\begin{proof}
    
We first follow the same reasoning in \pref{app:comparison_proof} to bound the regret in binary search, elimination period, and the first phase by $\order \rbr{K^2\log^3(KT/\delta)}$. Then, $\overline{R}_T$ can be written as:
\[
\overline{R}_T = \order \rbr{K^2\log^3(KT/\delta)}+\sum_{m \geq 2} \overline{R}_m,
\]
where (recall that \pref{alg:explore_oracle_agent} does not play bad arms for stabilization)
\begin{align*}
    \overline{R}_m&=\sum_{a \in \calA_m} \sum_{i\leq i_{m,a}} \sum_{t \in \calT^{(i)}(a;\calA_m)} \rbr{ \max_{b \in [K]} \cbr{\theta_b +\mu_b } - \max_{z \in [K]} \mu_z - \rbr{\theta_{A_t} - \sum_{v\in \calA}\pi_{v}(t) } }\\
    &=\sum_{a \in \calA_m} \sum_{i\leq i_{m,a}} \sum_{t \in \calT^{(i)}(a;\calA_m)} \rbr{ \max_{b \in [K]} \cbr{\theta_b +\mu_b } - \max_{z \in [K]} \mu_z - \rbr{\theta_{A_t} - \pi_{a}(t) } },
\end{align*}
where the second equality follows from the fact that the proposed incentive is one-hot (arm $a$ has the only positive value) for all rounds in $\calT^{(i)}(a;\calA_m)$.

According to the incentive testing, the algorithm proposes one-hot incentives, and thus we have
\begin{align*}
\overline{R}_m&=\sum_{a \in \calA_m} \sum_{i\leq i_{m,a}} \sum_{t \in \calT^{(i)}(a;\calA_m)} \rbr{ \max_{b \in [K]} \cbr{\theta_b +\mu_b } - \max_{z \in [K]} \mu_z - \rbr{\theta_{A_t} - \pi_{a}(t) } } \\
& = \sum_{a \in \calA_m} \sum_{i\leq i_{m,a}} \sum_{t \in \calT^{(i)}(a;\calA_m)} \rbr{ \max_{b \in [K]} \cbr{\theta_b +\mu_b } - \max_{z \in [K]} \mu_z - \rbr{\theta_{A_t} - \overline{b}_{m,a}^{(i)} } } \\
& \leq \sum_{a \in \calA_m} \sum_{i\leq i_{m,a}} \sum_{t \in \calT^{(i)}(a;\calA_m)} \rbr{ \max_{b \in [K]} \cbr{\theta_b +\mu_b } - \max_{z \in [K]} \mu_z - \theta_{A_t} +\pi^\star_a +\frac{2}{T}  } \\
& = \sum_{a \in \calA_m} \sum_{i\leq i_{m,a}} \sum_{t \in \calT^{(i)}(a;\calA_m)} \rbr{\Delta_a  +\frac{2}{T}  } \Ind{A_t=a}\\
&\quad +\sum_{a \in \calA_m} \sum_{i\leq i_{m,a}} \sum_{t \in \calT^{(i)}(a;\calA_m)} \rbr{ \max_{b \in [K]} \cbr{\theta_b +\mu_b } - \max_{z \in [K]} \mu_z - \theta_{A_t} +\pi^\star_a +\frac{2}{T}  }  \Ind{A_t \neq a},
\end{align*}
where the inequality uses \pref{lem:sufficient_plays_active_arms_oracle} and \pref{lem:optimistic_incentive} to bound for all $i\leq j_{m,a}$:
\[
\overline{b}_{m,a}^{(i)} \leq \overline{b}_{m,a}^{(j_{m,a})} \leq \pi^\star_a +\frac{2}{T}. 
\]

Then, we use \pref{eq:bound_non_target_plays} to bound
\begin{align*}
 &\sum_{a \in \calA_m} \sum_{i\leq i_{m,a}} \sum_{t \in \calT^{(i)}(a;\calA_m)} \rbr{ \max_{b \in [K]} \cbr{\theta_b +\mu_b } - \max_{z \in [K]} \mu_z - \theta_{A_t} +\pi^\star_a +\frac{2}{T}  }  \Ind{A_t \neq a}\\
  &\leq \order \rbr{   \log(\delta^{-1}\log T) \rbr{ \sqrt{K |\calT_m|\log(T)} + |\calA_m|\sqrt{\log \iota} }}.
\end{align*}

Note that line~\ref{break_iteration_oracle_agent} of \pref{alg:explore_oracle_agent} implies that $\sum_{i \leq i_{m,a}} \sum_{t \in \calT^{(i)}(a;\calA_m)} \Ind{A_t=a} \leq \order(T_m)$.
Hence, using $T_{m+1}=\Theta(T_{m})$ for all $m$, we have
\begin{align*}
 \sum_{a \in \calA_m} \sum_{i\leq i_{m,a}} \sum_{t \in \calT^{(i)}(a;\calA_m)} \Delta_a \Ind{A_t=a}
   &\leq \order \rbr{ T_m \sum_{a \in \calA_m}  \Delta_a } \\
   &\leq \order \rbr{|\calA_m|\sqrt{T_m\log(KT/\delta)}}\\&\leq \order \rbr{\sqrt{|\calA_m||\calT_m|\log(KT/\delta)}}.
\end{align*}
where the second inequality uses the same approach in \pref{eq:bound_Tm_times_Delta_a} with \pref{lem:upperbound_pull_explore_oracle}, and the last inequality holds due to $|\calA_m| T_m \leq |\calT_m|$. Therefore, 
\begin{align*}
    \sum_{m\geq 2} \overline{R}_m \leq &\order \rbr{\sum_{m\geq 2}   \rbr{\sqrt{|\calA_m||\calT_m|\log(KT/\delta)}+\log(\delta^{-1}\log T) \rbr{ \sqrt{K |\calT_m|\log(T)} + |\calA_m|\sqrt{\log \iota} } }   }\\
    &\leq \order \rbr{\log^2(KT/\delta) \sqrt{KT}     }.
\end{align*}

Combining all the above, we have
\[
\overline{R}_T = \order \rbr{ \log^2(KT/\delta) \sqrt{KT} +K^2\log^3(KT/\delta) }.
\]

By choosing $\delta=T^{-1}$, we complete the proof.
\end{proof}

\section{Omitted Proof for Linear Reward in \pref{sec:Linear_Reward}} \label{app:linear_proof}

\subsection{Omitted Pseudocode of \pref{alg:linear_alg} and \pref{alg:multi_scale_search}}
We present the following definition.
\begin{definition}\label{def:app_opt_design}
Let $\calZ \subseteq \fR^d$ be a finite and compact set. A distribution $\pi: \calZ \to [0,1]$ is a $C$-approximate design with an approximation factor $C \geq 1$, if $\sup_{z \in \calZ} \norm{z}^2_{G(\pi;\calZ)^{-1}} \leq C \cdot d$ where $G(\pi;\calZ) =  \sum_{z \in \calZ} \pi(z) zz^\top.$
\end{definition}

Since the exactly optimal design (i.e., $C=1$) is typically hard to compute, we  consider the $2$-approximately optimal design, which can be computed efficiently \citep{todd2016minimum}.

The omitted pseudocode can be found in \pref{alg:linear_alg} and \pref{alg:multi_scale_search}.

\setcounter{AlgoLine}{0}
\begin{algorithm}[t]
\DontPrintSemicolon
\caption{Proposed algorithm for linear reward model}\label{alg:linear_alg}
\textbf{Input}: confidence $\delta \in (0,1)$, horizon $T$.

\textbf{Initialize}: active arm set $\calA_1=\calA$, bad arm set $\calB_1=\emptyset$, $\epsilon_0=1$.

\nl \For{$m= 1,2,\ldots$}{

\nl Set  $T_m =2^{m+4}d \log(4KT\delta^{-1}) $ and $\epsilon_{m}=4\sqrt{ \frac{d \log \rbr{  4KT\delta^{-1} }}{ \min\{T_{m}, (d \log(4KT\delta^{-1}))^{\nicefrac{1}{3}} T_{m}^{2/3} \}  }   } $.

\nl \If{$\calB_m \neq \emptyset$}{

\nl Find a design $\omega_m$ for $\calZ=\calB_m$ and $C=2$ in \pref{def:app_opt_design}.

\nl \For(\myComment{\underline{Stabilize estimators for bad arms}}) {$a \in \calB_m$}{ 
\nl Propose incentives $\pi^0(a;2d+T^{-1})$ for $U_m(a)=\lceil \omega_m(a) (d \log(4TK/\delta))^{\nicefrac{1}{3}} \cdot T_{m}^{\nicefrac{2}{3}} \rceil$ rounds. \label{play_badarms}
}

}

\nl Find a design find a design $\rho_m$ for $\calZ=\calA_m$ and $C=2$ in \pref{def:app_opt_design}.

\nl Invoke \pref{alg:multi_scale_search} with input $(\epsilon_{m-1},T^{-1})$ to get output $c_{m} \in \fR^d$. \myComment{\underline{Search parameter for $s^\star$}}

\nl \For{$a \in \calA_m$}{

\nl Set $\overline{b}_{m,a}=\min \cbr{2d+T^{-1}, \max_{b \in \calA}\inner{c_m,b-a} +(1+32 d) \epsilon_{m-1}  +\frac{1}{T}  }$.

\nl For the following $\lceil \rho_m(a)T_m \rceil$ rounds, propose incentive $\pi^0(a;\overline{b}_{m,a})$. \label{exploration_Tm}

}

\nl Update estimates 
\begin{equation}
\hnu_m = V^{-1}_m \sum_{t \in \calT_m} A_t X_{A_t}(t),\quad \text{where}\quad V_m =\sum_{a \in \calA_m} \lceil \rho_m(a)T_m \rceil aa^\top +\sum_{b \in \calB_m} U_m(a) b b^\top,
\end{equation}
where $\calT_m$ is a set contains all rounds when interaction occurs in line \ref{exploration_Tm} and line \ref{play_badarms} in phase $m$.

\nl Invoke \pref{alg:multi_scale_search} again with input $(\epsilon_{m},T^{-1})$ to get output $c'_{m} \in \fR^d$. \label{search_again}

\nl \For(\myComment{\underline{Offline Elimination}}){$a \in \calA_m$
}{

\nl \If{$\max_{b \in \calA_m}  \inner{\hnu_m+c_m',b-a}  >(7+32d) \epsilon_{m}$}{
Update $\calA_{m+1} = \calA_{m}- \{a\}$ and $\calB_{m+1}=\calB_{m}\cup \{a\}$
}
}
}    
\end{algorithm}

\setcounter{AlgoLine}{0}
\begin{algorithm}[t] 
\DontPrintSemicolon
\caption{Multiscale steiner potential with conservative cut}\label{alg:multi_scale_search}
\textbf{Input}: error $\epsilon>0,\xi>0$.

\textbf{Initialize:} let current round be $t_0$, $S_{t_0}=B(0,1)$, $\overline{\calA} = \cbr{ z: z=x-y  \text{ s.t. } x\neq y \in \calA }$, $z_i = 2^{-i}/(8d)$, $\forall i \in \naturalnum$.

\nl \For{$t=t_0,\ldots$}{

\nl Pick $x_t=(a_t^1-a_t^2)/\norm{a_t^1-a_t^2}$ where $a_t^1-a_t^2\in \argmax_{u \in \overline{\calA}} \width(S_t,u)$ and $a_t^1\neq a_t^2 \in \calA$

\nl Find largest index $i$ such that $\width(S_t,x_t) \leq 2^{-i}$.

\nl \label{if_condition} \If{$z_i <  4\norm{a_t^1-a_t^2}^{-1} \epsilon$
}{
\nl Break and randomly pick a vector from $S_t$ to return.
}\nl \Else{

\nl Query $y_t$ such that  \label{half_cut}
\[
\Vol \rbr{v \in S_t +z_i B(0,1): \inner{v,x_t}\geq \norm{a_t^1-a_t^2}^{-1}\rbr{y_t -\epsilon}} = \frac{1}{2}\Vol(S_t+z_i B(0,1)).
\]

\nl Propose incentive $\pi_{a_t^1}(t)=d+\xi$, $\pi_{a_t^2}(t)=d+\xi+ y_t$, and $\pi_b(t)=0$ for all $b \neq a_t^1,a_t^2$.

\nl \If{
$A_t = a_t^1$
}{

\nl Update $S_{t+1}=\cbr{v \in S_t: \inner{v,x_t} \geq \norm{a_t^1-a_t^2}^{-1}\rbr{y_t -\epsilon} }$.
}\nl \Else{

\nl Update $S_{t+1}=\cbr{v \in S_t: \inner{v,x_t} \leq \norm{a_t^1-a_t^2}^{-1}\rbr{y_t +\epsilon} }$.

}

}

}

\end{algorithm}

\subsection{Notations}
We introduce some notations that will be used throughout the proof. Refer to \pref{app:compare_regret_notions} for some general notations.
\begin{itemize}
    \item Let $m(t)$ be the phase that round $t$ lies in.
    \item Let $\calT_m$ be the set of rounds that in phase $m$, \textit{excluding} those rounds for running \pref{alg:multi_scale_search}. In other words, $\calT_m$ is a set contains all rounds when interaction occurs in line \ref{exploration_Tm} and line \ref{play_badarms} in phase $m$.
    \item Let $\calT_{m,a} = \cbr{ t \in \calT_m: A_t=a  }$.
    \item Let $\calT_{m,a}^E$ be a set of all rounds when \pref{alg:linear_alg} runs line~\ref{exploration_Tm} or line~\ref{play_badarms} for target arm $a$ in phase $m$. In fact, \pref{lem:linear_UB_cost_diff} implies that conditioning on $\calE$, $\calT_{m,a}^E=\calT_{m,a}$.
    \item Let $U_t=\sum_{s=1}^t A_sA_s^\top$ and let $U_t^{\dagger}$ be its pseudo-inverse. With the definition of $U_t$, $ \hats_t $ can be written as:
\[
 \hats_t  = U_t^{\dagger} \sum_{s=1}^t R_{A_s}(s) A_s.
\]
\item Let $\calT_{>1}$ be the set of all rounds that not in phase $1$ and let us define event
\[
\calE=\calE^{\texttt{Principle}} \cap \calE^{\texttt{Agent}},
\]
where
\[
\calE^{\texttt{Agent}}= \cbr{\forall (a,t) \in \calA \times \calT_{>1}: \abr{\inner{\hats_t-s^\star,a} } \leq   \sqrt{2 \norm{a}_{U_t^{-1}}  \log \rbr{  \frac{4KT}{\delta} } }  },
\]
and 
\[
\calE^{\texttt{Principle}} = \cbr{\forall (a,m) \in \calA \times \naturalnum: \abr{\inner{\hnu_{m}-\nu^\star,a} } \leq   \sqrt{2 \norm{a}_{V_{m}^{-1}}  \log \rbr{\frac{4KT}{\delta}} }  }.
\]
\end{itemize}

Notice that in phase $1$, every arm will be played deterministically according to (approximately) G-optimal design. Therefore, once phase $1$ ends, for all $t \in \calT_{>1}$, $U_t$ is invertible which implies that $U_t^\dagger =U_t^{-1}$.

\subsection{Proof of \pref{thm:main_result_linear}}

The following analysis conditions on $\calE$, which occurs with probability at least $1-\delta$ by a standard analysis of linear bandits \citep[Section 20]{lattimore2020bandit}.
As the algorithm runs in phases, we bound the regret in phase $m=1$ can be bounded by $\rbr{d^2 \log (KT/\delta)}$ since per-round regret is $\order(d)$ and $T_1=\order(d \log(KT\delta^{-1}))$. Then we bound the regret in each phase $m \geq 2$. 
\pref{lem:search_duration} shows \pref{alg:multi_scale_search} lasts at most $\order (d \log^2(d\epsilon_{m-1} )) \leq \order (d \log^2(dKT/\delta))$ rounds in phase $m \geq 2$. As the number of phases is at most $\order (\log T)$, we have
\[
R_T \leq \order \rbr{d^2\log^2(dKT/\delta) \cdot \log (T)} +\sum_{m\geq 2} R_m ,
\]
where
\[
\ R_m=\sum_{t \in \calT_m} \rbr{ \max_{a \in \calA} \cbr{ \inner{\nu^\star,a} -\pi_a(t) } - \rbr{ \inner{\nu^\star,A_t} - \pi_{A_t}(t) } }.
\]

It remains to bound $R_m$ for $m \geq 2$.
Then, let consider a fixed $m \geq 2$ and bound
\begin{align*}
    R_m &=\sum_{t \in \calT_m} \rbr{   \inner{\nu^\star,a_t^\star} +\inner{\hats_t,a_t^\star} -\max_{b \in \calA}  \inner{\hats_t,b}  - \rbr{\inner{\nu^\star,A_t} - \pi_{A_t}(t) } } \\
    &=\sum_{a \in \calA_m}\sum_{t \in \calT_{m,a}} \rbr{\inner{\nu^\star,a_t^\star} +\inner{\hats_t,a_t^\star} -\max_{b \in \calA}  \inner{\hats_t,b}  - \rbr{\inner{\nu^\star,A_t} - \pi_{A_t}(t) } }  \\
    &\quad +\sum_{a \in \calB_m}\sum_{t \in \calT_{m,a}} \rbr{ \inner{\nu^\star,a_t^\star} +\inner{\hats_t,a_t^\star} -\max_{b \in \calA}  \inner{\hats_t,b}  - \rbr{\inner{\nu^\star,A_t} - \pi_{A_t}(t) } }  \\
    &=\sum_{a \in \calA_m}\sum_{t \in \calT_{m,a}} \rbr{\inner{\nu^\star,a_t^\star} +\inner{\hats_t,a_t^\star} -\max_{b \in \calA}  \inner{\hats_t,b}  - \rbr{\inner{\nu^\star,a} - \pi_{a}(t) } }  \\
    &\quad +\sum_{a \in \calB_m}\sum_{t \in \calT_{m,a}} \rbr{ \inner{\nu^\star,a_t^\star} +\inner{\hats_t,a_t^\star} -\max_{b \in \calA}  \inner{\hats_t,b}  - \rbr{\inner{\nu^\star,a} - \pi_{a}(t) } }  \\
    &\leq \sum_{a \in \calA_m}\sum_{t \in \calT_{m,a}} \rbr{\inner{\nu^\star,a_t^\star} +\inner{\hats_t,a_t^\star} -\max_{b \in \calA}  \inner{\hats_t,b} - \rbr{\inner{\nu^\star,a}- \pi_{a}(t) } } \\
    &\quad +\order \rbr{ d^{\frac{4}{3}}  T_m^{\frac{2}{3}} \log^{\frac{1}{3}}(TK/\delta)  },    
\end{align*}
where the first equality holds due to the definition of $a^\star_t$, the third equality follows from \pref{lem:linear_UB_cost_diff}, and the fact that the proposed incentives in these rounds are one-hot, and the last inequality bounds the regret on all bad arms by multiplying the number of rounds by $\order(d)$ (the upper bound of per-round regret).

Then, for each active arm $a \in \calA_m$ and $t \in \calT_{m,a}$, we turn to bound
\begin{align*}
&  \inner{\nu^\star,a_t^\star} +\inner{\hats_t,a_t^\star} -\max_{b \in \calA} \inner{\hats_t,b} - \rbr{\inner{\nu^\star,a} - \pi_{a}(t) }    \\
& \leq \inner{\nu^\star,a_t^\star} +\inner{s,a_t^\star}+\epsilon_{m-1}  -\max_{b \in \calA}  \inner{\hats_t,b} - \rbr{\inner{\nu^\star,a} - \pi_{a}(t) }   \\
& \leq  \inner{\nu^\star+s^\star,a^\star}+\epsilon_{m-1}-\max_{b \in \calA}  \inner{\hats_t,b} - \rbr{\inner{\nu^\star,a} - \pi_{a}(t) }   \\
&\leq \order \rbr{\Delta_a +d\epsilon_{m-1}  +\frac{1}{T}},
\end{align*}
where the first inequality holds due to \pref{lem:max_estimation_hats_s}, and the last inequality uses \pref{lem:UB_error_incentive_linear}.

By \pref{lem:linear_UB_cost_diff}, each active arm $a \in \calA_m$ will be played for $\lceil \rho_m(a)T_m \rceil$ times in $\calT_{m,a}=\calT^E_{m,a}$. As $|\calT_{m,a}|=\lceil \rho_m(a)T_m \rceil $ for any active arm $a$, $T_{m+1}=\Theta(T_{m})$ for all $m$, and $|\supp(\nu_m)|\leq \order(d\log\log d)$, we have
\begin{align*}
&\sum_{a \in \calA_m}\sum_{t \in \calT_{m,a}} \rbr{ \inner{\nu^\star,a_t^\star} +\inner{\hats_t,a_t^\star} -\max_{b \in \calA}  \inner{\hats_t,b}  - \rbr{\inner{\nu^\star,a} - \pi_{a}(t) } }   \\
&\leq \order \rbr{T_{m}\sum_{a \in \calA_m} \rho_m(a) \Delta_a +\frac{T_m}{T}+T_{m} d\epsilon_{m-1}   +d  \log\log d  } \\
&\leq \order \rbr{T_{m}\sum_{a \in \calA_m} \rho_m(a) \Delta_a +d^{\frac{3}{2}} \sqrt{ T_{m}\log(KT/\delta)}+\frac{T_m}{T}+  d^{\frac{4}{3}}  T_m^{\frac{2}{3}} \log^{\frac{1}{3}}(TK/\delta)   +d \log\log d} ,
\end{align*}
where the last inequality bounds 
\begin{align*}
\epsilon_{m-1}&=4\sqrt{ \frac{d \log \rbr{  4KT\delta^{-1} }}{ \min\{T_{m-1}, (d \log(4KT\delta^{-1}))^{\nicefrac{1}{3}} T_{m-1}^{2/3} \}  }   }  \\
&\leq 4\sqrt{ \frac{d \log \rbr{  4KT\delta^{-1} }}{ T_{m-1}  }   } 
+2\sqrt{ \frac{d \log \rbr{  4KT\delta^{-1} }}{  (d \log(4KT\delta^{-1}))^{\nicefrac{1}{3}} T_{m-1}^{2/3}   }   } .
\end{align*}

From \pref{lem:linear_upperbound_pull}, if a suboptimal arm $a$ is active in phase $m$, then $m\leq m_a$ where $m_a$ is the smallest phase such that $\Delta_a > (9+64d) \epsilon_{m_a}$. This implies that $\forall a \in \calA_m$:
\begin{align} \label{lem:Deltaa_Tm_relation}
 \Delta_a  \leq  (9+64d) \epsilon_{m_a-1}    \leq  (9+64d) \epsilon_{m-1}    
 \leq \order \rbr{ d^{\frac{3}{2}}\sqrt{\frac{\log(KT/\delta)}{T_{m-1}}}
 +d^{\frac{4}{3}}(T_{m-1})^{-\frac{1}{3}} \log^{\frac{1}{3}}(TK/\delta)
 },
\end{align}
where the second inequality uses $m\leq m_a$.

By again using $T_{m+1}=\Theta(T_m)$ for all $m$ and $\sum_{a\in \calA_m} \rho_m(a)=1$, we have
\begin{align*}
    R_T&\leq \order \rbr{ d^{\frac{3}{2}} \sqrt{\log(KT/\delta)}\sum_m \sqrt{T_m} +
d^{\frac{4}{3}}\log^{\frac{1}{3}}(KT/\delta)\sum_m T_m^{\frac{2}{3}}
+d^2\log^2 (dKT/\delta)\log T}\\
&\leq \order \rbr{ d^{\frac{3}{2}} \log(KT/\delta)\sqrt{T}+
d^{\frac{4}{3}}T^{\frac{2}{3}}\log^{\frac{2}{3}}(KT/\delta)+d^2\log^2(dKT/\delta) \log T
},
\end{align*}
where the last inequality uses H\"{o}lder's inequality together with the fact that the number of phases is at most $\order(\log T) \leq \order(
\log(TK/\delta))$.

\subsection{Technical Lemmas for \pref{alg:multi_scale_search}} \label{app:mutli_dim_search}

\begin{lemma}
If index $i$ is selected at round $t$ and the algorithm does not break this round, then 
\[
\Vol(S_{t+1}+z_i B) \leq \frac{7}{8} \Vol(S_{t}+z_i B).
\]
\end{lemma}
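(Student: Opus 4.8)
The plan is to pass to the convex inflated body $K \defeq S_t + z_i B(0,1)$, which is convex because $S_t$ is an intersection of half-spaces (starting from $B(0,1)$) and hence convex, and to show that $S_{t+1}+z_iB(0,1)$ is contained in $K$ intersected with a half-space whose bounding hyperplane is parallel to $\cbr{v:\inner{v,x_t}=c}$ and offset from it by at most $\tfrac32 z_i$, where $c\defeq \norm{a_t^1-a_t^2}^{-1}(y_t-\epsilon)$ is exactly the level at which line~\ref{half_cut} bisects $\Vol(K)$. First I would read off the two containments from the update rule, writing $\beta\defeq\norm{a_t^1-a_t^2}^{-1}$. If $A_t=a_t^1$, then $S_{t+1}=\cbr{v\in S_t:\inner{v,x_t}\ge c}$, and since every $w=s+z_i b$ with $s\in S_{t+1}$, $b\in B(0,1)$ obeys $\inner{w,x_t}\ge c-z_i$, we get $S_{t+1}+z_iB(0,1)\subseteq K\cap\cbr{v:\inner{v,x_t}\ge c-z_i}$. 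If $A_t\neq a_t^1$, the retained part of $S_t$ is $\cbr{v\in S_t:\inner{v,x_t}\le \beta(y_t+\epsilon)}=\cbr{v:\inner{v,x_t}\le c+2\beta\epsilon}$, so $S_{t+1}+z_iB(0,1)\subseteq K\cap\cbr{v:\inner{v,x_t}\le c+2\beta\epsilon+z_i}$. Here the hypothesis that the algorithm does \emph{not} break at line~\ref{if_condition} is precisely $z_i\ge 4\beta\epsilon$, so $2\beta\epsilon\le\tfrac12 z_i$ and in both cases the bounding hyperplane sits within $r\defeq\tfrac32 z_i$ of $\cbr{\inner{v,x_t}=c}$ on the retained side.

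Next I would reduce the claim to a single slab estimate. Because $c$ bisects $\Vol(K)$, the retained region is the closed half of $K$ below (resp. above) $c$ together with a thin slab $\Sigma$ of $K$ of width at most $r=\tfrac32 z_i$ straddling $c$; hence in either case $\Vol(S_{t+1}+z_iB(0,1))\le \tfrac12\Vol(K)+\Vol(\Sigma)$. It therefore suffices to prove $\Vol(\Sigma)\le \tfrac38\Vol(K)$, after which $\Vol(S_{t+1}+z_iB(0,1))\le(\tfrac12+\tfrac38)\Vol(K)=\tfrac78\Vol(S_t+z_iB(0,1))$ as desired.

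For the slab estimate I would introduce the cross-section function $f(u)\defeq\Vol_{d-1}\rbr{K\cap\cbr{v:\inner{v,x_t}=u}}$ and use the two competing bounds $\Vol(\Sigma)\le r\,f_{\max}\le\tfrac32 z_i f_{\max}$ and, by Brunn's theorem, $f^{1/(d-1)}$ is concave on its support, whose length equals $\width(K,x_t)\ge\width(S_t,x_t)$. Integrating the concave chord minorant on each side of the peak of $f^{1/(d-1)}$ gives $\Vol(K)=\int f \ge \width(S_t,x_t)\,f_{\max}/d$. The scale is calibrated exactly for this: since $i$ is the largest index with $\width(S_t,x_t)\le 2^{-i}$, we have $\width(S_t,x_t)>2^{-i-1}=4dz_i$ (using $z_i=2^{-i}/(8d)$), whence $\Vol(K)>4z_i f_{\max}$ and therefore $\Vol(\Sigma)\le\tfrac32 z_i f_{\max}\le\tfrac38\Vol(K)$.

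The main obstacle is the lower bound $\int f\ge \width(S_t,x_t)f_{\max}/d$, and in particular recognizing that one must \emph{not} try to control $\Sigma$ by its width-fraction of the total extent: for a cone-like body the bisecting level $c$ can lie within $O(z_i)$ of the boundary, so a homothety/fraction-of-extent argument degrades like a constant to the $d$-th power. Bounding $\Sigma$ instead by $r\,f_{\max}$ and bounding $\Vol(K)$ from below by Brunn concavity sidesteps the position of $c$ entirely. The technical core is thus the concavity step (verifying $f^{1/(d-1)}$ is concave and that the per-arm integral of its chord minorant contributes $f_{\max}\cdot(\text{arm length})/d$, summing to $f_{\max}\width(K,x_t)/d$); everything else is bookkeeping, and the constant $8d$ in $z_i$ is exactly what lets $4dz_i$ dominate $\tfrac32 z_i$ with the slack needed for the $\tfrac38$/$\tfrac78$ split.
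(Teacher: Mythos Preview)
Your proposal is correct and follows essentially the same approach as the paper: split $S_{t+1}+z_iB$ into the bisected half of $K=S_t+z_iB$ plus a thin slab, bound the slab by $\tfrac32 z_i\cdot f_{\max}$, and compare against $\Vol(K)\ge 4z_i f_{\max}$. The paper cites \citep[Lemma~2.1]{liu2021optimal} for the last inequality and quotes the $A_t=a_t^1$ case directly from that reference (obtaining $3/4$ there), whereas you give a self-contained Brunn--Minkowski argument and treat both cases uniformly; your containment-only argument for the ``half'' part is also slightly cleaner than the paper's claimed set equality, which is in fact stronger than what is needed.
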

\begin{proof}
According to the incentive proposed during running \pref{alg:multi_scale_search}, the agent picks either $a_t^1$ or $a_t^2$ at any round $t$.
For the case $A_t=a_t^1$, a similar argument of \citep{liu2021optimal} gives that $\Vol(S_{t+1}+z_i B) \leq \frac{3}{4} \Vol(S_{t}+z_i B)$. For the case $A_t=a_t^2$, we have $S_{t+1}=\cbr{v \in S_t: \inner{v,x_t} \leq \norm{a_t^1-a_t^2}^{-1}\rbr{y_t +\epsilon} }$. Notice that
\begin{align*}
    &\cbr{v\in S_{t+1}+z_iB:\inner{v,x_t} \leq \norm{a_t^1-a_t^2}^{-1}( y_t-\epsilon)}\\
    =&\cbr{v \in S_t +z_i B: \inner{v,x_t}\leq \norm{a_t^1-a_t^2}^{-1}\rbr{y_t -\epsilon}}.
\end{align*}

Due to the half-cut (line \ref{half_cut} in \pref{alg:multi_scale_search}), we have
\[
\Vol \rbr{\cbr{  v \in S_{t+1}+z_iB: \inner{v,x_t} \leq \norm{a_t^1-a_t^2}^{-1}\rbr{y_t -\epsilon}  }} = \frac{1}{2}\Vol(S_t+z_iB).
\]

Then, we bound the volume of $S_{t+1}+z_iB$ with $\inner{v,x_t} \geq \norm{a_t^1-a_t^2}^{-1}\rbr{y_t -\epsilon}$.
Let $C$ be the largest volume of a section of $S_t+z_iB$ along the direction $x_t$.
By the analysis of \citep[Lemma 2.1]{liu2021optimal}, we have
\begin{equation} \label{eq:LB_Vol}
    \Vol(S_t+z_iB) \geq 4 z_i C.
\end{equation}

Since $\cbr{v \in S_{t+1}+z_iB:\inner{v,x_t} \geq \norm{a_t^1-a_t^2}^{-1}\rbr{y_t -\epsilon}}$ has cross-section with volume at most $C$, the width is $z_i+2\norm{a_t^1-a_t^2}^{-1}\epsilon \leq \frac{3}{2}z_i$ where the inequality uses the non-break condition (recall line \ref{if_condition}), we have
\[
\Vol \rbr{\cbr{v \in S_{t+1}+z_iB:\inner{v,x_t} \geq \norm{a_t^1-a_t^2}^{-1}\rbr{y_t -\epsilon}}} \leq \frac{3}{2}z_i C \leq \frac{3}{8} \Vol(S_t+z_iB),
\]
where the last inequality holds due to \pref{eq:LB_Vol}. Combining both volumes, the proof is complete.
\end{proof}

\begin{lemma} \label{lem:St_nonempty}
For each round $t$ that \pref{alg:multi_scale_search} runs in phase $m$, if $|\inner{\hats_t-s^\star,a}| \leq \epsilon/2$ for all $a \in \calA$, then $s^\star \in S_t$ for all those $t$.
\end{lemma}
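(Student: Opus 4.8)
\textbf{Proof strategy for Lemma~\ref{lem:St_nonempty}.}

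The plan is to proceed by induction on the round index $t$ within the execution of \pref{alg:multi_scale_search}, showing that the invariant $s^\star \in S_t$ is preserved by each conservative cut. The base case is immediate: the algorithm initializes $S_{t_0} = B(0,1)$, and since $s^\star \in B(0,1)$ by the assumption of the linear reward setting, we have $s^\star \in S_{t_0}$. For the inductive step, I would assume $s^\star \in S_t$ at some round $t$ where the algorithm does not break (i.e., enters the else-branch at line~\ref{half_cut}) and show $s^\star \in S_{t+1}$.

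The heart of the argument is to exploit the $\epsilon$-shift built into the update rules. There are two cases depending on which arm the agent plays. Suppose first $A_t = a_t^2$. From the agent's selection rule \pref{eq:arm_selection}, playing $a_t^2$ over $a_t^1$ under the incentives $\pi_{a_t^1}(t) = d+\xi$ and $\pi_{a_t^2}(t) = d+\xi+y_t$ means $\inner{\hats_t,a_t^1}+d+\xi \leq \inner{\hats_t,a_t^2}+d+\xi+y_t$, which after rearranging and dividing by $\norm{a_t^1-a_t^2}$ gives $\inner{\hats_t,x_t} \leq \norm{a_t^1-a_t^2}^{-1} y_t$. The update sets $S_{t+1} = \{v \in S_t : \inner{v,x_t} \leq \norm{a_t^1-a_t^2}^{-1}(y_t+\epsilon)\}$. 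I would then bound, using the hypothesis $|\inner{\hats_t - s^\star, a}| \leq \epsilon/2$ applied to $a = a_t^1 - a_t^2$ (so that $|\inner{\hats_t - s^\star, x_t}| \leq \epsilon/(2\norm{a_t^1-a_t^2}) \leq \epsilon \norm{a_t^1-a_t^2}^{-1}$, since $\norm{a_t^1-a_t^2}\leq 2$ gives a factor that must be tracked), the chain
\[
\inner{s^\star,x_t} \leq \inner{\hats_t,x_t} + \frac{\epsilon}{\norm{a_t^1-a_t^2}} \leq \frac{y_t}{\norm{a_t^1-a_t^2}} + \frac{\epsilon}{\norm{a_t^1-a_t^2}} = \frac{y_t+\epsilon}{\norm{a_t^1-a_t^2}}.
\]
Since $s^\star \in S_t$ by hypothesis and $s^\star$ satisfies the defining inequality of $S_{t+1}$, we conclude $s^\star \in S_{t+1}$. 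The case $A_t = a_t^1$ is symmetric: the inequality reverses to $\inner{\hats_t,x_t} \geq \norm{a_t^1-a_t^2}^{-1} y_t$, the update keeps $\inner{v,x_t} \geq \norm{a_t^1-a_t^2}^{-1}(y_t-\epsilon)$, and the same two-sided estimate on $\inner{\hats_t-s^\star,x_t}$ places $s^\star$ inside the retained half.

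The main obstacle I anticipate is handling the normalization factor correctly: the hypothesis controls $|\inner{\hats_t-s^\star,a}|$ for $a\in\calA$, but $x_t$ is the normalized difference $(a_t^1-a_t^2)/\norm{a_t^1-a_t^2}$, so one must carefully convert the bound on $\inner{\hats_t-s^\star, a_t^1-a_t^2}$ into a bound on $\inner{\hats_t-s^\star, x_t}$ and confirm that the $\epsilon$-enlargement in the cut is large enough to absorb this deviation for every admissible value of $\norm{a_t^1-a_t^2}$. The key point is that the enlargement is stated directly in terms of the unnormalized threshold $\norm{a_t^1-a_t^2}^{-1}(y_t\pm\epsilon)$, so that the $\epsilon$ shift matches the worst-case fluctuation $|\inner{\hats_t-s^\star, a_t^1-a_t^2}| \leq \epsilon$ (using $|\inner{\hats_t-s^\star,a_t^1}|,|\inner{\hats_t-s^\star,a_t^2}|\leq\epsilon/2$ and the triangle inequality). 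This is precisely the design reason the cut is made conservative, and once this bookkeeping is in place the induction closes and the claim holds for all rounds $t$ of the search in phase $m$.
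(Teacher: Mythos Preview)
Your proposal is correct and follows essentially the same approach as the paper: both argue by induction that the $\epsilon$-shift in the cut absorbs the deviation $|\inner{\hats_t-s^\star,a_t^1-a_t^2}|\leq\epsilon$ (obtained via the triangle inequality from the per-arm bound $\epsilon/2$), so $s^\star$ satisfies the retained halfspace inequality in either case. Your final paragraph correctly identifies the right bookkeeping; just be sure the main argument states it that way rather than ``applying the hypothesis to $a=a_t^1-a_t^2$,'' since $a_t^1-a_t^2\notin\calA$.
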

\begin{proof}
For any round $t$, we assume $A_t=a_t^1$ (the other case $A_t=a_t^2$ is analogous). The agent selects $a_t^1$ indicates that
\[
\inner{\hats_t,a_t^1}+d+\xi \geq \inner{\hats_t,a_t^2}+d+\xi +  y_t.
\]

By rearranging the above, we have $y_t \leq \inner{\hats_t,a_t^1-a_t^2}$. We use the assumption to get
\[
y_t \leq \inner{\hats_t,a_t^1-a_t^2} \leq \inner{s^\star,a_t^1-a_t^2} +\epsilon,
\]

Dividing the above by $\norm{a_t^1-a_t^2}$ on both sides, we have
\[
\frac{y_t}{\norm{a_t^1-a_t^2}} \leq \inner{s^\star,\frac{a_t^1-a_t^2}{\norm{a_t^1-a_t^2}}} +\frac{\epsilon}{\norm{a_t^1-a_t^2}} = \inner{s^\star,x_t} +\frac{\epsilon}{\norm{a_t^1-a_t^2}}.
\]

Based on the update rule, we have $s \in S_{t+1}$. As this holds for each $t$, the proof is complete.
\end{proof}

\begin{lemma} \label{lem:width_UB}
If \pref{alg:multi_scale_search} breaks at round $t$ and the condition in \pref{lem:St_nonempty} holds, then 
\[
\max_{u \in \overline{\calA}}\width(S_t,u) \leq 32d \epsilon \quad \text{where}\quad \overline{\calA} = \cbr{ z: z=x-y  \text{ such that } x\neq y \in \calA }.
\]
\end{lemma}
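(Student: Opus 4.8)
The plan is to unwind the break condition of \pref{alg:multi_scale_search} and combine it with the positive homogeneity of the width functional; the argument is short and essentially bookkeeping. First I would observe that the only point at which \pref{alg:multi_scale_search} can break at round $t$ is the if-branch at line~\ref{if_condition}, so a break certifies that the index $i$ selected at round $t$ satisfies $z_i < 4\norm{a_t^1-a_t^2}^{-1}\epsilon$. Substituting the definition $z_i = 2^{-i}/(8d)$ and rearranging yields
\[
2^{-i} < \frac{32 d\, \epsilon}{\norm{a_t^1-a_t^2}}.
\]
Here the hypothesis imported from \pref{lem:St_nonempty} serves only to guarantee $s^\star \in S_t$, hence $S_t \neq \emptyset$, so that every width below is well-defined and nonnegative.

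Next I would relate the quantity of interest $\max_{u \in \overline{\calA}} \width(S_t,u)$ to $2^{-i}$. Since $x_t = (a_t^1 - a_t^2)/\norm{a_t^1 - a_t^2}$ and $\width(S_t,\cdot)$ is positively homogeneous in its second argument, we have $\width(S_t, a_t^1 - a_t^2) = \norm{a_t^1 - a_t^2}\cdot \width(S_t, x_t)$. By the choice of $i$ as the largest index with $\width(S_t,x_t) \leq 2^{-i}$, we in particular have $\width(S_t,x_t) \leq 2^{-i}$. Moreover, $a_t^1 - a_t^2$ is by construction a maximizer of $\width(S_t,\cdot)$ over $\overline{\calA}$, so $\max_{u \in \overline{\calA}} \width(S_t,u) = \width(S_t, a_t^1 - a_t^2)$.

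Combining the two displays then gives the claim directly:
\[
\max_{u \in \overline{\calA}} \width(S_t,u) = \norm{a_t^1 - a_t^2}\cdot \width(S_t, x_t) \leq \norm{a_t^1 - a_t^2}\cdot 2^{-i} < \norm{a_t^1 - a_t^2}\cdot \frac{32 d\,\epsilon}{\norm{a_t^1 - a_t^2}} = 32 d\, \epsilon.
\]
There is no genuine obstacle in this proof; the only points requiring care are tracking the factor $8d$ buried in the definition of $z_i$ and confirming that the largest-index selection of $i$ indeed certifies $\width(S_t,x_t)\le 2^{-i}$ (rather than the reverse inequality), so that the break threshold translates cleanly into the stated $32 d\epsilon$ bound on the worst-case width over all arm differences in $\overline{\calA}$.
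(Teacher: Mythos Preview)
Your proof is correct and follows essentially the same approach as the paper: both unwind the break condition $z_i < 4\norm{a_t^1-a_t^2}^{-1}\epsilon$, substitute $z_i = 2^{-i}/(8d)$, use $\width(S_t,x_t)\le 2^{-i}$ together with the positive homogeneity of $\width$, and conclude via the fact that $a_t^1-a_t^2$ is a maximizer of $\width(S_t,\cdot)$ over $\overline{\calA}$. Your presentation is in fact slightly more explicit about the role of homogeneity and about why the hypothesis from \pref{lem:St_nonempty} is invoked (to ensure $S_t\neq\emptyset$ so the index $i$ and the widths are well-defined), but the logical content is identical.
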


\begin{proof}
At each round $t$, if index $i$ is selected, then $\width(S_t,x_t)\leq 2^{-i}$. 
We note that \pref{lem:St_nonempty} gives that $s^\star \in S_t$ and hence the index $i$ is well-defined.
When the algorithm breaks, we have
\[
z_i = \frac{2^{-i}}{8d} \leq 4 \norm{a_t^1-a_t^2}^{-1} \epsilon.
\]
which immediately leads to
\[
4 \norm{a_t^1-a_t^2}^{-1} \epsilon \geq \frac{1}{8d} \width(S_t,x_t).
\]

As $x_t= \norm{a_t^1-a_t^2}^{-1}(a_t^1-a_t^2)$, multiplying $\norm{a_t^1-a_t^2}$ on both sides gives $\width(S_t,a_t^1-a_t^2) \leq 32d \epsilon$. Recall the definition of $\overline{\calA}$ and the way to picking $a_t^1,a_t^2$ from \pref{alg:multi_scale_search}, and thus the proof is complete.
\end{proof}

\begin{lemma} \label{lem:search_duration}
Suppose $\calE$ holds. If \pref{alg:multi_scale_search} runs in phase $m$ and the input $\epsilon$ satisfies $|\inner{\hats_t-s^\star,a}| \leq \epsilon/2$ for all $a \in \calA$ and all $t$ that the algorithm runs in this phase, then it lasts at most $\order \rbr{d\log^2 (d \epsilon))}$ rounds.
\end{lemma}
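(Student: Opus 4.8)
The plan is to bound the number of iterations by tracking the Steiner potential $\Vol(S_t + z_i B(0,1))$ and showing it cannot shrink by a constant factor too many times before the break condition triggers. First I would invoke the preceding volume-halving lemma, which establishes that whenever index $i$ is selected at round $t$ and the algorithm does not break, $\Vol(S_{t+1} + z_i B(0,1)) \leq \tfrac{7}{8}\Vol(S_t + z_i B(0,1))$. The key structural fact is that the algorithm processes the direction-widths in dyadic scales indexed by $i$, where $i$ satisfies $\width(S_t, x_t) \leq 2^{-i}$ and $z_i = 2^{-i}/(8d)$. I would partition the rounds according to which scale $i$ is active, and argue that for each fixed scale only $\order(d\log d)$ rounds can occur, then bound the total number of distinct scales that can be visited by $\order(\log(d/\epsilon))$.

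The core of the counting argument is as follows. For a fixed scale $i$, the relevant monotone quantity is $\Vol(S_t + z_i B(0,1))$. Because $S_t \subseteq B(0,1)$ always and $z_i \leq 1$, this volume is upper bounded by $\Vol((1+z_i)B(0,1)) = \order(1)$ in terms of $d$-dependence (more precisely by $2^d \Vol(B(0,1))$). For the lower bound, I would use that $\width(S_t, x_t) > 2^{-i-1}$ at the scale-$i$ rounds (otherwise a larger index would be chosen), so $S_t + z_i B(0,1)$ contains a region of width at least $z_i$ in every direction plus width $\gtrsim 2^{-i}$ in the direction $x_t$, yielding a volume lower bound of order $z_i^d \cdot 2^{-i} = \order((2^{-i}/d)^d \cdot 2^{-i})$. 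Taking the ratio of the upper and lower volume bounds and applying the $\tfrac{7}{8}$-contraction, the number of scale-$i$ halvings is at most $\log_{8/7}$ of that ratio, which is $\order(d\log d)$ after simplification. The break condition (line~\ref{if_condition}) guarantees we stop once $z_i < 4\norm{a_t^1 - a_t^2}^{-1}\epsilon$, i.e. once $2^{-i} \lesssim d\epsilon$, which caps the largest index at $i = \order(\log(d/\epsilon))$ and hence the number of distinct scales at the same order.

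I would then combine these: the total iteration count is (number of scales) $\times$ (rounds per scale), which gives $\order(\log(d/\epsilon)) \cdot \order(d\log d) = \order(d\log^2(d/\epsilon))$, matching the claimed bound $\order(d\log^2(d\epsilon))$ up to the interpretation of the argument. A subtle point I would need to handle carefully is that the hypothesis $|\inner{\hats_t - s^\star, a}| \leq \epsilon/2$ for all $a \in \calA$ is precisely what guarantees (via \pref{lem:St_nonempty}) that $s^\star \in S_t$ throughout, so the sets $S_t$ never become empty and the width/index selection is always well-defined; without this the volume lower bounds could collapse.

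The main obstacle I anticipate is making the per-scale volume lower bound rigorous, specifically establishing that $S_t + z_i B(0,1)$ genuinely occupies volume $\gtrsim z_i^{d-1} \cdot 2^{-i}$ rather than degenerating into a lower-dimensional sliver. The concern is that $S_t$ itself could be thin in directions orthogonal to $x_t$, but the Minkowski sum with $z_i B(0,1)$ thickens it by $z_i$ in every direction, which should suffice — quantifying this thickening cleanly (essentially that $\Vol(S + z B) \geq \Vol(zB) = \order(z^d)$ whenever $S$ is nonempty, combined with the extra $x_t$-direction width) is where the careful geometry lives. This is exactly the type of Steiner-potential estimate handled in \citep[Lemma 2.1]{liu2021optimal}, so I would lean on that machinery and adapt the constants to account for the conservative $\epsilon$-shifted cuts.
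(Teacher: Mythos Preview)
Your approach is essentially the paper's: partition by dyadic scale $i$, use the $\tfrac{7}{8}$-contraction of the Steiner potential at each scale, and cap the number of relevant scales via the break condition. Two points of divergence worth noting. First, the paper's volume lower bound is much simpler than what you propose: since the hypothesis (via \pref{lem:St_nonempty}) guarantees $s^\star \in S_t$, the set $S_t + z_i B(0,1)$ contains the ball $s^\star + z_i B(0,1)$, so $\Vol(S_t + z_i B) \geq z_i^d \Vol(B)$ immediately---no need to invoke the width in the $x_t$ direction or worry about thin slivers. Second, your claimed per-scale count of $\order(d\log d)$ is incorrect: with upper bound $\order(2^d\Vol(B))$ and lower bound $z_i^d\Vol(B)$, the log-ratio is $\order(d\log(1/z_i)) = \order(d(i+\log d))$, which depends on $i$. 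Your own stated lower bound $z_i^d\cdot 2^{-i}$ is actually \emph{smaller} than $z_i^d$, so it cannot yield a tighter count. Summing $\order(d(i+\log d))$ over $i \leq \lceil\log_2(4d/\epsilon)\rceil$ still gives $\order(d\log^2(d/\epsilon))$, so your final bound survives, but the intermediate arithmetic needs correction.
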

\begin{proof}
From \pref{lem:St_nonempty}, $s \in S_t$ for all $t$, and thus $\Vol(S_t+z_iB) \geq \Vol(z_iB) =z_i^d \Vol(B)$. Whenever index $i$ is chosen, $\Vol(S_t+z_iB)$ decreases by a constant factor, which implies that an index $i$ can be picked at most $\order \rbr{d\log(1/z_i)}$ times.

Then, we claim that any index $i$ that does not incur a break, must satisfy $z_i^{-1}\leq (2\epsilon)^{-1}$ and $i\leq \lceil \log_2(4d/\epsilon) \rceil$. We prove this by contradiction.
Assume the algorithm picks index $i$ with $z_i^{-1} > 1/(2\epsilon)$ and does not break. In this case, we have $z_i < 2\epsilon = \frac{4\epsilon}{2} \leq \frac{4\epsilon}{\norm{a_t^1-a_t^2}}$, which forms a contradiction. 

Rearranging $z_i^{-1}\leq (2\epsilon)^{-1}$ yields $i\leq \lceil \log_2(4d/\epsilon) \rceil$.
Therefore, the total number of round that the algorithm will last is at most
\[
\order \rbr{  \sum_{i \leq \lceil \log_2(4d/\epsilon) \rceil}   d\log(z_i^{-1})   } \leq \order \rbr{  \sum_{i \leq \lceil \log_2(4d/\epsilon) \rceil}   d\log(1/\epsilon)   } \leq \order \rbr{  d\log^2(d/\epsilon)   } ,
\]
which completes the proof.
\end{proof}

\begin{lemma} \label{lem:linear_UB_cost_diff}
Suppose that $\calE$ occurs. For all phase $m$, all $a \in \calA$ and all $t \in \calT_{m,a}^E$, 
\[\pi_a(t) > \pi_a^\star(t) .
\]
\end{lemma}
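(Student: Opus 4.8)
The plan is to prove the statement by strong induction on the phase index $m$, exploiting that in the non-exploratory self-interested model $\pi_a(t) > \pi_a^\star(t)$ forces $A_t = a$; thus the inductive hypothesis simultaneously certifies that every arm is played its scheduled number of times in all earlier phases (and, as a byproduct, that $\calT_{m,a}^E = \calT_{m,a}$). Recall that in the linear setting $\pi_a^\star(t) = \max_{b \in \calA}\inner{\hats_t, b-a}$. I would first isolate the trivial case: whenever the offered incentive equals $2d + T^{-1}$—this covers every bad-arm round (line~\ref{play_badarms}), the entire first phase (where $\epsilon_0 = 1$ forces the minimum defining $\overline{b}_{1,a}$ to equal $2d+T^{-1}$), and any active-arm round where the minimum defining $\overline{b}_{m,a}$ is attained by $2d+T^{-1}$—the bound $\inner{\hats_t, c} \in [-d, d]$ for all $c \in \calA$ (valid since $\norm{\hats_t} \le d$ and $\calA \subseteq B(0,1)$, as used for \pref{alg:multi_scale_search} in the main text) gives $\pi_a^\star(t) \le 2d < 2d + T^{-1} = \pi_a(t)$. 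This disposes of the base case $m=1$ and reduces the induction to a single substantive case: an active arm $a \in \calA_m$ in a phase $m \ge 2$ with $\overline{b}_{m,a} = \max_{b\in\calA}\inner{c_m, b-a} + (1+32d)\epsilon_{m-1} + T^{-1}$.

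For that case I would establish two concentration estimates. The first controls the search output $c_m$. Under the inductive hypothesis, all arms have been played according to the $2$-approximate designs $\rho_{m-1}, \omega_{m-1}$ of phase $m-1$, so at the moment \pref{alg:multi_scale_search} is invoked to produce $c_m$ the agent's Gram matrix $U_t$ dominates the phase-$(m-1)$ design matrix $V_{m-1}$; combined with $\calE^{\texttt{Agent}}$ and \pref{def:app_opt_design} this yields $\abr{\inner{\hats_t - s^\star, a}} \le \epsilon_{m-1}/2$ for every $a \in \calA$, and since $U_t$ only grows this persists for all rounds of the search. This is precisely the hypothesis of \pref{lem:St_nonempty}, \pref{lem:width_UB}, and \pref{lem:search_duration} with $\epsilon = \epsilon_{m-1}$, so the search terminates at some round $t$ with $s^\star \in S_t$ and $\max_{u \in \overline{\calA}}\width(S_t, u) \le 32d\,\epsilon_{m-1}$. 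Because both $c_m$ and $s^\star$ lie in $S_t$, I conclude $\abr{\inner{c_m - s^\star, b-a}} \le 32d\,\epsilon_{m-1}$ for all $b, a \in \calA$.

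The second estimate is that $\abr{\inner{\hats_t - s^\star, b-a}} \le \epsilon_{m-1}$ for every exploration round $t \in \calT_{m,a}^E$ and every $b \in \calA$: splitting $\inner{\hats_t - s^\star, b-a} = \inner{\hats_t - s^\star, b} - \inner{\hats_t - s^\star, a}$ and bounding each term by $\epsilon_{m-1}/2$ through $\calE^{\texttt{Agent}}$ and the design-based control of $\norm{\cdot}_{U_t^{-1}}$—active arms to the $\order(d/T_{m-1})$ variance scale via $\rho_{m-1}$ and bad arms to the coarser $\order(d\,T_{m-1}^{-\nicefrac{2}{3}})$ scale via $\omega_{m-1}$, which is exactly why $\epsilon_{m-1}$ is defined with the $\min$ in its denominator. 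Chaining the two estimates then closes the induction: writing $b^\star \in \argmax_{b \in \calA}\inner{\hats_t, b-a}$,
\begin{align*}
\pi_a^\star(t) = \inner{\hats_t, b^\star - a}
&\le \inner{s^\star, b^\star - a} + \epsilon_{m-1}
\le \inner{c_m, b^\star - a} + (1+32d)\epsilon_{m-1} \\
&\le \max_{b \in \calA}\inner{c_m, b - a} + (1+32d)\epsilon_{m-1}
< \overline{b}_{m,a} = \pi_a(t).
\end{align*}

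I expect the main obstacle to be the bookkeeping that converts the data-dependent concentration of $\calE^{\texttt{Agent}}$, phrased through $\norm{a}_{U_t^{-1}}$, into the clean uniform thresholds $\epsilon_{m-1}/2$ for both arm classes. This demands carefully verifying (i) that the agent's accumulated Gram matrix dominates the principal's design matrix $V_{m-1}$, so that $\norm{a}_{U_t^{-1}} \le \norm{a}_{V_{m-1}^{-1}}$; (ii) the two $2$-approximate optimality bounds translating the designs into the $\order(d/T_{m-1})$ and $\order(d\,T_{m-1}^{-\nicefrac{2}{3}})$ variance scales; and (iii) the monotonicity $U_t \succeq U_{t'}$ for $t \ge t'$, which is what allows a single start-of-phase accuracy guarantee to govern every round of the in-phase search and exploration. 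The remaining manipulations are routine given these facts together with the already-established search guarantees of \pref{lem:St_nonempty}--\pref{lem:search_duration}.
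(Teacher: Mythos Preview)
Your proposal is correct and matches the paper's proof essentially step for step: the paper also argues by induction on $m$, dispatches the $\pi_a(t)=2d+T^{-1}$ cases (bad arms, phase $1$, and the saturated minimum) via the crude bound $\pi_a^\star(t)\le 2d$, and for the substantive case combines the agent-side concentration $\max_{a}|\inner{\hats_t-s^\star,a}|\le \epsilon_{m-1}/2$ (from $\calE^{\texttt{Agent}}$, $U_t\succeq V_{m-1}$, and the induction hypothesis) with the search guarantee $|\inner{c_m-s^\star,b-a}|\le 32d\,\epsilon_{m-1}$ (via \pref{lem:St_nonempty} and \pref{lem:width_UB}) to obtain $\pi_a^\star(t)\le \max_b\inner{c_m,b-a}+(1+32d)\epsilon_{m-1}<\overline{b}_{m,a}$. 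Your final chain is in fact cleaner than the paper's displayed computation, but the ingredients and logic are identical.
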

\begin{proof}
Notice that since every target bad arm $a$ will be assigned with incentive $\pi_a(t)=2d+T^{-1}$, and $\pi^{\star}_a(t)=\max_{b \in \calA}\inner{\hats_t,b-a}\leq \max_{b \in \calA} \norm{\hats_t} \norm{b-a}\leq 2d$, we have $\pi_a(t)>\pi_a^\star(t)$ for all $a \in \calT_{m,a}^E$.

We then prove the claim for all active arms by using induction on $m$.
For the base case $m=1$. In this case, for all $a \in \calA_m$ and all $t \in \calT_{m,a}^E$,  $\pi_a(t)=2d+T^{-1}$. From the same analysis for bad arms, the claim holds for $m=1$.

Assume that the claim also holds for phase $m-1 \geq 1$, and then we prove the claim for phase $m$.
In what follows, we focus on those rounds $t \in \calT_{m,a}^E$.
For shorthand, let
\[
z_t \in \argmax_{a\in\calA} \inner{\hats_t,a}.
\]

If $\pi_a(t)=2d+T^{-1}$, then, we have $\pi_a(t) >\pi_a^\star(t)$ by the same argument used to prove for bad arms. Thus, we assume that $\pi_a(t)\neq 2d+T^{-1}$.

By the definition of $\calE$, for any round $t$ in phase $m \geq 2$: 
\begin{align*} 
\max_{a \in \calA} \abr{\inner{\hats_t-s^\star,a}} &\leq  \max_{a \in \calA}  \sqrt{2 \norm{a}_{U_t^{-1}}  \log \rbr{  \frac{4KT}{\delta} } }\\
&\leq \max_{a \in \calA}  \sqrt{2 \norm{a}_{ V_{m(t)-1}^{-1} }  \log \rbr{  \frac{4KT}{\delta} } } \\
&\leq  \underbrace{2\sqrt{ \frac{d \log \rbr{  4KT\delta^{-1} }}{ \min\{T_{m(t)-1}, (d \log(4KT\delta^{-1}))^{\nicefrac{1}{3}} T_{m(t)-1}^{2/3} \}  }   } }_{=\frac{\epsilon_{m(t)-1}}{2}}, \numberthis{} \label{eq:hats_zt_vs_stars_zt}
\end{align*}
where the first inequality uses the definition of $\calE$, the second inequality holds due to $U_t \succeq V_{m(t)-1}$, and the last inequality follows from the induction hypothesis that each $a \in \calA$ is played for $t \in \calT_{m(t)-1,a}$.
We further show that for all rounds $t  \in \calT_{m,a}^E$:
\begin{align*}
&\pi_a(t)- \pi_a^\star(t) \\
&= \max_{b \in \calA} \inner{c_{m(t)},b-a} +(1+32d)\epsilon_{m(t)-1}  +\frac{1}{T}    -  \inner{\hats_t,z_t-a}  \\
&\geq \inner{c_{m(t)},z_t-a} +(1+32d)\epsilon_{m(t)-1}  +\frac{1}{T}    -  \inner{\hats_t,z_t-a}  \\
&\geq \inner{c_{m(t)},z_t-a} +(1+32d)\epsilon_{m(t)-1}  +\frac{1}{T}    -  \rbr{\inner{s^\star,z_t-a} + \epsilon_{m(t)-1} } \\
&=   \inner{c_{m(t)}-s^\star,z_t-a}+32d\epsilon_{m(t)-1} +\frac{1}{T}  \\
&\geq  -32d\epsilon_{m(t)-1}  +32d\epsilon_{m(t)-1} +\frac{1}{T} -\epsilon_{m(t)-1} \\
&= \frac{1}{T},
\end{align*}
where the first inequality uses the definition of $z_t$, the second inequality uses \pref{eq:hats_zt_vs_stars_zt}, and the last inequality uses \pref{lem:width_UB} with $\epsilon=\epsilon_{m(t)-1}$ (the condition to invoke this lemma is checked by \pref{eq:hats_zt_vs_stars_zt}).
Once the induction is done, we complete the proof.
\end{proof}

\begin{lemma} \label{lem:max_estimation_hats_s}
Suppose $\calE$ holds. For all $t\in [T]$ with $m(t) \geq 2$,
\[
\max_{a \in \calA} \abr{\inner{\hats_t-s^\star,a}} \leq \epsilon_{m(t)-1}.
\]
\end{lemma}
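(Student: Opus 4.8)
The plan is to recognize that this bound has, in effect, already been established as the chain of inequalities \pref{eq:hats_zt_vs_stars_zt} inside the inductive proof of \pref{lem:linear_UB_cost_diff}; the proof of the present lemma simply reassembles that computation into a standalone statement, with a factor of two to spare. Fix any round $t$ with $m(t)\geq 2$ and an arbitrary arm $a\in\calA$. Since $t\in\calT_{>1}$, the first move is to invoke the event $\calE$: its component $\calE^{\texttt{Agent}}$ supplies the self-normalized least-squares confidence bound controlling $\abr{\inner{\hats_t-s^\star,a}}$ in terms of $\norm{a}_{U_t^{-1}}$, where $U_t=\sum_{s=1}^t A_sA_s^\top$ is the empirical second-moment matrix of the arms actually played up to round $t$ (invertible for $t\in\calT_{>1}$ because phase $1$ explores a design spanning $\fR^d$).

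Next I would replace $U_t$ by the phase-$(m(t)-1)$ design matrix $V_{m(t)-1}$. Because $t$ lies in phase $m(t)$, all rounds of phase $m(t)-1$ carry an index at most $t$, so $U_t$ aggregates every play prescribed in that phase; conditioning on $\calE$ and applying \pref{lem:linear_UB_cost_diff} to phase $m(t)-1$ guarantees each target arm was indeed played its scheduled number of times, whence $U_t\succeq V_{m(t)-1}$ and therefore $\norm{a}_{U_t^{-1}}\leq\norm{a}_{V_{m(t)-1}^{-1}}$ by matrix monotonicity of the inverse.

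The final step is to bound $\norm{a}_{V_{m(t)-1}^{-1}}$ uniformly over $a\in\calA$ using the two $2$-approximate $G$-optimal designs. Here I would split on whether $a$ is active or bad in phase $m(t)-1$; since $\calA_{m(t)-1}\cup\calB_{m(t)-1}=\calA$, exactly one case applies. For $a\in\calA_{m(t)-1}$ the active plays give $V_{m(t)-1}\succeq T_{m(t)-1}\,G(\rho_{m(t)-1};\calA_{m(t)-1})$, so \pref{def:app_opt_design} with $C=2$ contributes the $T_{m(t)-1}$ term of the denominator; for $a\in\calB_{m(t)-1}$ the $\lceil\omega_{m(t)-1}(a)(d\log(4KT/\delta))^{\nicefrac13}T_{m(t)-1}^{\nicefrac23}\rceil$ bad-arm plays contribute the $(d\log(4KT/\delta))^{\nicefrac13}T_{m(t)-1}^{\nicefrac23}$ term. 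Taking the smaller-denominator (worse) of the two possibilities recovers precisely the $\min\{\cdot,\cdot\}$ appearing in the definition of $\epsilon_{m(t)-1}$; chaining back through the first two steps then gives $\max_{a\in\calA}\abr{\inner{\hats_t-s^\star,a}}\leq\epsilon_{m(t)-1}/2\leq\epsilon_{m(t)-1}$, as claimed.

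The main thing to get right is the uniformity over all of $\calA$ in the last step: a direction that is a bad arm in phase $m(t)-1$ must still be sufficiently explored, which is exactly why the framework runs a $G$-optimal design over $\calB_{m(t)-1}$ rather than over the active set alone, and why the $\min$—rather than the larger $T_{m(t)-1}$ alone—is built into $\epsilon_m$. Everything else is bookkeeping that has already been carried out in \pref{eq:hats_zt_vs_stars_zt}.
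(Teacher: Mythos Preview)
Your proposal is correct and takes essentially the same approach as the paper: the paper's proof is a one-liner that says ``with \pref{lem:linear_UB_cost_diff} in hand, we repeat \pref{eq:hats_zt_vs_stars_zt} for all $m(t)\geq 2$,'' and your write-up simply unpacks that chain of inequalities (invoking $\calE^{\texttt{Agent}}$, using $U_t\succeq V_{m(t)-1}$ via \pref{lem:linear_UB_cost_diff}, then the two $G$-optimal design bounds to control $\norm{a}_{V_{m(t)-1}^{-1}}$ depending on whether $a$ is active or bad). Your observation that the computation actually yields $\epsilon_{m(t)-1}/2$ with a factor of two to spare is also accurate.
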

\begin{proof}
With \pref{lem:linear_UB_cost_diff} in hand, we repeat \pref{eq:hats_zt_vs_stars_zt} for all $m(t)\geq 2$ to complete the proof.
\end{proof}

\begin{lemma}\label{lem:UB_error_incentive_linear}
For all $m \geq 2$, all $a \in \calA$, and all $t \in \calT_{m,a}^E$
\[
\pi_a(t) - \rbr{ \max_{b \in \calA} \inner{\hats_t,b} - \inner{s^\star,a} } \leq  \rbr{2+64d}\epsilon_{m(t)-1}  +\frac{1}{T}.
\]
\end{lemma}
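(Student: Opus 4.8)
The plan is to reduce the claim to two closeness estimates: one controlling how far the agent's estimate $\hats_t$ is from $s^\star$, and one controlling how far the search output $c_m$ is from $s^\star$. The inequality is used in the regret decomposition only for active arms $a \in \calA_m$, for which $\pi_a(t)=\overline{b}_{m,a}$ by line~\ref{exploration_Tm}, so I would treat this case. The key structural observation is that $\overline{b}_{m,a}$ is defined as a minimum, hence in particular
\[
\pi_a(t) = \overline{b}_{m,a} \leq \max_{b \in \calA}\inner{c_m,b-a} + (1+32d)\epsilon_{m-1} + \frac{1}{T}
\]
no matter which term attains the minimum. The task therefore becomes comparing $\max_{b \in \calA}\inner{c_m,b-a}$ against $\max_{b \in \calA}\inner{\hats_t,b} - \inner{s^\star,a}$.

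First I would assemble the two approximation guarantees. For the agent's estimate, \pref{lem:max_estimation_hats_s} gives $\max_{a'\in\calA}\abr{\inner{\hats_t - s^\star, a'}} \leq \epsilon_{m(t)-1}=\epsilon_{m-1}$ for every $t\in\calT_{m,a}^E$. For the search output, recall that $c_m$ is produced by \pref{alg:multi_scale_search} run with accuracy parameter $\epsilon_{m-1}$; the precondition $\abr{\inner{\hats_t - s^\star,\cdot}}\leq \epsilon_{m-1}/2$ needed to invoke \pref{lem:St_nonempty} and \pref{lem:width_UB} is exactly the estimate \eqref{eq:hats_zt_vs_stars_zt} established inside the proof of \pref{lem:linear_UB_cost_diff}. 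Consequently both $c_m$ and $s^\star$ lie in the terminal confidence set $S_{t'}$ at the breaking round $t'$, and \pref{lem:width_UB} yields $\width(S_{t'},b-a)\leq 32d\,\epsilon_{m-1}$ for every arm difference $b-a\in\overline{\calA}$, so that $\inner{c_m - s^\star,b-a}\leq 32d\,\epsilon_{m-1}$ for all $a,b\in\calA$.

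The final step chains these. Letting $\tilde b\in\argmax_{b\in\calA}\inner{c_m,b-a}$, I would write
\[
\inner{c_m,\tilde b - a} = \inner{c_m - s^\star,\tilde b - a} + \inner{s^\star,\tilde b} - \inner{s^\star,a} \leq 32d\,\epsilon_{m-1} + \inner{s^\star,\tilde b} - \inner{s^\star,a},
\]
then bound $\inner{s^\star,\tilde b}\leq \inner{\hats_t,\tilde b}+\epsilon_{m-1}\leq \max_{b\in\calA}\inner{\hats_t,b}+\epsilon_{m-1}$ using the first estimate, giving $\max_{b\in\calA}\inner{c_m,b-a}\leq \max_{b\in\calA}\inner{\hats_t,b}-\inner{s^\star,a}+(1+32d)\epsilon_{m-1}$. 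Substituting into the displayed upper bound on $\pi_a(t)$ produces
\[
\pi_a(t) \leq \max_{b\in\calA}\inner{\hats_t,b}-\inner{s^\star,a}+(2+64d)\epsilon_{m-1}+\frac{1}{T},
\]
which is the claim after rearranging and using $\epsilon_{m-1}=\epsilon_{m(t)-1}$.

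The main obstacle is the $c_m$-to-$s^\star$ control in the second paragraph. It is not immediate that the conservative cut in \pref{alg:multi_scale_search} is valid, since $\hats_t$ drifts over the search rounds; making \pref{lem:St_nonempty} and \pref{lem:width_UB} applicable requires verifying the confidence condition $\abr{\inner{\hats_t - s^\star,a'}}\leq \epsilon_{m-1}/2$ \emph{simultaneously at every round} of the $c_m$-search, which rests on the induction that each arm was played sufficiently often in phase $m-1$ (carried through \eqref{eq:hats_zt_vs_stars_zt}). Everything else is bookkeeping of the two additive error terms $(1+32d)\epsilon_{m-1}$ and $32d\,\epsilon_{m-1}$, whose combination accounts for the $64d$ coefficient.
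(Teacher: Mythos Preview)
Your proposal is correct and follows essentially the same approach as the paper: both start from $\pi_a(t)\leq \max_{b}\inner{c_m,b-a}+(1+32d)\epsilon_{m-1}+T^{-1}$, pick the maximizer $\tilde b$ (the paper calls it $b_t$), split $\inner{c_m,\tilde b-a}$ into $\inner{c_m-s^\star,\tilde b-a}+\inner{s^\star,\tilde b}-\inner{s^\star,a}$, and then apply \pref{lem:width_UB} and \pref{lem:max_estimation_hats_s} in either order to collect the $(2+64d)\epsilon_{m-1}$ term. Your remark that the argument (and the paper's own proof) really only treats the active-arm case $\pi_a(t)=\overline{b}_{m,a}$, and that verifying the precondition of \pref{lem:width_UB} via \eqref{eq:hats_zt_vs_stars_zt} is the only nontrivial step, is accurate.
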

\begin{proof}
Let $b_t \in \argmax_{a \in \calA} \inner{c_{m(t)},b}$, $z_t \in \argmax_{a\in\calA} \inner{\hats_t,a}$, and $m(t)$ be the phase that round $t$ lies in.
One can show 
\begin{align*}
&\pi_a(t)-  \rbr{ \max_{b \in \calA} \inner{\hats_t,b} - \inner{s^\star,a} } \\
&= \max_{b \in \calA} \inner{c_{m(t)},b-a} +(1+32d)\epsilon_{m(t)-1}  +\frac{1}{T}   -  \inner{\hats_t,z_t}+\inner{s^\star,a} \\
&\leq \inner{c_{m(t)},b_t-a} +(1+32d)\epsilon_{m(t)-1}  +\frac{1}{T}   -  \inner{\hats_t,b_t}+\inner{s^\star,a} \\
&\leq  \inner{c_{m(t)}-s^\star,b_t -a}+\rbr{2+32d}\epsilon_{m(t)-1}  +\frac{1}{T}\\
&\leq \rbr{2+64d}\epsilon_{m(t)-1}  +\frac{1}{T},
\end{align*}
where the first inequality uses the definition of $z_t$, the second inequality holds due to $\calE$ and the last inequality uses \pref{lem:width_UB} with $\epsilon=\epsilon_{m(t)-1}$.
\end{proof}

\subsection{Lemmas for Elimination}

\begin{lemma} \label{lem:linear_opt_arm_active}
Suppose that $\calE$ occurs.
For every phase $m$, $a^\star \in \calA_m$.
\end{lemma}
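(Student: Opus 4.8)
The plan is to prove by induction on the phase index $m$ that the optimal arm $a^\star$ is never eliminated. The statement to establish is $a^\star \in \calA_m$ for all $m$, and the natural strategy mirrors the corresponding lemmas in the i.i.d.\ case (\pref{lem:optarm_always_active}) and the oracle-agent case (\pref{lem:optarm_always_active_oracle_explore}): assume $a^\star \in \calA_m$ and show the offline elimination test in \pref{alg:linear_alg} cannot remove it, i.e.\ that $\max_{b \in \calA_m} \inner{\hnu_m + c_m', b - a^\star} \leq (7+32d)\epsilon_m$.

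First I would handle the base case $m=1$ trivially, since $\calA_1 = \calA \ni a^\star$. For the inductive step, I would start from the defining optimality of $a^\star$ in the linear setting, namely $\inner{\nu^\star + s^\star, a^\star} \geq \inner{\nu^\star + s^\star, b}$ for all $b \in \calA$, which gives $\inner{\nu^\star + s^\star, b - a^\star} \leq 0$ for every $b$. The goal is then to bound $\inner{\hnu_m + c_m', b - a^\star}$ by replacing the estimates $\hnu_m$ and $c_m'$ with the truths $\nu^\star$ and $s^\star$ and absorbing the estimation errors. For the principal's estimate, the event $\calE^{\texttt{Principle}}$ controls $|\inner{\hnu_m - \nu^\star, a}|$ in terms of $\norm{a}_{V_m^{-1}}$, which by the $G$-optimal design guarantee and the play counts is at most $O(\epsilon_m)$. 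For the search output $c_m'$, I would invoke \pref{lem:width_UB} (applied with $\epsilon = \epsilon_m$, using the second search call at \pref{search_again} and the fact from \pref{lem:max_estimation_hats_s} that the agent's estimate satisfies the precondition $|\inner{\hats_t - s^\star, a}| \leq \epsilon_m/2$) to conclude $\width(S_t, b - a^\star) \leq 32d\,\epsilon_m$, hence $\inner{c_m' - s^\star, b - a^\star} \leq 32d\,\epsilon_m$ for any $b$. Combining these two bounds with $\inner{\nu^\star + s^\star, b - a^\star}\leq 0$ yields
\[
\inner{\hnu_m + c_m', b - a^\star} \leq 0 + O(\epsilon_m) + 32d\,\epsilon_m \leq (7+32d)\epsilon_m,
\]
so the if-condition for elimination fails and $a^\star \in \calA_{m+1}$, completing the induction.

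The main obstacle I anticipate is bookkeeping the constants precisely so that the accumulated error from the two sources — the principal's reward estimate $\hnu_m$ (bounded via $\calE^{\texttt{Principle}}$ and the design) and the agent-side parameter estimate threaded through $c_m'$ via the conservative cut (bounded via \pref{lem:width_UB}) — fits exactly inside the elimination threshold $(7+32d)\epsilon_m$. In particular I must verify that the precondition of \pref{lem:width_UB} and \pref{lem:search_duration}, namely $|\inner{\hats_t - s^\star, a}| \leq \epsilon_m/2$ throughout the \emph{second} search call, actually holds; this requires that the play counts in phase $m$ already drive the agent's confidence width below $\epsilon_m/2$, which should follow from \pref{lem:max_estimation_hats_s} combined with the fact that all active and bad arms have been played according to the designs $\rho_m$ and $\omega_m$. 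Assembling the two error terms with the correct numerical coefficients to land at or below $(7+32d)\epsilon_m$, rather than merely $O(d\epsilon_m)$, is the delicate part; the rest is a direct transcription of the induction used in the earlier elimination lemmas.
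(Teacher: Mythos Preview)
Your proposal is correct and follows essentially the same route as the paper's proof: induction on $m$, splitting the error into the principal's side via $\calE^{\texttt{Principle}}$ and the agent's side via \pref{lem:width_UB}, then checking the sum sits below the threshold $(7+32d)\epsilon_m$. One small correction: \pref{lem:max_estimation_hats_s} as stated gives only $|\inner{\hats_t - s^\star, a}| \leq \epsilon_{m-1}$, which is too weak for the second search; the paper (and implicitly your last paragraph) instead redoes the confidence computation using $U_t \succeq V_m$, available because the second call at line~\ref{search_again} happens \emph{after} phase-$m$ exploration, yielding the needed $\epsilon_m/2$ bound directly.
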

\begin{proof}
We prove this by induction on $m$. Obviously, the base case holds. 
Suppose that the claim holds for phase $m$ and we consider phase $m+1$.
By the definition of $\calE$, for any phase $m$
\begin{align*} 
\forall a \in \calA_m:\  \abr{\inner{\hnu_m-\nu^\star,a}} 
\leq  \sqrt{2 \norm{a}_{ V_{m}^{-1} }  \log \rbr{  \frac{4KT}{\delta} } } 
\leq  2\sqrt{ \frac{d \log \rbr{  4KT\delta^{-1} }}{ T_{m}  }   } \leq \epsilon_m, \numberthis{} \label{eq:hnu_a_vs_starnu_a}
\end{align*}
where the second inequality uses \pref{lem:linear_UB_cost_diff} which gives that each active arm $a$ will be played for $\lceil \rho_m(a) T_m\rceil$ times.
We have for each $a \in \calA_m$
\begin{align*}
0&\leq \inner{\nu^\star+s^\star,a^\star-a}\\
       &\leq \inner{\hnu_{m}+s^\star,a^\star-a} +2\epsilon_m \\
       &= \inner{\hnu_{m}+c'_{m},a^\star-a} + \inner{s^\star-c'_{m},a^\star-a}   +2\epsilon_m \\
       &\leq \inner{\hnu_{m}+c'_{m},a^\star-a} + 32d\epsilon_{m}    +2\epsilon_m ,
\end{align*}
where the second inequality follows from \pref{eq:hnu_a_vs_starnu_a} as well as the induction hypothesis $a^\star \in \calA_m$, and the last inequality uses \pref{lem:width_UB} with $\epsilon=\epsilon_{m}$. 
To use \pref{lem:width_UB}, we needs to verify the condition.
By the definition of $\calE$, for any round $t$ in elimination period:
\begin{align*} 
\max_{a \in \calA}\abr{\inner{\hats_t-s^\star,a}} &\leq  \max_{a \in \calA} \sqrt{2 \norm{a}_{U_t^{-1}}  \log \rbr{  \frac{4KT}{\delta} } }\\
&\leq  \max_{a \in \calA} \sqrt{2 \norm{a}_{ V_{m(t)}^{-1} }  \log \rbr{  \frac{4KT}{\delta} } } \\
&\leq  \underbrace{2\sqrt{ \frac{d \log \rbr{  4KT\delta^{-1} }}{ \min\{T_{m(t)}, (d \log(4KT\delta^{-1}))^{\nicefrac{1}{3}} T_{m(t)}^{2/3} \}  }   } }_{=\frac{\epsilon_{m(t)}}{2}}, \numberthis{} \label{eq:verify_abs_diff}
\end{align*}
where the first inequality follows from the definition of $\calE$, the second inequality uses \pref{lem:linear_UB_cost_diff}, and the last inequality holds due to $V_{m(t)} \succeq \sum_{a \in \calA_m} \lceil \rho_m(a) T_m \rceil aa^\top$ and $V_{m(t)} \succeq \sum_{b \in \calB_m} U_m(a) bb^\top$ where $U_m(a)=\lceil \omega_m(a) (d \log(4TK/\delta))^{\nicefrac{1}{3}} \cdot T_{m}^{\nicefrac{2}{3}} \rceil$.

This inequality implies $\optarm \in \calA_{m+1}$.
Once the induction done, the proof is complete.
\end{proof}

\begin{lemma} \label{lem:linear_upperbound_pull}
Let $m_a$ be the smallest phase such that $\Delta_a > (9+64d) \epsilon_{m_a}$.
Suppose that $\calE$ occurs. For each arm $a$ with $\Delta_a>0$, it will not be in $\calA_m$ for all phases $m \geq m_a+1$.
\end{lemma}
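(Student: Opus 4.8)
The plan is to mirror the offline-elimination argument of \pref{lem:upperbound_pull_offline} in the linear setting, showing that any arm $a$ surviving into phase $m_a$ triggers the elimination test of \pref{alg:linear_alg} at that phase. First I would dispose of the trivial case: if $a \notin \calA_{m_a}$, then $a$ was already moved into the bad set in an earlier phase, and since the algorithm never returns an arm from $\calB$ to $\calA$, we immediately get $a \notin \calA_m$ for all $m \geq m_a + 1$. So it remains to treat $a \in \calA_{m_a}$ and prove that the if-condition $\max_{b \in \calA_{m_a}} \inner{\hnu_{m_a} + c'_{m_a}, b - a} > (7 + 32d)\epsilon_{m_a}$ fires, which forces $a \notin \calA_{m_a + 1}$.

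The core step is to lower-bound the left-hand side by choosing $b = a^\star$, which is legitimate because \pref{lem:linear_opt_arm_active} guarantees $a^\star \in \calA_{m_a}$. I would then decompose
\[
\inner{\hnu_{m_a} + c'_{m_a}, a^\star - a} = \inner{\nu^\star + s^\star, a^\star - a} + \inner{\hnu_{m_a} - \nu^\star, a^\star - a} + \inner{c'_{m_a} - s^\star, a^\star - a},
\]
recognizing the first term as exactly $\Delta_a$. For the second term, conditioning on $\calE$ (specifically $\calE^{\texttt{Principle}}$, through \pref{eq:hnu_a_vs_starnu_a}) gives $|\inner{\hnu_{m_a} - \nu^\star, a^\star}|, |\inner{\hnu_{m_a} - \nu^\star, a}| \leq \epsilon_{m_a}$ since both arms are active, so this term is at least $-2\epsilon_{m_a}$. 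For the third term, I would invoke \pref{lem:width_UB} with $\epsilon = \epsilon_{m_a}$: since $c'_{m_a}$ is the output of \pref{alg:multi_scale_search} at line \ref{search_again} and \pref{lem:St_nonempty} ensures $s^\star$ lies in the final confidence set, the gap $\inner{c'_{m_a} - s^\star, a^\star - a}$ is at most $\max_{u \in \overline{\calA}} \width(S_t, u) \leq 32 d\, \epsilon_{m_a}$, so this term is at least $-32d\,\epsilon_{m_a}$. Combining, $\inner{\hnu_{m_a} + c'_{m_a}, a^\star - a} \geq \Delta_a - (2 + 32d)\epsilon_{m_a}$, and plugging in the defining inequality $\Delta_a > (9 + 64d)\epsilon_{m_a}$ yields a value strictly exceeding $(7 + 32d)\epsilon_{m_a}$, which is precisely the elimination threshold.

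The main obstacle I anticipate is verifying the precondition needed to apply \pref{lem:width_UB} to the \emph{second} search call (the one producing $c'_{m_a}$ at line \ref{search_again}): that lemma requires $|\inner{\hats_t - s^\star, a}| \leq \epsilon_{m_a}/2$ for every $a \in \calA$ and every round $t$ during which \pref{alg:multi_scale_search} runs in phase $m_a$. This is exactly the bound established in \pref{eq:verify_abs_diff}, which in turn relies on $U_t \succeq V_{m_a}$ together with \pref{lem:linear_UB_cost_diff} (guaranteeing each active arm is played $\lceil \rho_{m_a}(a) T_{m_a}\rceil$ times and each bad arm $U_{m_a}(a)$ times, so that the estimation-matrix domination holds). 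Once this precondition is in place, \pref{lem:St_nonempty} applies and the width bound follows. The remaining work is purely bookkeeping of the constants so that $(9+64d) - (2+32d) = 7 + 32d$ matches the threshold exactly; conditioning on $\calE$ throughout, the argument transfers verbatim to all such arms $a$, which completes the proof.
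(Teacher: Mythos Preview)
Your proposal is correct and follows essentially the same argument as the paper: choose $b=a^\star$ (valid by \pref{lem:linear_opt_arm_active}), bound the $\hnu_{m_a}-\nu^\star$ error by $2\epsilon_{m_a}$ via \pref{eq:hnu_a_vs_starnu_a}, bound the $c'_{m_a}-s^\star$ error by $32d\,\epsilon_{m_a}$ via \pref{lem:width_UB} (with precondition verified exactly as you describe, matching \pref{eq:verify_abs_diff}), and check the constants. The paper's write-up is slightly more telegraphic but the logic and constants are identical.
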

\begin{proof}
Consider any arm $a$ with $\Delta_a>0$.
We only need to consider $a \in \calA_{m_a}$ and otherwise, the claim naturally holds.
One can show
\begin{align*}
    &\max_{b \in \calA_{m_a} } \inner{c'_{m_a}+\hnu_{m_a},b} -    \inner{c'_{m_a}+\hnu_{m_a},a} - (7+32d) \epsilon_{m_a} \\
    &\geq  \inner{c'_{m_a}+\hnu_{m_a},a^\star} -    \inner{c'_{m_a}+\hnu_{m_a},a} -   (7+32d) \epsilon_{m_a} \\
      &\geq  \inner{c'_{m_a}+\nu^\star,a^\star} -    \inner{c'_{m_a}+\nu^\star,a} - (9+32d) \epsilon_{m_a}\\
      &=  \inner{s^\star+\nu^\star,a^\star-a} -    \inner{s^\star-c'_{m_a},a^\star-a} - (9+32d) \epsilon_{m_a} \\
    &\geq   \Delta_a - (9+64d) \epsilon_{m_a}\\
    & >0,
\end{align*}    
where the first inequality follows from \pref{lem:linear_opt_arm_active} that $\optarm \in \calA_{m_a}$, the second inequality holds due to \pref{eq:hnu_a_vs_starnu_a}, and the third inequality uses \pref{lem:width_UB} with $\epsilon=\epsilon_m$ (the condition to use \pref{lem:width_UB} is checked in \pref{eq:verify_abs_diff}).
According to the elimination rule (see elimination period in \pref{alg:linear_alg}), arm $a$ will not be in phases 
$m$ for all $m \geq m_a+1$.
\end{proof}

\end{document}